\pgfplotsset{compat=newest} 
\newtheorem{theorem}{Theorem}
\newtheorem{claim}{Claim}[theorem]
\newtheorem{texample}{Example}[theorem]
\newtheorem{tlemma}[texample]{Lemma}
\newtheorem{corollary}[theorem]{Corollary}
\newtheorem{example}[theorem]{Example}
\newtheorem{tremark}[texample]{Remark}
\newcommand{\be}{b_{\q}}
\newcommand{\en}{e_{\q}}
\newcommand{\seqr}[1]{\langle {#1}\rangle}
\newcommand{\Apsi}{\A_{\q,\psi}}
\newcommand{\lit}{\boldsymbol{\ell}}
\newcommand{\bno}{\boldsymbol{b}}
\newcommand{\bnok}{\boldsymbol{b}^-}
\newcommand{\atm}{\boldsymbol{M}}
\newcommand{\tBox}{t_\Box}  
\newcommand{\tDiamond}{t_\Diamond}   
\newcommand{\initial}{initial}
\newcommand{\final}{final}
\newcommand{\cdist}{contact-distance}
\newcommand{\Kmin}{\mathfrak K_{\omq}^{\textit{min}}}
\newcommand{\SigmaQ}{\Sigma_\omq}
\newcommand{\SigmaQP}{\Sigma_\omq^e}
\newcommand{\TT}{\dis_A}
\newcommand{\shomo}{sub\-homomorphism}
\newcommand{\snode}{\mathfrak{s}}
\newcommand{\tnode}{\mathfrak{t}}
\newcommand{\unode}{\mathfrak{u}}
\newcommand{\vnode}{\mathfrak{v}}
\newcommand{\znode}{\mathfrak{z}}
\newcommand{\mh}[1]{H{#1}}
\newcommand{\neighbourhood}[1]{{$#1$}-neighbourhood}
\newcommand{\neighbourhoods}[1]{{$#1$}-neighbourhoods}
\newcommand{\connection}[1]{{$#1$}-connection}
\newcommand{\connections}[1]{{$#1$}-connections}
\newcommand{\Ia}{\I_{\!\mathfrak a}}
\newcommand{\spect}{special triple}
\newcommand{\uq}{F}
\newcommand{\dq}{T\!}
\newcommand{\qq}[1]{{}^{#1\!}\q}
\newcommand{\qqq}[2]{{}^{#1\!}\q^{\,#2}}
\newcommand{\ctt}[1]{\ct^{\,#1}}
\newcommand{\cff}[1]{\cf^{\,#1}}
\newcommand{\cttt}[2]{{}^{#1}\ct^{\,#2}}
\newcommand{\cfff}[2]{{}^{#1}\cf^{\,#2}}
\newcommand{\bnode}[2]{{}^{#2\!}{#1}}
\newcommand{\nttt}[3]{{}^{#1\!}t^{\,#2}_{#3}}
\newcommand{\nfff}[3]{{}^{#1\!}f^{\,#2}_{#3}}
\newcommand{\last}{\textit{last}}
\newcommand{\lbo}{\textit{last}-1}
\newcommand{\tStar}{t_\triangle}
\newcommand{\ct}{\mathsf{t}}
\newcommand{\cf}{\mathsf{f}}
\newcommand{\cc}{\mathsf{c}}
\newcommand{\lc}[1]{\mathsf{v}_{#1}}
\newcommand{\lcc}[1]{\mathsf{v}^{c}_{#1}}
\newcommand{\lcco}[1]{\mathsf{v}^{c_1}_{#1}}
\newcommand{\lcct}[1]{\mathsf{v}^{c_2}_{#1}}
\newcommand{\shift}{g_\leftarrow}
\newcommand{\shiftR}{g_\to}
\newcommand{\tfirst}{t_1}
\newcommand{\tsec}{t_2}
\newcommand{\tkth}{t_k}
\newcommand{\fkth}{f_k}
\newcommand{\tlast}{t_{\textit{last}}}
\newcommand{\tlbo}{t_{\textit{last}-1}}
\newcommand{\ffirst}{f_1}
\newcommand{\fsec}{f_2}
\newcommand{\worm}{\mathcal H}
\newcommand{\wheel}{cog\-wheel}
\newcommand{\wheels}{cog\-wheels}
\newcommand{\legW}{cog}
\newcommand{\bike}{bike}
\newcommand{\bikes}{bikes}
\newcommand{\pw}{\bullet}
\newcommand{\mw}{\circ}
\newfont{\midmathxx}{cmsy10 scaled 1440}
\newfont{\bigmathxx}{cmsy10 scaled 1440}
\newfont{\smallmathxx}{cmsy10 scaled 720}
\newcommand{\OWL}{\textsl{OWL\,2}}
\newcommand{\OWLQL}{\textsl{OWL\,2\,QL}}
\newcommand{\OWLEL}{\textsl{OWL\,2\,EL}}
\newcommand{\NL}{\textsc{NL}}
\newcommand{\DL}{\textsl{DL-Lite}}
\newcommand{\DLb}{\textsl{DL-Lite}_\textit{bool}}
\newcommand{\DLk}{\textsl{DL-Lite}_\textit{krom}}
\newcommand{\q}{{\boldsymbol{q}}}
\newcommand{\G}{{\boldsymbol{G}}}
\newcommand{\ACz}{{\ensuremath{\textsc{AC}^0}}}
\newcommand{\LogSpace}{\textsc{L}}
\newcommand{\coNP}{\textsc{coNP}}
\newcommand{\NP}{\textsc{NP}}
\newcommand{\PTime}{\textsc{P}}
\newcommand{\ExpTime}{\textsc{ExpTime}}
\newcommand{\NExpTime}{\textsc{NExpTime}}
\newcommand{\PSpace}{\textsc{PSpace}}
\newcommand{\ind}{\mathsf{ind}}
\newcommand{\cir}{\boldsymbol{C}}
\newcommand{\A}{\mathcal{A}}
\newcommand{\C}{\mathcal{C}}
\newcommand{\T}{\mathcal{O}}
\newcommand{\I}{\mathcal{I}}
\newcommand{\Tf}{\mathcal{C}}
\newcommand{\avec}[1]{\boldsymbol{#1}}
\def\omq{{\boldsymbol{Q}}}
\newcommand{\rank}{{\boldsymbol{b}\boldsymbol{r}}}
\newcommand{\dis}{\mathsf{cov}}
\def\C{{\cal C}}
\newcommand{\st}{{\boldsymbol{st}}}
\def\s{{\mathfrak s}}
\def\Af{{\mathfrak A}}
\tikzstyle{or-gate}=[rectangle,draw,inner sep=4pt,thick]
\tikzstyle{and-gate}=[rectangle,draw,inner sep=4pt,thick]
\tikzstyle{input}=[circle,draw,minimum size=3mm]
\tikzset{>=latex, 
	point/.style = {circle,draw,thick,minimum size=2mm,inner sep=0pt},
	point1/.style = {circle,draw,thick,minimum size=6mm,inner sep=0pt},
	hm/.style = {dotted,semithick},
	role/.style = {thick},
	tree/.style = {rounded corners=10pt, dashed, fill opacity=0.5, fill=nullscolour},
	wiggly/.style={thick,
	},
	query/.style={thick},
}
\begin{document}

\begin{frontmatter}



\title{A Tetrachotomy of Ontology-Mediated Queries with a Covering Axiom}


\author[hse]{Olga Gerasimova}
\ead{ogerasimova@hse.ru}

\author[lm]{Stanislav Kikot}
\ead{staskikotx@gmail.com}

\author[kcl]{Agi Kurucz}
\ead{agi.kurucz@kcl.ac.uk}

\author[smi,hse]{Vladimir Podolskii}
\ead{podolskii@mi.ras.ru}

\author[bbk]{Michael Zakharyaschev}
\ead{michael@dcs.bbk.ac.uk}

\address[hse]{HSE University, Moscow, Russia}
\address[lm]{Institute for Information Transmission Problems, Moscow, Russia}
\address[kcl]{Department of Informatics, King's College London, U.K.}
\address[smi]{Steklov Mathematical Institute, Moscow, Russia}
\address[bbk]{Department of Computer Science and Information Systems, Birkbeck, University of London, U.K.}


\begin{abstract}
Our concern is the problem of efficiently determining the data complexity of answering queries mediated by description logic ontologies and constructing their optimal rewritings to standard database queries. Originated in ontology-based data access and datalog optimisation, this problem is known to be computationally very complex in general, with no explicit syntactic characterisations available. In this article, aiming to understand the fundamental roots of this difficulty, we strip the problem to the bare bones and focus on Boolean conjunctive queries mediated by a simple  covering axiom stating that one class is covered by the union of two other classes. 
We show that, on the one hand, these rudimentary ontology-mediated queries, called disjunctive sirups (or d-sirups), capture many features and difficulties of the general case. For example, answering d-sirups is $\Pi^p_2$-complete for combined complexity and can be in \ACz{} or \LogSpace-, \NL-, \PTime-, or \coNP-complete for data complexity (with the problem of recognising FO-rewritability of d-sirups being 2\ExpTime-hard); some d-sirups only have exponential-size resolution proofs, some only double-exponential-size positive existential FO-rewritings and single-exponential-size nonrecursive datalog rewritings.  
On the other hand, we prove a few partial sufficient and necessary conditions of FO- and (symmetric/linear-) datalog rewritability of d-sirups. Our main technical result is a complete and transparent syntactic $\ACz$\,/\,\NL\,/\,\PTime\,/\,\coNP{} tetrachotomy of d-sirups with disjoint covering classes and a path-shaped Boolean conjunctive query. To obtain this tetrachotomy, we develop new techniques for establishing \PTime- and \coNP-hardness of answering non-Horn ontology-mediated queries as well as showing that they can be answered in \NL{}.
\end{abstract}


\begin{keyword}
Ontology-mediated query \sep description logic \sep datalog \sep disjunctive datalog \sep first-order rewritability \sep data complexity.
\end{keyword}

\end{frontmatter}


\section{Introduction}\label{intro}

\subsection{The ultimate question}

The general research problem we are concerned with in this article can be formulated as follows: for any given ontology-mediated query (OMQ, for short) $\omq = (\T,\q)$ with a description logic ontology $\T$ and a conjunctive query $\q$, 
\begin{description}
\item[(data complexity)] determine the computational complexity of answering $\omq$ over any input data instance $\A$ under the open world semantics and, if possible,

\item[(rewritability)] reduce the task of finding certain answers to $\omq$ over any input $\A$ to the task of evaluating a conventional database query $\omq'$  with optimal data complexity directly over $\A$ (the query $\omq'$ is then called a \emph{rewriting} of the OMQ $\omq$).
\end{description}
\paragraph{Ontology-based data access}
Answering queries mediated by a description logic (DL) ontology has been known as an important reasoning problem in knowledge representation since the early 1990s~\cite{Schaerf93}. The proliferation of DLs and their applications~\cite{BaaderCalvaneseEtAl2007,DBLP:books/daglib/0041477}, the development of the (DL-underpinned) Web Ontology Language OWL\footnote{\url{https://www.w3.org/TR/owl2-overview/}}\!, and especially the paradigm of ontology-based data access (OBDA)~\cite{PLCD*08,CDLLR07,DBLP:conf/ijcai/XiaoCKLPRZ18} (proposed in the mid 2000s and recently rebranded to the virtual knowledge graph (VKG) paradigm~\cite{DBLP:journals/dint/XiaoDCC19}), have made theory and practice of answering ontology-mediated queries (OMQs) a hot research area lying at the crossroads of Knowledge Representation and Reasoning, Semantic Technologies and the Semantic Web, Knowledge Graphs, and Database Theory and Technologies. 

In a nutshell, the idea underlying OBDA is as follows. The users of an OBDA system (such as Mastro\footnote{\url{https://www.obdasystems.com}} or Ontop\footnote{\url{https://ontopic.biz}}) may assume that the data they want to query is given in the form of a directed graph whose nodes are labelled with concepts (unary predicates or classes) and whose edges are labelled with roles (binary predicates or properties)---even though, in reality, the data can be physically stored in different and possibly heterogeneous data sources---hence the moniker VKG. The concept and role labels come from an ontology, designed by a domain expert, and should be familiar to the intended users who, on the other hand, do not have to know anything about the real data sources. Apart from providing a user-friendly vocabulary for queries and a high-level conceptual view of the data, an important role of the ontology is to enrich possibly incomplete data with background knowledge. To illustrate, imagine that we are interested in the life of `scientists' and would like to satisfy our curiosity by querying the data available on the Web (it may come from the universities' databases, publishing companies, personal web pages, social networks, etc.). An ontology $\mathcal{O}$ about scientists, provided by an OBDA system, might contain the following `axioms' (given, for readability, both as DL concept inclusions and first-order sentences):
\begin{align}\label{ax1}
& \textit{BritishScientist} ~\sqsubseteq~ \exists\, \textit{affiliatedWith}. \textit{UniversityInUK}\\\notag
& \mbox{}\hspace*{3cm} {\small \forall x \, [\textit{BritishScientist}(x) \to \exists y \, (\textit{affiliatedWith}(x,y) \land \textit{UniversityInUK}(y))]}\\\label{ax2}
& \exists\, \textit{worksOnProject} ~\sqsubseteq~ \textit{Scientist}\\\notag
& \mbox{}\hspace*{3cm} {\small \forall x \, [\exists y \, \textit{worksOnProject}(x,y) \to \textit{Scientist}(x)]}\\\label{ax3}
& \textit{Scientist} \sqcap \exists\, \textit{affiliatedWith}. \textit{UniversityInUK} ~\sqsubseteq~ \textit{BritishScientist}\\\notag
& \mbox{}\hspace*{3cm} {\small \forall x \, [(\textit{Scientist}(x) \land \exists y\, (\textit{affiliatedWith}(x,y) \land \textit{UniversityInUK}(y))) \to  \textit{BritishScientist}(x)]}\\\label{ax4}
& \textit{BritishScientist} ~\sqsubseteq~ \textit{Brexiteer} \sqcup \textit{Remainer}\\\notag
& \mbox{}\hspace*{3cm} {\small \forall x \, [\textit{BritishScientist}(x) \to  (\textit{Brexiteer}(x) \lor \textit{Remainer}(x))]}
\end{align}
Now, to find, for example, British scientists, we could execute a simple OMQ $\omq(x) = (\mathcal{O},\q(x))$ with the query 
\begin{equation*}\label{que1}
\q(x) ~=~ \textit{BritishScientist}(x)
\end{equation*}
mediated by the ontology $\mathcal{O}$. The OBDA system is expected to return the members of the concept \textit{BritishScientist} that are extracted from the original datasets by `mappings' (database queries connecting the data with the ontology vocabulary and virtually populating its concepts and roles) and also deduced from the data and axioms in $\mathcal{O}$ such as~\eqref{ax3}. It is this latter reasoning task that makes OMQ answering non-trivial and potentially intractable both in practice and from the complexity-theoretic point of view. 

\paragraph{Uniform approach} 
To ensure theoretical and practical tractability, the OBDA paradigm presupposes that the users' OMQs are reformu\-lated---or rewritten---by the OBDA system into conventional database queries over the original data sources, which have proved to be  quite efficiently evaluated by the existing database management systems. Whether or not such a rewriting is possible and into which query language naturally depends on the OMQ in question. One way to \emph{uniformly} guarantee the desired rewritability is to delimit the language for OMQ ontologies and queries. Thus, the \DL{} family of description logics~\cite{CDLLR07} and the \OWLQL{} profile\footnote{\url{https://www.w3.org/TR/owl2-profiles/}} of \OWL{} were designed so as to guarantee rewritability of \emph{all} OMQs with a \DL{} ontology and a  conjunctive query (CQ) into first-order (FO) queries, that is, essentially SQL queries~\cite{Abitebouletal95}. In complexity-theoretic terms, FO-rewritability of an OMQ means that it can be answered in \textsc{LogTime} uniform \ACz{}, one of the smallest complexity classes~\cite{Immerman99}. In our example above, only axioms \eqref{ax1} and \eqref{ax2} are allowed by \OWLQL. 
Various dialects of tuple-generating dependencies (tgds), aka datalog$^\pm$ or existential rules, that admit FO-rewritability and extend \OWLQL{} have also been identified; see, e.g.,~\cite{DBLP:conf/datalog/CiviliR12,DBLP:journals/tods/GottlobOP14,DBLP:journals/ai/BagetLMS11,DBLP:journals/semweb/KonigLMT15}. 

Any OMQ with an $\mathcal{EL}$, \OWLEL{} or $\textsl{Horn}\mathcal{SHIQ}$ ontology is datalog-rewritable~\cite{DBLP:conf/ijcai/HustadtMS05,DBLP:conf/dlog/Rosati07,DBLP:journals/japll/Perez-UrbinaMH10,DBLP:conf/aaai/EiterOSTX12}, and so can be answered in \PTime---polynomial time in the size of data---using various datalog engines, say GraphDB\footnote{\url{https://graphdb.ontotext.com}}\!\!, LogicBlox\footnote{\url{https://developer.logicblox.com}} or RDFox\footnote{\url{https://www.oxfordsemantic.tech}}\!\!. Axioms \eqref{ax1}--\eqref{ax3} are admitted by the $\mathcal{EL}$ syntax. 
On the other hand, OMQs with an $\mathcal{ALC}$ (a notational variant of the multimodal logic {\bf K}$_n$~\cite{Gabbayetal03}) ontology and a CQ are in general \coNP-complete~\cite{Schaerf93}, and so often regarded as intractable and not suitable for OBDA, though they can be rewritten to disjunctive datalog~\cite{DBLP:phd/de/Motik2006,DBLP:journals/jar/HustadtMS07,DBLP:conf/ijcai/GrauMSH13} supported by systems such as DLV\footnote{\url{http://www.dlvsystem.com}} or clasp\footnote{\url{https://potassco.org/clasp/}}\!\!. For example, \coNP-complete is the OMQ $(\{\eqref{ax4}\},\q_1)$ with the CQ  
\begin{multline*}
\q_1 = \exists w,x,y,z \, [ \textit{Brexiteer}(w) \land \textit{hasCoAuthor}(w,x) \land{}   \textit{Remainer}(x) \land{} \\ \textit{hasCoAuthor}(x,y) \land \textit{Brexiteer}(y) \land \textit{hasCoAuthor}(y,z) \land  \textit{Remainer}(z)]
\end{multline*}
(see also the representation of $\q_1$ as a labelled graph below). It might be of interest to note that by making the role \textit{hasCoAuthor} symmetric using, for example, the role inclusion axiom 
\begin{align}\label{symm}
& \textit{hasCoAuthor} ~\sqsubseteq~ \textit{hasCoAuthor}^-\\\notag
& \mbox{}\hspace*{3cm} {\small \forall x ,y\, [\textit{hasCoAuthor}(x,y) \to \textit{hasCoAuthor}(y,x)]} 
\end{align}
we obtain the OMQ $(\{\eqref{ax4},\eqref{symm}\},\q_1)$, which is rewritable to a symmetric datalog query, and so can be answered by a highly parallelisable algorithm in the complexity class \LogSpace{} (logarithmic space). 

For various reasons, many existing ontologies do not comply with the restrictions imposed by the standard languages for OBDA. Notable examples include the large-scale medical ontology SNOMED CT\footnote{\url{https://bioportal.bioontology.org/ontologies/SNOMEDCT}}\!, which is mostly but not entirely in $\mathcal{EL}$, and the oil and gas NPD FactPages\footnote{\url{https://factpages.npd.no}} ontology and the Subsurface Exploration Ontology~\cite{DBLP:conf/semweb/HovlandKSWZ17}, both of which fall outside \OWLQL{} by a whisker, in particular because of \emph{covering axioms} like~\eqref{ax4} that are quite typical in conceptual modelling. One way to (partially) resolve this issue is to compute an approximation of a given ontology within the required ontology language, which is an interesting and challenging reasoning problem by itself;  see, e.g.,~\cite{DBLP:conf/cade/MartinezFGHH14,DBLP:journals/jair/ZhouGNKH15,DBLP:conf/aaai/BotoevaCSSSX16,DBLP:conf/ijcai/BotcherLW19} and references therein. 
In practice, the non-complying axioms are often simply omitted from the ontology in the hope that not too many answers to OMQs will be lost. An attempt to figure out whether it was indeed the case for the OMQs with the Subsurface Exploration Ontology and  geologists' queries from~\cite{DBLP:conf/semweb/HovlandKSWZ17} was the starting point of research that led to this article.

\paragraph{Non-uniform approach} 
An ideal alternative to the uniform approach to OBDA discussed above would be to admit 
OMQs in a sufficiently expressive language and supply the OBDA system with an algorithm that recognises the data complexity of each given OMQ and rewrites it to a database query in the corresponding target language.
For example, while answering the OMQ $(\{\eqref{ax4}\}, \q_1)$ is \coNP-complete, we shall see later on in this paper that $(\{\eqref{ax4}\}, \q_2)$ with the same ontology and the CQ $\q_2$ shown in the picture below is \PTime-complete and datalog-rewritable, $(\{\eqref{ax4}\}, \q_3)$ is \NL- (non-deterministic logarithmic space) complete and linear-datalog-rewritable, $(\{\eqref{ax4}\}, \q_4)$ is \LogSpace-complete and symmetric-datalog-rewritable, while $(\{\eqref{ax4}\}, \q_5)$ is in \ACz{} and FO-rewritable. In the picture, $F(u)$ stands for $\textit{Brexiteer}(u)$, $T(u)$ for\\[-10pt]
\begin{center}
\begin{tikzpicture}[>=latex,line width=0.8pt, rounded corners,scale = 1.3]
		\node (0) at (-0.4,0) {$\q_1$};
		\node[point,scale=0.7,label=above:{\small $F$},label=below:{$w$}] (1) at (0,0) {};
		\node[point,scale=0.7,label=above:{$T$},label=below:{$x$}] (m) at (1,0) {};
		\node[point,scale=0.7,label=above:{\small $F$},label=below:{$y$}] (2) at (2,0) {};
		\node[point,scale=0.7,label=above:{\small $T$},label=below:{$z$}] (3) at (3,0) {};
		\draw[->,right] (1) to node[below] {\small $R$}  (m);
		\draw[->,right] (m) to node[below] {\small $R$} (2);
		\draw[->,right] (2) to node[below] {\small $R$} (3);
		\end{tikzpicture}
\hspace*{2cm}
		\begin{tikzpicture}[>=latex,line width=0.8pt, rounded corners,scale = 1.3]
		\node (0) at (-0.4,0) {$\q_2$};
		\node[point,scale=0.7,label=above:{\small $T$},label=below:{ $x$}] (1) at (0,0) {};
		\node[point,scale=0.7,label=above:\small$T$,label=below:{ $y$}] (m) at (1,0) {};
		\node[point,scale=0.7,label=above:\small$F$,label=below:{ $z$}] (2) at (2,0) {};
		\draw[->,right] (1) to node[below] {\small $S$}  (m);
		\draw[->,right] (m) to node[below] {\small $R$} (2);
		\end{tikzpicture}\\
\begin{tikzpicture}[>=latex,line width=0.8pt, rounded corners,scale = 1.3]
		\node (0) at (-0.4,0) {$\q_3$};
		\node[point,scale=0.7,label=above:{\small $T$},label=below:{ $x$}] (1) at (0,0) {};
		\node[point,scale=0.7,label=above:\small$T$,label=below:{ $y$}] (m) at (1,0) {};
		\node[point,scale=0.7,label=above:\small$F$,label=below:{ $z$}] (2) at (2,0) {};
		\draw[->,right] (1) to node[below] {\small $R$}  (m);
		\draw[->,right] (m) to node[below] {\small $R$} (2);
		\end{tikzpicture}
\hspace*{1.5cm}
\begin{tikzpicture}[>=latex,line width=0.8pt, rounded corners, scale = 1.3]
\node (0) at (-0.4,0) {$\q_4$};
		\node[point,scale=0.7,label=above:{},label=below:{ $x$}] (1) at (0,0) {};
		\node[point,scale=0.7,label=above:{\small $T$},label=below:{ $y$}] (m) at (1,0) {};
		\node[point,scale=0.7,label=above:{\small $F$},label=below:{ $z$}] (2) at (2,0) {};
		\draw[->,right] (1) to node[below] {\small $S$}  (m);
		\draw[<->,right] (m) to node[below] {\small $R$} (2);
		\end{tikzpicture}
\hspace*{1.5cm}
\begin{tikzpicture}[>=latex,line width=0.8pt, rounded corners, scale = 1.3]
\node (0) at (-0.3,0) {$\q_5$};
		\node[point,scale=0.7,label=above:{\small $T$},label=below:{ $x$}] (1) at (0,0) {};
		\node[point,scale=0.7,label=above:{},label=below:{ $y$}] (m) at (1,0) {};
		\node[point,scale=0.7,label=above:{\small $FT$},label=below:{ $z$}] (2) at (2,0) {};
		\draw[->,right] (1) to node[below] {\small $R$}  (m);
		\draw[->,right] (m) to node[below] {\small $R$} (2);
		\end{tikzpicture}		
\end{center}
%
$\textit{Remainer}(u)$, $R(u,v)$ for $\textit{hasCoAuthor}(u,v)$, $S(u,v)$ for $\textit{hasBoss}(x,y)$, and all of the variables $w$, $x$, $y$, $z$ are assumed to be existentially quantified.
Another example is the experiments with the NPD FactPages and Subsurface Exploration ontologies used for testing OBDA in industry~\cite{DBLP:conf/dlog/Rosati07,DBLP:conf/semweb/HovlandKSWZ17,DBLP:journals/ws/KharlamovHSBJXS17}. Although the ontologies contain  covering axioms of the form $A \sqsubseteq B_1 \sqcup \dots\sqcup B_n$ not allowed in \OWLQL{}, one can show that the concrete queries provided by the end-users do not `feel' those dangerous axioms and are FO-rewritable. 
Note also the experiments in~\cite{DBLP:journals/ai/KaminskiNG16} showing that  rewriting non-Horn OMQs to datalog can significantly improve the efficiency of  answering by means of existing engines. 

Is it possible to efficiently recognise the data complexity of answering any given OMQ and construct its optimal rewriting? The database community has been investigating these questions in the context of datalog optimisation since the 1980s; see Section~\ref{related} for details and references. For various families of DLs, a complexity-theoretic analysis of the {\bf (data complexity)} problem was launched by Lutz and Wolter~\cite{DBLP:conf/kr/LutzW12} and Bienvenu et al.~\cite{DBLP:journals/tods/BienvenuCLW14}. Incidentally, the latter discovered a close connection with another important and rapidly growing area of Computer Science and AI: constraint satisfaction problems (CSPs), for which a P/NP-dichotomy, conjectured by Feder and Vardi~\cite{DBLP:journals/siamcomp/FederV98}, has recently been established~\cite{DBLP:conf/focs/Bulatov17,DBLP:conf/focs/Zhuk17}. We briefly survey the current state of the art in Section~\ref{related} below. Here, it suffices to say  that recognising FO-rewritability is \ExpTime-complete for OMQs with a `lightweight' $\mathcal{EL}$ ontology~\cite{DBLP:conf/ijcai/LutzS17,DBLP:journals/corr/abs-1904-12533} and 2\NExpTime-complete for OMQs with a `full-fledged' $\mathcal{ALC}$ ontology~\cite{DBLP:journals/lmcs/FeierKL19}. In either case, the problem seems to be too complex for a universal algorithmic solution, although  experiments in~\cite{DBLP:conf/ijcai/LutzHSW15} demonstrated that many real-life atomic OMQs in $\mathcal{EL}$ can be efficiently rewritten to non-recursive datalog by the \ExpTime{} algorithm. 

A more practical take on the {\bf (rewritability)} problem, started by  Motik~\cite{DBLP:phd/de/Motik2006}, exploits the datalog connection mentioned above. In a nutshell, the idea is as follows. OMQs with a Horn DL ontology are rewritten to datalog queries, which could further be treated by the datalog optimisation techniques for removing or linearising recursion or partial FO-rewriting algorithms such as~\cite{DBLP:conf/dlog/KaminskiG13}. Non-Horn OMQs are transformed to (possibly exponential-size~\cite{DBLP:journals/jar/HustadtMS07}) disjunctive datalog queries to which partial datalog rewriting algorithms such as the ones in~\cite{DBLP:journals/ai/KaminskiNG16} can be applied. It is to be emphasised, however, that tractable datalog optimisation and rewriting techniques cannot be complete.

In this article, we propose to approach the ultimate question from a different, bottom-up direction. In order to see the wood for the trees, we isolate some major sources of difficulty with {\bf (data complexity)} and {\bf (rewritability)} within a syntactically simple yet highly non-trivial class of OMQs. Apart from unearthing the fundamental roots of high complexity, this will allow us to obtain explicit syntactic rewritability conditions and even complete classifications of OMQs according to their data complexity and rewritability type. (Note that similar approaches were taken for analysing datalog programs and CSPs; see Sections~\ref{ourcontribution} and~\ref{related}.)

\subsection{Our contribution}\label{ourcontribution}

We investigate the {\bf (data complexity)} and {\bf (rewritability)} problems for OMQs $\omq$ of a very simple form:
\begin{description}
\item[(d-sirup)] $\omq = (\dis_A,\q)$, where $\dis_A = \{\,A \sqsubseteq F \sqcup T\,\}$ and $\q$ is a Boolean CQ with unary predicates $F$, $T$ and arbitrary binary predicates. 
\end{description}
Our ultimate aim is to understand how the interplay between the \emph{covering axiom} $A \sqsubseteq F \sqcup T$ and the structure of $\q$ determines the complexity and rewritability properties of $\omq$. By regarding $\q$ and data instances as labelled directed graphs (like in the picture above), we can formulate the problem of answering $\omq$ in plain graph-theoretic terms: 
\begin{description}
\item[\rm \quad \textsc{Instance:}] any labelled directed graph (digraph, for short) $\A$; 

\item[\rm \quad \textsc{Problem:}] decide whether each digraph obtained by labelling every $A$-node in $\A$ with either $F$ or $T$ contains a homomorphic image of $\q$ (in which case the certain answer to $\omq$ over $\A$ is `yes'). 
\end{description}
By definition (see, e.g.,~\cite{Arora&Barak09}), this can be done in \coNP{} as $\q$ is \emph{fixed}, and so the existence of a homomorphism from $\q$ to any labelling of $\A$ can be checked in polynomial time by inspecting all possible $|\A|^{|\q|}$-many maps from $\q$ to $\A$. In practice, we could try to solve this problem using, say, a resolution-based prover (see Example~\ref{ex-resolution}) or by evaluating the disjunctive datalog program $\{\eqref{d-sirup1},\eqref{d-sirup2}\}$ below over $\A$, both of which would require finding proofs of exponential size in general (see Theorem~\ref{thm:resolution}). The {\bf (data complexity)} and {\bf (rewritability)} problems ask whether there exists a more efficient algorithmic solution for the given $\omq$ in principle and whether it can be realised as a standard (linear, symmetric) datalog or FO-query evaluated over the input graphs $\A$.

The OMQ $\omq = (\dis_A,\q)$ is equivalent to the \emph{monadic disjunctive datalog query}
\begin{align}\label{d-sirup1}
T(x) \lor F(x) & \leftarrow A(x)\\\label{d-sirup2}
\boldsymbol{G} & \leftarrow \q
\end{align}
with a nullary goal predicate $\G$. In the 1980s, trying to understand boundedness (FO-rewritability) and linearisability (linear-datalog-rewritability) of datalog queries, the database community introduced the notion of \emph{sirup}---standing for `\emph{datalog query with a single recursive rule}'~\cite{DBLP:conf/pods/CosmadakisK86,DBLP:conf/pods/Vardi88,DBLP:journals/iandc/GottlobP03}---which was thought to be crucial for understanding datalog recursion and optimising datalog programs~\cite[Problem 4.2.10]{DBLP:books/el/leeuwen90/Kanellakis90}. Our   OMQs $\omq$ or disjunctive datalog queries $(\{\eqref{d-sirup1},\eqref{d-sirup2}\},\boldsymbol{G})$---which henceforth are referred to as (\emph{monadic}) \emph{disjunctive sirups} or simply \emph{d-sirups}---play the same fundamental role for understanding OMQs with expressive ontologies and monadic disjunctive datalog queries. 

Looking pretty trivial syntactically, d-sirups form a very sophisticated class of OMQs. For example, deciding FO-rewritability of d-sirups (even those of them that are equivalent to monadic datalog sirups) turns out to be 2\ExpTime-hard~\cite{PODS21}---as complex as deciding program boundedness of arbitrary monadic datalog programs~\cite{DBLP:conf/stoc/CosmadakisGKV88,DBLP:conf/lics/BenediktCCB15}.
%
%
Interestingly, one of the sources of this unexpectedly high complexity is `twin' $FT$-labels of nodes in CQs like $\q_5$ above. We can eliminate this source by imposing the standard \emph{disjointness constraint} $F \sqcap T \sqsubseteq \bot$ (or $\bot \leftarrow F(x), T(x)$ in datalog parlance), often used in ontologies and conceptual modelling. Thus, we arrive to \emph{dd-sirups} of the form  
\begin{description}
\item[(dd-sirup)] $\omq = (\dis_A^\bot,\q)$, where $\dis_A^\bot = \{\,A \sqsubseteq F \sqcup T,\ F \sqcap T \sqsubseteq \bot\,\}$. 
\end{description}
The complexity and rewritability of both d- and dd-sirups only depend on the structure of the CQs $\q$, which suggests a research programme of classifying (d)d-sirups by the type of the graph underlying $\q$---directed path, tree, their undirected variants, etc.---and characterising the data complexity and rewritability of OMQs in the resulting classes. 
Thus, in the context of datalog sirups, Afrati and Papadimitriou~\cite{DBLP:journals/jacm/AfratiP93} gave a complete characterisation of \emph{binary chain} sirups that are computable in NC, and so parallelisable. Actually, according to~\cite{DBLP:journals/jacm/AfratiP93}, Kanellakis and Papadimitriou `have investigated the case of unary sirups, and have made progress towards a complete characterization'\!. Unfortunately, that work has never been  published\footnote{\url{https://en.wikipedia.org/wiki/Paris_Kanellakis}}\!\!. (As shown later on in this article and~\cite{PODS21}, unary datalog sirups are closely connected to d-sirups.)    

The main achievement of this article is a complete characterisation of dd-sirups with a path-shaped CQ (like $\q_1$--$\q_3$ and $\q_5$ above). 
Syntactically, the obtained characterisation, a tetrachotomy, is transparent and easily checkable: for any dd-sirup $\omq = (\dis_A^\bot,\q)$ with a path-shaped CQ $\q$, 
%
\begin{description}
\item[($\ACz$)] $\omq$ is FO-rewritable and can be answered in $\ACz$ iff $\q$ contains an $FT$-twin or has no $F$-nodes or no $T$-nodes; 
\end{description}
otherwise,
\begin{description}			
\item[(\NL)] $\omq$ is linear-datalog-rewritable and answering it is \NL-complete if $\q$ is a `periodic' CQ with a single $F$-node or a single $T$-node;
			
\item[(\PTime)] $\omq$ is datalog-rewritable and answering it is \PTime-complete if $\q$ is an `aperiodic' CQ with a single $F$- or $T$-node;
			
\item[(\coNP)] answering $\omq$ is \coNP-complete if $\q$ has at least two $F$-nodes and at least two $T$-nodes.
\end{description}
%
%
%
%
%
(Assuming that $\NL \ne \PTime \ne \coNP$, the three `if' above can be replaced by `iff'\!.)
From the technical point of view, however, to establish this first complete syntactic characterisation of OMQs with disjunctive axioms, we require an adaptation of known methods from description logic~\cite{DBLP:conf/ijcai/LutzS17,DBLP:journals/corr/abs-1904-12533} and datalog~\cite{DBLP:conf/stoc/CosmadakisGKV88,DBLP:conf/lics/BenediktCCB15} as well as developing novel techniques for proving \PTime- and especially \coNP-hardness. As a (cruel) exercise, the reader might be tempted to consider the dd-sirup with $\q_1$ above and then permute the $F$s and $T$s in it. 
The known techniques of encoding NP-complete problems such as 2+2CNF or graph 3-colouring in terms of OMQ answering are not applicable in this case as $\dis_A^\bot$ is not capable of any reasoning bar \emph{binary} case distinction and $\q_1$ has only one binary relation. (To compare, the first \coNP-hard d-sirup found by Schaerf~\cite{Schaerf93} has five roles that are used to encode clauses and their literals in 2+2-CNFs.) An even harder problem is to find a unified construction for arbitrary path-shaped CQs as different types of them require different treatment. 
%
%
%
\paragraph{Structure of the article}
In the remainder of this section, we briefly review the related work. Section~\ref{prelims} contains the necessary background definitions. It also shows (by reduction of the mutilated chessboard problem~\cite{DBLP:conf/focs/DantchevR01,DBLP:journals/tcs/Alekhnovich04}) that answering d-sirups using resolution-based provers requires finding proofs of exponential size in general. 

In Section~\ref{sec:gen}, we make an initial scan of the `battleground' and obtain a few relatively simple complexity and rewritability results for arbitrary (not necessarily path-shaped) d- and dd-sirups. First, we show (by reduction of $\forall\exists$SAT) that answering (d)d-sirups is $\Pi^p_2$-complete for combined complexity (in the size of $\q$ and $\A$), that is, harder than answering \DL{} and $\mathcal{EL}$ OMQs~\cite{DBLP:journals/jacm/BienvenuKKPZ18,DBLP:conf/dlog/Rosati07} (unless $\NP = \Pi_2^p$, and so $\NP = \PSpace$). This result is an improvement on $\Pi^p_2$-hardness of answering OMQs with a Schema.org ontology~\cite{DBLP:conf/ijcai/HernichLOW15}, which are more expressive than (d)d-sirups. 
Then we start classifying d-sirups in terms of occurrences of $F$ and $T$ in the CQs $\q$. Those without occurrences of a solitary $F$ (like $\q_5$) or a solitary $T$ are readily seen to be FO-rewritable. All other twinless d-sirups are shown to be L-hard, with certain symmetric d-sirups with one solitary $F$ and one solitary $T$ being rewritable to symmetric datalog, and so L-complete. D-sirups with a single solitary $F$ or a single solitary $T$ (and possibly with twins) are shown to be rewritable to monadic datalog queries (which also follows from~\cite{DBLP:journals/ai/KaminskiNG16}). This observation allows us to use datalog expansions~\cite{DBLP:books/cs/Ullman89} (called \emph{cactuses} in our context) and automata-theoretic techniques~\cite{DBLP:conf/stoc/CosmadakisGKV88} to analyse FO- and linear-datalog-rewritability of the corresponding d-sirups. In~\cite{PODS21}, we used the criterion of FO-rewritability in terms of cactuses to prove that deciding FO-rewritability of d-sirups with a single solitary $F$ or $T$ as well as that of monadic datalog sirups is 2\ExpTime-complete. Here, we show that nonrecursive datalog, positive existential and UCQ-rewritings of such d-sirups are of at least  single-, double- and triple-exponential size in the worst case, respectively. 


As far as we are aware, there is no known semantic or syntactic criterion distinguishing between datalog programs in \NL{} and \PTime, though Lutz and Sabellek~\cite{DBLP:conf/ijcai/LutzS17,DBLP:journals/corr/abs-1904-12533} gave a nice semantic characterisation of OMQs with an $\mathcal{EL}$ ontology. In Section~\ref{boundedLin}, we combine their ideas with the automata-theoretic technique of Cosmadakis et al.~\cite{DBLP:conf/stoc/CosmadakisGKV88} and prove a useful graph-theoretic sufficient condition for d-sirups to be linear-datalog-rewritable (and so in \NL). Note that every d-sirup whose CQ $\q$ is a ditree with a single solitary $F$ (or $T$) at the root can be rewritten to an atomic OMQ in $\mathcal{EL}$, to which the \ExpTime-complete trichotomy of~\cite{DBLP:conf/ijcai/LutzS17} is applicable.

Finally, in Sections~\ref{sec:tetra} and~\ref{monster}, we obtain the tetrachotomy of the  path-shaped dd-sirups discussed above. Items {\bf ($\ACz$)} and {\bf (\NL)} and the upper bound in {\bf (\PTime)} follow from the previous sections. By far the hardest part of the tetrachotomy is establishing P- and \coNP-hardness. To prove the former, we assemble AND- and OR-gates from copies of a given aperiodic CQ and then use those gates to construct ABoxes that `compute' arbitrary monotone Boolean circuits, which is known to be P-complete. The structure of the gates and circuits is uniform for each type of aperiodicity. 
In the proof of \coNP-hardness, building 3CNFs from copies of a given CQ $\q$ is not uniform as various parts of the construction subtly depend on the order of and the  distances between the $F$- and $T$-nodes in $\q$. We are not aware of any even remotely similar methods in the literature, and believe that our novel `bike technique' can be used for showing \coNP-hardness of many other classes of OMQs.  
(It might be of interest to note that the \coNP-hardness result in our tetrachotomy implies completeness of the datalog rewriting algorithm from~\cite{DBLP:journals/ai/KaminskiNG16}  for path-shaped dd-sirups.)

In Section~\ref{conclude}, we summarise the obtained results and formulate a few open problems for future research.

An extended abstract~\cite{DBLP:conf/kr/GerasimovaKKPZ20} with some of the results from this article has been presented at the 17th International Conference on Principles of Knowledge Representation and Reasoning.


\subsection{Related work}\label{related}

There have been two big waves of research related to {\bf (data complexity)} and {\bf (rewritability)} of ontology-mediated queries. The first one started in the mid 1980s, when the database community was working on optimisation and  parallelisation of datalog programs, which was hoped to be done by `intelligent compilers' (see, e.g.,~\cite{DBLP:conf/pods/Naughton86,DBLP:conf/pods/CosmadakisK86,DBLP:journals/algorithmica/UllmanG88,DBLP:journals/jacm/Naughton89,ramakrishnan1989proof,DBLP:conf/pods/Saraiya89,DBLP:conf/icdt/Wang95}, surveys~\cite{DBLP:books/el/leeuwen90/Kanellakis90,DBLP:journals/csur/DantsinEGV01} and references therein). One of the  fundamental problems considered was to decide whether the depth of recursion required to evaluate a given datalog query could be bounded independently of the input data,  which implies FO-rewritability of the datalog query. Boundedness was shown to be decidable in P for some classes of linear programs~\cite{DBLP:conf/vldb/Ioannidis85,DBLP:conf/pods/Naughton86}, \NP-complete for linear monadic and dyadic single rule programs~\cite{DBLP:conf/pods/Vardi88}, \PSpace-complete for linear mon\-adic programs~\cite{DBLP:conf/stoc/CosmadakisGKV88,DBLP:journals/ijfcs/Meyden00}, and  2\ExpTime-complete for arbitrary monadic programs~\cite{DBLP:conf/stoc/CosmadakisGKV88,DBLP:conf/lics/BenediktCCB15}; see also~\cite{DBLP:conf/pods/NaughtonS87}. On the other hand, boundedness of linear datalog queries with binary predicates and of ternary linear datalog queries with a single recursive rule was proved to be undecidable~\cite{DBLP:journals/jlp/HillebrandKMV95,DBLP:journals/siamcomp/Marcinkowski99} along with many other semantic properties of datalog programs including linearisability, being in L or being in NC~\cite{DBLP:journals/jacm/GaifmanMSV93}. 
The computational complexity of evaluating datalog sirups (of arbitrary arity) as well as their descriptive complexity were studied in~\cite{DBLP:journals/iandc/GottlobP03}. 

The second wave was largely caused by the apparent success story of the DL-underpinned  Web Ontology Language OWL and the OBDA paradigm, both in theory and practice. On the one hand, as we mentioned earlier, large families of DLs that guarantee FO-rewritability~\cite{CDLLR07,ACKZ09} (the \DL-family) and datalog-rewritability~\cite{DBLP:conf/ijcai/BaaderBL05,DBLP:conf/dlog/BaaderLS06,BaaderEtAl-OWLED08DC} (the $\mathcal{EL}$-family) and~\cite{DBLP:conf/ijcai/HustadtMS05,DBLP:journals/jar/HustadtMS07} (the Horn DL-family) were designed and investigated. Other types of rule-based languages with FO-rewritability have also been identified~\cite{DBLP:journals/ws/CaliGL12,DBLP:journals/ai/CaliGP12,DBLP:journals/tods/GottlobOP14,DBLP:journals/ai/BagetLMS11,DBLP:journals/semweb/KonigLMT15}. On the other hand, various methods for rewriting  expressive OMQs to (disjunctive) datalog were suggested and implemented~\cite{DBLP:journals/japll/Perez-UrbinaMH10,DBLP:conf/aaai/EiterOSTX12,DBLP:conf/aaai/KaminskiNG14,DBLP:journals/ai/KaminskiNG16,DBLP:journals/ws/TrivelaSCS15,DBLP:journals/kais/TrivelaSCS20}. For example, the PAGOdA system combines the datalog reasoner RDFox and the OWL 2 reasoner HermiT~\cite{DBLP:journals/jair/ZhouGNKH15}. A partial FO-rewriting algorithm for OMQs with an $\mathcal{ELU}$ ontology (allowing disjunction in $\mathcal{EL}$) and an atomic query was suggested in~\cite{DBLP:conf/dlog/KaminskiG13}, and  
a sound and complete but not necessarily terminating algorithm for OMQs with existential rules in~\cite{DBLP:journals/semweb/KonigLMT15}.

Complexity-theoretic investigations of the {\bf (data complexity)} and {\bf (rewritability)} problems for DL OMQs fall into two categories depending on whether the ontology language is Horn or not. 
FO-rewritability of OMQs with an ontology given in a Horn DL between $\mathcal{EL}$ and \textsl{Horn}$\mathcal{SHIF}$ was studied in~\cite{DBLP:conf/ijcai/BienvenuLW13,DBLP:conf/ijcai/LutzHSW15,DBLP:conf/ijcai/Bienvenu0LW16}, which provided semantic characterisations and established, using automata-theoretic techniques, the complexity of deciding FO-rewritability ranging from \ExpTime{} via \NExpTime{} to 2\ExpTime. A complete characterisation of  OMQs with an $\mathcal{EL}$-ontology was obtained in~\cite{DBLP:conf/ijcai/LutzS17,DBLP:journals/corr/abs-1904-12533}, establishing an \ACz/NL/P data complexity trichotomy, which corresponds to an FO-/linear-datalog-/datalog-rewritability trichotomy. Deciding this trichotomy was shown to be \ExpTime-complete. FO-rewritability of OMQs whose ontology is a set of (frontier-)guarded existential rules was investigated  in~\cite{DBLP:conf/ijcai/BarceloBLP18}.

For non-Horn ontology languages (allowing disjunctive axioms), a crucial step in understanding {\bf (data complexity)} and {\bf (rewritability)} was the discovery  in~\cite{DBLP:journals/tods/BienvenuCLW14,DBLP:journals/lmcs/FeierKL19} of a connection between OMQs and non-uniform constraint satisfaction problems (CSPs) with a fixed template via MMSNP of~\cite{DBLP:journals/siamcomp/FederV98}. It was used  to show that deciding FO- and datalog-rewritability of OMQs with an ontology in any DL between $\mathcal{ALC}$  and $\mathcal{SHIU}$ and an atomic query is \mbox{\textsc{NExpTime}}-complete. The Feder-Vardi dichotomy of CSPs~\cite{DBLP:conf/focs/Bulatov17,DBLP:conf/focs/Zhuk17} implies a P/\textsc{coNP} dichotomy of such OMQs, which is decidable in \textsc{NExpTime}. For monadic disjunctive datalog and OMQs with an $\mathcal{ALCI}$ ontology (that is, $\mathcal{ALC}$ with inverse roles) and a CQ, deciding FO-rewritability rises and becomes 2\NExpTime-complete; deciding whether such an OMQ is rewritable to monadic datalog is between 2\NExpTime{} and 3\NExpTime{}~\cite{DBLP:conf/kr/BourhisL16,DBLP:journals/lmcs/FeierKL19}.
Deciding FO-rewritability of OMQs with a Schema.org\footnote{\url{https://schema.org}} ontology (which admits inclusions between concept and role names as well as covering axioms for role domains and ranges) and a union of CQs (UCQ) is \PSpace-hard; for acyclic UCQs, it can be done in \NExpTime~\cite{DBLP:conf/ijcai/HernichLOW15}.
The data complexity and rewritability of OMQs whose ontology is given in the guarded fragment of first-order logic were considered in~\cite{DBLP:journals/tocl/HernichLPW20}. 

Despite the discovery of general algebraic, automata- and graph-theoretic and semantic characterisations of data complexity and rewritability---which are usually very hard to check---there are very few explicit and easily checkable, possibly partial and applicable to limited OMQ families, sufficient and/or necessary conditions let alone complete classifications. Notable examples include (non-)linearisability conditions for chain datalog queries~\cite{DBLP:conf/pods/Saraiya90,DBLP:journals/tods/ZhangYT90,DBLP:journals/tcs/AfratiGT03}, the markability condition of datalog rewritability for disjunctive datalog programs and DL ontologies~\cite{DBLP:journals/ai/KaminskiNG16}, and explicit NC/P-dichotomy of datalog chain sirups~\cite{DBLP:journals/jacm/AfratiP93}. 
Classifications and dichotomies of various CSPs have been intensively investigated since Schaefer's classification theorem~\cite{DBLP:conf/stoc/Schaefer78}; see, e.g.,~\cite{DBLP:journals/siamcomp/BulatovJK05,DBLP:journals/lmcs/LaroseLT07,DBLP:journals/csr/HellN08,DBLP:journals/toct/ChenL17,DBLP:conf/focs/Bulatov17,DBLP:conf/focs/Zhuk17} and references therein. 


The natural idea~\cite{DBLP:journals/tods/BienvenuCLW14} of translating OMQs to CSPs and then using the algorithms and techniques developed for checking their complexity looks hardly viable in general: for instance, as reported in~\cite{DBLP:conf/birthday/GerasimovaKZ19}, the Polyanna program~\cite{gault2004implementing}, designed to check tractability of CSPs, failed to recognise \coNP-hardness of the very simple OMQ obtained by swapping the last $F$- and $T$-labels in $(\{\eqref{ax4}\}, \q_1)$ above because the CSP translation is unavoidably exponential.

	
\section{Preliminaries}\label{prelims}
	
Using the standard description logic syntax and semantics~\cite{DBLP:books/daglib/0041477}, we consider \emph{ontology-mediated queries} (OMQs) of the form $\omq = (\T,\q)$, where $\T$ is one of the two ontologies
$$
\dis_A ~=~ \{\, A \sqsubseteq F \sqcup T \,\}, \qquad\qquad \dis_A^\bot ~=~ \{\, A \sqsubseteq F \sqcup T,\ \ F \sqcap T \sqsubseteq \bot\,\}
$$
and $\q$ is a \emph{Boolean conjunctive query} (\emph{CQ}, for short): 
an FO-sentence $\q = \exists \avec{x}\, \varphi(\avec{x})$, in which $\varphi$ is a conjunction of (constant- and function-free) atoms with variables from $\avec{x}$. We often think of $\q$ as the \emph{set} of its atoms.  In the context of this paper, CQs may only contain two unary predicates $F$, $T$ and arbitrary binary predicates. 
%
%
As in the previous section, OMQs $\omq = (\dis\!_A,\q)$ are also called \emph{d-sirups} and $\omq = (\dis_A^\bot,\q)$ \emph{dd-sirups}. 

Occasionally, we set $A = \top$, in which case $A \sqsubseteq F \sqcup T$ becomes the \emph{total covering axiom} $F \sqcup T$. It is to be noted that this axiom is \emph{domain dependent}~\cite{Abitebouletal95}, and so regarded to be \emph{unsafe} and disallowed in disjunctive datalog. In general, answering a d-sirup $(\dis_A,\q)$ could be harder than answering the corresponding OMQ $(\dis_\top,\q)$ as shown by Example~\ref{ex:cactus}. We only consider OMQs $(\dis_\top,\q)$ in examples and when proving some lower complexity bounds. 

An \emph{ABox} (data instance), $\A$, is a finite set of ground atoms with unary or binary predicates. We denote by $\ind(\A)$ the set of constants (individuals) in $\A$. An \emph{interpretation} is a structure of the form $\I = (\Delta^\I,\cdot^\I)$ with a domain $\Delta^\I \ne\emptyset$ and an interpretation function $\cdot^\I$ such that $a^\I \in \Delta^\I$ for any constant $a$, $\top^\I = \Delta^\I$, $\bot^\I = \emptyset$, $P^\I \subseteq \Delta^\I$ for any unary predicate $P$, and $P^\I \subseteq \Delta^\I \times \Delta^\I$ for any binary $P$. 
The truth-relation $\I \models \q$, for any CQ $\q$, is defined as usual in first-order logic.
The interpretation $\I$ is a \emph{model} of $\T$ if $A^\I \subseteq F^\I \cup T^\I$ and, for $\T = \dis_A^\bot$, also $F^\I \cap T^\I = \emptyset$; it is a \emph{model} of $\A$ if $P(a) \in \A$ implies $a^\I \in P^\I$ and $P(a,b) \in \A$ implies $(a^\I,b^\I) \in P^\I$. 

The \emph{certain answer} to an OMQ $\omq = (\T,\q)$ over an ABox $\A$ is `yes' if $\I \models \q$ for all models $\I$ of $\T$ and $\A$---in which case we write $\T,\A \models \q$---and `no' otherwise.
A model of $\T$ and $\A$ is \emph{minimal} if, for each \emph{undecided $A$-individual} $a$, for which $A(a)$ is in $\A$ but neither $F(a)$ nor $T(a)$ is,  exactly one of $a^\I\in F^\I$ or $a^\I\in T^\I$ holds. Clearly, $\T, \A \models \q$ iff $\I \models \q$ for every minimal model $\I$ of $\T$ and $\A$. So, from now on, `model' always means `minimal model'\!.

It is often convenient to regard CQs, ABoxes and interpretations as digraphs with labelled edges and partially labelled nodes (by $F$, $T$ in CQs and $F$, $T$, $A$ in ABoxes and interpretations). It is straightforward to see that the truth-relation $\I \models \q$ is equivalent to the existence of a digraph homomorphism $h \colon \q \to \I$ preserving the labels of nodes and edges. 
Without loss of generality, we assume that CQs are \emph{connected} as undirected graphs. 
This graph-theoretic perspective allows us to consider special classes of CQs such as \emph{tree-shaped} CQs, in which the underlying \emph{undirected} graph is a tree, or \emph{ditree-shaped} CQs, in which the underlying directed graph is a tree with all edges pointing away from the root, or dag-shaped CQs, which contain no directed cycles, etc. In particular, by a \emph{path-shaped CQ} $\q$ (or \emph{path CQ}, for short) we mean a (simple) directed path each of whose edges is labelled by one binary predicate. In other words, the  binary atoms in $\q$ form a sequence $R_1(x_1,x_2),R_2(x_2,x_3),\dots,R_n(x_n,x_{n+1})$, where the $x_i$ are all pairwise distinct variables in $\q$ (the $R_i$ are not necessarily distinct). 




We illustrate the reasoning required to find the certain answer to a d-sirup over an ABox both on intuitive and formal levels. Our first example shows that, unsurprisingly, answering d-sirups can be done by a plain proof by cases.

\begin{example}\em 
Consider the OMQ $\omq = (\dis_\top, \q)$ with $A = \top$ and the path CQ $\q$ shown in the picture below:\\
\centerline{
\begin{tikzpicture}[line width=0.8pt,scale=.8]
\node[point,scale=0.7,label=above:{\small $T$}] (1) at (-3,0) {};
\node[point,scale=0.7,label=above:{\small $T$}] (2) at (-1.5,0) {};
\node[point,scale=0.7,label=above:{\small $F$}] (3) at (0,0) {};
\draw[->,right] (1) to node[below] {\small $S$} (2);
\draw[->,right] (2) to node[below] {\small $R$} (3);
\end{tikzpicture}}\\[4pt]  
By analysing the four possible cases for $a,b \in F^\I,T^\I$ in an arbitrary model $\I$ of $\dis_\top$ and the ABox below, one can readily see that each of them contains $\q$ as a subgraph, and so the certain answer to $\omq$ over this ABox is `yes'\!.\\
\centerline{
\begin{tikzpicture}[>=latex,line width=1pt, rounded corners,scale=.8]
			\node[point,scale=0.7,label=above:$a$] (1) at (0,0) {};
			\node[point,scale=0.7,label=above:\small$T$] (1d) at (-1,-1) {};
			\node[point,scale=0.7,label=above:\small$T$] (1dd) at (-2.5,-1) {};
			\node[point,scale=0.7,label=above:$b$] (m) at (2,0) {};
			\node[point,scale=0.7,label=above :\small$\ \ T$] (md) at (3,-1) {};
			\node[point,scale=0.7,label=above:\small$T$] (mdd) at (4.5,-1) {};
			\node[point,scale=0.7,label=above:\small$F$] (2) at (4,0) {};
			\draw[->,right] (1) to node[below = -.5mm] {\small $S$}  (m);
			\draw[->,right] (1d) to node[ below right = -1.5mm] {\small $R$}  (1);
			\draw[->,right] (1dd) to node[below = -.5mm] {\small $S$}  (1d);
			\draw[->,right] (m) to node[below = -.5mm] {\small $R$} (2);
			\draw[->,right] (md) to node[ below left = -1.5mm] {\small $R$}  (m);
			\draw[->,right] (mdd) to node[below = -.5mm] {\small $S$}  (md);
\end{tikzpicture}
}\\
Indeed, if $a^\I \in F^\I$, then $\q$ is homomorphically embeddable into the $S$--$R$ path on the left-hand side of $\I$. Otherwise $a^\I \in T^\I$. If $b^\I \in F^\I$, then $\q$ is homomorphically embeddable into the bottom $S$--$R$ path on the right-hand side of $\I$. In the remaining case $b^\I \in T^\I$, there is a homomorphism from $\q$ into the $S$--$R$ path on the top of $\I$.
\end{example}

Such proofs can be given as formal resolution refutations (derivations of the empty clause) in clausal logic. 

\begin{example}\label{ex-resolution}\em 
The certain answer to a d-sirup $\omq = (\dis_A,\q)$ over an ABox $\A$ is `yes' iff the following set $\mathcal{S}_{\omq,\A}$ of clauses is unsatisfiable:
$$
\mathcal{S}_{\omq,\A} ~=~ \big\{\,\neg A(y) \lor F(y) \lor T(y), \  \bigvee_{P(\avec{x}) \in \q} \neg P(\avec{x}) \,\big\} \ \cup \ \A .
$$
(For a dd-sirup $\omq = (\dis_A^\bot,\q)$, the set $\mathcal{S}_{\omq,\A}$ also contains the clause $\neg F(z) \lor \neg T(z)$.) In other words, the certain answer to $\omq$ over $\A$ is `yes' iff there is a derivation of the empty clause from $\mathcal{S}_{\omq,\A}$ in classical first-order resolution calculus~\cite{ChangLee}. By grounding $\mathcal{S}_{\omq,\A}$, that is, by uniformly substituting individuals in $\ind(\A)$ for the variables $\avec{x}$, $y$ and $z$ in the first two clauses of $\mathcal{S}_{\omq,\A}$, we obtain a set $\bar{\mathcal{S}}_{\omq,\A}$ of essentially propositional clauses with $|\bar{\mathcal{S}}_{\omq,\A}|$ polynomial in $|\A|$. Again, the certain answer to $\omq$ over $\A$ is `yes' iff there is a derivation of the empty clause from $\bar{\mathcal{S}}_{\omq,\A}$ using propositional resolution. 
%
We now show that, in general, such derivations are of exponential size in $|\A|$. 
\end{example}
\begin{theorem}\label{thm:resolution}
There exist a CQ $\q$ and a sequence $\A_n$, $n >0$, of ABoxes such that $|\A_n|$ is polynomial in $n$ and any resolution refutation of $\mathcal{S}_{\omq,\A_n}$ or $\bar{\mathcal{S}}_{\omq,\A_n}$, for $\omq = (\dis_A,\q)$, is of size $2^{\Omega(n)}$.
\end{theorem}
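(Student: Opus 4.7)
The plan is to reduce from the mutilated chessboard principle $\text{MC}_n$, whose unsatisfiability is known, by Dantchev and Riis~\cite{DBLP:conf/focs/DantchevR01} and Alekhnovich~\cite{DBLP:journals/tcs/Alekhnovich04}, to require resolution refutations of size $2^{\Omega(n)}$. I will fix one Boolean CQ $\q$ and construct ABoxes $\A_n$ of size polynomial in $n$ so that the grounded clause set $\bar{\mathcal{S}}_{\omq,\A_n}$ (for $\omq = (\dis_A,\q)$) polynomially encodes the standard clausal form of $\text{MC}_n$, modulo a polynomial number of auxiliary structural clauses coming from the ABox gadgets and the grounding of $\q$.

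For the construction, I would take the $A$-individuals of $\A_n$ to represent the potential domino placements on a $(2n\times 2n)$ board with two opposite same-colour corners removed, using the labels $F$ and $T$ to indicate ``absent'' and ``present'' respectively. Auxiliary non-$A$ individuals and a fixed finite set of binary predicates encode the incidence structure: cell--domino membership, pairs of dominoes that would overlap, and the correspondence between a cell and the constant-size set of dominoes that may cover it. The CQ $\q$ is then designed as a single small ``violation witness'' pattern whose homomorphic image into a minimal model of $\dis_A$ and $\A_n$ exists precisely when the induced $F/T$-labelling fails to describe a valid tiling, i.e.\ it exhibits a locally detectable defect such as an uncovered cell or a doubly covered cell. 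Because no tiling of the mutilated board exists, every minimal model contains such a defect, so the certain answer to $\omq$ over $\A_n$ is `yes'\!.

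To transfer the resolution lower bound, I would observe that, up to the finitely many unit clauses of $\A_n$, the set $\bar{\mathcal{S}}_{\omq,\A_n}$ is made of (i) the clauses $F(a) \vee T(a)$ for each undecided $A$-individual $a$, which play the role of the Boolean domino variables of $\text{MC}_n$, and (ii) one clause per ground homomorphism of the binary part of $\q$ into $\A_n$, which by construction enumerates exactly the local tiling constraints of the mutilated chessboard. Thus $\bar{\mathcal{S}}_{\omq,\A_n}$ and the standard CNF encoding of $\text{MC}_n$ are inter-reducible by resolution derivations of polynomial size, so a hypothetical refutation of $\bar{\mathcal{S}}_{\omq,\A_n}$ of size $s$ yields a refutation of $\text{MC}_n$ of size $\mathrm{poly}(s,n)$; the $2^{\Omega(n)}$ lower bound of~\cite{DBLP:conf/focs/DantchevR01,DBLP:journals/tcs/Alekhnovich04} therefore transfers. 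The first-order version $\mathcal{S}_{\omq,\A_n}$ inherits the same bound since, by Herbrand's theorem, every FO resolution refutation of it can be turned, without increasing the number of clauses, into a propositional refutation of $\bar{\mathcal{S}}_{\omq,\A_n}$ obtained by grounding.

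The main obstacle is the design of the \emph{fixed} CQ $\q$: it must uniformly witness every possible local violation of the tiling constraints even though the instance size $n$ grows. The remedy is to push all instance-dependent complexity into the ABox by introducing auxiliary ``router'' individuals and a constant number of binary predicates, arranged so that the constantly many violation shapes (uncovered cell, doubly covered cell, two incompatible choices at adjacent positions) all appear as homomorphic images of the same pattern $\q$. The delicate step will be to check that no \emph{spurious} homomorphism arises from the combination of the router gadgets in $\A_n$ and $\q$, as a spurious homomorphism would allow a short refutation of $\bar{\mathcal{S}}_{\omq,\A_n}$ that by-passes $\text{MC}_n$.
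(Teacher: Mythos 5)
Your high-level plan is the same as the paper's: reduce from the mutilated chessboard principle and transfer the $2^{\Omega(n)}$ resolution lower bound of Dantchev--Riis and Alekhnovich, noting that the refutations of $\mathcal{S}_{\omq,\A_n}$, $\bar{\mathcal{S}}_{\omq,\A_n}$ and the standard CNF of $\text{MC}_n$ are inter-translatable at polynomial cost. That part is on target, as is your observation that the grounded clause $\bigvee_{P(\avec{x})\in\q}\neg P(\avec{x})$ contributes, in effect, one useful constraint per map that preserves the binary atoms of $\q$.

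However, there is a genuine gap: the entire technical substance of the theorem is the construction of a single \emph{fixed} CQ $\q$ and polynomial-size gadget ABoxes $\A_n$, together with the verification that a minimal model $\I$ of $\dis_A$ and $\A_n$ satisfies $\I\not\models\q$ exactly when the $F/T$-labelling of the $A$-nodes encodes a domino tiling, with no spurious homomorphic images of $\q$ sneaking through. You state this as the ``delicate step'' and ``main obstacle'' but do not carry it out, so the argument is not yet a proof. Your encoding also differs from the paper's in a way that matters for that verification: you put the Boolean variables on potential domino placements (one $A$-individual per domino), whereas the paper puts them on the \emph{contacts} between adjacent squares (one $A$-node per shared edge, with $T$ meaning ``these two squares are covered by one domino''), so that the local constraint is simply ``exactly one of the four contacts of this square is labelled $T$.'' This makes both the covering $\Leftrightarrow$ tiling correspondence and the spurious-homomorphism check local to a square and its four contacts, which is precisely what allows a small fixed $\q$ (one hub $x$ with four $F$-labelled neighbours plus a hub $y$ with two $T$-labelled neighbours) to detect every possible defect. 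With the domino-placement variables you propose, the ``doubly covered cell'' and ``uncovered cell'' constraints involve sets of dominoes of different sizes and positions, and it is not clear how a single constant-size query pattern can uniformly witness all of them without introducing router gadgets that themselves create spurious matches. To close the gap you would need to either switch to the contact-based encoding or exhibit concrete router gadgets together with a proof that every homomorphic image of $\q$ corresponds to a genuine tiling defect.
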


\begin{proof}
We show that the \emph{mutilated chessboard problem} can be solved by answering a d-sirup $\omq$ over certain ABoxes $\A_n$. The problem is as follows: given a chessboard of size $2n \times 2n$, for $n > 0$, with two white corner squares  removed, prove that it \emph{cannot} be covered by domino tiles (rectangles with two squares). 
This problem was encoded as a set of propositional clauses of size linear in $n$~\cite{DBLP:conf/focs/DantchevR01,DBLP:journals/tcs/Alekhnovich04}, any resolution proof of which is of size $2^{\Omega(n)}$~\cite[Theorem~2.1]{DBLP:conf/focs/DantchevR01}. 
%
%
We encode the same problem by the set $\mathcal{S}_{\omq,\A_n}$, for some
d-sirup $\omq = (\dis_A,\q)$ and ABox $\A_n$. Since our encoding and the one  in~\cite{DBLP:conf/focs/DantchevR01} are `locally' translatable to each other and in view of~\cite[Proposition~3.4]{DBLP:journals/tcs/Alekhnovich04}), any resolution refutation of $\mathcal{S}_{\omq,\A_n}$ or $\bar{\mathcal{S}}_{\omq,\A_n}$ is also of size $2^{\Omega(n)}$.

Our CQ $\q$ is shown on the left-hand side of Fig.~\ref{f:chess-mutilated}. The mutilated $2n \times 2n$
chessboard is turned to an ABox $\A_n$ by replacing each of its squares with the pattern shown in the middle of  Fig.~\ref{f:chess-mutilated}. The encodings of the different squares are connected via their four \emph{contacts\/},
depicted as $\circ$-nodes. 
Each of these contacts is labelled by $A$ or $F$, depending on whether it is in-between two squares,
or at the boundary of the board; see the right-hand side of Fig.~\ref{f:chess-mutilated}.
All of the binary edges in $\q$ and $\A_n$ are assumed to be labelled by $R$. Labels $w,x,y,z$ are just pointers and not parts of $\q$ or $\A_n$. 



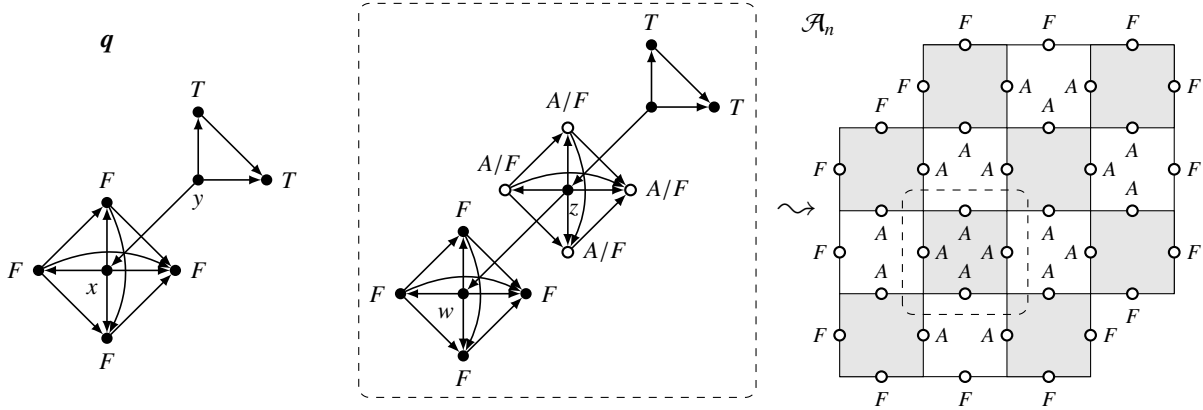
\begin{figure}[t]
\begin{tikzpicture}[>=latex,line width=0.6pt,scale = .6]
\node[]  at (0,5) {$\q$};
\node[]  at (0,-3) {\ };
\node[point,fill=black,scale = 0.6,label=below left:{\small $x$\!}] (0) at (0,0) {};
\node[point,fill=black,scale = 0.6,label=left:{\small $F$}] (1) at (-1.5,0) {};
\node[point,fill=black,scale = 0.6,label=above:{\small $F$}] (2) at (0,1.5) {};
\node[point,fill=black,scale = 0.6,label=right:{\small $F$}] (3) at (1.5,0) {};
\node[point,fill=black,scale = 0.6,label=below:{\small $F$}] (4) at (0,-1.5) {};
\draw[->] (0) to (1);
\draw[->] (0) to (2);
\draw[->] (0) to (3);
\draw[->] (0) to  (4);
\draw[->] (1) to (2);
\draw[->] (2) to (3);
\draw[->] (1) to (4);
\draw[->] (4) to (3);
\draw[->,bend left = 25] (2) to (4);
\draw[->,bend left = 25] (1) to (3);
\node[point,fill=black,scale = 0.6,label=below:{\small $y$}] (t0) at (2,2) {};
\node[point,fill=black,scale = 0.6,label=right:{\small $T$}] (t1) at (3.5,2) {};
\node[point,fill=black,scale = 0.6,label=above:{\small $T$}] (t2) at (2,3.5) {};
\draw[->] (t0) to (0);
\draw[->] (t0) to (t1);
\draw[->] (t0) to (t2);
\draw[->] (t2) to (t1);
\end{tikzpicture}
\hspace*{0.5cm}
\begin{tikzpicture}[>=latex,line width=0.6pt,scale = .55]
\node[point,fill=black,scale = 0.6,label=below left:{\small $w$\!}] (0) at (0,0) {};
\node[point,fill=black,scale = 0.6,label=left:{\small $F$}] (1) at (-1.5,0) {};
\node[point,fill=black,scale = 0.6,label=above:{\small $F$}] (2) at (0,1.5) {};
\node[point,fill=black,scale = 0.6,label=right:{\small $F$}] (3) at (1.5,0) {};
\node[point,fill=black,scale = 0.6,label=below:{\small $F$}] (4) at (0,-1.5) {};
\draw[->] (0) to (1);
\draw[->] (0) to (2);
\draw[->] (0) to (3);
\draw[->] (0) to  (4);
\draw[->] (1) to (2);
\draw[->] (2) to (3);
\draw[->] (1) to (4);
\draw[->] (4) to (3);
\draw[->,bend left = 25] (2) to (4);
\draw[->,bend left = 25] (1) to (3);
\node[point,fill=black,scale = 0.6,label=below right:{\small \!\!\!$z$}] (a0) at (2.5,2.5) {};
\node[point,scale = 0.7,label=right:{\small $A/F$}] (a1) at (4,2.5) {};
\node[point,scale = 0.7,label=above:{\small $A/F$}] (a2) at (2.5,4) {};
\node[point,scale = 0.7,label=above:{\small $A/F$\ \ \ }] (a3) at (1,2.5) {};
\node[point,scale = 0.7,label=right:{\small $A/F$}] (a4) at (2.5,1) {};
\draw[->] (a0) to (0);
\draw[->] (a0) to (a1);
\draw[->] (a0) to (a2);
\draw[->] (a0) to (a3);
\draw[->] (a0) to  (a4);
\draw[->] (a2) to (a1);
\draw[->] (a3) to (a2);
\draw[->] (a4) to (a1);
\draw[->] (a3) to (a4);
\draw[->,bend left = 25] (a2) to (a4);
\draw[->,bend left = 25] (a3) to (a1);
\node[point,fill=black,scale = 0.6] (t0) at (4.5,4.5) {};
\node[point,fill=black,scale = 0.6,label=above:{\small $T$}] (t1) at (4.5,6) {};
\node[point,fill=black,scale = 0.6,label=right:{\small $T$}] (t2) at (6,4.5) {};
\draw[->] (t0) to (a0);
\draw[->] (t0) to (t1);
\draw[->] (t0) to (t2);
\draw[->] (t1) to (t2);
\draw[thin,dashed,rounded corners] (-2.5,-2.5) rectangle (7,7);
\node[]  at (8,2) {\bf\Large $\leadsto$};

\draw[-,thin] (9,-2) -- (15,-2);
\draw[-,thin] (9,-2) -- (9,4);
\draw[-,thin] (17,6) -- (17,0);
\draw[-,thin] (17,6) -- (11,6);
\draw[thin,fill=gray!20] (9,-2) rectangle (11,0);
\node[point,fill=white,scale = 0.7,label=above:{\footnotesize $A$}] (00u) at (10,0) {};
\node[point,fill=white,scale = 0.7,label=right:{\footnotesize \!$A$}] (00r) at (11,-1) {};
\node[point,fill=white,scale = 0.7,label=below:{\footnotesize $F$}] (00d) at (10,-2) {};
\node[point,fill=white,scale = 0.7,label=left:{\footnotesize $F$\!}] (00l) at (9,-1) {};
\draw[thin,fill=gray!20] (11,0) rectangle (13,2);
\node[point,fill=white,scale = 0.7,label=below:{\footnotesize $A$}] (11u) at (12,2) {};
\node[point,fill=white,scale = 0.7,label=left:{\footnotesize $A$\!}] (11r) at (13,1) {};
\node[point,fill=white,scale = 0.7,label=above:{\footnotesize $A$}] (11d) at (12,0) {};
\node[point,fill=white,scale = 0.7,label=right:{\footnotesize \!$A$}] (11l) at (11,1) {};
\draw[thin,fill=gray!20] (13,2) rectangle (15,4);
\node[point,fill=white,scale = 0.7,label=above:{\footnotesize $A$}] (22u) at (14,4) {};
\node[point,fill=white,scale = 0.7,label=right:{\footnotesize \!$A$}] (22r) at (15,3) {};
\node[point,fill=white,scale = 0.7,label=below:{\footnotesize $A$}] (22d) at (14,2) {};
\node[point,fill=white,scale = 0.7,label=left:{\footnotesize $A$\!}] (22l) at (13,3) {};
\draw[thin,fill=gray!20] (15,4) rectangle (17,6);
\node[point,fill=white,scale = 0.7,label=above:{\footnotesize $F$}] (33u) at (16,6) {};
\node[point,fill=white,scale = 0.7,label=right:{\footnotesize \!$F$}] (33r) at (17,5) {};
\node[point,fill=white,scale = 0.7,label=below:{\footnotesize $A$}] (33d) at (16,4) {};
\node[point,fill=white,scale = 0.7,label=left:{\footnotesize $A$\!}] (33l) at (15,5) {};
\draw[thin,fill=gray!20] (13,-2) rectangle (15,0);
\node[point,fill=white,scale = 0.7,label=above:{\footnotesize $A$}] (20u) at (14,0) {};
\node[point,fill=white,scale = 0.7,label=right:{\footnotesize \!$F$}] (20r) at (15,-1) {};
\node[point,fill=white,scale = 0.7,label=below:{\footnotesize $F$}] (20d) at (14,-2) {};
\node[point,fill=white,scale = 0.7,label=left:{\footnotesize $A$\!}] (20l) at (13,-1) {};
\draw[thin,fill=gray!20] (15,0) rectangle (17,2);
\node[point,fill=white,scale = 0.7,label=above:{\footnotesize $A$}] (31u) at (16,2) {};
\node[point,fill=white,scale = 0.7,label=right:{\footnotesize \!$F$}] (31r) at (17,1) {};
\node[point,fill=white,scale = 0.7,label=below:{\footnotesize $F$}] (31d) at (16,0) {};
\node[point,fill=white,scale = 0.7,label=left:{\footnotesize $A$\!}] (31l) at (15,1) {};
\draw[thin,fill=gray!20] (9,2) rectangle (11,4);
\node[point,fill=white,scale = 0.7,label=above:{\footnotesize $F$}] (02u) at (10,4) {};
\node[point,fill=white,scale = 0.7,label=right:{\footnotesize \!$A$}] (02r) at (11,3) {};
\node[point,fill=white,scale = 0.7,label=below:{\footnotesize $A$}] (02d) at (10,2) {};
\node[point,fill=white,scale = 0.7,label=left:{\footnotesize $F$\!}] (02l) at (9,3) {};
\draw[thin,fill=gray!20] (11,4) rectangle (13,6);
\node[point,fill=white,scale = 0.7,label=above:{\footnotesize $F$}] (13u) at (12,6) {};
\node[point,fill=white,scale = 0.7,label=right:{\footnotesize \!$A$}] (13r) at (13,5) {};
\node[point,fill=white,scale = 0.7,label=below:{\footnotesize $A$}] (13d) at (12,4) {};
\node[point,fill=white,scale = 0.7,label=left:{\footnotesize $F$\!}] (13l) at (11,5) {};
\node[point,fill=white,scale = 0.7,label=below:{\footnotesize $F$}] (10d) at (12,-2) {};
\node[point,fill=white,scale = 0.7,label=left:{\footnotesize $F$\!}] (01l) at (9,1) {};
\node[point,fill=white,scale = 0.7,label=right:{\footnotesize \!$F$}] (32r) at (17,3) {};
\node[point,fill=white,scale = 0.7,label=above:{\footnotesize $F$}] (23u) at (14,6) {};
\draw[thin,dashed,rounded corners] (10.5,-.5) rectangle (13.5,2.5);
\node[]  at (8.5,6.5) {$\A_n$};
\end{tikzpicture}
\caption{Encoding the mutilated chessboard problem as $\mathcal{S}_{(\dis_A,\q),\A_n}$.}\label{f:chess-mutilated}
\end{figure}	
	
    
We call a model $I$ of $\dis_A$ and $\A_n$ \emph{covering} if exactly one contact in the  encoding of each square is in $T^\I$.
Covering models are clearly in one-to-one correspondence with domino-coverings (with each contact being in $T^\I$ iff it is between two squares covered by the same domino).
We show that a model $\I$ of $\dis_A$ and $\A_n$ is covering iff $\I\not\models\q$.
%
%
This implies the correctness of our encoding: the $2n\times 2n$ mutilated chessboard cannot be covered by dominos iff the answer to $\omq$ over $\A_n$ is `yes'\!, that is, there exist resolution refutations of both 
$\mathcal{S}_{\omq,\A_n}$ and $\bar{\mathcal{S}}_{\omq,\A_n}$. 
    
$(\Rightarrow)$ 
If $\I$ is covering, then at least one of the four contacts of each square is not labelled by $F$. Thus, node $x$ of $\q$ can be homomorphically mapped only to node $w$ of the encoding of some square, and so $y$ should be mapped to $z$. But then the two $T$-nodes in $\q$ should be mapped to two different contacts of the same square, contrary to the fact that in covering models only one such contact is labelled by $T$. Therefore, there is no $\q\to\I$ homomorphism.
$(\Leftarrow)$ If $\I$ is not covering because there is a square none of whose contacts is labelled by $T$, then
by mapping $x$ to $z$ in the encoding of that square we can obtain a $\q\to\I$ homomorphism. 
And if there is a square such that at least two of its contacts are labelled by $T$, we can obtain a $\q\to\I$ homomorphism by mapping $x$ to $w$ and $y$ to $z$. 
%
\end{proof}

Our concern in the remainder of this article is the \emph{combined} and \emph{data complexity} of deciding, for a given (d)d-sirup $\omq = (\T,\q)$ and an ABox $\A$, whether $\T,\A \models \q$. In the former case, both $\q$ and $\A$ are regarded as input; in the latter one, $\q$ is fixed. It should be clear  that $\Pi^p_2 = \coNP^\NP$ is an upper bound for the combined complexity of our problem, which amounts to checking that, for every model $\I$ of $\T$ and $\A$, there exists a homomorphism $\q \to \I$, with the latter being \NP-complete. For data complexity, when $\q$ is fixed, checking the existence of a homomorphism $\q \to \I$ can be done in \PTime, and so the whole problem is in \coNP{}. 
	
We are also interested in various types of rewritability of (d)d-sirups.  
An OMQ $\omq = (\T,\q)$ is called \emph{FO-rewritable} if there is an FO-sentence $\Phi$ such that $\T, \A\models \q$ iff $\Phi$ is true in $\A$ given as an FO-structure~\cite{Immerman99}. In terms of circuit complexity, FO-rewritability is equivalent to answering $\omq$ in logtime-uniform $\ACz$~\cite{Immerman99}.

Recall from, say~\cite{Abitebouletal95}, that a \emph{datalog program}, $\Pi$, is a finite set of \emph{rules} of the form
$\forall \avec{x}\, (\gamma_0 \leftarrow \gamma_1 \land \dots \land \gamma_m)$,
where each $\gamma_i$ is a (constant- and function-free) atom $Q(\avec{y})$ with $\avec{y} \subseteq  \avec{x}$. As usual, we omit $\forall \avec{x}$. The atom $\gamma_0$ is the \emph{head} of the rule, and $\gamma_1,\dots,\gamma_m$ its  \emph{body}. All of the variables in the head must occur in the body. The predicates in the heads of rules are called \emph{IDB predicates}, the rest \emph{EDB predicates}. The \emph{arity} of $\Pi$ is the maximum arity of its IDB predicates; 1-ary $\Pi$ is called \emph{monadic}.  
A \emph{datalog query} in this article takes the form $(\Pi,\G)$ with a 0-ary (goal) atom $\G$. The \emph{answer} to $(\Pi,\G)$ over an ABox $\A$ is `yes' if $\G$ is true in the structure $\Pi(\A)$ obtained by closing $\A$ under the rules in $\Pi$, in which case we write $\Pi,\A \models \G$. We call $(\Pi,\G)$ a \emph{datalog-rewriting} of an OMQ $\omq = (\T,\q)$ in case $\T, \A\models \q$ iff  $\Pi,\A \models \G$, for any ABox $\A$ containing EDB predicates of $\Pi$ only. If $\omq$ is datalog-rewritable, then it can be answered in \PTime{} for data complexity~\cite{DBLP:journals/csur/DantsinEGV01}. 
 
If there is a rewriting of $\omq$ to a $(\Pi,\G)$ with a \emph{linear} program $\Pi$, having at most one IDB predicate in the body of each of its rules, then $\omq$ can be   answered in \NL{} (non-deterministic logarithmic space).  
The \NL{} upper bound also holds for datalog queries with a linear-stratified program, which is defined as follows.
A \emph{stratified} program~\cite{Abitebouletal95} is a sequence $\Pi = (\Pi_0, \dots, \Pi_n)$ of datalog programs, called the \emph{strata} of $\Pi$, such that each predicate  in $\Pi$ can occur in the head of a rule only in one stratum $\Pi_i$ and can occur in the body of a rule only in strata $\Pi_j$ with $j \ge i$. 
If, in addition, the body of each rule in $\Pi$ contains at most one occurrence of a head predicate from the same stratum, $\Pi$ is called \emph{linear-stratified}. Every linear-stratified program can be converted to an equivalent linear datalog program~\cite{DBLP:journals/tcs/AfratiGT03}, and so datalog queries with a linear-stratified program can be answered in \NL{} for data complexity. 

A linear program $\Pi$ is \emph{symmetric} if, for any recursive rule $I(\avec{x})\leftarrow J(\avec{y}) \land E(\avec{z})$ in $\Pi$ (except the goal rules), where $J$ is an IDB predicate and $E(\avec{z})$ is the conjunction of the EDBs of the rule, its symmetric counterpart  $J(\avec{y})\leftarrow I(\avec{x}) \land E(\avec{z})$ is also a rule in $\Pi$. It is known (see, e.g.,~\cite{egri2007symmetric}) that symmetric programs can be evaluated in \LogSpace{} (deterministic logarithmic space) for data complexity. Thus, if $\omq$ is rewritable to a symmetric datalog query, it can be answered in \LogSpace.

The complexity classes we deal with in this article form the chain
$$
\ACz \quad \subsetneq \quad \LogSpace \quad \subseteq \quad \NL \quad \subseteq \quad \PTime \quad \subseteq \quad \coNP \quad \subseteq \quad \Pi_2^p
$$
(whether any of the inclusions $\subseteq$ is strict is a major open problem in complexity theory). The P/NP dichotomy for CSPs~\cite{DBLP:conf/focs/Bulatov17,DBLP:conf/focs/Zhuk17} and the reductions from~\cite{DBLP:journals/tods/BienvenuCLW14,DBLP:journals/lmcs/FeierKL19} imply that every (d)d-sirup is either in P or \coNP-complete. However, as far as we know, at the moment there are no other established dichotomies for OMQs with disjunctive axioms. 
On the other hand, as mentioned above, OMQ answering in \ACz{} is equivalent to FO-rewritability, but whether OMQ answering in L (NL or P) implies symmetric-datalog-rewritability (respectively, linear-datalog- or datalog-rewritability) also remains open.

	
\section{Initial observations}\label{sec:gen}
	
In this section, we obtain a number of relatively simple complexity and rewritability results that are applicable to arbitrary (not necessarily path) d- and dd-sirups.
	

\subsection{Combined complexity}

Our first result pushes to the limit~\cite[Theorem 5]{DBLP:conf/ijcai/HernichLOW15} according to which answering OMQs with a Schema.org ontology is $\Pi^p_2$-complete for combined complexity (the hardness proof of that theorem uses  an ontology with an enumeration definition $E = \{0,1\}$ and additional concept names, i.e., unary predicates, none of which is available in our case).
	
\begin{theorem}\label{combi}
$(i)$ Answering d-sirups $(\dis_A,\q)$ and dd-sirups $(\dis_A^\bot,\q)$ is $\Pi^p_2$-complete for combined complexity.   
		
$(ii)$ Answering d- and dd-sirups with a tree-shaped 
CQ $\q$ is \coNP-complete for combined complexity.
\end{theorem}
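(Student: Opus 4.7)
For (i), the universal--existential shape of the certain-answer condition places the problem in $\coNP^{\NP}=\Pi_2^p$, as already noted just before the theorem statement: the outer coNP quantifies over the at most $2^{|\ind(\A)|}$ labelings of the undecided $A$-individuals (the only source of non-determinism in minimal models of $\dis_A$ and $\dis_A^\bot$), and the inner NP guesses a homomorphism $\q \to \I$. For (ii), homomorphism checking from a tree-shaped CQ is polynomial-time via a standard bottom-up dynamic-programming traversal, so the inner step is in $\PTime$ and the overall combined complexity drops to $\coNP^{\PTime}=\coNP$.

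\textbf{Lower bound for (ii).} This is immediate from $\coNP$-hardness of data complexity for some tree-shaped witness. The path-shaped d-sirup $(\dis_A,\q_1)$ announced in Section~\ref{intro} is such a witness: its $\coNP$-completeness is established in Section~\ref{monster} as part of the tetrachotomy, and a path is a fortiori a tree.

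\textbf{Lower bound for (i).} The main task is $\Pi_2^p$-hardness, for which I would reduce from $\forall\exists$-3SAT. Given an instance $\Phi=\forall \avec{x} \exists \avec{y}\,\psi(\avec{x},\avec{y})$ with $\psi$ a 3-CNF $C_1\land\cdots\land C_k$, the plan is to build an ABox $\A_\Phi$ and a CQ $\q_\Phi$ with $\dis_A,\A_\Phi \models \q_\Phi$ iff $\Phi$ is true. Concretely, $\A_\Phi$ contains a central node $c$, an undecided $A$-individual $a_i$ for each $x_i$ attached to $c$ by a fresh relation $X_i$, two pre-labeled nodes $b_j^F$ with $F$ and $b_j^T$ with $T$ for each $y_j$ attached to $c$ by $Y_j$, and, for each clause $C_l$ and each literal position $k \in \{1,2,3\}$, one further edge $R_l^k(c,w_{l,k})$ to the variable node that literal depends on. The CQ $\q_\Phi$ mirrors the central fan-out at a central variable $v$, has one unlabeled variable $u_j$ per $y_j$ (whose homomorphic choice of $b_j^F$ or $b_j^T$ encodes the assignment to $y_j$), and, for each clause $C_l$, polarity-typed literal-selector variables whose unary $F$- or $T$-atoms force the selected literal's underlying variable to carry the label making that literal true. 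Labelings of $\A_\Phi$ then range over $\avec{x}$-assignments while homomorphic choices for the $u_j$ and for the clause selectors jointly range over $\avec{y}$-assignments and per-clause satisfying literals; the same construction goes through for $\dis_A^\bot$ since the involved labelings never assign both $F$ and $T$ to a single individual.

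The main technical obstacle is designing the clause gadget so that the conjunctive CQ can still express the disjunction ``at least one of the three literals of $C_l$ is satisfied'' and can uniformly handle clauses of mixed literal polarity. My plan is to use distinct witness relations $R_l^k$ per literal position (forcing the homomorphism to commit to exactly one position) together with polarity-typed selector variables (so that the unary $F/T$-atom on a selector forces the witness node to carry the label that makes the selected literal true). Soundness and completeness of the reduction then follow by routine case analysis over the labelings of $\A_\Phi$.
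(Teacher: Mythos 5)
Your upper bounds for both parts, and your lower bound for (ii), match the paper's argument: the $\Pi_2^p$ membership follows from the coNP-over-NP shape of the certain-answer condition (with the inner oracle dropping to P for tree-shaped CQs), and the $\coNP$-hardness of (ii) is inherited from the tetrachotomy's $\coNP$-hard path-shaped dd-sirups of Theorem~\ref{t:coNPhard}.

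The $\Pi_2^p$-hardness reduction you sketch for (i), however, has a genuine gap, and it is exactly the obstacle you flag but do not resolve: with a \emph{single} central ABox individual $c$ and per-position polarity-typed selectors attached to $c$ by dedicated relations $R_l^k$, the CQ expresses ``\emph{all three} literals of $C_l$ have the desired polarity,'' not ``at least one does.'' For $C_l = x_1 \lor \neg x_2 \lor y_1$, the atoms $R_l^1(v,s_{l,1}) \land T(s_{l,1})$ and $R_l^2(v,s_{l,2}) \land F(s_{l,2})$ force $a_1 \in T^\I$ \emph{and} $a_2 \in F^\I$ simultaneously, so a universal assignment that makes the clause true via only $\neg x_2$ (say $a_1,a_2 \in F^\I$) yields a model in which no homomorphism exists even though $\Phi$ may be true. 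You cannot fix this inside the CQ: a conjunctive query over a single center cannot realise a per-clause disjunction. The two escape hatches --- (a) adding wildcard ABox nodes that carry both $F$ and $T$ so that non-witness positions always match, and (b) somehow preventing \emph{every} selector of a clause from escaping to a wildcard --- are both missing from your construction; without (a) the reduction is too strong, and with (a) but without (b) it becomes too weak. A further issue is that identifying the $y$-position selector with the shared, otherwise-unlabeled $u_j$ would put conflicting unary labels on $u_j$ whenever $y_j$ occurs with mixed polarity across clauses.

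The paper's proof resolves this by pushing the disjunction into the \emph{ABox}, not the CQ. The CQ has, per clause $c$, a small three-edge star $R^c_1(z^c,u^c_1),R^c_2(z^c,u^c_2),R^c_3(z^c,u^c_3)$ with a fresh center $z^c$, where each $u^c_i$ is either the shared $y$-variable or a fresh, per-clause, polarity-labeled $x^c$-variable. The ABox contains, for each clause $c$, \emph{one fresh star center} $d^c_{(e_1,e_2,e_3)}$ per triple in an explicitly enumerated set $E^c$ of ``acceptable'' target triples. Two ingredients do the work: wildcard individuals $a^\circ_x$ labeled by both $F$ and $T$ (replaced by the pair $a^F_x,a^T_x$ for the disjoint $\dis_A^\bot$ case), so that the $x$-positions of the CQ's star can always match regardless of polarity; and the restriction defining $E^c$, which requires at least one $e_i$ to be a genuine witness ($a^*_x$ for a universally-chosen $x$, or the correct $b^\nu_y$ for the existentially-chosen $y$). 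The homomorphism's choice of which ABox star to target is precisely the per-clause selection of a satisfying literal, and the enumeration is what enforces that at least one position is satisfied. Your single-center architecture would have to replicate this per-clause, per-triple enumeration; as written, the correspondence between $\Phi$ being true and $\dis_A,\A_\Phi \models \q_\Phi$ does not hold.
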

\begin{proof}
As mentioned in Section~\ref{prelims}, deciding whether $\T,\A \models \q$, given a (d)d-sirup $\omq = (\T,\q)$ and an ABox $\A$, can be done by a \coNP{} Turing machine (checking all models $\I$ of $\T$ and $\A$) with an \NP-oracle (checking the existence of a homomorphism $h\colon \q \to \I$); for tree-shaped $\q$, a \PTime-oracle is enough (see, e.g.,~\cite{DBLP:journals/jacm/Grohe07} and further references therein). The lower bound in $(ii)$ follows from Theorem~\ref{t:coNPhard}. 

For $(i)$, we prove it by reduction of $\Pi^p_2$-complete $\forall\exists \text{3SAT}$~\cite{DBLP:journals/tcs/Stockmeyer76}. We remind the reader that  a propositional formula $\psi(\avec{x},\avec{y})$ with tuples $\avec{x}$ and $\avec{y}$ of propositional variables is a \emph{3CNF} if it is a conjunction of 
\emph{clauses} of the form $\lit_1 \lor \lit_2 \lor \lit_3$, where each $\lit_i$ is a \emph{literal}
	(a propositional variable or a negation thereof). The decision problem $\forall\exists \text{3SAT}$ asks whether the fully quantified
propositional formula $\varphi = \forall \avec{x} \exists \avec{y} \, \psi(\avec{x},\avec{y})$ is true, for any given 3CNF $\psi$.
We may assume that each clause contains each variable at most once. 
 Denote by $\q_\varphi$ the CQ that, for each clause $c = \lit_1 \lor \lit_2 \lor \lit_3$ in $\psi$, contains atoms $R^c_i(z^c,u_i^c)$, $i=1,2,3$, with $u_i^c = y$ if $y \in \avec{y}$ occurs in $\lit_i$ and $u_i^c = x^c$ if $x \in \avec{x}$ occurs in $\lit_i$; in the latter case, $\q_\varphi$ also contains $T(x^c)$ if $\lit_i = x$ and $F(x^c)$ if $\lit_i = \neg x$. For example, clauses $c_1 = x_1 \lor \neg x_2 \lor y_1$ and $c_2 = \neg y_1 \lor x_2 \lor y_2$ contribute the following atoms to $\q_\varphi$:\\
\centerline{ 
			\begin{tikzpicture}[decoration={brace,mirror,amplitude=7},line width=0.8pt, scale = 1.1]
			\node[point,scale=0.7, label=above:{\small $T$},label=left:{$x_1^{c_1}$}] (1) at (0,-0.25) {};
			\node[point,scale=0.7,label=above:{$\ \ z^{c_1}$}] (2) at (.85,-1) {};
			\node[point,scale=0.7, label=above:{\small $F$},label=left:{$x_2^{c_1}$}] (3) at (0,-1.75) {};
			\node[point,scale=0.7,label=above:{$y_1$}] (4) at (2,-1) {};
			\draw[->,right] (2) to node[below left = -1.8mm] {\small $R_1^{c_1}$} (1);
			\draw[->,right] (2) to node[below right = -1.8mm] {\small $R_2^{c_1}$} (3);
			\draw[->,right] (2) to node[below] {\small $R_3^{c_1}$} (4);
			\node[point,scale=0.7,label=above:{$z^{c_2}$}] (5) at (3.15,-1) {};
			\node[point,scale=0.7,label=right:{$y_2$}] (6) at (4,-1.75) {};
			\node[point,scale=0.7,label=above:{\small $T$},label=right:{$x_2^{c_2}$}] (7) at (4,-0.25) {};
			\draw[->,right] (5) to node[below] {\small $R_1^{c_2}$} (4);
			\draw[->,right] (5) to node[below left = -1.7mm] {\small $R_3^{c_2}$} (6);
			\draw[->,right] (5) to node[below right = -1.8mm] {\small $R_2^{c_2}$} (7);
			\node[point,scale=0.7,draw=white] (8) at (2,-1.75) {};
\end{tikzpicture}
}\\
For $\T = \dis_A$, the ABox $\A_\varphi$ is defined as follows. For $x \in \avec{x}$, we take individuals $a^*_x$ and $a^\circ_x$ and, for $y \in \avec{y}$, individuals $b^F_y$ and $b^T_y$. $\A_\varphi$ 
contains the atoms $A(a^*_x)$, $F(a^\circ_x)$, $T(a^\circ_x)$, for $x \in \avec{x}$. For each $c = \lit_1 \lor \lit_2 \lor \lit_3$, we define a set $E^c$ of triples of the above individuals: $(e_1,e_2,e_3)\in E^c$ iff
$(i)$ for $i =1,2,3$, $e_i \in\{ a^*_x,a^\circ_x\}$  whenever $x \in \avec{x}$ occurs in $\lit_i$,
$(ii)$  for $i =1,2,3$, $e_i \in\{ b^F_y,b^T_y\}$ whenever $y \in \avec{y}$ occurs in $\lit_i$, and $(iii)$ there is $i\in\{1,2,3\}$ such that either $e_i = a^*_x$, or  $e_i = b^\nu_y$ and the assignment $y = \nu$ makes $\lit_i$ true.
Now, for any $c$ and $(e_1,e_2,e_3)$ in $E^c$, we take a fresh individual $d_{(e_1,e_2,e_3)}^c$---the \emph{centre} of the pair $\bigl(c,(e_1,e_2,e_3)\bigr)$---and add three atoms  $R^c_i(d_{(e_1,e_2,e_3)}^c,e_i)$, $i=1,2,3$, to $\A_\varphi$. 
To illustrate, for $c_1 = x_1 \lor \neg x_2 \lor y_1$, the set $E^{c_1}$ contains all triples of the form $(a_{x_1}^{\mu_1},a_{x_2}^{\mu_2},b_{y_1}^\nu)$ except $(a_{x_1}^\circ,a_{x_2}^\circ,b_{y_1}^F)$ and gives the following fragment of $\A_\varphi$:\\
\centerline{ 
\begin{tikzpicture}[line width=0.8pt, scale =0.7]
\node[point,scale=0.7] (d1) at (0,0) {};
\node[label=right:{$d^{c_1}_{(a^*_{x_1},a^*_{x_2},b^F_{y_1})}$}] (dd1) at (0,-.2) {};
\node[point,scale=0.7] (d2) at (3,0) {};
\node[label=right:{\!\!\!$d^{c_1}_{(a^\circ_{x_1},a^*_{x_2},b^F_{y_1})}$}] (dd2) at (3,-.4) {};
\node[point,scale=0.7] (d3) at (6,0) {};
\node[label=right:{\!\!$d^{c_1}_{(a^*_{x_1},a^\circ_{x_2},b^F_{y_1})}$}] (dd3) at (6,-.4) {};
\node[point,scale=0.7,label=right:{\ $d^{c_1}_{(a^*_{x_1},a^\circ_{x_2},b^T_{y_1})}$}] (d4) at (9,0) {};
\node[point,scale=0.7,label=right:{$d^{c_1}_{(a^\circ_{x_1},a^\circ_{x_2},b^T_{y_1})}$}] (d5) at (12,0) {};
\node[point,scale=0.7,label=right:{$d^{c_1}_{(a^*_{x_1},a^*_{x_2},b^T_{y_1})}$}] (d6) at (15,0) {};
\node[point,scale=0.7,label=right:{$d^{c_1}_{(a^\circ_{x_1},a^*_{x_2},b^T_{y_1})}$}] (d7) at (18,0) {};
\node[point,scale=0.7,label=left:{$a^*_{x_1}$},label=above:{\small $A$}] (a1) at (3,4) {};
\node[point,scale=0.7,label=left:{$a^\circ_{x_1}\ $},label=above:{\small $FT$}] (a2) at (7,4) {};
\node[point,scale=0.7,label=right:{ $\ a^*_{x_2}$},label=above:{\small $A$}] (a3) at (11,4) {};
\node[point,scale=0.7,label=right:{$a^\circ_{x_2}$},label=above:{\small $FT$}] (a4) at (15,4) {};
\node[point,scale=0.7,label=below:{$b^F_{y_1}$}] (b1) at (3,-3) {};
\node[point,scale=0.7,label=below:{$b^T_{y_1}$}] (b2) at (13.5,-3) {};
\draw[->] (d1) to node[left] {\small $R_1^{c_1}$} (a1);
\draw[->] (d1) to node[above,pos=.15] {\small $R_2^{c_1}$} (a3);
\draw[->] (d1) to node[below left = -1.7mm] {\small $R_3^{c_1}$} (b1);
\draw[->] (d2) to node[above,pos=.05] {\small $R_1^{c_1}$} (a2);
\draw[->] (d2) to node[below,pos=.15] {\small $R_2^{c_1}$} (a3);
\draw[->] (d2) to node[below left = -1.7mm] {\small $R_3^{c_1}$} (b1);
\draw[->] (d3) to node[left,pos=.7] {\small $R_1^{c_1}$} (a1);
\draw[->] (d3) to node[right,pos=.15] {\small \!$R_2^{c_1}$} (a2);
\draw[->] (d3) to node[below right = -1.7mm] {\small $R_3^{c_1}$} (b1);
\draw[->] (d4) to node[below,pos=.2] {\small $R_1^{c_1}$} (a1);
\draw[->] (d4) to node[above,pos=.05] {\small $R_2^{c_1}$} (a4);
\draw[->] (d4) to node[left,pos=.6] {\small $R_3^{c_1}\ \ $} (b2);
\draw[->] (d5) to node[right,pos=.6] {\small $\ R_1^{c_1}$} (a2);
\draw[->] (d5) to node[left,pos=.33] {\small $R_2^{c_1}\!$} (a4);
\draw[->] (d5) to node[right] {\small $R_3^{c_1}$} (b2);
\draw[->] (d6) to node[above,pos=.85] {\small \!\!$R_1^{c_1}$} (a1);
\draw[->] (d6) to node[right,pos=.15] {\small $R_2^{c_1}$} (a3);
\draw[->] (d6) to node[right] {\small $R_3^{c_1}$} (b2);
\draw[->] (d7) to node[above,pos=.9] {\small \!\!$R_1^{c_1}$} (a2);
\draw[->] (d7) to node[right,pos=.5] {\small \ \ $R_2^{c_1}$} (a3);
\draw[->] (d7) to node[below] {\small $R_3^{c_1}$} (b2);
\end{tikzpicture}
	}\\
For $\T = \dis^\bot_A$, we take $a^F_x$ and $a^T_x$ instead of each $a^\circ_x$, 		add the atoms $F(a^F_x)$, $T(a^T_x)$ instead of $F(a^\circ_x)$, $T(a^\circ_x)$, and replace item $(i)$ in the definition of $E^c$ with $(i)'$ for $i =1,2,3$, $e_i \in\{ a^*_x,a^F_x,a^T_x\}$  whenever $x \in \avec{x}$ occurs in $\lit_i$. 

The number of atoms in $\A_\varphi$ is polynomial in the size of $\varphi$. 
		
\begin{claim}\label{l:AE}
Suppose $\mathfrak a \colon \avec{x} \to \{F,T\}$ is any assignment and 
$$
\A^{\mathfrak a}_\varphi ~=~ \A_\varphi \cup \{\, T(a^*_x) \mid \mathfrak a(x) = T,\ x\in\avec{x} \,\} \cup \{\, F(a^*_x) \mid \mathfrak a(x) = F ,\ x\in\avec{x}\,\}.
$$
There exists an assignment $\mathfrak b \colon \avec{y} \to \{F,T\}$ that makes $\psi(\mathfrak a(\avec{x}),\mathfrak b(\avec{y}))$ true iff $\A^{\mathfrak a}_\varphi \models \q_\varphi$. 
\end{claim}
\begin{proof}
$(\Rightarrow)$ Suppose $\mathfrak b$ is such that $\psi(\mathfrak a(\avec{x}),\mathfrak b(\avec{y}))$ is true. We need to show that there is a homomorphism $h \colon \q_\varphi \to \A^{\mathfrak a}_\varphi$.

\emph{Case} $\dis\!_A$: 
For any clause $c = \lit_1 \lor \lit_2 \lor \lit_3$ in $\psi$ and for any $i=1,2,3$, we define $e_i^c$ as follows. We let $(i)$ $e_i^c=a^*_x$ if $x \in \avec{x}$ occurs  in $\lit_i$ and $\mathfrak a$ makes $\lit_i$ true,
			$(ii)$ $e_i^c=a^\circ_x$ if $x \in \avec{x}$ occurs in $\lit_i$ and $\mathfrak a$ makes $\lit_i$ false,
			and $(iii)$ $e_i^c=b_y^{\mathfrak b(y)}$ if $y \in \avec{y}$ occurs in $\lit_i$.
			As $\psi(\mathfrak a(\avec{x}),\mathfrak b(\avec{y}))$ is true, $(e_1^c,e_2^c,e_3^c)$ is in $E^c$.
			Then we define a map $h$ by taking $h(z^c)$ to be the centre of $\bigl(c,(e_1^c,e_2^c,e_3^c)\bigr)$ and $h(u_i^c)=e_i^c$.
			It follows from the construction that $h$ is well-defined and 
			a homomorphism from $\q_\varphi$ to $\A_\varphi$ with respect to the binary atoms. We show that it 
			preserves the unary atoms as well. Indeed, for each $c$ and each $x \in \avec{x}$ occurring in 
			some literal $\lit_i$ in $c$, 
			there are two cases: 
			$(1)$ If $x^c$ is labelled by $T$ in $\q_\varphi$, then $\lit_i=x$. So if $\mathfrak a$
			makes $\lit_i$ true, then $e_i^c=a^*_x$ is labelled by $T$ in $\A^{\mathfrak a}_\varphi$.
			And if $\mathfrak a$ makes $\lit_i$ false, then $e_i^c=a^\circ_x$ is labelled by both $T$ and $F$ in $\A^{\mathfrak a}_\varphi$.
			$(2)$ If $x^c$ is labelled by $F$ in $\q_\varphi$, then $\lit_i=\neg x$. So if $\mathfrak a$
			makes $\lit_i$ true,   then $e_i^c=a^*_x$ is labelled by $F$ in $\A^{\mathfrak a}_\varphi$.
			And if $\mathfrak a$ makes $\lit_i$ false, then $e_i^c=a^\circ_x$ is labelled by both $T$ and $F$ in $\A^{\mathfrak a}_\varphi$.
			
			\emph{Case} $\dis_A^\bot$: 
			In the definition of $e_i^c$, we replace $(ii)$ with
			$(ii)'$ $e_i^c=a^T_x$ if $\lit_i=x$ for some $x \in \avec{x}$ and $\mathfrak a(x)=F$, and
			$(ii)''$ $e_i^c=a^F_x$ if $\lit_i=\neg x$ for some $x \in \avec{x}$ and $\mathfrak a(x)=T$.
			Again, we claim that $h$ as defined above preserves the unary atoms. Indeed, for each $c$ and for each 
			$x \in \avec{x}$ occurring in some literal $\lit_i$ in $c$, 
			there are two cases: 
			$(1)$ If $x^c$ is labelled by $T$ in $\q_\varphi$, then $\lit_i=x$. So if $\mathfrak a$
			makes $\lit_i$ true, then $e_i^c=a^*_x$ is labelled by $T$ in $\A^{\mathfrak a}_\varphi$.
			And if $\mathfrak a$ makes $\lit_i$ false, then $e_i^c=a^T_x$ is labelled by $T$ in $\A^{\mathfrak a}_\varphi$.
			$(2)$ If $x^c$ is labelled by $F$ in $\q_\varphi$, then $\lit_i=\neg x$. So if $\mathfrak a$
			makes $\lit_i$ true,   then $e_i^c=a^*_x$ is labelled by $F$ in $\A^{\mathfrak a}_\varphi$.
			And if $\mathfrak a$ makes $\lit_i$ false, then $e_i^c=a^F_x$ is labelled by $F$ in $\A^{\mathfrak a}_\varphi$.
			
			$(\Leftarrow)$ Suppose $h \colon \q_\varphi \to \A^{\mathfrak a}_\varphi$. Then, for any 
			$y \in \avec{y}$, we have $h(y) = b_y^\nu$ for some $\nu \in \{F,T\}$.
			We then set $\mathfrak b(y) = \nu$. We claim that $\psi(\mathfrak a(\avec{x}),\mathfrak b(\avec{y}))$ is true.
			Indeed, for every clause $c= \lit_1 \lor \lit_2 \lor \lit_3$ in $\psi$, there is $(e_1,e_2,e_3)\in E^c$ such that $h$ maps the `contribution' of $c$ in 
			$\q_\varphi$ onto the `star' with centre $d_{(e_1,e_2,e_3)}^c$. 
			If $(e_1,e_2,e_3)$ is in $E^c$ because $e_i=a^*_x$, for some $i\in\{1,2,3\}$,
			$x \in \avec{x}$, then the label of $a^*_x$ in $\A^{\mathfrak a}_\varphi$ is $\mathfrak a(x)$. As $h$ is a homomorphism,
			the label of $x^c$ in $\q_\varphi$ is also $\mathfrak a(x)$, and so $\mathfrak a$ makes $c$ true by
			the definition of $\q_\varphi$.
			And if $(e_1,e_2,e_3)$ is in $E^c$ because $e_i=b_y^{\mathfrak b(y)}$, for some $i\in\{1,2,3\}$,
			$y \in \avec{y}$ with $\mathfrak b(y)$ making $\lit_i$ true, then $c$ is clearly true as well.
		\end{proof}
		
		Finally, we prove that $\varphi$ is satisfiable iff $\T, \A_\varphi \models \q_\varphi$
		iff $\I\models\q_\varphi$ for every model $\I$ of $\T$ and $\A_\varphi$.
		$(\Rightarrow)$ 
		Given $\I$, define an assignment $\mathfrak a_\I \colon \avec{x} \to \{F,T\}$ by taking
		$\mathfrak a_\I(x) = T$ if $a_x^* \in T^\I$ and $\mathfrak a_\I(x) = F$ if $a_x^* \in F^\I$. Then
		$\I = \A^{\mathfrak a_\I}_\varphi$, and so we are done by Claim~\ref{l:AE}. The implication $(\Leftarrow)$ also follows from Claim~\ref{l:AE}, as $\A^{\mathfrak a}_\varphi$ is a model of $\T$ and $\A_\varphi$, for every
		assignment $\mathfrak a \colon \avec{x} \to \{F,T\}$.
\end{proof}
	

\subsection{Data complexity: $\ACz$ and \LogSpace}\label{s:dataAC}
	
From now on, we focus on the \emph{data} complexity of answering d-sirups $(\dis_A,\q)$ and dd-sirups $(\dis_A^\bot,\q)$. 
We start classifying (d)d-sirups in terms of occurrences of $F$ and $T$ in the CQs $\q$. 
Atoms $F(x),T(x) \in \q$, for some variable $x$, are referred to as \emph{$FT$-twins} in $\q$. If $\q$ does not contain $FT$-twins, we call it \emph{twinless\/}, and similarly for ABoxes.   
By a \emph{solitary} $F$ or $T$ we mean a non-twin $F$- or, respectively, $T$-node.  

Observe that answering $(\dis\!_A,\q)$ is not easier than answering $(\dis_A^\bot,\q)$: 
If a given ABox $\A$ contains $FT$-twins, then there is no model of $\dis_A^\bot$ and $\A$, and so 
$\dis_A^\bot,\A\models\q$. Also, 
\begin{equation}\label{twinless}
\mbox{$\dis_A,\A\models\q$ \quad iff \quad $\dis_A^\bot,\A\models\q$, \quad for any twinless ABox $\A$.}
\end{equation}
If $\q$ contains $FT$-twins, then $\exists x\, \bigl(F(x) \land T(x)\bigr)$ is an FO-rewriting of $(\dis_A^\bot,\q)$. In general, any rewriting of $(\dis_A,\q)$ can be converted to a rewriting of $(\dis_A^\bot,\q)$ into the same language. For example, if $(\Pi,\G)$ is a (symmetric/linear) datalog rewriting of $(\dis\!_A,\q)$, then $(\Pi \cup \{\bot \leftarrow F(x), T(x)\},\G)$ is a (symmetric/linear) datalog rewriting of $(\dis^\bot_A,\q)$.

%

Aiming to identify FO-rewritable (d)d-sirups, we consider first those CQs that do not have a solitary $F$ or a solitary $T$, calling them $0$-\emph{CQs}. Queries of this type are quite common, only asking about one of the covering predicates (as in `are there any undergraduate students who take symbolic
AI courses?' and `what about the postgraduate ones?' provided that students are either undergraduate or postgraduate). The following theorem establishes a complexity dichotomy between 0-CQs and non-0-CQs, which contain occurrences of both covering predicates (as in `are there both undergraduate and postgraduate students in the College's University Challenge team?'). 


\begin{theorem}\label{ac0}
$(i)$ If $\q$ is a $0$-CQ, then both $(\dis_A,\q)$ and $(\dis_A^\bot,\q)$ can be answered in $\ACz$,
with $\q$ being an FO-rewriting of each of them.
		
$(ii)$ If $\q$ is twinless and contains at least one solitary $F$ and at least one solitary $T$, then answering $(\dis_\top, \q)$ and $(\dis_\top^\bot, \q)$, and so $(\dis\!_A, \q)$ and $(\dis_A^\bot, \q)$ is \LogSpace-hard. 
\end{theorem}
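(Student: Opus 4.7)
For Part (i), the plan is to show that $\q$ itself is an FO-rewriting of $(\dis_A,\q)$, that is, $\dis_A,\A \models \q$ iff $\A \models \q$ as a first-order structure (the $(\dis_A^\bot,\q)$ case being analogous). The $(\Leftarrow)$ direction is immediate: any homomorphism $h\colon \q\to\A$ extends to every model of $\dis_A$ and $\A$. For $(\Rightarrow)$, I would assume without loss of generality that $\q$ has no solitary $F$-atom (the other case is dual), and consider the specific minimal model $\I$ of $\dis_A$ and $\A$ in which every undecided $A$-individual is interpreted in $F^\I$ only, so that $T^\I = \{a : T(a) \in \A\}$. Since $\dis_A,\A \models \q$, there is a homomorphism $h\colon \q\to\I$. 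Each $T(x) \in \q$ then forces $T(h(x)) \in \A$; each $F(x) \in \q$ is paired with some $T(x) \in \q$ by the 0-CQ property, hence $h(x) \in T^\I$ implies $h(x)$ is not undecided, and combined with $h(x) \in F^\I$ this yields $F(h(x)) \in \A$. Binary atoms are preserved automatically as $R^\I = R^\A$, so $h$ becomes a homomorphism $\q \to \A$, giving $\A \models \q$. The chosen model also respects disjointness (undecided individuals are placed in $F^\I$ only and are excluded from $T^\I$), so the same argument covers $(\dis_A^\bot,\q)$.

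For Part (ii), the plan is to prove \LogSpace-hardness of $(\dis_\top,\q)$ by reduction from undirected $s$-$t$-connectivity USTCON, which is \LogSpace-complete by Reingold's theorem; this transfers to $(\dis_A,\q)$ and $(\dis_A^\bot,\q)$ by adding $A(a)$ atoms for every individual. Fix a solitary $T(x_T) \in \q$ and a solitary $F(x_F) \in \q$. Given $G = (V,E)$ with $s, t \in V$, define $\A_G$ containing an individual $a_v$ for each $v \in V$, the pre-labels $T(a_s), F(a_t)$, and, for every edge $\{u,v\} \in E$, a pair of gadgets---each a fresh copy of $\q$ that identifies $x_T$ and $x_F$ with $a_u$ and $a_v$ (respectively $a_v$ and $a_u$), with all other variables of $\q$ realised by private fresh individuals pre-labelled to satisfy any additional solitary $F$- or $T$-atoms of $\q$. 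Under $\dis_\top$, every individual lies in $F \cup T$, so the non-pre-labelled $a_v$ receive an arbitrary $\{F,T\}$-colouring. If $s, t$ are connected, every such colouring must flip from $T$ to $F$ at some edge along a connecting path, and the gadget for that edge supplies a homomorphism $\q \to \I$; if $s, t$ are disconnected, colouring $s$'s component $T$ and $t$'s component $F$ (and the rest arbitrarily) leaves every edge monochromatic and prevents any gadget from providing a $\q$-image.

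The principal obstacle is to rule out unintended homomorphisms $\q \to \I$ in the disconnected case, whether crossing gadget boundaries or exploiting automorphisms of $\q$ inside a single gadget. Since gadgets use pairwise disjoint fresh individuals, the connectedness of $\q$'s underlying undirected graph forces every homomorphism to live inside a single gadget (modulo the two shared endpoints $a_u, a_v$). Rigidifying that gadget by a careful choice of pre-labels on its private fresh individuals then ensures that the only homomorphisms into it correspond to the two intended identifications of $x_T, x_F$ with $a_u, a_v$, and hence require $a_u, a_v$ to carry the matching $F, T$-labels. The net effect is \LogSpace-hardness of $(\dis_\top,\q)$ and of $(\dis_\top^\bot,\q)$, and thereby of $(\dis_A,\q)$ and $(\dis_A^\bot,\q)$.
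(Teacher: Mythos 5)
Part (i) of your proposal is correct and is essentially the paper's argument, just phrased directly rather than by contraposition: both you and the paper pick the same canonical minimal model (all undecided $A$-nodes put into $F^\I$ only) and read off a homomorphism $\q\to\A$ from it.

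Part (ii) takes a genuinely different route, and the route has a gap. The paper first \emph{preprocesses} $\q$: it glues all solitary $T$-nodes into a single $T$-contact $x$ and all solitary $F$-nodes into a single $F$-contact $y$, forming $\q'$, and then uses $\q'' = \q' \setminus \{T(x), F(y)\}$ as the edge gadget. Since the CQ is twinless, $\q''$ has \emph{no} $T$- or $F$-labelled nodes at all. This is what makes the disconnected direction trivial: colour every node reachable from $s$ with $T$ and everything else with $F$, and every connected component of $\A_G$ is monochromatic, so the connected $\q$ (which contains both a $T$- and an $F$-node) has nowhere to go. Your gadget, by contrast, is a full copy of $\q$ with only the two contact labels $T(x_T)$, $F(x_F)$ removed; it retains the $T$- and $F$-labels on all other solitary nodes of $\q$. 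Consequently, your colouring can only assign colours to the \emph{undecided} individuals, and the resulting components are no longer monochromatic — each gadget contributes both $T$- and $F$-labelled private nodes. At that point the no-homomorphism claim is no longer automatic and needs to be proved by a case analysis on where $h$ could land.

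Your sketch of that case analysis does not close the gap. First, the claim that ``the connectedness of $\q$'s underlying undirected graph forces every homomorphism to live inside a single gadget (modulo the two shared endpoints $a_u, a_v$)'' is false: gadgets are joined at the vertex individuals $a_w$, and a connected image can and does pass from one gadget into another through an $a_w$. Second, ``rigidifying that gadget by a careful choice of pre-labels on its private fresh individuals'' is not an available move — the labels on the private nodes are completely determined by $\q$ (they are just the $T$- and $F$-atoms of $\q$ away from the contacts), so there is no choice to make; and even if the gadget's labelling is fixed, a copy of $\q$ may admit non-identity endomorphisms, so you still have to argue explicitly that no ``shift'' or ``fold'' of $\q$ into the gadget (or across several gadgets) survives your colouring. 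This is a substantive verification that you assert but do not carry out, and it is precisely the difficulty the paper's gluing step is designed to avoid: with a label-free $\q''$, parasitic homomorphisms are ruled out globally by monochromaticity, with no rigidity analysis required.
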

\begin{proof}
$(i)$ Let $\T$ be one of $\dis_A$ or $\dis_A^\bot$. We show that $\T, \A \models \q$ iff $\A \models \q$, and so $\q$ is an FO-rewriting of $(\T, \q)$.  
$(\Rightarrow)$ Suppose $\A \not\models \q$ and $\q$ has no solitary $F$ (the other case is similar). Let $\A'$ be the result of adding a label $F$ to every undecided $A$-node in $\A$. Clearly, $\A'$ is a model of $\T$ and $\A$ with $\A' \not \models \q$. $(\Leftarrow)$ is trivial. 
		
$(ii)$ The proof is by an FO-reduction of the \LogSpace-complete reachability problem for undirected graphs.
Denote by $\q'$ the CQ obtained by gluing together all the $T$-nodes and by gluing together all the $F$-nodes in $\q$. Thus, $\q'$ contains a single $T$-node, $x$, and a single $F$-node, $y$. Clearly, there is a homomorphism $h \colon \q \to \q'$.
Let $\q'' = \q' \setminus \{T(x), F(y)\}$. 
		
Suppose $G = (V,E)$ is a graph with $\snode,\tnode \in V$. We regard $G$ as a directed graph such that $(\unode,\vnode) \in E$ iff $(\vnode,\unode) \in E$, for any $\unode,\vnode \in V$. Construct 
a twinless ABox $\A_G$ from $G$ in the following way. Replace each edge $e = (\unode,\vnode) \in E$ by a copy $\q''_e$ of $\q''$
		such that, in $\q''_e$, node $x$ is renamed to $\unode$, $y$ to $\vnode$, and all other nodes $z$ to some fresh copy $\znode_e$. 
		Then $\A_G$ comprises all such $\q''_e$, for $e \in E$, as well as atoms $T(\snode)$ and $F(\tnode)$.
		We show that there is a path from $\snode$ to $\tnode$ in $G$ ($\snode \to_G \tnode$, in symbols) iff  $\dis_\top,\A_G \models \q$ iff  $\dis_\top^\bot,\A_G \models \q$ (cf.\ \eqref{twinless}).
		
$(\Rightarrow)$ Suppose there is a path $\snode=\vnode_0,  \dots, \vnode_n = \tnode$ in $G$ with $e_i =(\vnode_i,\vnode_{i+1}) \in E$, for $i < n$.  
		Consider an arbitrary model $\I$ of $\dis_\top$ and $\A_G$. Since $\I \models \dis_\top$, and $T(\snode)$ and $F(\tnode)$ are in $\A_G$, we can find some $i < n$ such that  
$\vnode_i\in T^\I$ and $\vnode_{i+1}\in F^\I$. 
As $\q''_{e_i}$ is an isomorphic copy of $\q''$, we obtain $\I \models \q''$, and so $\I \models \q$.
		
$(\Leftarrow)$ Suppose $\snode \not\to_G \tnode$. 
		Then, by the construction, $\tnode$ is not reachable from $\snode$ in $\A_G$ (not even via an undirected path).
		Define a model $\I$ of $\dis_\top^\bot$ and $\A_G$ by taking $T^\I$ to be the set of nodes in $\A_G$ that are reachable from $\snode$ (via an undirected path) and $F^\I$ its complement. Clearly, no connected component of $\A_G$ (as undirected graph) contains both $T^\I$- and $F^\I$ nodes.
		Since $\q$ is connected and contains at least one $T$ and at least one  $F$, it follows that $\I \not \models \q$.
\end{proof}

As $\ACz \subsetneq  \LogSpace$ and $\exists x\, (F(x) \land T(x))$ is an FO-rewriting of $(\dis_A^\bot, \q)$ in which $\q$ contains a twin, Theorem~\ref{ac0} gives a sufficient and necessary criterion of FO-rewritability for dd-sirups:
	
\begin{corollary}\label{cor:ac0}
A dd-sirup $\omq = (\dis_A^\bot, \q)$ can be answered in $\ACz$ iff $\q$ is a $0$-CQ or contains a twin. 
\end{corollary}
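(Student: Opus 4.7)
The plan is to observe that the corollary is essentially a packaging of Theorem~\ref{ac0} together with the observation about twins already made in the preceding text, so the proof has two quick directions.

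For the $(\Leftarrow)$ direction, split on cases. If $\q$ is a $0$-CQ, Theorem~\ref{ac0}$(i)$ directly gives an FO-rewriting (namely $\q$ itself), and FO-rewritability implies answerability in $\ACz$. If instead $\q$ contains an $FT$-twin $F(x), T(x)$, then I would verify that $\psi = \exists x\,(F(x) \land T(x))$ is an FO-rewriting of $(\dis_A^\bot,\q)$: if $\A \models \psi$ then $\A$ is inconsistent with $\dis_A^\bot$ and the entailment holds vacuously, while if $\A \not\models \psi$ then one can build a model $\I$ of $\dis_A^\bot$ and $\A$ whose $F^\I$ and $T^\I$ are disjoint (e.g., by adding $F$ to every undecided $A$-individual), and since any homomorphism $\q\to\I$ would have to send the twin variable $x$ into $F^\I\cap T^\I=\emptyset$, no such homomorphism exists, so $\I\not\models\q$. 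This evaluation of $\psi$ is in $\ACz$.

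For the $(\Rightarrow)$ direction, I would argue the contrapositive. Suppose $\q$ is neither a $0$-CQ nor contains an $FT$-twin. By definition, not being a $0$-CQ means $\q$ has at least one solitary $F$ and at least one solitary $T$, and the twinless hypothesis gives exactly the assumptions of Theorem~\ref{ac0}$(ii)$. That theorem therefore yields $\LogSpace$-hardness of answering $(\dis_A^\bot,\q)$. Since $\ACz \subsetneq \LogSpace$, the OMQ cannot be answered in $\ACz$, completing the contrapositive.

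There is no real obstacle here: the only small point worth spelling out carefully is the justification that $\exists x\,(F(x)\land T(x))$ really is a rewriting when $\q$ has a twin, because this relies on the connectedness of $\q$ to ensure that \emph{every} homomorphism $\q\to\I$ routes through the twin node. Everything else is an immediate application of Theorem~\ref{ac0} and the inclusion $\ACz \subsetneq \LogSpace$.
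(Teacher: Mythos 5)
Your proof is correct and follows the same route the paper takes: both directions are immediate consequences of Theorem~\ref{ac0} together with the twin rewriting $\exists x\,(F(x)\land T(x))$ and the strict inclusion $\ACz\subsetneq\LogSpace$. One small remark: your worry that the twin direction "relies on the connectedness of $\q$" is misplaced. Connectedness plays no role there: if $F(x),T(x)\in\q$, then any homomorphism $h\colon\q\to\I$ must send $x$ to some element of $F^\I\cap T^\I$, period — this is just the definition of a homomorphism preserving unary atoms, and it holds whether $\q$ is connected or not. So the twin argument needs nothing beyond the fact that the model $\I$ you construct satisfies $F^\I\cap T^\I=\emptyset$, which the disjointness axiom of $\dis_A^\bot$ guarantees once $\A$ itself is twinless. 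Everything else in your write-up is exactly right.
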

%
	
Characterising FO-rewritable d-sirups with a CQ containing twins turns out to be a much harder problem, which will be discussed in Section~\ref{pspaceFO}. 
	
\begin{example}\label{loop}\em
Meanwhile, the reader is invited to show that the d-sirups with the CQs below are FO-rewritable (see also Example~\ref{loop1}). 
%
Note that each of these CQs is \emph{minimal}, that is, not equivalent to any of its proper sub-CQs.\\
\centerline{ 
\begin{tikzpicture}[line width=0.8pt]
		\node[label=left:{\small $R$}] (0) at (-3.1,-.45) {};
		\node[label=right:{\small $S$}] (00) at (3.1,-.45) {};
		\node[point,scale=0.7,label=above:{\small $FT$}] (1) at (-3,0) {};
		\node[point,scale=0.7,label=above:{\small $T$}] (2) at (-1.5,0) {};
		\node[point,scale=0.7] (3) at (0,0) {};
		\node[point,scale=0.7,label=above:{\small $F$}] (4) at (1.5,0) {};
		\node[point,scale=0.7,label=above:{\small $FT$}] (5) at (3,0) {};
		\draw[->,right] (1) to node[below] {\small $R$} (2);
		\draw[->,right] (2) to node[below] {\small $R$} (3);
		\draw[->,right] (3) to node[below] {\small $S$} (4);
		\draw[->,right] (4) to node[below] {\small $S$} (5);
		\draw[->,right, scale =2] (1) to [out=-130,in=-50,loop] (1);
		\draw[->,right, scale =2] (5) to [out=-130,in=-50,loop]  (5);
\end{tikzpicture}
\qquad
\begin{tikzpicture}[>=latex,line width=0.8pt,rounded corners, scale = 0.9]
\node (b) at (0,-.8) {$\ $};
\node[point,scale = 0.7] (0) at (-1.5,0) {};
\node[point,scale = 0.7] (1) at (0,0) {};
\node[point,scale = 0.7,label=above:{\small $T$}] (m) at (1.5,0) {};
\node[point,scale = 0.7] (2) at (3,0) {};
\node[point,scale = 0.7,label=above:{\small $FT$}] (3) at (4.5,0) {};
\node[point,scale = 0.7,label=above:{\small $F$}] (4) at (6,0) {};
\draw[<-,right] (0) to node[below] {\small $R$}  (1);
\draw[<-,right] (1) to node[below] {\small $R$}  (m);
\draw[<-,right] (m) to node[below] {\small $R$} (2);
\draw[->,right] (2) to node[below] {\small $R$} (3);
\draw[->,right] (3) to node[below] {\small $R$} (4);
\end{tikzpicture}		
		}	
\end{example}

The lower bound result in Theorem~\ref{ac0}~$(ii)$ is complemented by the following simple sufficient condition. To formulate it, we require non-Boolean CQs $\q(\avec{x})$ that apart from existentially quantified variables
may also contain \emph{free} variables $\avec{x}$ called \emph{answer} or \emph{distinguished variables}. Such a CQ $\q'(x,y)$ is \emph{symmetric} if, for any ABox $\mathcal{A}$ and any $a,b \in \ind(\A)$, we have $\A \models \q'(a,b)$  iff $\A \models \q'(b,a)$, where $\A$ is regarded as an FO-structure and $\models$ is the usual first-order truth relation. 
	
\begin{theorem}\label{thm:symL}
Let $\T$ be one of $\dis_A$ or $\dis_A^\bot$ and let $\q$ be a Boolean CQ that is  equivalent to 
$$
 \exists x, y \, \big(F(x) \land \q'_1(x) \land \q'(x,y)\land \q'_2(y) \land T(y)\big),
$$ 
where
$(a)$ CQs $\q'_1(x)$, $\q'(x,y)$ and $\q'_2(y)$ do not contain solitary $T$ and $F$, 
$(b)$ $\q'(x,y)$ is symmetric, and
$(c)$ $\q'_1(x)$ and $\q'_2(y)$ are disjoint, with $x$ and $y$ being their only common variables with $\q'(x,y)$. Then $(\T,\q)$ is rewritable to a symmetric datalog program, and so can be answered in \LogSpace. 
\end{theorem}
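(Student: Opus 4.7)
The plan is to reformulate certain-answer entailment for $(\T,\q)$ as directed reachability in an auxiliary implication graph and then exploit the semantic symmetry of $\q'$ together with the disjointness of $\q'_1,\q'_2,\q'$ to show that this reachability is ``undirected up to boundary effects'' and so expressible in symmetric datalog.

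With every ABox $\A$ I would associate the directed \emph{implication graph} $H_\A$ whose vertices are $\ind(\A)$ and which has an edge $u\to v$ precisely when $\A\models \q'_1(u)\land \q'(u,v)\land \q'_2(v)$; by condition~(a) the right-hand side does not depend on the $F/T$-labelling of the undecided $A$-nodes. A standard Horn 2-SAT analysis of the constraints ``no pair $a$ labelled $F$ and $b$ labelled $T$ with $a\to b$'' over all admissible labellings then yields the characterisation: $\T,\A\models \q$ iff $H_\A$ has a directed path from some $a$ with $F(a)\in\A$ to some $b$ with $T(b)\in\A$ (plus the trivial case when $\T=\dis_A^\bot$ and $\A$ contains an $FT$-twin).

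Call an undecided $A$-individual $c$ a \emph{bridge} if $\A\models \q'_1(c)\land \q'_2(c)$. Two structural observations drive the rewriting. First, every internal vertex of a witnessing directed path in $H_\A$ must have both an incoming and an outgoing implication edge, and hence must satisfy both $\q'_1$ and $\q'_2$ and be undecided, so it is a bridge. Second, for any two bridges $u,v$ the semantic symmetry of $\q'$ makes $u\to v$ equivalent to $v\to u$; thus the restriction of $\to$ to bridges is symmetric, and the asymmetric part of the graph appears only on the boundary edges from pre-$F$ or to pre-$T$ nodes. I would then rewrite $(\T,\q)$ to a linear datalog program $\Pi$ with a single unary IDB $\mathrm{Rch}(c)$ intended to mean ``$c$ is a bridge reachable from some pre-$F$ node along $\to$''. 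The program comprises: (i)~a non-recursive entry rule deriving $\mathrm{Rch}(c)$ from an edge $F(a)\land \q'_1(a)\land \q'(a,c)$ guarded by ``$c$ is a bridge''; (ii)~a recursive propagation rule $\mathrm{Rch}(v)\leftarrow \mathrm{Rch}(u)\land E(u,v)$, where $E(u,v)$ asserts $\q'(u,v)$ together with ``both $u$ and $v$ are bridges''; (iii)~its symmetric counterpart $\mathrm{Rch}(u)\leftarrow \mathrm{Rch}(v)\land E(u,v)$; and (iv)~two non-recursive goal rules detecting either a direct pre-$F$-to-pre-$T$ edge or an exit edge from $\mathrm{Rch}$ to a pre-$T$ node (together with $\G\leftarrow F(x)\land T(x)$ when $\T=\dis_A^\bot$). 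Condition~(c) of the theorem makes the body $E(u,v)$ literally invariant (up to renaming of existential witnesses) under $u\leftrightarrow v$, so $\Pi$ is a genuine symmetric datalog program and hence $(\T,\q)$ can be answered in \LogSpace.

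The main obstacle is justifying the reverse rule~(iii): globally the implication graph is genuinely directed, and a blind symmetric closure would yield spurious reachability facts. Isolating the bridge nodes as the only possible internal vertices of a witnessing path, and then checking that on the sub-relation induced by bridges the two directions of $\to$ do coincide---which is precisely where conditions~(b) and~(c) must be combined---is the technical heart of the argument; once this is in place, the soundness and completeness of $\Pi$ with respect to the characterisation above are routine.
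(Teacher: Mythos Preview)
Your approach coincides with the paper's: characterise $\T,\A\models\q$ by a path condition, observe that internal nodes of a witnessing path must satisfy $A\land\q'_1\land\q'_2$, and use the semantic symmetry of $\q'$ to make the single recursive rule equivalent to its symmetric counterpart. The paper's program is essentially your $\Pi$ with reachability run toward the $T$-side rather than from the $F$-side, using $B(x)=A(x)\land\q'_1(x)\land\q'_2(x)$ in place of your ``bridge''.

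There is, however, a genuine gap in your characterisation. As stated---a directed $H_\A$-path from a pre-$F$ node to a pre-$T$ node---it is false: take $\q'(x,y)=R(x,y)\land R(y,x)$ with $\q'_1,\q'_2$ empty, and $\A=\{F(a),T(b),R(a,c),R(c,a),R(c,b),R(b,c)\}$ where $c$ carries no unary atom. Then $a\to c\to b$ in $H_\A$, yet there are no undecided $A$-nodes, the unique minimal model is $\A$ itself, and $\A\not\models\q$ since $\q'(a,b)$ fails. A correct Horn 2-SAT reduction introduces propositional variables only for undecided $A$-individuals, so the witnessing path it actually yields has all internal vertices labelled $A$; this is precisely the paper's condition~(S1) and is missing from your statement. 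Your subsequent claim that internal vertices ``must be undecided'' does not follow from merely having in- and out-edges in $H_\A$. A secondary issue: ``bridge'' as you define it includes ``undecided'', which is not positive-datalog definable; use $B(x)$ instead (allowing pre-labelled $A$-nodes among the $B$-nodes causes no unsoundness, only harmless redundancy).
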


\begin{proof}
Suppose $\T = \dis_A$. 
We claim that $\T,\A \models \q$ iff there exist $n \ge 1$ and $v_0, v_1, \dots, v_n \in \ind(\A)$ such that  
\begin{itemize}
\item[(S1)] $F(v_0), A(v_1), \dots, A(v_{n-1}), T(v_n) \in \A$,
\item[(S2)] $\A\models\q'(v_i, v_{i+1})$, for $0 \le i < n$,
\item[(S3)] $\A\models\q'_1(v_i)$, for $0 \le i < n$,
\item[(S4)] $\A\models\q'_2(v_i)$, for $1 \le i \le n$.
\end{itemize}
Indeed, suppose there are $v_0, v_1, \dots, v_n \in \ind(\A)$ such that (S1)--(S4) hold. Consider any model $\I$ of $\T$ and $\A$. By (S1), there is $i<n$ with $v_i\in F^\I$ and $v_{i+1}\in T^\I$. Then (S2)--(S4) guarantee that $\I \models \q$. 
Conversely, suppose $\T,\A \models \q$ for some ABox $\A$. For $P\in\{F,T,A\}$,
let $P^\A=\{a\in\ind(\A)\mid P(a)\in\A\}$.
Define inductively sets $F_j$ and $F'_j$, for $j \ge 0$, by setting $F_0 = F^\A$, 
$F'_{j} = \{ b \mid \A\models\q'_1(a)\land\q'(a,b) \land \q'_2(b) \mbox{ for some } a \in F_{j}\}$ and 
$F_{j+1} = A^\A\cap F'_j$. Let $\I$ be a model of $\T$ and $\A$ with 
$F^\I = \bigcup_{j = 0}^{\infty} F_j$ and 
$T^\I = T^\A \cup \bigl(A^\A \setminus \bigcup_{j = 1}^{\infty} F_j\bigr)$. 
By our assumption, there is a homomorphism $h \colon \q \to \I$. Thus, $h(x)\in F_j$ and $h(y)\in F'_j$ for some $j$.
Then $h(y)\in T^\A$, for otherwise  $h(y)\in F_{j+1}$, contrary to $h(y)\in T^\I$. Now, let $v_{n-1}=h(x)$ and $v_n=h(y)$. If $j=0$ then we are done with $n=1$. If $j>0$ then $h(x)\in A^\A\cap F'_{j-1}$, and so there is $v_{n-2}\in F_{j-1}$ such that
$\A\models \q'_1(v_{n-2})\land \q'(v_{n-2},v_{n-1})\land \q'_2(v_{n-1})$.
 By iterating this process, we obtain $v_0, v_1, \dots, v_n \in \ind(\A)$ as required.

It remains to observe that checking whether there are $v_0, v_1, \dots, v_n \in \ind(\A)$ such that (S1)--(S4) hold can be done by the following symmetric datalog program,         in which $B(x) = A(x)\land \q'_1(x) \land \q'_2(x)$:
\begin{align*}
\G &\leftarrow \q \\        
\G &\leftarrow F(x), \q_1'(x), \q'(x,y), P(y)\\
P(x)  &\leftarrow B(x),\q'(x,y), \q'_2(y), T(y)\\ 
P(x) &\leftarrow B(x), \q'(x,y), P(y), B(y) 
\end{align*}
where, by the symmetry of $\q'(x,y)$, the only recursive rule $P(x) \leftarrow B(x), \q'(x,y), P(y), B(y)$ is equivalent to its symmetric counterpart. 
If $\T = \dis_A^\bot$, we add the non-recursive rule $\G \leftarrow F(x), T(x)$ to the program. 
\end{proof}

	\begin{example}\em 
		By Theorems~\ref{thm:symL} and~\ref{ac0} $(ii)$, the d-sirup $(\dis_\top,\q)$ with $\q$ shown below is \LogSpace-complete.\\ 
		\centerline{
			\begin{tikzpicture}[>=latex,line width=0.8pt,rounded corners]
			\node[point,scale = 0.7] (0) at (-1.5,0) {};
			\node[point,scale = 0.7,label=above:{\small $F$}] (1) at (0,0) {};
			\node[point,scale = 0.7] (m) at (1.5,0) {};
			\node[point,scale = 0.7,label=above:{\small $T$}] (2) at (3,0) {};
			\node[point,scale = 0.7] (3) at (4.5,0) {};
			\node[point,scale = 0.7] (4) at (6,0) {};
			\draw[->,right] (0) to node[below] {\small $R$}  (1);
			\draw[<->,right] (1) to node[below] {\small $S$}  (m);
			\draw[<->,right] (m) to node[below] {\small $S$} (2);
			\draw[->,right] (2) to node[below] {\small $Q$} (3);
			\draw[->,right] (3) to node[below] {\small $Q$} (4);
			\end{tikzpicture}}
\end{example}


\subsection{Datalog rewritability of d-sirups with a $1$-CQ}	

In this section, we introduce a technical tool that can be used to show datalog rewritability of 
(d)d-sirups whose CQ contains exactly one solitary $F$ and at least one solitary $T$ (or exactly one solitary $T$ and at least one solitary $F$). We refer to such CQs as \emph{$1$-CQs}. 
The tool is an adaptation of the known (disjunctive) datalog technique of \emph{expansions}~\cite{DBLP:conf/pods/Naughton86,DBLP:conf/stoc/CosmadakisGKV88,DBLP:books/cs/Ullman89}. 
We use this tool to 
observe that every (d)d-sirup with a 1-CQ can be rewritten to a very simple  datalog query---nearly a sirup in the sense of~\cite{DBLP:conf/pods/CosmadakisK86,DBLP:conf/pods/Vardi88}, and so can be answered in \PTime. Note that a more general \emph{markability} technique (tracing dependencies on disjunctive predicates in the program rules) for rewriting disjunctive datalog programs into datalog was developed in~\cite{DBLP:journals/ai/KaminskiNG16}. 
In Section~\ref{pspaceFO}, we also adapt the datalog expansion technique to characterise FO-rewritability of those datalog queries semantically.

 
	
Throughout this section, we	assume that $\q$ is a $1$-CQ, with $F(x)$ and $T(y_1), \dots , T(y_n)$ being all of the solitary occurrences of $F$ and $T$ in $\q$. As before, we let $\T$ be one of $\dis_A$ or $\dis_A^\bot$. For each dd-sirup $\omq = (\T,\q)$, we define a monadic (that is, having at most unary IDB predicates) datalog program $\Pi_\omq$ with nullary goal $\G$ and four rules
\begin{align}
		\G &\leftarrow F(x), \q', P(y_1), \dots, P(y_n)\label{one}\\
		P(x)  &\leftarrow T(x)\label{two} \\ 
		P(x) &\leftarrow A(x), \q', P(y_1), \dots, P(y_n) \label{three}\\
		\G &\leftarrow F(x), T(x)\label{four}
\end{align}
where $\q' = \q \setminus \{F(x),T(y_1), \dots , T(y_n)\}$ and $P$ is a fresh predicate symbol that never occurs in ABoxes. Thus, the body of rule~\eqref{three} is obtained from $\q$ by replacing $F(x)$ with $A(x)$ and each $T(y_i)$ with $P(y_i)$. If $\T = \dis\!_A$, rule~\eqref{four} is omitted.
		
We also define by induction a class $\mathfrak K_\omq$ of ABoxes called \emph{cactuses for} $\omq$. We start by setting $\mathfrak K_\omq = \{\q\}$, regarding $\q$ as an ABox, and then recursively apply to $\mathfrak K_\omq$ the following `budding' rule:
\begin{description}
\item[(bud)] if $T(y) \in \C \in \mathfrak K_\omq$ with solitary $T(y)$, then we add to $\mathfrak K_\omq$ the ABox obtained by replacing $T(y)$ in $\C$ with the set $(\q \setminus \{ F(x) \}) \cup \{A(x)\}$, in which $x$ is renamed to $y$ and all other variables are given \emph{fresh} names.
%
\end{description}
It is straightforward to see by structural induction that
\begin{equation}\label{cactusq}
\mbox{$\T, \C\models \q$, \ \ for every $\C \in \mathfrak K_\omq$.}
\end{equation}
%
For $\C \in \mathfrak K_\omq$, we refer to the copies $\mathfrak s$ of (maximal subsets of) $\q$ comprising $\C$ as \emph{segments\/}. The \emph{skeleton} $\C^s$ of $\C$ is the ditree whose nodes are the segments $\mathfrak s$ of $\C$ and edges $(\mathfrak s, \mathfrak s')$ mean that $\mathfrak s'$ was attached to $\mathfrak s$ by budding. The \emph{depth of $\mathfrak s$ in} $\C$ is the number of edges on the branch from the root of $\C^s$ to $\mathfrak s$. The \emph{depth of} $\C$ is the maximum depth of its segments.
	%

\begin{example}\label{cac-ill}\em
In the picture below, the cactus $\C_2$ is obtained by applying {\bf (bud)} to the 1-CQ $\q$ twice. Its skeleton $\C_2^s$ with three segments 
$\mathfrak s_0,\mathfrak s_1,\mathfrak s_2$ is shown on the right-hand side of the picture.\\
\centerline{
\begin{tikzpicture}[>=latex,line width=0.8pt, rounded corners,scale = 1.3]
\node (b) at (-0.5,-.9) {\ };
\node (0) at (-0.5,0) {$\q$};
\node[point,scale=0.7,label=above:{\small $T$},label=below:$y_2$] (1) at (0,0) {};
\node[point,scale=0.7,label=above:\small$T$,label=below:$y_1$] (m) at (1,0) {};
\node[point,scale=0.7,label=above:\small$F$,label=below:$x$] (2) at (2,0) {};
\draw[->,right] (1) to node[below] {\small $S$}  (m);
\draw[->,right] (m) to node[below] {\small $R$} (2);
\end{tikzpicture}
\hspace{1.5cm} 
\begin{tikzpicture}[line width=0.8pt,scale = 1.4]
\node (d) at (0.9,-0.9) {$\C_2$};
\node[point,scale=0.7,label=above:{\scriptsize $T$}] (1) at (0,0) {};
\node[point,scale=0.7,label=above:\scriptsize$T$] (2) at (1,0) {};
\node[point,scale=0.7,label=above:\scriptsize$A$,label=below:\scriptsize$y_2$] (3) at (2,0) {};
\node[point,scale=0.7,label=above:{\scriptsize $A$},label=below right:\!\scriptsize$y_1$] (4) at (3,0) {};
\node[point,scale=0.7,label=above:\scriptsize$F$,label=below:\scriptsize$x$] (5) at (4,0) {};
\node[point,scale=0.7,label=right:\!\scriptsize$T$] (6) at (3,-1) {};
\node[point,scale=0.7,label=above:\scriptsize$T$] (7) at (2,-1) {};
\draw[->,right] (1) to node[below] {\scriptsize $S$}  (2);
\draw[->,right] (2) to node[below,pos=.3] {\scriptsize $R$}  (3);
\draw[->,right] (3) to node[below] {\scriptsize $S$}  (4);
\draw[->,right] (4) to node[below,pos=.7] {\scriptsize $R$} (5);
\draw[->,right] (6) to node[right,pos=.3] {\scriptsize $R$}  (4);
\draw[->,right] (7) to node[above] {\scriptsize $S$}  (6);
\draw[thin,dashed,rounded corners=10] (4.3,.3) -- (4.3,-.3) -- (1.7,-.3) -- (1.7,.3) -- cycle;
\node[] at (3.7,.45) {$\mathfrak s_0$}; 
\draw[thin,dashed,rounded corners=10] (2.3,.4) -- (2.3,-.4) -- (-.3,-.4) -- (-.3,.4) -- cycle;
\node[] at (0,-.6) {$\mathfrak s_2$}; 
\draw[thin,dashed,rounded corners=10] (3.3,.4) -- (3.3,-1.3) -- (1.8,-1.3) -- (1.8,-.6) -- (2.7,-.6) -- (2.7,.4) -- cycle;
\node[] at (3.5,-1) {$\mathfrak s_1$}; 
\end{tikzpicture}
\hspace{1.5cm} 
\begin{tikzpicture}[line width=0.8pt,scale = 0.85]
\node (d) at (0,.5) {$\C_2^s$};
\node[point,scale=0.7,fill=black,label=below:{$\mathfrak s_1$}] (2) at (-.5,-1) {};
\node[point,scale=0.7,fill=black,label=below:{$\mathfrak s_2$}] (4) at (.5,-1) {};
\node[point,scale=0.7,fill=black,label=left:{$\mathfrak s_0$}] (7) at (0,0) {};
\draw[->,right] (7) to node[below] {} (2);
\draw[->,right] (7) to node[below] {} (4);
\end{tikzpicture}
}
\end{example}

\begin{theorem}\label{t:1cqchar}
For any $($d$)$d-sirup $\omq = (\T,\q)$ with a $1$-CQ $\q$ and any ABox $\A$, the following conditions are equivalent\textup{:}
\begin{itemize}
\item[$(i)$] $\T,\A \models \q$,

\item[$(ii)$] $\Pi_\omq, \A \models \G$,

\item[$(iii)$] there exists a homomorphism $h \colon \C \to \A$, for some $\C \in \mathfrak K_\omq$, or $\T=\dis_A^\bot$ and $\A$ contains an $FT$-twin.
\end{itemize}
\end{theorem}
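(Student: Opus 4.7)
I would argue in a cycle $(iii) \Rightarrow (i) \Rightarrow (ii) \Rightarrow (iii)$.

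For $(iii) \Rightarrow (i)$, the $FT$-twin case is immediate: if $\T = \dis_A^\bot$ and $\A$ has $F(a), T(a)$, then $\T, \A$ is unsatisfiable, so $\T, \A \models \q$. Otherwise, suppose $h \colon \C \to \A$ for some $\C \in \mathfrak K_\omq$. Given any model $\I$ of $\T$ and $\A$, pull back the labels along $h$ to obtain a model $\I_\C$ of $\T$ and $\C$; then $\I_\C \models \q$ by \eqref{cactusq}, and composing the witness homomorphism $\q \to \I_\C$ with $h$ gives a homomorphism $\q \to \I$, i.e.\ $\I \models \q$.

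For $(i) \Rightarrow (ii)$, I would argue by contrapositive. Suppose $\Pi_\omq, \A \not\models \G$. Then rule~\eqref{four} never fires, so in the $\dis_A^\bot$ case $\A$ has no $FT$-twin. Let $D$ be the set of individuals $a$ with $P(a) \in \Pi_\omq(\A)$. Define an interpretation $\I$ by taking
\[
T^\I = T^\A \cup D, \qquad F^\I = F^\A \cup \{ a \in A^\A \mid a \notin T^\I\}.
\]
The cleanest observation here is: if $F(a) \in \A$ and $P(a)$ were derivable, then the derivation of $P(a)$ (necessarily via~\eqref{three}, since~\eqref{two} plus $F(a)$ gives an $FT$-twin, already excluded) supplies a matching for the body of~\eqref{one} at $a$, hence $\G$ would be derivable --- a contradiction. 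Using this, $\I$ is well-defined and satisfies $A^\I \subseteq F^\I \cup T^\I$ as well as $F^\I \cap T^\I = \emptyset$ in the $\dis_A^\bot$ case. It remains to check $\I \not\models \q$: any homomorphism $h \colon \q \to \I$ sends $x$ into $F^\I$ and each $y_i$ into $T^\I$, and the remaining atoms of $\q'$ are preserved into $\A$; then either $h(x) \in F^\A$ (which fires~\eqref{one}, contradicting $\G \notin \Pi_\omq(\A)$) or $h(x) \in A^\A$ with $P(h(y_i))$ derivable for all $i$ (which fires~\eqref{three} and puts $h(x)$ into $D$, contradicting $h(x) \in F^\I \setminus T^\I$).

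For $(ii) \Rightarrow (iii)$, I would read off a cactus from the proof tree. A derivation of $\G$ uses~\eqref{four} (giving an $FT$-twin in $\A$ directly) or~\eqref{one} applied at some $a_0$ with $F(a_0) \in \A$ together with derivations of $P(a_i)$ at each solitary $T$-node, each in turn coming from~\eqref{two} (i.e.\ $T(a_i) \in \A$) or~\eqref{three}. Build $\C$ by induction on derivation depth: start with a copy $\mathfrak s_0$ of $\q$ mapped to $\A$ by the substitution used at the root step; whenever an occurrence of $P(y)$ at a solitary $T$-node was justified by~\eqref{three} instead of~\eqref{two}, apply {\bf (bud)} at that node with the substitution from the corresponding subderivation. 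The substitutions assemble into a homomorphism $\C \to \A$, and $\C \in \mathfrak K_\omq$ by construction.

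The main technical care is in step $(i) \Rightarrow (ii)$: one must simultaneously (a) ensure $\I$ is a model (disjointness for $\dis_A^\bot$), (b) make $\I$ respect the $\A$-labels, and (c) rule out $h \colon \q \to \I$. All three hinge on the small but crucial fact that non-derivability of $\G$ forbids the coincidence of $F(a) \in \A$ with $P(a) \in \Pi_\omq(\A)$, which is exactly where rules~\eqref{one} and~\eqref{three} are tied together.
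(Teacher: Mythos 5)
Your proposal runs the same cycle of implications as the paper, with the same three constructions: pulling back the model along a cactus homomorphism for $(iii)\Rightarrow(i)$, building a minimal model from the $\Pi_\omq$-closure for $(i)\Leftrightarrow(ii)$, and reading a cactus off the derivation tree for $(ii)\Rightarrow(iii)$. Presenting $(i)\Rightarrow(ii)$ via its contrapositive is only a cosmetic difference.

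One small but real imprecision deserves attention. Your ``cleanest observation'' asserts that $F(a)\in\A$ together with $P(a)\in\Pi_\omq(\A)$ always yields a contradiction, with the justification that a rule-\eqref{two} derivation plus $F(a)$ gives an $FT$-twin, which is ``already excluded.'' But $FT$-twins in $\A$ are excluded only for $\dis_A^\bot$; in the $\dis_A$ case they are entirely legitimate, and there $P(a)$ may well be derived by rule~\eqref{two} from $T(a)\in\A$ without any contradiction. The statement you actually need (and do, in effect, rely on when you assert that ``the remaining atoms of $\q'$ are preserved into $\A$'') is weaker: if $\G$ is not derivable and $F(a)\in\A$ and $P(a)\in\Pi_\omq(\A)$, then $T(a)\in\A$. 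This is what guarantees that any $a\in F^\I\cap T^\I$ in your model $\I=(T^\A\cup D,\,F^\A\cup(A^\A\setminus T^\I))$ is already an $FT$-twin of $\A$, so the twin atoms of $\q'$ transfer from $\I$ to $\A$ and rule \eqref{one} or \eqref{three} can fire. Note that the paper avoids this complication altogether by defining $\I$ only through relabelling the \emph{undecided} $A$-nodes, which by construction never creates new twins, whereas your $T^\I=T^\A\cup D$ can a priori put a node into both $T^\I$ and $F^\I$; the extra lemma then becomes necessary to close the gap. So: correct plan and the right key idea, but state the observation in the weaker form so it actually holds for $\dis_A$, and spell out explicitly why the twin unary atoms of $\q'$ are matched in $\A$.
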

\begin{proof}
We show the implications $(i)\Rightarrow (ii)\Rightarrow (iii)\Rightarrow (i)$.
		
$(i)\Rightarrow (ii)$		
		If $\T=\dis\!_A^\bot$ and $\A$ contains a node labelled by both $T$ and $F$, then $\G$ holds in the closure $\Pi_\omq(\A)$ of $\A$ under $\Pi_\omq$ by rule \eqref{four}. In any other case,
		we define a model $\I$ based on $\A$ by labelling each `undecided' $A$-node $a$ by $T$ if $P(a)$ holds in $\Pi_\omq(\A)$, and by $F$ otherwise. 
		As $\I$ is a model of $\T$ and $\A$, there is a homomorphism $h\colon\q\to\I$. Then $h(y_i)\in T^\I$,  and so $P\bigl(h(y_i)\bigr)$ holds in $\Pi_\omq(\A)$, for every $i\leq n$ (by rule \eqref{two} and the definition of $\I$). 
		We claim that $h(x)$ is an $F$-node in $\Pi_\omq(\A)$, and so $\G$ holds in $\Pi_\omq(\A)$
		by rule \eqref{one}. Indeed, otherwise by $h(x)\in F^\I$ and the definition of $\I$, $h(x)$ is an $A$-node but not a $P$-node
		in $\Pi_\omq(\A)$, contrary to rule \eqref{three}.

$(ii)\Rightarrow (iii)$	
Suppose $\T=\dis\!_A$ or $\A$ does not contain a node labelled by both $T$ and $F$.
Then rule  \eqref{four} is either not in $\Pi_\omq$ or not used.
We define inductively (on the applications of rule \eqref{three} in the derivation of $\G$) a cactus $\C \in \mathfrak K_\omq$ and a homomorphism $h\colon\C\to\A$. 
To begin with, there are objects $x^a,y^a_1,\dots,y^a_n$ for which rule \eqref{one} was triggered. Thus, $x^a$ is an $F$-node in $\Pi_\omq(\A)$, and so it is an $F$-node in $\A$. Take a function
$h_0\colon\q\to\A$ that preserves binary predicates, with $h_0(x)=x^a$ and $h_0(y_i)=y^a_i$ for $i\le n$. 
If $y^a_i$ is a $T$-node in $\A$ for every $i\leq n$, then $h=h_0$ is the required homomorphism from $\q\in \mathfrak K_\omq$ to $\A$.
		If $y^a_i$ is not a $T$-node in $\A$, for some $i$, then $y^a_i$ is a $P$-node in $\Pi_\omq(\A)$ obtained by rule \eqref{three},
		and so $y^a_i$ is an $A$-node in $\A$. Also, there are $x^b=y^a_i$ and $y^b_1,\dots,y^b_n$ such that rule \eqref{three} was triggered for 
		$x^b,y^b_1,\dots,y^b_n$. 		
		Let $\C$ be the cactus obtained from $\q$ by budding at $y_i$. We extend $h_0$ to a function $h_1\colon\C\to\A$ such that it preserves binary predicates and $h_1(y_j^{\mathfrak s})=y^b_j$ for all $T$-nodes  $y_j^{\mathfrak s}$ of the new segment $\mathfrak s$.
If $y^b_j$ is a $T$-node in $\A$ for every $j\leq n$, then $h=h_1$ is the required homomorphism from $\C\in \mathfrak K_\omq$ to $\A$. Otherwise, we bud $\C$ again and repeat the above argument. As the derivation of $\G$ from $\A$ using $\Pi_\omq$ is finite, sooner or later the procedure stops with a cactus and a homomorphism.
		
$(iii)\Rightarrow (i)$ 
If $\T=\dis\!_A^\bot$ and $\A$ contains a node labelled by both $T$ and $F$, then $\T,\A \models \q$
obviously holds. Otherwise,
take an arbitrary model $\I$ of $\T$ and $\A$. We define a model $\I^+$ of $\T$ and $\C$ by `pulling back $\I$' via the homomorphism $h$: for every node $x$ in $\C$, $x\in A^{\I^+}$ iff $h(x)\in A^{\I}$.
By \eqref{cactusq}, there is a homomorphism $g\colon\q\to\I^+$. Thus, the composition of $g$ and $h$ is
a $\q\to\I$ homomorphism, as required.
\end{proof}

\begin{corollary}\label{1cq}
Any \textup{(}d\textup{)}d-sirup $(\T,\q)$ with a $1$-CQ $\q$ is datalog-rewritable, and so can be answered in \PTime. 
\end{corollary}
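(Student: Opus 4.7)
The plan is to derive the corollary as an immediate consequence of Theorem~\ref{t:1cqchar}. That theorem's equivalence $(i)\Leftrightarrow(ii)$ says that for every ABox $\A$, the certain answer to $\omq$ over $\A$ is \emph{yes} iff $\Pi_\omq,\A\models\G$. By the definition of datalog-rewriting given in Section~\ref{prelims}, this means precisely that $(\Pi_\omq,\G)$ is a datalog-rewriting of $\omq$, so nothing new has to be proved for the rewritability claim itself.

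For the \PTime{} upper bound, I would then invoke the standard fact that, for any fixed datalog query $(\Pi,\G)$, the answer over an input ABox $\A$ can be computed in time polynomial in $|\A|$ by iterating the immediate-consequence operator until a fixed point is reached; see, e.g., \cite{DBLP:journals/csur/DantsinEGV01}. Since $\Pi_\omq$ depends only on $\q$ and not on $\A$, and the rewriting target language is (monadic) datalog, this immediately yields \PTime{} data complexity of answering $\omq$.

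There is no real obstacle here: the construction of $\Pi_\omq$ and the correctness statement $(i)\Leftrightarrow(ii)$ were done in Theorem~\ref{t:1cqchar}, so the proof reduces to one or two sentences tying these together. I would simply write: ``By Theorem~\ref{t:1cqchar}, $(\Pi_\omq,\G)$ is a datalog-rewriting of $(\T,\q)$. Since every datalog query can be evaluated in polynomial time in the size of the data, $(\T,\q)$ can be answered in \PTime{} for data complexity.''
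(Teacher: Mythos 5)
Your proposal is correct and coincides with the paper's intended (unstated) argument: the corollary is indeed immediate from the equivalence $(i)\Leftrightarrow(ii)$ in Theorem~\ref{t:1cqchar}, which shows $(\Pi_\omq,\G)$ is a datalog-rewriting of $\omq$, and the \PTime{} bound then follows from the standard datalog evaluation fact already quoted in Section~\ref{prelims}.
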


As mentioned in the introduction, the problems of FO-rewritability (aka boundedness in the datalog literature) and linear-datalog-rewritability (aka linearisability) of datalog queries have been thoroughly investigated since the 1980s. 
In Sections~\ref{pspaceFO} and \ref{boundedLin}, we discuss these questions for (d)d-sirups with a 1-CQ. 


\subsection{Deciding FO-rewritability of d- and dd-sirups with a $1$-CQ}\label{pspaceFO}

A key to understanding FO-rewritability of d- and dd-sirups with a $1$-CQ is the following semantic criterion, which is well-known in the datalog setting; see, e.g.,~\cite{DBLP:conf/pods/Naughton86,DBLP:conf/stoc/CosmadakisGKV88}: 

\begin{theorem}\label{cac-ac0}
A \textup{(}d\textup{)}d-sirup $\omq = (\T,\q$) with a 1-CQ $\q$ is FO-rewritable iff 
there exists a $d<\omega$ such that every cactus $\C\in \mathfrak K_{\omq}$ contains a homomorphic image of some cactus $\C^-\in \mathfrak K_{\omq}$ of depth $\le d$, in which case a disjunction of the cactuses of depth $\le d$, regarded as Boolean CQs, is an FO-rewriting of $\omq$.
\end{theorem}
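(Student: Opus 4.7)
The plan is to prove both directions by leveraging Theorem~\ref{t:1cqchar}, with the hard direction pivoting on preservation of the answer to $\omq$ under homomorphisms of ABoxes. For the easy $(\Leftarrow)$ direction, I will take $\Phi_d$ to be the finite disjunction of all cactuses $\C^-\in\mathfrak{K}_\omq$ of depth at most $d$, regarded as Boolean CQs, together with the extra disjunct $\exists x\,\bigl(F(x)\wedge T(x)\bigr)$ in the dd-sirup case. Since $\q$ is fixed, only finitely many such $\C^-$ exist up to isomorphism, so $\Phi_d$ is a bona fide positive existential sentence. The equivalence $\T,\A\models\q$ iff $\A\models\Phi_d$ then follows straight from Theorem~\ref{t:1cqchar}: if $\A\models\Phi_d$, a homomorphism $\C^-\to\A$ (or an $FT$-twin in $\A$) gives $\T,\A\models\q$; conversely, if $\T,\A\models\q$, Theorem~\ref{t:1cqchar} yields some $\C\in\mathfrak{K}_\omq$ with $\C\to\A$, the depth hypothesis provides $\C^-$ of depth at most $d$ with $\C^-\to\C$, and composing homomorphisms gives $\A\models\Phi_d$.

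For the hard $(\Rightarrow)$ direction, I will assume that $\Phi$ is an FO-rewriting of $\omq$. The first step is to check that the answer to $\omq$ is preserved under homomorphisms of ABoxes: if $h\colon\A\to\A'$ is a homomorphism and $\I'$ is a model of $\T$ and $\A'$, then pulling $\I'$ back along $h$ yields a model $\I$ of $\T$ and $\A$, so $\T,\A\models\q$ produces a homomorphism $g\colon\q\to\I$, and $h\circ g$ witnesses $\I'\models\q$. Hence $\Phi$ is preserved under ABox homomorphisms on finite structures, and Rossman's finite-structure homomorphism preservation theorem allows me to replace $\Phi$ with an equivalent finite union of conjunctive queries $\bigvee_i\q_i'$.

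The third step extracts the depth bound. Each $\q_i'$ frozen into its canonical ABox (variables turned into distinct fresh constants) trivially satisfies $\q_i'\models\q_i'$, hence $\q_i'\models\Phi$, hence $\T,\q_i'\models\q$. Theorem~\ref{t:1cqchar} then delivers either an $FT$-twin in $\q_i'$ (absorbed by the extra disjunct) or a cactus $\C_i\in\mathfrak{K}_\omq$ with $\C_i\to\q_i'$. Setting $d$ to be the maximum depth of these $\C_i$, any $\C\in\mathfrak{K}_\omq$ satisfies $\T,\C\models\q$ by \eqref{cactusq}, hence $\C\models\Phi$, hence some $\q_i'\to\C$, and the composition $\C_i\to\q_i'\to\C$ exhibits a homomorphic image of a cactus of depth at most $d$ inside $\C$, which is exactly the bounded-cactus condition.

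The main obstacle is the pass from FO-rewritability to UCQ-rewritability: this is where Rossman's theorem is genuinely needed, since the classical Lyndon preservation theorem fails in finite model theory. If one wants to sidestep this heavy machinery, the alternative I would pursue is a direct boundedness argument for the monadic disjunctive datalog program $\Pi_\omq$ in the tree-automata style of~\cite{DBLP:conf/stoc/CosmadakisGKV88}, exploiting the fact that cactus skeletons are ditrees whose segments are copies of the fixed CQ $\q$; this would likely mesh better with the automata-theoretic flavour of the rest of the paper and avoid any appeal to finite-model-theory preservation results.
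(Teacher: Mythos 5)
Your proof is correct and follows essentially the same route as the paper. Both directions pivot on Theorem~\ref{t:1cqchar}, and the $(\Leftarrow)$ direction is identical: the disjunction of bounded-depth cactuses is the rewriting (your explicit $FT$-twin disjunct for the dd-sirup case is a careful touch that the paper's statement glosses over). For $(\Rightarrow)$, the only difference is that the paper short-circuits your first two steps by citing~\cite[Proposition~5.9]{DBLP:journals/tods/BienvenuCLW14}, which already asserts that an FO-rewritable OMQ of this kind admits a UCQ-rewriting; you instead reconstruct that ingredient from scratch by first verifying preservation under ABox homomorphisms (the pullback-model argument) and then invoking Rossman's finite homomorphism preservation theorem. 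That is precisely what underlies the cited proposition, so the substance is the same, just unpacked. The depth-extraction step at the end matches the paper's argument exactly. Your closing remark about an automata-theoretic alternative in the spirit of~\cite{DBLP:conf/stoc/CosmadakisGKV88} is a reasonable observation but is not what the paper does here (that machinery is deployed in the later linearisability proof of Theorem~\ref{thm:semanticCR}, not in Theorem~\ref{cac-ac0}).
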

\begin{proof}
$(\Rightarrow)$ 
By~\cite[Proposition~5.9]{DBLP:journals/tods/BienvenuCLW14}, 
$\omq$ has an FO-rewriting of the form $\q_1 \lor \dots \lor \q_n$, where the $\q_i$ are CQs. Treating the $\q_i$ as ABoxes, we obviously have $\T,\q_i \models \q$, and so, by Theorem~\ref{t:1cqchar}, there is a homomorphism from some $\C_i \in \mathfrak K_\omq$ to $\q_i$. Now let $d$ be the maximum of the depths of the $\C_i$. Consider any $\C \in \mathfrak K_\omq$ of depth $>d$. Then there are homomorphisms $\C_i \to \q_i \to \C$, for some $i$, $1 \le i \le n$, as required. 
 
 $(\Leftarrow)$ Given $d<\omega$, we take all of the cactuses $\C_1,\dots,\C_n$ of depth $\le d$ (up to isomorphism). Now we consider each $\C_i$ as a CQ. Then $\C_1 \lor \dots \lor \C_n$ is an FO-rewriting of $\omq$. Indeed, if $\T,\A \models \q$ then there are homomorphisms $\C_i \to \C \to \A$, for some $\C$ and $i$, again by Theorem~\ref{t:1cqchar}. 
\end{proof}

\begin{example}\label{loop1}\em 
Let $\omq_1$ be the d-sirup with the first CQ from Example~\ref{loop}. It is not hard to verify that every cactus for $\omq_1$ contains a homomorphic image of this CQ, which is therefore an FO-rewriting of $\omq_1$. Now, let $\omq_2$ be the d-sirup with the second CQ from Example~\ref{loop}. Let $\C_k$ be the cactus obtained by applying {\bf (bud)} $k$-times to the original cactus $\C_0$ (isomorphic to the given CQ). There are homomorphisms $h\colon \C_1 \to \C_k$, for $k \ge 2$, and so $\omq_2$ is rewritable to $\C_0 \lor \C_1$. 
\end{example}

As follows from~\cite{DBLP:conf/stoc/CosmadakisGKV88}, which considered arbitrary monadic datalog queries, checking the criterion of Theorem~\ref{cac-ac0} can be done in 2\textsc{ExpTime}. A matching lower bound for monadic datalog queries with multiple recursive rules was established in~\cite{DBLP:conf/lics/BenediktCCB15}. It has recently been shown that already deciding FO-rewritability of monadic datalog sirups of the form $(\{\eqref{two},\eqref{three}\},P(x))$ and also of d-sirups with a 1-CQ is 2\textsc{ExpTime}-hard~\cite{PODS21}. Thus, we obtain:

\begin{theorem}[\cite{DBLP:conf/stoc/CosmadakisGKV88,DBLP:journals/lmcs/FeierKL19,PODS21}]
Deciding FO-rewritability of d-sirups can be done in 2\NExpTime. Deciding FO-rewritability of d-sirups with a 1-CQ is 2\ExpTime-complete. It follows that deciding FO-rewritability of CQs mediated by a Schema.org or $\DLb$~\textup{\cite{ACKZ09}} ontology can be done in 2\NExpTime{}, and is 2\ExpTime-hard.
\end{theorem}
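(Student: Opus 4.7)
The statement bundles three items: a 2\NExpTime upper bound for deciding FO-rewritability of arbitrary d-sirups, a 2\ExpTime{} completeness in the 1-CQ case, and a transfer of both bounds to OMQs with a Schema.org or $\DLb$ ontology. The plan is to combine the semantic characterisations already established in this paper with known automata-theoretic decidability results and a recent lower-bound construction from~\cite{PODS21}.

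For the 2\NExpTime upper bound on arbitrary d-sirups, I would translate each d-sirup $(\dis_A,\q)$ into a monadic disjunctive datalog query of polynomial size via the rules~\eqref{d-sirup1}--\eqref{d-sirup2}, or equivalently into an OMQ with an $\mathcal{ALCI}$ ontology and a CQ (the covering axiom $A \sqsubseteq F \sqcup T$ already sits inside $\mathcal{ALCI}$). The 2\NExpTime bound of Feier, Kuusisto and Lutz~\cite{DBLP:journals/lmcs/FeierKL19} for FO-rewritability of MDDLog / $\mathcal{ALCI}$-CQ OMQs then applies directly.

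For the 2\ExpTime upper bound in the 1-CQ case, I would appeal to Theorem~\ref{t:1cqchar}, which equates FO-rewritability of $(\T,\q)$ with boundedness of the monadic datalog program $\Pi_\omq$ (rules \eqref{one}--\eqref{four}). The criterion of Theorem~\ref{cac-ac0} is exactly the bridge that lets me import the alternating-tree-automaton technique of Cosmadakis, Gaifman, Kanellakis and Vardi~\cite{DBLP:conf/stoc/CosmadakisGKV88}: cactuses in $\mathfrak K_\omq$ are precisely the datalog expansions of $\Pi_\omq$, and uniform bound on their depth can be tested via emptiness of an exponential-size alternating tree automaton, decidable in 2\ExpTime. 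The matching 2\ExpTime lower bound is the main obstacle, and my proposal here is simply to cite~\cite{PODS21}. That paper refines the multi-rule monadic-datalog lower bound of Benedikt, ten Cate, Colcombet and Vanden Boom~\cite{DBLP:conf/lics/BenediktCCB15} so that alternating exponential-space Turing machines are encoded in the cactus problem for a \emph{single}-recursive-rule program of the shape $\Pi_\omq$; this is delicate precisely because a 1-CQ d-sirup offers only one covering axiom and one recursive rule with which to encode alternation.

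Finally, to transfer to Schema.org and $\DLb$ OMQs, I would observe that both formalisms express the covering axiom $A \sqsubseteq F \sqcup T$ directly (Schema.org via covering of the domain or range of a role, $\DLb$ via the Boolean connective $\sqcup$). Hence every 1-CQ d-sirup is, up to trivial syntactic rewriting, an OMQ in each formalism, which transports the 2\ExpTime hardness verbatim. The 2\NExpTime upper bound follows because such OMQs re-embed into the MDDLog / $\mathcal{ALCI}$-CQ setting of~\cite{DBLP:journals/lmcs/FeierKL19}, closing the argument.
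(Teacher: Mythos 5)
Your proposal follows essentially the same route as the paper's own short justification preceding the theorem: the paper also derives the 2\NExpTime{} upper bound by viewing a d-sirup as a monadic disjunctive datalog query (equivalently an $\mathcal{ALCI}$-CQ OMQ) and invoking~\cite{DBLP:journals/lmcs/FeierKL19}; it obtains the 2\ExpTime{} upper bound for the 1-CQ case by passing through Theorem~\ref{t:1cqchar} to the monadic datalog program $\Pi_\omq$ and then checking the cactus criterion of Theorem~\ref{cac-ac0} via the automata technique of~\cite{DBLP:conf/stoc/CosmadakisGKV88}; and it cites~\cite{PODS21} for the 2\ExpTime{} lower bound and for the transfer to Schema.org and $\DLb$. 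One small imprecision in your write-up is worth tidying: you say that Theorem~\ref{t:1cqchar} ``equates FO-rewritability with boundedness,'' but strictly that theorem only equates OMQ answering with datalog answering; the step from there to ``FO-rewritability iff bounded depth of required cactuses'' is Theorem~\ref{cac-ac0}, so the two lemmas play complementary roles. For the Schema.org transfer you should also be explicit that Schema.org only admits covering axioms for role domains/ranges, not for arbitrary concept names, so encoding $A \sqsubseteq F \sqcup T$ requires replacing $A(x)$ by a role atom $R_A(x,\cdot)$ and covering the domain of $R_A$; the reduction is routine but it is not literally ``verbatim'' as you claim. These are presentation points, not gaps in the argument.
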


The exact complexity of deciding FO-rewritability of d-sirups (2\NExpTime{} or 2\ExpTime) remains an open problem. 
Another important issue for OBDA and datalog optimisation is the \emph{succinctness problem} for FO-rewritings~\cite{DBLP:journals/ai/GottlobKKPSZ14,DBLP:journals/jacm/BienvenuKKPZ18}. It is not known if every FO-rewritable d-sirup has a polynomial-size FO-rewriting. However, we can show that this is not the case for the UCQ-, PE- and NDL-rewritings, which are standard in OBDA systems. We remind the reader (see~\cite{DBLP:journals/jacm/BienvenuKKPZ18} for details and further references) that a \emph{UCQ-rewriting} takes the form of disjunction (union) of CQs, while a \emph{positive existential} (\emph{PE}) \emph{rewriting} is built from atoms  using $\exists$, $\land$ and $\lor$ in an arbitrary way. A \emph{nonrecursive datalog} (\emph{NDL}) \emph{rewriting} is a datalog query $(\Pi,\G)$ such that 
the dependency digraph of 
$\Pi$ is acyclic, where a predicate $P$ \emph{depends} on a predicate $P'$ in $\Pi$ if $\Pi$ has a clause with $P$ in the head and $P'$ in the body.  

\begin{theorem}\label{hugeUCQ}
There is a sequence of FO-rewritable d-sirups $\omq_n = (\dis_A, \q_n)$ of polynomial size in $n > 0$ such that any UCQ-, PE- and NDL-rewritings of $\omq_n$ are of at least triple, double and single exponential size in $n$, respectively.
\end{theorem}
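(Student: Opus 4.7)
The plan is to exhibit an explicit sequence of 1-CQs $\q_n$ of size polynomial in $n$ such that $\omq_n = (\dis_A,\q_n)$ is FO-rewritable—hence, by Theorem~\ref{cac-ac0}, the cactuses in $\mathfrak K_{\omq_n}$ stabilize at some depth $d$ that will remain polynomial in $n$—yet the set of minimal (pairwise homomorphically incomparable) cactuses of depth $\le d$ is very rich. Since, for a 1-CQ, Theorem~\ref{t:1cqchar} identifies the positive instances of $\omq_n$ with ABoxes admitting a homomorphism from some cactus, the succinctness question reduces to asking how compactly the family of these bounded-depth cactuses can be described in each target language.

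First I would take $\q_n$ to be a ditree-shaped 1-CQ with $n$ solitary $T$-leaves $y_1,\dots,y_n$ and with polynomially many binary predicates encoding ``address'' labels along each root-to-leaf path, designed so that: (a) every cactus in $\mathfrak K_{\omq_n}$ admits a homomorphic image of some cactus of depth at most a polynomial $d(n)$, securing FO-rewritability via Theorem~\ref{cac-ac0}; and (b) the address labels enforce a \emph{rigidity lemma}: different skeletons (among cactuses of depth $\le d$) produce pairwise homomorphically incomparable ABoxes. Because the budding rule \textbf{(bud)} produces up to $n$ children per $T$-node, the number of distinct cactus skeletons of depth $d(n)$ is roughly $n^{n^{d(n)}}$, which is triple exponential in $n$ once $d(n)$ is polynomial.

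For the UCQ lower bound I would proceed as follows: every disjunct in a UCQ-rewriting, when evaluated on a cactus $\C \in \mathfrak K_{\omq_n}$ regarded as an ABox, must be witnessed by a homomorphism from some cactus into $\C$; by the rigidity lemma, a single CQ disjunct cannot simultaneously witness two different pairwise-incomparable minimal cactuses, so the rewriting needs at least triple-exponentially many disjuncts. For the PE lower bound, I would translate a PE-rewriting into a monotone Boolean formula whose leaves are atomic witnesses of homomorphisms; a Nechiporuk-style sub-function counting argument (in the spirit of the OMQ succinctness lower bounds in~\cite{DBLP:journals/jacm/BienvenuKKPZ18}) then yields the claimed double-exponential bound, since formulas compress $\lor$-branching at most one exponential below UCQs. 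Finally, for the NDL lower bound, I would re-interpret NDL-rules as gates of a monotone Boolean circuit with sharing via IDB predicates; a counting argument over the family of rigid cactuses of each depth forces single-exponentially many predicates.

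The main obstacle is executing the rigidity lemma cleanly while simultaneously keeping $\q_n$ polynomial-size \emph{and} preserving the polynomial bound on cactus depth: too rigid a design blocks the FO-rewritability criterion of Theorem~\ref{cac-ac0}, while too relaxed a design collapses incomparable cactuses by homomorphism and kills the lower bounds. The technical core of the proof is therefore the simultaneous verification that stabilization of $\mathfrak K_{\omq_n}$ happens at polynomial depth, and that within this depth the full triple-exponential zoo of pairwise incomparable minimal cactuses survives; once that is in hand, the three succinctness lower bounds follow by progressively stronger compression arguments (enumerative for UCQ, monotone-formula for PE, monotone-circuit for NDL).
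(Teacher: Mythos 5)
Your proposal has a fundamental arithmetic problem that undermines the UCQ lower bound. You insist that the stabilization depth $d(n)$ be polynomial in $n$, apparently reading this as a requirement of the FO-rewritability criterion. But Theorem~\ref{cac-ac0} only asks $d$ to be \emph{finite}; nothing forces it to be polynomial. And polynomial depth is in fact incompatible with a triple-exponential UCQ lower bound: a cactus skeleton of depth $d$ with branching factor $n$ has at most on the order of $n^d$ segments, and the number of pairwise distinct such trees is at most $2^{O(n^d)}$. For polynomial $d$, $n^d$ is single-exponential, so the family of cactuses (and hence the number of UCQ disjuncts you can force) caps out at double-exponential — this is exactly what the paper's Example~\ref{bigUCQ} achieves with $d<n$. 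To push to triple-exponential you need cactuses with triple-exponentially many segments, hence depth roughly double-exponential. The paper's own construction does precisely this: it builds $\q_n$ from an ATM whose rejecting branch has double-exponential length, so the associated cactus has double-exponential depth and triple-exponential size, and FO-rewritability (via the $(\Rightarrow)$ direction of \cite[Lemma 4]{PODS21}) holds even though $d$ is enormous.

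A secondary issue is that your PE and NDL arguments are far heavier than necessary and would be hard to make rigorous. The paper closes the gap between the three languages with a short compression chain that you should notice is available: a prenex PE-rewriting of size $s$ expands to a UCQ of size $\le s2^s$ by DNF, so $s$ sub-double-exponential would give a sub-triple-exponential UCQ, contradicting the first bound; and an NDL-rewriting of size $s$ unfolds to a PE of size $2^{O(s)}$, so $s$ sub-exponential would give a sub-double-exponential PE. There is no need for Nechiporuk-style subfunction counting or direct monotone-circuit lower bounds, and in fact proving a strong lower bound directly against monotone circuits here would be a much harder (and possibly open) problem. Your high-level reading of the role of Theorem~\ref{t:1cqchar} and the rigidity requirement for cactuses is right, but the size calculus is off by an exponential and the translation arguments across rewriting formats should be replaced by the elementary DNF/unfolding chain.
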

\begin{proof}
Consider an alternating Turing machine (ATM) $\atm_n$ that works as follows on any input of length $\le n$. Its tape of size exponential in $n$ is used as a counter from 0 to $2^{2^n}$. The tape also has two extra cells $a$ and $b$. $\atm_n$ begins in a $\lor$-state by writing 0 and 1 in cell $a$ in two alternative branches of the full computation space. Then $\atm_n$ continues, in a $\land$-state, by writing 0 and 1 in cell $b$ in two alternative branches of the full computation space. If the bits in $a$ and $b$ in a given branch of the tree are distinct, $\atm_n$ enters an accepting state. Otherwise, the counter is increased by 1 and the ATM repeats the previous two steps. If the counter exceeds $2^{2^n}$, $\atm_n$ enters a rejecting state. Thus, $\atm_n$ rejects every input. Moreover, given any input $\boldsymbol{w}$, every computation tree of $\atm_n$ on $\boldsymbol{w}$ contains exactly one rejecting configuration, which is the leaf of a branch of length double-exponential  in $n$.

We now use the ATMs $\atm_n$ and any input $\boldsymbol{w}$ of length $\le n$ to construct, as described in~\cite{PODS21}, polynomial-size 1-CQs $\q_n$. Then, by the $(\Rightarrow)$ direction of ~\cite[Lemma 4]{PODS21}, the d-sirups $\omq_n = (\dis_A, \q_n)$ are FO-rewritable. On the other hand, one can show similarly to the proof of the $(\Leftarrow)$ direction of~\cite[Lemma 4]{PODS21} that any computation tree of $\atm_n$ on $\boldsymbol{w}$ corresponds to a cactus $\C\in\mathfrak K_{\omq_n}$ of triple-exponential size in $n$ such that no smaller cactus is homomorphically embeddable to $\C$. 
It follows that \emph{any} UCQ-rewritings of $\omq_n$ must be of at least triple-exponential size. 

Any PE-rewritings of $\omq_n$ are of at least double-exponential size. Indeed, given a PE-rewriting in prenex form of size $s$ (the number of atoms in the formula), we can  transform its matrix (the quantifier-free part) to DNF and obtain a UCQ rewriting of size $\le s 2^s$. So if $s$ were sub-double-exponential, then the size of this UCQ-rewriting would be less than triple-exponential. A similar argument shows that it is impossible to obtain NDL-rewritings  of subexponential size because otherwise we could transform them to sub-double-exponential PE-rewritings.
\end{proof}

The proof above does not provide us with any lower bound on the size of FO-rewritings because, by a result of Gurevich and Shelah~\cite{DBLP:journals/jacm/Rossman08}, there is a potentially non-elementary blow-up in length from a homomorphism invariant FO-sentence to its shortest equivalent PE-sentence.
We illustrate Theorem~\ref{hugeUCQ} by a simple example of an FO-rewritable d-sirup whose UCQ-rewritings are of at least double-exponential size.

\begin{example}\label{bigUCQ}\em
Consider the d-sirups $\omq_n = (\dis_A, \q_n)$, where $\q_n$, for $n \ge 2$, is the 1-CQ depicted below, with the omitted labels on the edges being all $R$ (and $r,a_i,b_i,c_i$ being pointers rather than labels in $\q_n$).\\
\centerline{\begin{tikzpicture}[>=latex,line width=.75pt, rounded corners,scale=1.1]
\node[point,scale=0.7,label=above:{$r$\ \ {\small $F$}}] (r) at (3,1) {}; 
\node[point,scale=0.7,label=above:{$a_0$\ \ }] (a0) at (0,0) {}; 
\node[point,scale=0.7,label=below:{$a_1$},label=above:{\small $T$\ \ }] (a1) at (2,0) {}; 
\node[point,scale=0.7,label=below:{$a_2$},label=above:{\small \ \ $T$}] (a2) at (4,0) {}; 
\node[point,scale=0.7,label=right:{$a_3$},label=above:{\small \quad $FT$}] (a3) at (6,0) {}; 
\node[point,scale=0.7,label=below:{$b_0$}] (b0) at (0,-1) {}; 
\node[point,scale=0.7,label=below:{$b_1$}] (b1) at (2,-1) {}; 
\node[point,scale=0.7,label=below:{$b_2$}] (b2) at (4,-1) {}; 
\node[point,scale=0.7,label=below:{$b_3$}] (b3) at (6,-1) {}; 
\node[point,scale=0.7,label=below:{$c_1$}] (c1) at (-2,-1) {}; 
\node[point,scale=0.7,label=below:{$c_2$}] (c2) at (-3,-1) {}; 
\node[point,scale=0.7,label=below:{$c_{n-2}$}] (cn1) at (-4.5,-1) {}; 
\node[point,scale=0.7,label=below:{$c_{n-1}$}] (cn) at (-5.5,-1) {}; 
\node[point,scale=0.7,label=below:{$c_{n}$}] (cnn) at (-6.5,-1) {}; 
\draw[->,bend right = 25] (r) to (a0);
\draw[->,bend right = 15] (r) to  (a1);
\draw[->,bend left = 10] (r) to node[below,pos=.3] {\small $Q$} (a1);
\draw[->,bend right = 15] (r) to node[below,pos=.4] {\small $S$} (a2);
\draw[->,bend left = 10] (r) to (a2);
\draw[->] (r) to node[below,pos=.6] {\small $Q$} (a3);
\draw[->,bend left = 15] (r) to (a3);
\draw[->,bend left = 30] (r) to node[above,pos=.7] {\small $S$} (a3);
\draw[->] (a0) to (b0);
\draw[->] (a0) to (b1);
\draw[->] (a1) to (b0);
\draw[->] (a1) to (b2);
\draw[->] (a2) to (b1);
\draw[->] (a2) to (b3);
\draw[->] (a3) to (b2);
\draw[->] (a3) to (b3);
\draw[->] (a0) to (c1);
\draw[->] (c1) to (c2);
\draw[->] (cn1) to (cn);
\draw[->] (cn) to node[above,pos=.4] {\small $S$} (cnn);
\node (d) at (-3.7,-1) {\large $\dots$};
\end{tikzpicture}}
\\
For any cactus $\C \in \mathfrak K_{\omq_n}$ and any node $x$ in $\q_n$, let $x^{\C}$ denote the copy of $x$ in the root segment of $\C$.
Observe that $\C$ is of depth $\geq n$ iff $\C$ contains an $R$-path $\pi$ that starts at $r^{\C}$ and has $\ge n$ $A$-nodes, the first of which is either $a_1^{\C}$ or $a_2^{\C}$.
We show first that if the depth of $\C$ is $\geq n$, then there is
a homomorphism $h \colon \q_n \to \C$. Indeed, if the first $A$-node of $\pi$ is $a_1^{\C}$, then
we can define $h$ by taking $h(r) = r^{\C}$, $h(a_0) = a_1^{\C}$, $h(a_i) = a_3^{\C}$ for $i=1,2,3$, 
$h(b_j) = b_2^{\C}$ for $j=0,1$,  $h(b_j) = b_3^{\C}$ for $j=2,3$, $c_1,\dots,c_{n-1}$ are $h$-mapped to the next $n-1$ $A$-nodes in $\pi$, and $h(c_n)$ is the 
$FT$-node in the segment with root $h(c_{n-1})$. 
(The case when the first $A$-node of $\pi$ is $a_2^{\C}$ is similar.)
So Theorem~\ref{cac-ac0} implies that $\omq_n$ is FO-rewritable. 

On the other hand, we claim that if $\C,\C'$ are cactuses of depths $<n$ and there is a
homomorphism $h \colon \C \to \C'$, then $\C=\C'$. 
We show this by induction on the depth of $\C$ (which cannot exceed the depth of $\C'$).
Observe first that, for any $x$ in $\q_n$, we must have $h(x^{\C})=x^{\C'}$: 
This holds
for $r^{\C}$ because the $FT$-node $a_3$ has no $S$-successors, for $a_0^{\C}$ because the depth of $\C'$ is less than $n$,
 $h(a_1^{\C}) \ne a_2^{\C'}$ because $a_2$ has no $Q$-predecessor,
  $h(a_1^{\C}) \ne a_3^{\C'}$ because $a_0$ and $a_1$ have a common successor, while $a_0$ and $a_3$ do not,
 we have a similar argument for $h(a_2^{\C})$, and then we clearly have $h(x^{\C})=x^{\C'}$ for $x=c_1,\dots,c_{n-1},a_3$.
 It follows that if $\C=\q_n$ then $\C'=\q_n$ must also hold, otherwise $h$ does not preserve $T$. If the depth of $\C$ is
 $>0$ then, for $i=1,2$,  let $\mathfrak s_i$ be the segment in $\C^s$ having $a_i^{\C}$ as its root node, and
 let $\C^-_i$ be the `subcactus' of $\C$ whose skeleton is the subtree of $\C^s$ with root $\mathfrak s_i$.
 We define $\C_i'{}^{-}$ from $\C'$ similarly. An inspection of $\q_n$ shows that we must have homomorphisms
 $h_1\colon\C^-_1\to\C'_1{}^-$ and $h_2\colon\C^-_2\to\C'_2{}^-$. Thus, we have $\C^-_1=\C'_1{}^-$ and $\C^-_2=\C'_2{}^-$ by the induction hypothesis (IH).
 Therefore, $\C=\C'$ follows, and so the UCQ rewriting $\Phi_n$ of $\omq_n$ provided by Theorem~\ref{cac-ac0} contains all different cactuses of depth $< n$, the number of which is $2^{2^{\mathcal{O}(n)}}$. 
It follows that \emph{any} UCQ-rewritings $\Phi'_n$ of $\omq_n$ have at least 
$2^{2^{\mathcal{O}(n)}}$ disjuncts. For otherwise, by the pigeonhole principle, there exist different disjuncts $\C$ and $\C'$ in $\Phi_n$ and $\C \to D$ and $\C' \to D$ homomorphisms, for some disjunct $D$ of $\Phi'_n$. On the other hand, there is a $D \to \C''$ homomorphism, for some disjunct $C''$ in $\Phi_n$, and so, as shown above, $\C = \C' = \C''$, which is a contradiction. 

One can readily transform $\Phi_n$ to an equivalent PE-rewriting of exponential size at the expense of nested $\land$ and $\lor$. But, by the proof of Theorem~\ref{hugeUCQ}, there are no PE-rewritings of subexponential size. On the other hand, the datalog program $\{\eqref{one}\text{--}\eqref{three}\}$ can be converted to an NDL-program describing cactuses of depth $<n$ and containing $O(n)$ rules.
\end{example} 

Finding an explicit syntactic characterisation of FO-rewritable d-sirups turns out to be nearly as hard as characterising FO-rewritable OMQs in fully-fledged expressive DLs and monadic disjunctive datalog queries. 
Notice, however, that the 1-CQs used in Example~\ref{bigUCQ} and the construction of~\cite{PODS21} (underlying Theorem~\ref{hugeUCQ}) are quite involved dags with multiple edges and  possibly multiple $FT$-twins. So one could hope that by restricting the shape of CQs and/or by disallowing $FT$-twins we would obtain less impenetrable yet practically useful classes of d-sirups.
Indeed, for d-sirups $\omq$ whose 1-CQ is a \emph{ditree} with its unique solitary $F$-node as root, the program $\Pi_\omq$ can be reformulated as an $\mathcal{EL}$-ontology, and so one can use the $\ACz$/\NL/\PTime{} trichotomy of~\cite{DBLP:conf/ijcai/LutzS17,DBLP:journals/corr/abs-1904-12533}, which is checkable in \textsc{ExpTime}.
	
\begin{example}\label{withEL}\em
		To illustrate, consider the $1$-CQ $\q$ below:\\ 
		\centerline{
			\begin{tikzpicture}[line width=0.8pt]
			\node[point,scale=0.7,label=above:{\small $F$}] (1) at (-3,0) {};
			\node[point,scale=0.7,label=above:{\small $FT$}] (2) at (-1.5,0) {};
			\node[point,scale=0.7] (3) at (0,0) {};
			\node[point,scale=0.7,label=above:{\small $T$}] (4) at (1.5,0) {};
			%
			\draw[->,right] (1) to node[below] {\small $R$} (2);
			\draw[->,right] (2) to node[below] {\small $S$} (3);
			\draw[->,right] (3) to node[below] {\small $Q$} (4);
			\end{tikzpicture}}\\ 
		We have
		$\dis\!_A,\A \models \q$ iff $\mathcal{E},\A \models \exists x\, B(x)$, where $\mathcal{E}$ is the $\mathcal{EL}$ TBox  
		$\{F \sqcap C_\q \sqsubseteq B, \, T \sqsubseteq P, \, A \sqcap C_\q \sqsubseteq P\}$ with  
		$C_\q = \exists R. (F \sqcap T \sqcap \exists S.\exists Q.P)$ and the DL syntax illustrated in terms of first-order logic by~\eqref{ax1}--\eqref{ax4} in Section~\ref{intro}. 
	\end{example}


Further, as shown in~\cite{PODS21}, any d-sirup with a ditree 1-CQ, not necessarily having an $F$-labelled root, is either FO-rewritable or \LogSpace-hard, and deciding this dichotomy is fixed-parameter tractable if we regard the number of solitary $T$-nodes as a  parameter. 
Moreover, for dd-sirups with an arbitrary ditree CQ, there is an explicit syntactic trichotomy: each of them is either FO-rewritable or \LogSpace-complete, or \NL-hard. On the other hand, there is no readily available machinery for explicitly characterising \NL-completeness, \PTime- and \coNP-hardness of (d)d-sirups (let alone more general types of OMQs). We are going to fill in this gap to some extent in the remainder of the article. To begin with, we combine some ideas from~\cite{DBLP:conf/stoc/CosmadakisGKV88,DBLP:conf/ijcai/LutzS17} to prove a general sufficient condition of linearisability for d-sirups with a 1-CQ.


\section{Linear-datalog-rewritability of d- and dd-sirups with a $1$-CQ}\label{boundedLin}

We require a few new definitions, assuming as before that $\T$ is one of $\dis_A$ or $\dis_A^\bot$. First, we extend  the class $\mathfrak K_\omq$ of \emph{cactuses} for any (d)d-sirup $\omq = (\T,\q)$ 
to a wider class $\mathfrak K^+_\omq$ by adding another inductive rule to its definition. We define $\mathfrak K^+_\omq$ as the class of structures obtained from $\q$ by recursively applying {\bf (bud)} and 
the following `pruning' rule: 
%
\begin{description}
\item[(prune)] if $\C \in \mathfrak K^+_\omq$ and $\T, \C^- \models \q$, where $\C^- = \C \setminus \{T(y)\}$, for some solitary $T(y)$ in $\C$, then we add $\C^-$ to $\mathfrak K^+_\omq$.   
\end{description}
If $\C^-$ is obtaining from $\C$ by {\bf (prune)}, we define the skeleton $(\C^-)^s$ of $\C^-$ to be $\C^s$.
We continue to call members of $\mathfrak K^+_\omq$ \emph{cactuses\/}. 
%
We write $\C' \subseteq \C$ to say that, when regarded as ABoxes (sets of atoms), the cactus $\C'$ is (isomorphic to) a 
subset of the cactus $\C$. A cactus $\C\in\mathfrak K^+_\omq$ is \emph{minimal} if, for every  $\C'\in \mathfrak K^+_\omq$, $\C' \subseteq \C$ implies $\C'=\C$.
The class of minimal cactuses in $\mathfrak K^+_\omq$ is denoted by $\Kmin$. 
It should be clear that 
\eqref{cactusq} 
holds for $\Kmin$ in place of $\mathfrak K_\omq$.

\begin{example}\label{ex:prune}\em
Consider the d-sirup $\omq = (\dis_A,\q)$ with $\q$ shown on the left-hand side of the picture below (the $R$-labels on the edges are omitted). The cactus $\C_1$ is obtained by budding $y_1$ in $\q$, and $\C_2$ is obtained by budding $y_2$ in $\C_1$.\\
\centerline{ 
\begin{tikzpicture}[decoration={brace,mirror,amplitude=7},line width=0.8pt,scale=.7]
\node[point,scale=0.7, label=above:{\small $F$}] (1) at (0,0) {};
\node[point,scale=0.7,label=above:{\small $T$},label=below:{$y_1$}] (2) at (1.5,0) {};
\node[point,scale=0.7,label=above:{\small $T$},label=below:{$y_2$}] (3) at (3,0) {};
\draw[->,right] (1) to node[below] { } (2);
\draw[->,right] (2) to node[below] { } (3);
\node (d) at (-1,0) {$\q$};
\node (dd) at (-1,-2) {\ };
\end{tikzpicture}
\hspace*{1.5cm}
\begin{tikzpicture}[decoration={brace,mirror,amplitude=7},line width=0.8pt,scale=.7]
\node[point,scale=0.7,label=above:{\small $F$}] (1) at (0,0) {};
\node[point,scale=0.7,label=above:{\small $A$},label=below left:{$y_1$\!}] (2) at (1.5,0) {};
\node[point,scale=0.7,label=above:{\small $T$},label=below:{$y_2$}] (3) at (3,0) {};
\draw[->,right] (1) to node[below] { } (2);
\draw[->,right] (2) to node[below] { } (3);
\node[point,scale=0.7,label=left:{\small $T$\!}] (5) at (1.5,-1.5) {};
\node[point,scale=0.7,label=left:{\small $T$\!}] (6) at (1.5,-3) {};
\draw[->,right] (2) to node[below] { } (5);
\draw[->,right] (5) to node[below] { } (6);
\node (d) at (0,-2.5) {$\C_1$};
\end{tikzpicture}
\hspace*{1.5cm}
\begin{tikzpicture}[decoration={brace,mirror,amplitude=7},line width=0.8pt,scale=.7]
\node[point,scale=0.7,label=above:{\small $F$}] (1) at (0,0) {};
\node[point,scale=0.7,label=above:{\small $A$},label=below left:{$y_1$\!}] (2) at (1.5,0) {};
\node[point,scale=0.7,label=above:{\small $A$},label=below left:{$y_2$\!}] (3) at (3,0) {};
\draw[->,right] (1) to node[below] { } (2);
\draw[->,right] (2) to node[below] { } (3);
\node[point,scale=0.7,label=left:{\small $T$\!}] (5) at (1.5,-1.5) {};
\node[point,scale=0.7,label=left:{\small $T$\!}] (6) at (1.5,-3) {};
\draw[->,right] (2) to node[below] { } (5);
\draw[->,right] (5) to node[below] { } (6);
\node[point,scale=0.7,label=left:{\small $T$\!}] (7) at (3,-1.5) {};
\node[point,scale=0.7,label=left:{\small $T$\!}] (8) at (3,-3) {};
\draw[->,right] (3) to node[below] { } (7);
\draw[->,right] (7) to node[below] { } (8);
\node (d) at (0,-2.5) {$\C_2$};
\end{tikzpicture}
}
Let $\C_1^-$ be the result of removing $T(y_2)$ from $\C_1$. Then $\dis_A,\C_1^- \models \q$, the pruned cactus $\C_1^-$ is minimal, while $\C_2 \supset \C_1^-$ is not. Based on this observation, one can show that the skeleton of each cactus in $\Kmin$ has only one branch.
\end{example}

The branching number~\cite{DBLP:conf/ijcai/LutzS17} of a rooted tree $\mathfrak T$ is defined as follows.	
For any node $u$ in $\mathfrak T$, we compute inductively its \emph{branching rank} $\rank(u)$ by taking 
$\rank(u)=0$ if $u$ is a leaf and, for a non-leaf $u$,
\begin{equation}\label{rankdef}
\rank(u) = \begin{cases}
m + 1, & \text{if $u$ has $\ge 2$ children of branching rank $m$;}\\
m, & \text{otherwise,}
\end{cases} 
\end{equation}
where $m$ is the maximum of the branching ranks of $u$'s children. 
The \emph{branching number of\/} $\mathfrak T$ is the branching rank of its root node.
(In other words, the branching number of $\mathfrak T$ is $\bno$ if the largest full binary tree that is a minor of $\mathfrak T$ is of depth $\bno$.)
The \emph{branching number of a cactus\/} $\C\in\mathfrak K^+_\omq$ is the branching number of $\C^s$.
We call $\Kmin$ \emph{boundedly branching} if there is some 
	  $\bno<\omega$ such that $\Kmin$ contains a cactus with branching number $\bno$ but no cactus of greater branching number. Otherwise, we call $\Kmin$ \emph{unboundedly branching\/}.

\begin{example}\label{ex:cactus}\em
The branching number of each cactus in $\Kmin$ from Example~\ref{ex:prune} is 0; however, there are cactuses in $\mathfrak K_\omq$ with an arbitrarily large branching number $\bno < \omega$. 
As another instructive example, consider the 1-CQ $\q$ depicted on the left-hand side below. The cactus $\C_2$ for $\omq=(\dis_\top,\q)$ on the right-hand side, obtained by first budding $y_2$ and then \\ 
\centerline{ 
\begin{tikzpicture}[decoration={brace,mirror,amplitude=7},line width=0.8pt,scale=.85]
\node[point,scale=0.7, label=above:{\small $F$}] (1) at (0,0) {};
\node[point,scale=0.7,label=above:{\small $T$},label=below:{$y_1$}] (2) at (1.5,0) {};
\node[point,scale=0.7] (3) at (3,0) {};
\node[point,scale=0.7,label=above:{\small $T$},label=below:{$y_2$}] (4) at (4.5,0) {};
\draw[->,right] (1) to node[below] { } (2);
\draw[->,right] (2) to node[below] { } (3);
\draw[->,right] (3) to node[below] { } (4);
\node (d) at (-1,0) {$\q$};
\node (dd) at (-1,-2) {\ };
\end{tikzpicture}
\hspace*{1.5cm}
\begin{tikzpicture}[decoration={brace,mirror,amplitude=7},line width=0.8pt,scale=.85]
\node[point,scale=0.7, label=above:{\small $F$}] (1) at (0,0) {};
\node[point,scale=0.7,label=above:{\small $T$},label=below:{$y_1$}] (2) at (1,0) {};
\node[point,scale=0.7] (3) at (2,0) {};
\node[point,scale=0.7,label=right:{\!$y_2$}] (4) at (3,0) {};
\draw[->,right] (1) to node[below] { } (2);
\draw[->,right] (2) to node[below] { } (3);
			\draw[->,right] (3) to node[below] { } (4);
			\node[point,scale=0.7,label=left:{},label=left:{$y_1'$\!}] (5) at (3,-1) {};
			\node[point,scale=0.7] (6) at (3,-2) {};
			\node[point,scale=0.7,label=left:{\small $T$}] (7) at (3,-3) {};
			\draw[->,right] (4) to node[below] { } (5);
			\draw[->,right] (5) to node[below] { } (6);
			\draw[->,right] (6) to node[below] { } (7);
			\node[point,scale=0.7,label=above:{\small $T$}] (8) at (4,-1) {};
			\node[point,scale=0.7] (9) at (5,-1) {};
			\node[point,scale=0.7,label=above:{\small $T$}] (10) at (6,-1) {};
			\draw[->,right] (5) to node[below] { } (8);
			\draw[->,right] (8) to node[below] { } (9);
			\draw[->,right] (9) to node[below] { } (10);
\node (d) at (0.5,-2.5) {$\C_2$};
\end{tikzpicture}
}\\
$y_1'$, can  be pruned at $y_1$ by removing $T(y_1)$ (since \emph{every} node in a model of $\dis_\top$ is labelled by $F$ or $T$). Using this observation, one can show that every cactus in $\Kmin$ has branching number $\le 1$. On the other hand, if $\omq = (\dis_A,\q)$, then $\Kmin$ is unboundedly branching as follows from Theorems~\ref{thm:semanticCR} and~\ref{P-cond}.
\end{example}
	
\begin{theorem}\label{thm:semanticCR}
Every \textup{(}d\textup{)}d-sirup $\omq = (\T,\q)$ with a $1$-CQ $\q$ and boundedly branching $\Kmin$ is linear-datalog-rewritable, and so can be answered in \NL. 
\end{theorem}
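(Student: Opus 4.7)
My plan would be to combine the semantic characterization via cactuses from Theorem~\ref{t:1cqchar} with a stratified-linearization technique in the spirit of Cosmadakis, Gaifman, Kanellakis and Vardi~\cite{DBLP:conf/stoc/CosmadakisGKV88} and Lutz--Sabellek~\cite{DBLP:conf/ijcai/LutzS17,DBLP:journals/corr/abs-1904-12533}. First, I would invoke Theorem~\ref{t:1cqchar} (after observing that the analogue for $\mathfrak K^+_\omq$ and in particular $\Kmin$ still holds, since pruning preserves entailment of $\q$) to reduce the problem to checking, for each ABox $\A$, whether there is a homomorphism $h\colon\C\to\A$ for some $\C\in\Kmin$; the extra $FT$-twin condition in the dd-sirup case is first-order, hence trivially linear-datalog-expressible. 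By the boundedly branching hypothesis, it suffices to inspect cactuses of branching number at most $\bno$.

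The heart of the argument is the construction of a linear-stratified datalog program $\Pi=(\Pi_0,\Pi_1,\dots,\Pi_\bno)$ in which stratum $i$ contains a fresh unary IDB predicate $P_i$ whose intended meaning is: $P_i(a)$ holds on $\A$ iff there is a cactus of branching number $\le i$ whose $F$-node is homomorphically mapped to $a$ while all solitary $T$-atoms are mapped into $T$-labelled individuals. The base stratum $\Pi_0$ is just the linear sub-program handling the unbranched case and includes $P_0(x)\leftarrow T(x)$ together with a single self-recursive rule derived from \eqref{three} along a spine. Higher strata are obtained by adapting rule~\eqref{three}: for each choice of a distinguished solitary $T$-variable $y_j$ in $\q$, $\Pi_i$ contains the rule
\[
P_i(x)\leftarrow A(x),\,\q',\,P_i(y_j),\,\bigwedge_{k\ne j}P_{i-1}(y_k),
\]
together with the bridging rule $P_i(x)\leftarrow P_{i-1}(x)$. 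The goal rule is a linearization of~\eqref{one} using $P_\bno$ (plus the $FT$-twin rule when $\T=\dis_A^\bot$). The crucial point is the defining property of branching rank~\eqref{rankdef}: every skeleton node of rank $m>0$ has \emph{at most one} child of rank $m$ and all other children of rank strictly less than $m$. Hence each recursive rule of $\Pi_i$ carries at most one same-stratum IDB atom in its body (corresponding to the unique maximum-rank child), while the remaining branches are absorbed into the already-finalized predicate $P_{i-1}$. This confirms that $\Pi$ is linear-stratified, and so, as noted in Section~\ref{prelims}, equivalent to a genuine linear datalog query evaluable in \NL.

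Soundness---any derivation of $\G$ in $\Pi$ can be unfolded into a cactus $\C\in\mathfrak K^+_\omq$ together with a homomorphism $\C\to\A$, hence $\T,\A\models\q$ by Theorem~\ref{t:1cqchar}---should be straightforward by induction on derivations. The main obstacle is \emph{completeness}: given a witnessing minimal cactus $\C\in\Kmin$ of branching number $\le\bno$ and homomorphism $h\colon\C\to\A$, we must derive $\G$ in $\Pi$. I would argue by induction on the branching number of $\C$, at each segment identifying the unique child-segment of maximum rank to follow via the same-stratum atom $P_i(y_j)$ while dispatching the strictly-lower-rank side-branches to $P_{i-1}$ via the inductive hypothesis. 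A subtle point is that $\Kmin$ is built using \textbf{(prune)} as well as \textbf{(bud)}, so the branching bound is on \emph{minimal} cactuses rather than arbitrary ones; however, minimality only removes redundant $T$-atoms along the spine, and the bridging rule $P_i(x)\leftarrow P_{i-1}(x)$ together with the base rule $P_0(x)\leftarrow T(x)$ lets the derivation skip past such pruned points without difficulty. Putting soundness and completeness together yields the theorem.
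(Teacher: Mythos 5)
Your high-level plan---stratify by branching rank, with each stratum feeding the next via exactly one same-stratum IDB atom---is in the same spirit as the paper's proof, which also exploits the structure of \eqref{rankdef} to pass from bounded branching to linear-stratification. But your direct construction of $\Pi$ has a genuine gap in the completeness direction, and you point at it yourself in the last paragraph without resolving it: \textbf{pruning}. The boundedness hypothesis is on $\Kmin$, which is built with \textbf{(prune)}; cactuses in $\mathfrak K_\omq$ (unpruned, obtained by \textbf{(bud)} alone) typically have \emph{unbounded} branching number, so when $\T,\A\models\q$ and Theorem~\ref{t:1cqchar} hands you an unpruned $\C\in\mathfrak K_\omq$ with $h\colon\C\to\A$, the only small witness you can extract is a minimal sub-cactus $\C'\subseteq\C$ in $\Kmin$. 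Such a $\C'$ generally contains \emph{unlabelled} nodes (those $y$ from which $T(y)$ was pruned), and $h(y)$ is then just an arbitrary individual of $\A$ with no constraint whatsoever. Your rules, however, always demand $P_i(y_k)$ or $P_{i-1}(y_k)$ for \emph{every} solitary $T$-variable $y_k$ of the segment, and every such chain eventually bottoms out at $P_0(y)\leftarrow T(y)$, which requires $T(h(y))\in\A$. For a pruned node this is simply false, so $\Pi$ fails to derive $\G$. The bridging rule $P_i(x)\leftarrow P_{i-1}(x)$ does not help: it changes the stratum index, not the fact that $T$ is ultimately required. Your remark that ``minimality only removes redundant $T$-atoms along the spine'' is a misreading: pruning leaves an unlabelled node in the cactus whose image in $\A$ need not carry any predicate, and which stratum an atom lives in is orthogonal to this.

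The reason the paper does not fall into this trap is that it routes the argument through tree automata rather than writing the datalog program directly. It first shows (Claim~\ref{l:wreg}) that the set $L_\omq$ of \emph{all} $\Sigma_\omq$-trees $\C$ with $\T,\C\models\q$---including trees with unlabelled, i.e.\ pruned, nodes---is a regular tree language; the NTA construction here is exactly where the ``is this pruning pattern entailment-preserving?'' test gets encoded, via extensions with $P$-labels and closure under $\Pi_\omq$. Only then does it linearize the automaton (Claim~\ref{le:stra}, which is the piece closest to your rank-indexed strata) and read off a linear-stratified datalog program $\Pi_\Af$ whose rules are parameterised by automaton transitions, not by the naive ``all $T$-children must eventually hit $T$'' semantics you assign to $P_i$. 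If you want to rescue a direct datalog construction, you would need your IDB predicates to record, roughly, ``there is a partial cactus rooted here, possibly with some $T$-children declared pruned, such that the whole thing still entails $\q$,'' and this is precisely the state information the tree automaton carries; trying to shortcut it with just the rank index loses the information needed to justify omitting a $T$-check.
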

\begin{proof}
Similarly to~\cite{DBLP:conf/stoc/CosmadakisGKV88}, 
we represent cactus-like ABoxes as terms of a tree alphabet and construct a tree automaton 
		$\mathfrak A_\omq$ such that 
		$(i)$ cactuses in $\Kmin$ are accepted by $\mathfrak A_\omq$, and $(ii)$ for every ABox $\A$ accepted by $\mathfrak A_\omq$, we have $\T,\A\models\q$.
Then, using ideas of \cite{DBLP:conf/ijcai/LutzS17}, we show that 
if $\Kmin$ is boundedly branching, then 
		the automaton $\mathfrak A_\omq$ can be transformed into a (monadic) linear-stratified datalog rewriting of $\omq$. As shown in \cite{DBLP:journals/tcs/AfratiGT03}, such a rewriting can further be converted into a linear datalog rewriting (at the expense of increasing the arity of IDB predicates in the program).
	
We only consider the case $\mathcal{O} = \dis_A$ leaving a similar proof for $\dis_A^\bot$ to the reader. As before, we assume that $\q$ is a $1$-CQ such that $F(x)$ and $T(y_1), \dots , T(y_n)$ are all of the solitary occurrences of $F$ and $T$ in $\q$.


Recall from \cite{tata2007} that a \emph{tree alphabet} is a finite set $\Sigma$ of symbols, each of which is associated with a natural number, its \emph{arity\/}. A $\Sigma$-\emph{tree} is any ground term built up inductively, using the symbols of $\Sigma$ as functions: $0$-ary symbols in $\Sigma$ are $\Sigma$-trees and, for any $k$-ary $\mathfrak a$ in $\Sigma$ and $\Sigma$-trees $\Tf_1,\dots,\Tf_k$, the term $\mathfrak a(\Tf_1,\dots,\Tf_k)$ is a $\Sigma$-tree. The \emph{branching number of a $\Sigma$-tree} is that of its parse tree. 
We define a tree alphabet $\SigmaQ$ as follows.
Consider cactus-like ABoxes that are built from $\q$ using {\bf (bud)} and {\bf (prune)}, with applications of the latter also allowed when $\dis_A,\C^-\not\models\q$ for the resulting ABox $\C^-$, and extend the notions of skeleton, branching number and segments to these in the natural way. 
The symbols of $\SigmaQ$ are the segments $\mathfrak s$ of such ABoxes, with the arity of $\mathfrak s$ being the number of its budding nodes, and with the $x$-node of $\mathfrak s$ being either labelled by $F$ or not.
Then each cactus in $\mathfrak K^+_\omq$ can be encoded by some $\SigmaQ$-tree; see Fig.~\ref{f:Stree} for an example.  On the other hand, every $\SigmaQ$-tree represents some cactus-like ABox. So, with a slight abuse of terminology, from now on by a $\SigmaQ$-tree we mean either the corresponding term or ABox. 
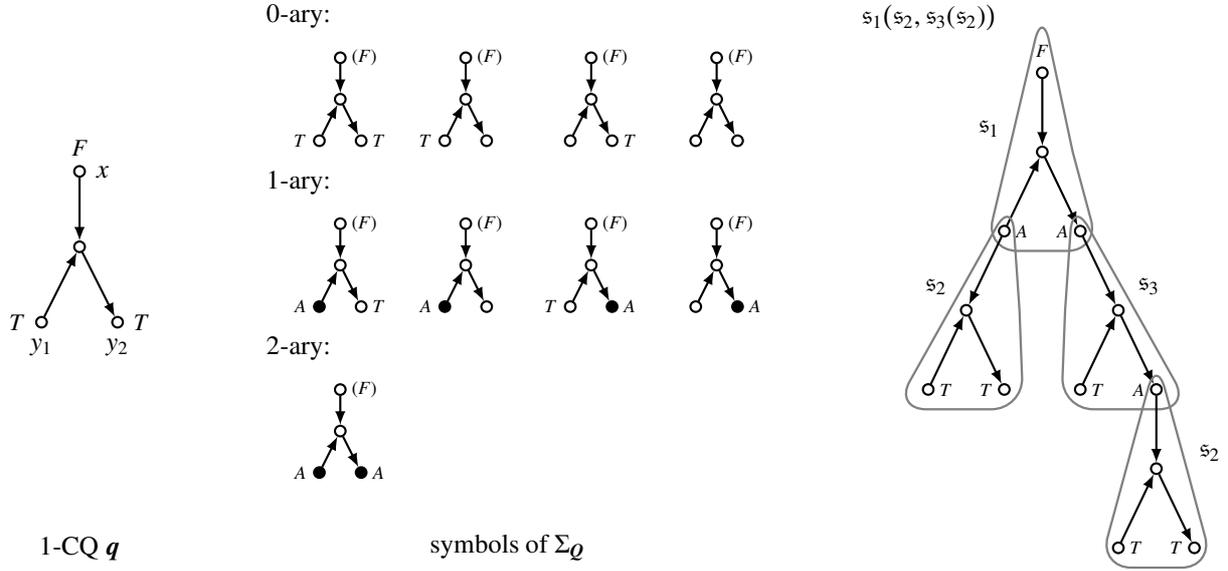
\begin{figure}[t]
		\centering 
\begin{tikzpicture}[decoration={brace,mirror,amplitude=7},line width=0.8pt]
			\node[point,scale=0.7, label=above:{\small$F$},label=right:{$x$}] (1) at (0,0) {};
			\node[point,scale=0.7] (2) at (0,-1) {};
			\node[point,scale=0.7,label=left:{\small $T$},label=below:{$y_1$}] (3) at (-.5,-2) {};
			\node[point,scale=0.7,label=right:{\small $T$},label=below:{$y_2$}] (4) at (.5,-2) {};
			\draw[->,right] (1) to node[below] { } (2);
			\draw[->,right] (3) to  (2);
			\draw[->,right] (2) to node[below] { } (4);
			\node[] at (0,-5) {$1$-CQ $\q$};
\end{tikzpicture}
\hspace*{1.2cm}
\begin{tikzpicture}[decoration={brace,mirror,amplitude=7},line width=0.8pt,scale=.55]			
			\node[point,draw=white] (t0) at (-1,2) {$0$-ary:};
			\node[point,scale=0.7, label=right:\!{\scriptsize $(F)$}] (1) at (0,1) {};
			\node[point,scale=0.7] (2) at (0,0) {};
			\node[point,scale=0.7,label=left:{\scriptsize $T$\!}] (3) at (-.5,-1) {};
			\node[point,scale=0.7,label=right:\!{\scriptsize $T$}] (4) at (.5,-1) {};
			\draw[->,right] (1) to node[below] { } (2);
			\draw[->,right] (3) to  (2);
			\draw[->,right] (2) to node[below] { } (4);
			\node[point,scale=0.7, label=right:\!{\scriptsize $(F)$}] (11) at (3,1) {};
			\node[point,scale=0.7] (12) at (3,0) {};
			\node[point,scale=0.7,label=left:{\scriptsize $T$\!}] (13) at (2.5,-1) {};
			\node[point,scale=0.7] (14) at (3.5,-1) {};
			\draw[->,right] (11) to node[below] { } (12);
			\draw[->,right] (13) to  (12);
			\draw[->,right] (12) to node[below] { } (14);
			\node[point,scale=0.7, label=right:\!{\scriptsize $(F)$}] (21) at (6,1) {};
			\node[point,scale=0.7] (22) at (6,0) {};
			\node[point,scale=0.7] (23) at (5.5,-1) {};
			\node[point,scale=0.7,label=right:\!{\scriptsize $T$}] (24) at (6.5,-1) {};
			\draw[->,right] (21) to node[below] { } (22);
			\draw[->,right] (23) to  (22);
			\draw[->,right] (22) to node[below] { } (24);
			\node[point,scale=0.7, label=right:\!{\scriptsize $(F)$}] (31) at (9,1) {};
			\node[point,scale=0.7] (32) at (9,0) {};
			\node[point,scale=0.7] (33) at (8.5,-1) {};
			\node[point,scale=0.7] (34) at (9.5,-1) {};
			\draw[->,right] (31) to node[below] { } (32);
			\draw[->,right] (33) to  (32);
			\draw[->,right] (32) to node[below] { } (34);
			\node[point,draw=white] (t1) at (-1,-2) {$1$-ary:};
			\node[point,scale=0.7, label=right:\!{\scriptsize $(F)$}] (o1) at (0,-3) {};
			\node[point,scale=0.7] (o2) at (0,-4) {};
			\node[point,fill,scale=0.7,label=left:{\scriptsize $A$\!}] (o3) at (-.5,-5) {};
			\node[point,scale=0.7,label=right:\!{\scriptsize $T$}] (o4) at (.5,-5) {};
			\draw[->,right] (o1) to node[below] { } (o2);
			\draw[->,right] (o3) to  (o2);
			\draw[->,right] (o2) to node[below] { } (o4);
			\node[point,scale=0.7, label=right:\!{\scriptsize $(F)$}] (o11) at (3,-3) {};
			\node[point,scale=0.7] (o12) at (3,-4) {};
			\node[point,fill,scale=0.7,label=left:{\scriptsize $A$\!}] (o13) at (2.5,-5) {};
			\node[point,scale=0.7] (o14) at (3.5,-5) {};
			\draw[->,right] (o11) to node[below] { } (o12);
			\draw[->,right] (o13) to  (o12);
			\draw[->,right] (o12) to node[below] { } (o14);
			\node[point,scale=0.7, label=right:\!{\scriptsize $(F)$}] (o21) at (6,-3) {};
			\node[point,scale=0.7] (o22) at (6,-4) {};
			\node[point,scale=0.7,label=left:{\scriptsize $T$\!}] (o23) at (5.5,-5) {};
			\node[point,fill,scale=0.7,label=right:\!{\scriptsize $A$}] (o24) at (6.5,-5) {};
			\draw[->,right] (o21) to node[below] { } (o22);
			\draw[->,right] (o23) to  (o22);
			\draw[->,right] (o22) to node[below] { } (o24);
			\node[point,scale=0.7, label=right:\!{\scriptsize $(F)$}] (o31) at (9,-3) {};
			\node[point,scale=0.7] (o32) at (9,-4) {};
			\node[point,scale=0.7] (o33) at (8.5,-5) {};
			\node[point,fill,scale=0.7,label=right:\!{\scriptsize $A$}] (o34) at (9.5,-5) {};
			\draw[->,right] (o31) to node[below] { } (o32);
			\draw[->,right] (o33) to  (o32);
			\draw[->,right] (o32) to node[below] { } (o34);
			\node[point,draw=white] (t2) at (-1,-6) {$2$-ary:};
			\node[] (caption) at (4,-10.8) {symbols of  $\SigmaQ$};
			\node[point,scale=0.7, label=right:\!{\scriptsize $(F)$}] (k1) at (0,-7) {};
			\node[point,scale=0.7] (k2) at (0,-8) {};
			\node[point,fill,scale=0.7,label=left:{\scriptsize $A$\!}] (k3) at (-.5,-9) {};
			\node[point,fill,scale=0.7,label=right:\!{\scriptsize $A$}] (k4) at (.5,-9) {};
			\draw[->,right] (k1) to node[below] { } (k2);
			\draw[->,right] (k3) to  (k2);
			\draw[->,right] (k2) to node[below] { } (k4);
\end{tikzpicture}	
\hspace*{1cm}
\begin{tikzpicture}[decoration={brace,mirror,amplitude=7},line width=0.8pt,yscale=1.05]
			\node[point,draw=white] (s) at (-1.5,.7) {$\mathfrak s_1\bigl(\mathfrak s_2,\mathfrak s_3(\mathfrak s_2)\bigr)$};
			\node[point,draw=white] (s1) at (-.7,-.7) {$\mathfrak s_1$};
			\node[point,scale=0.7, label=above:{\scriptsize $F$}] (1) at (0,0) {};
			\node[point,scale=0.7] (2) at (0,-1) {};
			\node[point,scale=0.7,label=right:\!{\scriptsize $A$}] (3) at (-.5,-2) {};
			\node[point,scale=0.7,label=left:{\scriptsize $A$}\!] (4) at (.5,-2) {};
			\draw[->,right] (1) to node[below] { } (2);
			\draw[->,right] (3) to (2);
			\draw[->,right] (2) to node[below] { } (4);
			\draw[ line width=.3mm,gray,rounded corners=12pt] (.4,-1) -- (0.75,-2.25) -- (-.75, -2.25) -- (0,0.8) --(.4,-1);
			\node[point,draw=white] (s2) at (-1.4,-2.7) {$\mathfrak s_2$};
			\node[point,scale=0.7] (22) at (-1,-3) {};
			\node[point,scale=0.7,label=right:\!{\scriptsize $T$}] (23) at (-1.5,-4) {};
			\node[point,scale=0.7,label=left:{\scriptsize $T$}\!] (24) at (-.5,-4) {};
			\draw[->,right] (3) to node[below] { } (22);
			\draw[->,right] (23) to (22);
			\draw[->,right] (22) to node[below] { } (24);
			\draw[ line width=.3mm,gray,rounded corners=12pt] (-.3,-3) -- (-0.25,-4.25) -- (-1.95, -4.25) -- (-.4,-1.6) --(-.3,-3);
			\node[point,draw=white] (s3) at (1.4,-2.7) {$\mathfrak s_3$};
			\node[point,scale=0.7] (32) at (1,-3) {};
			\node[point,scale=0.7,label=right:\!{\scriptsize $T$}] (33) at (.5,-4) {};
			\node[point,scale=0.7,label=left:{\scriptsize $A$}\!] (34) at (1.5,-4) {};
			\draw[->,right] (4) to node[below] { } (32);
			\draw[->,right] (33) to (32);
			\draw[->,right] (32) to node[below] { } (34);
			\draw[ line width=.3mm,gray,rounded corners=12pt] (.3,-3)-- (0.25, -4.25)  -- (1.95,-4.25) -- (.4,-1.6) --(.3,-3);
			\node[point,draw=white] (s22) at (2.2,-4.8) {$\mathfrak s_2$};
			\node[point,scale=0.7] (42) at (1.5,-5) {};
			\node[point,scale=0.7,label=right:\!{\scriptsize $T$}] (43) at (1,-6) {};
			\node[point,scale=0.7,label=left:{\scriptsize $T$}\!] (44) at (2,-6) {};
			\draw[->,right] (34) to node[below] { } (42);
			\draw[->,right] (43) to (42);
			\draw[->,right] (42) to node[below] { } (44);
			\draw[ line width=.3mm,gray,rounded corners=12pt] (1.9,-5) -- (2.25,-6.25) -- (.75, -6.25) -- (1.5,-3.6) --(1.9,-5);
			\end{tikzpicture}
\caption{An example of a tree alphabet $\SigmaQ$ and a cactus as a $\SigmaQ$-tree.}\label{f:Stree}
\end{figure}					

However, such an ABox $\C$ is not necessarily a cactus in $\mathfrak K^+_\omq$ for two possible reasons: either $\dis_A,\C\not\models\q$ or $\C$ having $F$-nodes in
some `wrong' segments (every cactus has a unique $F$-node, viz., the $x$-node of its root segment).
%
We are interested in those $\SigmaQ$-trees $\C$ for which $\dis_A,\C \models \q$. 
To capture them, we use tree automata \cite{tata2007}.
A \emph{nondeterministic finite tree automaton} (NTA) over a tree alphabet $\Sigma$ is a quadruple $\mathfrak A = (Q,Q_f,\Delta,\Sigma)$, where 
		%
$Q$ is a finite set of \emph{states},
%
$Q_f \subseteq Q$ is a set of \emph{final} states, and
%
$\Delta$ is a set of \emph{transitions} of the form $q_1,\dots,q_{k}\Rightarrow^{\mathfrak a}q$, where $k\ge 1$ is the arity of $\mathfrak a \in \Sigma$ and $q_1,\dots,q_k,q\in Q$; for symbols $\mathfrak a$ of arity $0$, we have \emph{initial transitions} in $\Delta$ of the form $\Rightarrow^{\mathfrak a}q$.
		%
		A \emph{run} of $\mathfrak A$ on a $\Sigma$-tree $\Tf$ is a labelling function $r$ from the subterms of $\Tf$ to $Q$ satisfying the following condition: for any subterm $\C^- = \mathfrak a(\C_1, \dots, \C_k)$ of $\Tf$, there is a transition $q_1, \dots, q_k\Rightarrow^\mathfrak a q$ in $\Delta$ such
		that $r(\C_1)=q_1,\dots,r(\C_k)=q_k$ and $r(\C^-)=q$ (in which case we say that the transition is \emph{used in} $r$).
		A $\Sigma$-tree $\C$ is \emph{accepted} by $\mathfrak A$ if there is a run of $\mathfrak A$ on $\C$ that labels $\C$ with a final state. Let $L(\mathfrak A)$ be the set of all $\Sigma$-trees accepted by $\mathfrak A$.
A set $L$ of $\Sigma$-trees is called a \emph{regular tree language} if $L=L(\mathfrak A)$, for some NTA $\mathfrak A$ over $\Sigma$.
		
\begin{claim}\label{l:wreg}
$L_\omq=\{\C\mid \C$ is a $\SigmaQ$-tree with $\dis_A,\C\models\q\}$ is a regular tree language.
\end{claim}		
\begin{proof}
We proceed via a series of steps.  
In the construction, we use Theorem~\ref{t:1cqchar} for describing $L_\omq$ by means of  the datalog program $\Pi_\omq = \{\eqref{one},\eqref{two},\eqref{three}\}$.
We extend the tree alphabet $\SigmaQ$ to a tree alphabet $\SigmaQP$ as follows.
For each symbol $\mathfrak s$ in $\SigmaQ$, we label some (possibly none) of the nodes in
segment $\mathfrak s$ by $P$. We call each resulting `segment' $\mathfrak s^e$ an \emph{extension of} 
$\mathfrak s$. (Each symbol in $\SigmaQ$ might have several extensions, and each of them has
the same arity as $\mathfrak s$.) 
Let $\SigmaQP$ consist of all possible extensions of every $\mathfrak s$ in $\SigmaQ$.
We say that a $\SigmaQP$-tree $\C^e$ is an \emph{extension} of a $\SigmaQ$-tree $\C$
if they have isomorphic tree structures, and each symbol $\mathfrak s^e$ in $\C^e$ is an extension
of the corresponding symbol $\mathfrak s$ in $\C$. 
For example, the closure $\Pi_\omq(\C)$ of any $\SigmaQ$-tree $\C$ under $\Pi_\omq$ is an extension of $\C$.

For any $\SigmaQP$-tree $\C^e$, we write $\C^e\models\G$, for the goal predicate $\G$ of $\Pi_\omq$,
if there is a homomorphism from $\q^e$ to $\C^e$, where
$\q^e=\q\setminus\{T(y_1),\dots,T(y_n)\}\cup\{P(y_1),\dots,P(y_n)\}$.
We claim that each of the following is a regular tree language:
\begin{enumerate}
\item[(a)]
the set of $\SigmaQP$-trees $\C^e$ with $\C^e\ne\Pi_\omq(\C^e)$;
\item[(b)]
the set of $\SigmaQP$-trees $\C^e$ with $\C^e\models\G$;
\item[(c)]
the set of $\SigmaQ$-trees $\C$ that have some extension $\C^e$ with $\C^e=\Pi_\omq(\C^e)$ and
$\C^e\not\models\G$;
\item[(d)]
the set of $\SigmaQ$-trees $\C$ with $\Pi_\omq,\C\models\G$.
\end{enumerate}
Indeed,
to show (a), we need an NTA `detecting a pattern' in the ABox $\C^e$ falsifying one of rules \eqref{two}--\eqref{three} in $\Pi_\omq$.
Similarly, to show (b), we need an NTA `detecting a pattern' in $\C^e$ corresponding to an application of rule \eqref{one} in $\Pi_\omq$.
Now, (c) follows from (a), (b) and the fact that regular tree languages are closed under taking complements, intersections and linear homomorphisms \cite{tata2007}
(as the `forgetting' function substituting $\mathfrak s$ for each $\mathfrak s^e$ is a linear tree homomorphism
from $\SigmaQP$-trees to $\SigmaQ$-trees, mapping any extension $\C^e$ to $\C$). 
To show (d), take the complement of (c), and observe that $\Pi_\omq,\C\models\G$ iff, for every extension $\C^e$ of $\C$, whenever $\C^e=\Pi_\omq(\C^e)$ then $\C^e\models\G$.

Finally, it follows from (d) and Theorem~\ref{t:1cqchar} that $L_\omq$ is a regular tree language.
\end{proof}

An NTA $\mathfrak A= (Q,Q_f,\Delta,\Sigma)$ is \emph{linear-stratified} if there is a function $\st \colon Q \to \omega$ such that, for any transition $q_1,\dots,q_{k}\Rightarrow^{\mathfrak a}q$ in $\Delta$, 
\begin{itemize}
\item[--] 
$\st (q_i) \le \st(q)$, for every $i$, $1 \le i \le k$, and
\item[--] 
there is at most one $i$ such that  $1 \le i \le k$ and $\st(q_i) = \st(q)$.
\end{itemize}
		
\begin{claim}\label{le:stra} 
For any NTA $\mathfrak A$ and any $\bno < \omega$, there is a linear-stratified NTA $\mathfrak A^s$ such that 
\begin{equation}\label{automaton-inclusions}
	\{ \Tf \in L(\mathfrak A) \mid \text{the branching number of $\Tf$ is} \le \bno \}~\subseteq~	L(\mathfrak A^s) ~\subseteq~ L(\mathfrak A).
\end{equation}		
\end{claim}
\begin{proof}
Suppose $\mathfrak A= (Q,Q_f,\Delta,\Sigma)$.		
			We define $\mathfrak A^s = (Q^s,Q^s_f,\Delta^s,\Sigma)$ as follows. First, set 
$Q^s = Q \times \{0,\dots,\bno\}$ and  $Q_f = Q_f \times \{0,\dots,\bno\}$.
Then, for any transition of the form $\Rightarrow^{\mathfrak a}q$ in $\Delta$, we add  the transition 
$\Rightarrow^{\mathfrak a}(q,0)$ to $\Delta^s$.
 For any transition $q_1,\dots,q_{k}\Rightarrow^{\mathfrak a}q$ in $\Delta$ and any $m \le \bno$, we add to
 $\Delta^s$ all transitions $(q_1,m_1), \dots, (q_k,m_{k})\Rightarrow^{\mathfrak a} (q,m)$ such that 
\begin{itemize}
\item[--] either $m_1,\dots,m_{k} < m$ and $m_i = m_j = m-1$, for some $i \ne j$;
				
\item[--] or $m_i = m$, for some $i$, and $m_j < m$, for all $j \ne i$.
\end{itemize}
$\mathfrak A^s$ is linear-stratified as one can set $\st\bigl((q,m)\bigr) = m$, for $q\in Q$, $m\le \bno$. To show \eqref{automaton-inclusions}, observe that $L(\mathfrak A^s)\subseteq L(\mathfrak A)$ since from every run $r$ of $\mathfrak A^s$ on $\C$ we obtain a run of $\mathfrak A$ on $\C$
by replacing each  $(q_1,m_1), \dots, (q_k,m_{k})\Rightarrow^{\mathfrak a} (q,m)$ used in $r$
with $q_1,\dots,q_{k}\Rightarrow^{\mathfrak a}q$. For the other inclusion, given a run $r$ of $\mathfrak A$ 
on some $\C$ with branching number $\le \bno$, we obtain a run of $\mathfrak A^s$ on $\C$ by labelling each subterm $\C^-$
of $\C$ with state $\bigl(r(\C^-),\bnok\bigr)$, where $\bnok$ is the branching number of $\C^-$.			
\end{proof}
		
We can now complete the proof of Theorem~\ref{thm:semanticCR}.	
Indeed, suppose that every cactus in $\Kmin$ has branching number at most $\bno <\omega$.
		By Claims~\ref{l:wreg} and \ref{le:stra}, there is a linear-stratified NTA $\Af = (Q,Q_f,\Delta,\Sigma_\omq)$ 
such that
\[
 \{ \Tf \in L_\omq \mid \text{the branching number of $\Tf$ is at most }  \bno \} ~\subseteq~		L(\Af) ~\subseteq~ L_\omq.
 \]
 %
		Using $\Af$, we construct a (monadic) linear-stratified program $\Pi_\Af$ with goal predicate $\G_\Af$ as follows.
		For every $q \in Q$, we introduce a fresh unary predicate $P_q$. For every final state $q \in Q_f$, the program $\Pi_\Af$ contains the rule 
\begin{equation}\label{finalrule}		
\G_\Af \leftarrow P_q(x).
\end{equation}
 For every transition $q_1, \dots, q_k\Rightarrow^{\mathfrak s}  q$ in $\Delta$,  
		where the budding nodes in the $k$-ary segment $\mathfrak s$ are $y_{i_1}, \dots, y_{i_k}$,
		 $\Pi_\Af$ contains 
\begin{equation}\label{stratrule}
		P_q(x) \leftarrow \s, P_{q_1}(y_{i_1}),\dots, P_{q_k}(y_{i_k}).
\end{equation}
		%
		%
		As $\Af$ is linear-stratified, it is easy to see that the program $\Pi_\Af$ is linear-stratified.
		We claim that $(\Pi_\Af,\G_\Af)$ is a datalog-rewriting of $\omq$, that is, for any ABox $\A$ (without the $P_q$), we have 
		$\Pi_\Af, \A \models \G_\Af$ iff $\dis_A,\A \models \q$.
		
($\Leftarrow$) 
By Theorem~\ref{t:1cqchar},
there is a homomorphism $h \colon \C \to \A$, for some $\C \in \mathfrak K_\omq$. As $\C$ always contains some $\C'\in\Kmin$, 
we may assume that $\C\in\Kmin$, and so $\C$ has branching number $\le \bno$.
As $\dis_A,\C\models\q$ clearly holds for every $\C\in\Kmin$, 
it follows that $\C\in L_\omq$, and so $\C\in L(\Af)$.
Let $r$ be an accepting run of $\mathfrak A$ on $\C$. We construct a derivation of $\G_\Af$ in $\Pi_\Af(\A)$ by induction on $\C$ as a $\SigmaQ$-tree, moving from leaves to the root. 
For every segment $\mathfrak s$ in $\C$, if the transition  $q_1, \dots, q_k\Rightarrow^{\mathfrak s}  q$ 
is used in $r$ then
we apply \eqref{stratrule} with the substitution of $h(z)$ for any node $z$ in $\mathfrak s$.
Also, if $r(\C)=q$, for some final state $q$ of $\Af$, then we apply \eqref{finalrule}
with the substitution $h(x_{{\mathfrak s}_0})$, where $x_{{\mathfrak s}_0}$ is the $x$-node of the root segment 
$\mathfrak s_0$ in $\C$. It follows that $\Pi_\Af, \A \models \G_\Af$. 
		
($\Rightarrow$) By induction on a derivation of $\G_\Af$, we construct a $\SigmaQ$-tree $\mathcal{B}$, an accepting run $r$ of $\Af$ on $\mathcal{B}$,
and a homomorphism $f\colon\mathcal{B}\to\A$.
To begin with, there is an object $x^a$ for which \eqref{finalrule} was triggered for some $q\in Q_f$.
Then $P_q(x^a)$ was deduced by an application of \eqref{stratrule} for some $\mathfrak s$.
If this $\mathfrak s$ is $0$-ary, then $s$ is a $\SigmaQ$-tree (of depth $0$), the function $r$ labelling $\mathfrak s$ with $q$ is an accepting run on $\mathfrak s$, and the substitution $f_0$ used in \eqref{stratrule} is a homomorphism from $\mathfrak s$ to $\A$.
If $\mathfrak s$ is $k$-ary, for some $k>0$, then there are $y_{i_1}^a,\dots,y_{i_k}^a$ for which \eqref{stratrule} was triggered. 
For each $j=1,\dots,k$, consider the rule 
\[
P_{q_j}(x) \leftarrow \mathfrak s^j, P_{q_1^j}(y_{i_1}),\dots, P_{q_{k_j}^j}(y_{i_{k_j}}) 
\]
by which $P_{q_j}(y_{i_j}^a)$ was deduced. Take the ABox $\mathcal B$ built up by glueing the $x$ node of each segment $\mathfrak s^j$ to the $y_{i_j}$ node of $\mathfrak s$, extend $r$ by labelling each $\mathfrak s^j$ with $q_j$, and extend $f_0$ to a $\mathcal{B}\to\A$ homomorphism by taking the substitutions used in the rules. Now, if every $\mathfrak s^j$ is $0$-ary, then $\mathcal B$ is a $\SigmaQ$-tree and we are done. Otherwise, repeat the above procedure for the `arguments' of each $\mathfrak s^j$ of arity $>0$. As the derivation of $\G_\Af$ is finite, sooner or later the procedure stops, as required.

As $\mathcal{B}\in L(\Af)\subseteq L_\omq$, by Theorem~\ref{t:1cqchar} there exists  a homomorphism $h \colon \C \to \mathcal{B}$, for some cactus $\C \in \mathfrak K_\omq$. Then the composition of $h$ and $f$ is a homomorphism from $\C$ to $\A$, and so $\dis_A,\A \models \q$ by Theorem~\ref{t:1cqchar}, as required. 
\end{proof}

We do not know if the sufficient condition in Theorem~\ref{thm:semanticCR}  of linear-datalog-rewritability of (d)d-sirups with a 1-CQ is also a necessary one. As follows from~\cite{DBLP:conf/ijcai/LutzS17,DBLP:journals/corr/abs-1904-12533}, this is so for ditree 1-CQs with root labelled by $F$; see Example~\ref{withEL}. 
%
We use Theorem~\ref{thm:semanticCR} in the next section to show that answering path-shaped dd-sirups with a certain periodic structure can be done in \NL. 

	
\section{$\ACz\,/\,\NL\,/\,\PTime\,/\,\coNP$-tetrachotomy of dd-sirups with a path CQ}\label{sec:tetra}
	
We now obtain the main result of this article: a complete syntactic classification of dd-sirups $(\dis_A^\bot, \q)$ with a path-shaped CQ $\q$ according to their data complexity and rewritability type. While the $\ACz/\NL$ part of this $\ACz/\NL/\PTime/\coNP$-tetrachotomy follows from our earlier results, proving \PTime- and especially \coNP-hardness turns out to be tough and requires the development of novel machinery.

From now on, we only consider path CQs $\q$ (whose digraph is path-shaped).  
Solitary $F$- and $T$-nodes  will simply be called $F$- and $T$-\emph{nodes}, respectively. 
We denote the first (root) node in $\q$ by $\be$ and the last (leaf) node by $\en$.  Given 
 nodes $x$ and $y$, we write $x \prec y$ to say that there is a directed path from $x$ to $y$ 
 in $\q$;
 as usual, $x\preceq y$ means $x\prec y$ or $x= y$. 
 For $x \preceq y$,
 the set $[x,y]$  comprises those atoms in $\q$ whose variables are in the interval $\{z \mid x \preceq z \preceq y\}$ and $(x,y) = [x,y] \setminus \{T(x), F(x), T(y), F(y)\}$. For $\boldsymbol{i}=(x,y)$, we let $|\boldsymbol{i}|$ be 
 the length of the path from $x$ to $y$, and $|\q|=|(\be,\en)|$. 

We divide path CQs into three disjoint classes: the $0$-CQs and the $1$-CQs defined earlier, and the \emph{$2$-CQs} that contain at least two $F$-nodes and at least two $T$-nodes. 
As we saw in Section~\ref{s:dataAC}, 
dd-sirups $(\dis_A^\bot,\q)$ with $\q$ containing $FT$-twins are always FO-rewritable. 
We split twinless $1$-CQs further into \emph{periodic} and \emph{aperiodic} ones, only considering $1$-CQs with a single $F$-node and at least one $T$-node (as the case with a single $T$-node and at least one $F$-node is symmetric). Given such a twinless $1$-CQ $\q$ and natural numbers $l,r$ with $l+r\geq 1$, we write $\q=\q_{lr}$ to say that $\q$ has $l$-many $T$-nodes $x_{-l}\prec\dots\prec x_{-1}$ that $\prec$-precede its only $F$-node $x_0$, and $r$-many $T$-nodes $x_{1}\prec\dots\prec x_{r}$ that $\prec$-succeed $x_0$.
%
%
For every $i$ with $-l\le i\le r+1$, we define a set $\boldsymbol{r}_i$ of binary atoms by taking
$\boldsymbol{r}_i=(x_{i-1},x_i)$, where $x_{-l-1}=\be$ and $x_{r+1}=\en$.
%
%
Note that $\boldsymbol{r}_i\ne\emptyset$ for $-l< i< r+1$, but 
$\boldsymbol{r}_{-l}= \emptyset$ if $\be=x_{-l}$ and $\boldsymbol{r}_{r+1}= \emptyset$ if $x_r=\en$.\\
%
\centerline{ 
\begin{tikzpicture}[decoration={brace,mirror,amplitude=7},line width=0.8pt,scale=1.6]
                         \node (q) at (-2,0) {$\q=\q_{lr}$};
			\node[point,scale=0.7,label=below:{$\be$}] (0) at (-1,0) {};
			\node[point,scale=0.7,label=above:{\small $T$},label=below:{$x_{-l}$}] (1) at (0,0) {};
			\node (2) at (.5,0) {};
			\node (2b) at (.8,0) {$\dots$};
			\node (2c) at (1,0) {};
			\node[point,scale=0.7,label=above:{\small $T$},label=below:{$x_{-1}$}] (3) at (1.5,0) {};
			\node[point,scale=0.7,label=above:{\small $F$},label=below:{$x_0$}] (4) at (2.5,0) {};
			\node[point,scale=0.7,label=above:{\small $T$},label=below:{$x_1$}] (5) at (3.5,0) {};
			\node (5b) at (4,0) {};
			\node (5a) at (4.3,0) {$\dots$};
			\node (5c) at (4.5,0) {};
			\node[point,scale=0.7,label=above:{\small $T$},label=below:{$x_r$}] (8) at (5,0) {};
			\node[point,scale=0.7,label=below:$\en$] (9) at (6,0) {};
			\draw[->,right] (0) to node[above] {$\boldsymbol{r}_{-l}$} (1);
			\draw[-,right] (1) to (2);
			\draw[->,right] (2c) to (3);
			\draw[->,right] (3) to node[above] {$\boldsymbol{r}_{0}$} (4);
			\draw[->,right] (4) to node[above] {$\boldsymbol{r}_{1}$} (5);
			\draw[-,right] (5) to (5b);
			\draw[->,right] (5c) to (8);
			\draw[->,right] (8) to node[above] {$\boldsymbol{r}_{r+1}$} (9);
			%
			%
			\end{tikzpicture}	
			}\\
Each $\boldsymbol{r}_i$ determines a finite sequence $\seqr{\boldsymbol{r}_i}$ of binary predicate symbols.
We call $\q$ \emph{right-periodic} if $\q=\q_{0r}$  and either $r=1$ or
$\seqr{\boldsymbol{r}_i}=\seqr{\boldsymbol{r}_1}$ for all $i=1,\dots,r$ and 
$\seqr{\boldsymbol{r}_{r+1}}=\seqr{\boldsymbol{r}_1}^\ast\lambda$ for some (possibly empty) prefix $\lambda$ of 
$\seqr{\boldsymbol{r}_1}$.
	By taking a mirror image of this definition, we obtain the notion of  \emph{left-periodic} 1-CQ, in which case 
$\q=\q_{l\,0}$ and either $l=1$ or $\seqr{\boldsymbol{r}_{-i}}=\seqr{\boldsymbol{r}_0}$ for all $i=1,\dots,l-1$ and 
$\seqr{\boldsymbol{r}_{-l}}=\lambda\seqr{\boldsymbol{r}_0}^\ast$ for some (possibly empty) suffix $\lambda$ of 
$\seqr{\boldsymbol{r}_0}$.
A twinless 1-CQ $\q$ is called \emph{periodic} if it is either right- or left-periodic, and \emph{aperiodic} otherwise. 

\begin{theorem}[{\bf tetrachotomy}]\label{t:tetra}
Let $\omq$ be any d-sirup with a twinless path CQ $\q$ or any dd-sirup with a path CQ $\q$. Then the following tetrachotomy holds \textup{(}where the three `if' can be replaced by `iff' provided that $\NL \ne \PTime \ne \coNP$\textup{)}\textup{:}
\begin{description}
\item[($\ACz$)] $\omq$ is FO-rewritable and can be answered in $\ACz$ iff $\q$ is a $0$-CQ or contains an $FT$-twin\textup{;} otherwise,
			
\item[(\NL)] $\omq$ is linear-datalog-rewritable and answering it is \NL-complete if $\q$ is a periodic $1$-CQ\textup{;}
			
\item[(\PTime)] $\omq$ is datalog-rewritable and answering it is \PTime-complete if $\q$ is an aperiodic $1$-CQ\textup{;}
			
\item[(\coNP)] answering $\omq$ is \coNP-complete if $\q$ is a $2$-CQ.
\end{description}
\end{theorem}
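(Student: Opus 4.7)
My plan is to break the tetrachotomy into its four cases and dispatch them in order of increasing difficulty, leveraging the machinery already built in Sections~\ref{sec:gen} and~\ref{boundedLin}. The $\ACz$ case is immediate from Corollary~\ref{cor:ac0}, and the \PTime{} and \NL{} upper bounds on answering are already available from Corollary~\ref{1cq} and (modulo the bounded branching condition) Theorem~\ref{thm:semanticCR}; the \coNP{} upper bound is trivial. The $\LogSpace$-hardness of all cases below $\ACz$ is given by Theorem~\ref{ac0}~$(ii)$, since every non-$\ACz$ path CQ in our classification is twinless and contains both an $F$- and a $T$-node. So the real work lies in: $(a)$ verifying that $\Kmin$ is boundedly branching for periodic 1-CQs (to get linear-datalog-rewritability), $(b)$ strengthening $\LogSpace$-hardness to \NL-hardness in the periodic case, $(c)$ showing \PTime-hardness in the aperiodic 1-CQ case, and $(d)$ showing \coNP-hardness in the 2-CQ case.

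For the \NL{} case, I would exploit periodicity directly in the cactus construction. Assume $\q=\q_{0r}$ is right-periodic (the left-periodic case is symmetric). Budding at $x_i$ in a segment produces a new segment whose own $F$-node lies on the incoming $\seqr{\boldsymbol r_0}$-path into the bud site, and the periodic equality of the $\seqr{\boldsymbol r_i}$ forces any two sibling buds to be ``equivalent'' in the sense that the whole cactus beneath one can be homomorphically mapped into the other via the period shift. This collapsing, combined with {\bf (prune)} applied to any $T$-node that lies in the image of the shift, reduces $\Kmin$ to cactuses whose skeleton is essentially a caterpillar, giving branching number $0$. Theorem~\ref{thm:semanticCR} then yields a linear datalog rewriting. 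For \NL-hardness I would reduce directed $s$--$t$-reachability: each edge $(u,v)$ of the input digraph is replaced by a fresh copy of $\q\setminus\{F(x_0),T(x_1),\dots,T(x_r)\}$ with the copy of $x_0$ identified with $u$ and the copy of $x_1$ identified with $v$; the source $s$ is labelled $A$-nodes propagating along the edges, with $F(s)$ and $T(t)$ added at the endpoints. Any model must flip some $A$-node to $F$ along every $s$-to-$t$ path, planting a homomorphic image of $\q$ precisely when $t$ is reachable.

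The \PTime-hardness for aperiodic 1-CQs is by reduction from the monotone circuit value problem. From a single copy of $\q$ we build AND- and OR-gadgets whose input and output ``ports'' are distinguished $A$-individuals, such that in every model the output port is forced to $T$ iff its inputs meet the gate's semantics; a monotone circuit $C$ is then assembled by wiring such gadgets together and placing an $F$ at the circuit's output node. The crucial point---and the reason \emph{aperiodicity} is needed---is that the gadget must allow exactly the two-valued case distinction on each input $A$-node without leaving room for unintended homomorphisms that short-circuit the reasoning; aperiodicity rules out the periodic ``slippage'' that would otherwise let a copy of $\q$ match itself shifted by a period and thus collapse distinct gadgets. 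I would handle the several aperiodicity types (different leading/trailing words $\seqr{\boldsymbol r_i}$) by a uniform gadget template with type-dependent wiring lengths.

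The \coNP-hardness of 2-CQs is the main obstacle: we must encode an \NP-complete problem (I would take not-all-equal or monotone 3SAT, or a suitably tailored variant of 3SAT) using only binary case distinctions $A\sqsubseteq F\sqcup T$, disjointness $F\sqcap T\sqsubseteq\bot$, and a single path CQ with just two unary labels. The known reductions using several roles to encode literals are unavailable. My plan is to develop a \textbf{bike} gadget for each $2$-CQ $\q$: a small ABox fragment built by gluing two ``wheels''---each wheel a closed cycle of $A$-nodes---to a ``frame'' carved out of copies of $\q$, engineered so that the two wheels cannot both be consistently $\{F,T\}$-labelled without planting $\q$, and such that the frame allows propagating a Boolean value from one wheel to the other while preserving the no-$\q$ constraint. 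Clauses of the 3CNF are encoded by linking bikes through shared variable nodes; the fact that $\q$ has at least two $F$-nodes and at least two $T$-nodes is what makes the wheels' cycle-closure nontrivial, because a single $F$ or $T$ side could always be satisfied locally. The delicate part is that the exact geometry of the bike depends on the order and spacing of the solitary $F$- and $T$-nodes along $\q$, so the construction is not uniform; I would parametrize the wheel lengths by least common multiples of appropriate gaps between the $F$- and $T$-positions of $\q$ and verify soundness by an exhaustive case analysis of partial homomorphisms from $\q$ into a bike, showing that every potential embedding is blocked by at least one of the disjointness constraints imposed around the frame.
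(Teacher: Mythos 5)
Your overall decomposition is the paper's: the $\ACz$ item is Corollary~\ref{cor:ac0}; the \PTime{} and \NL{} upper bounds come from Corollary~\ref{1cq} and Theorem~\ref{thm:semanticCR} (via boundedly branching $\Kmin$); and the work is in three separate hardness reductions. A small correction on the upper bound: for right-periodic $\q$ with $r\ge 2$ the minimal cactuses have \emph{caterpillar} skeletons, whose branching number is $1$, not $0$; the paper's proof of Theorem~\ref{NL-cond} establishes exactly the bound $\le 1$, which is all Theorem~\ref{thm:semanticCR} requires, so this is cosmetic.

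There is, however, a genuine bug in your \NL-hardness reduction. You replace each edge $(u,v)$ by a copy of $\q\setminus\{F(x_0),T(x_1),\dots,T(x_r)\}$, i.e.\ with \emph{all} solitary unary labels stripped. For $r\ge 2$ this kills the forward direction: after stripping, the only $T$-labelled node in $\A_G$ is $\tnode$ and the only $F$-labelled node is $\snode$, so even when an $A$-node on the $s$--$t$ path flips from $T$ to $F$, the intermediate $T$-nodes $x_2,\dots,x_r$ of $\q$ have no $T$-labelled targets and $\q$ cannot be embedded. Theorem~\ref{thm:nl-hardness} instead converts \emph{only} the two contact nodes (an adjacent $T$/$F$ pair with nothing labelled between them) into $A$-nodes, keeping every other unary label on the copy of $\q$; that is what lets $\q$ embed into a single edge gadget once a flip occurs. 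Separately, for the two hard lower bounds your strategy (monotone circuit evaluation for \PTime, wheels/bikes from 3SAT for \coNP) does match the paper, but the details you propose would not work as written: the \PTime{} gadgets are assembled from multiple copies of \emph{intervals} of $\q$ with genuinely different shapes in the three aperiodicity subcases, not a single-copy template; the wheels are cycles of entire copies of $\q$ with carefully oriented contacts satisfying \eqref{tprecfcontact} and \eqref{okcontact}, not plain cycles of $A$-nodes; wheel sizes are chosen to keep the various $\uq$-, $\dq$- and $c$-neighbourhoods pairwise disjoint (linear in $|\q|$ and the number of clauses), not via LCMs of gaps; and the property a bike must enforce is that its two wheels carry \emph{opposite} truth values, not merely that they cannot both be labelled. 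These choices are where Sections~\ref{sec:tetra}--\ref{monster} invest their twenty-odd pages of case analysis; your sketch does not yet give a credible route through them.
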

The first item follows from Theorem~\ref{ac0} and the fact that $\ACz \ne \LogSpace$. The upper bounds in the remaining three are given by Theorem~\ref{NL-cond}, Corollary~\ref{1cq}, and Theorem~\ref{combi}, respectively.
The matching lower bounds are established by Theorems~\ref{thm:nl-hardness}, \ref{P-cond} and \ref{t:coNPhard} to be proved below. 
We begin with  the following criterion: 	
	
\begin{theorem}\label{thm:nl-hardness}
If $\q$ is a twinless path $1$-CQ, then answering $(\dis_A, \q)$ and $(\dis_A^\bot, \q)$ is \NL-hard.
\end{theorem}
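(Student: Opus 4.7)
I will prove \NL-hardness by a logarithmic-space reduction from directed $s$-$t$-reachability, constructing for each input digraph $G = (V, E)$ with distinguished $s, t \in V$ an ABox $\A_G$ such that $\dis_A, \A_G \models \q$ iff $s$ reaches $t$ in $G$; by~\eqref{twinless} the same $\A_G$ will witness \NL-hardness of $(\dis_A^\bot, \q)$ as well, since $\A_G$ will be $FT$-twinless. By symmetry (reversing the directions of all predicates in $\q$) I may assume $\q = \q_{lr}$ with $r \geq 1$, and WLOG $s$ has no incoming edges and $t$ no outgoing edges in $G$.

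\emph{The construction.} Each vertex $v \in V$ becomes a node of $\A_G$, labelled $F$ if $v = s$, $T$ if $v = t$, and $A$ otherwise. To each $v \in V$ I attach, with fresh variables, a \emph{prefix} path $\be^v \to \cdots \to v$ realising the predicate sequence $\seqr{\boldsymbol{r}_{-l}}\cdots \seqr{\boldsymbol{r}_0}$ of $\q$, with $T$-labels on the fresh nodes occupying the positions of $x_{-l}, \dots, x_{-1}$, and a \emph{suffix} path $v \to \cdots \to \en^v$ realising $\seqr{\boldsymbol{r}_{r+1}}$. For each edge $(u, v) \in E$ I attach a fresh \emph{bridge} path from $u$ to $v$ realising $\seqr{\boldsymbol{r}_1}\cdots \seqr{\boldsymbol{r}_r}$, with $T$-labels on the fresh nodes at the positions of $x_1, \dots, x_{r-1}$. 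The construction is clearly log-space computable and produces a twinless $\A_G$.

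\emph{Correctness.} For $(\Leftarrow)$, any directed path $s = v_0 \to v_1 \to \cdots \to v_d = t$ in $G$ yields a chain cactus $\C \in \mathfrak K_\omq$ of depth $d-1$, obtained from $\q$ by budding at $x_r$ a total of $d-1$ times. The map sending the $i$-th segment's $x_0^i \mapsto v_i$ and $x_r^i \mapsto v_{i+1}$, traversing the bridge for $(v_i, v_{i+1})$ for the middle of the segment, the prefix at $v_i$ for $\be^i \to x_0^i$, and the suffix at $v_{i+1}$ for $x_r^i \to \en^i$, is a homomorphism $\C \to \A_G$; Theorem~\ref{t:1cqchar} then gives $\dis_A, \A_G \models \q$. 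For $(\Rightarrow)$, Theorem~\ref{t:1cqchar} supplies a cactus $\C$ and a homomorphism $h\colon \C \to \A_G$. Since $s$ is the only $F$-labelled node in $\A_G$, the root segment's $x_0^0$ maps to $s$. Walking down a leaf-bound branch of the skeleton $\C^s$, each segment's $x_0 \to x_r$ binary path must be realised by a bridge in $\A_G$, yielding a sequence of vertices $s = v_0, v_1, \dots, v_d$ with $(v_i, v_{i+1}) \in E$. The leaf segment's $x_r^d$ carries a $T$-label in $\C$ and must simultaneously admit an $\seqr{\boldsymbol{r}_{r+1}}$-suffix in $\A_G$; only the vertex $t$ satisfies both, so $v_d = t$.

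\emph{Main obstacle.} The $(\Rightarrow)$ direction is delicate when $l \geq 1$: the cactus may bud at any of the $l+r$ solitary $T$-nodes of a segment, and ``backward'' buddings at $x_{-l}, \dots, x_{-1}$ threaten to produce embeddings that trace only an \emph{undirected} walk in $G$ and hence witness merely \LogSpace-hardness. Ruling these out requires a careful analysis exploiting the asymmetric placement of $T$-labels (only at pre-$F$ positions in prefixes and post-$F$-pre-$x_r$ positions in bridges) together with the fact that the ``hooks'' needed to glue a segment at both ends -- namely $\be$-prefixes and $\en$-suffixes -- exist in $\A_G$ only at vertices $v \in V$. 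A case analysis shows that a backward budding at an inner segment whose $x_0$ is mapped to a vertex $v$ forces the newly attached segment's $x_r$ to land on $t$ along a bridge incoming to $v$; an inductive argument on the skeleton $\C^s$ then refolds any such embedding into a chain-shaped one whose sequence of vertices is a directed $G$-path from $s$ to $t$, completing the reduction.
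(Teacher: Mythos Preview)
Your construction is plausible, but the argument for the $(\Rightarrow)$ direction has a genuine gap, and the paper's proof takes a much simpler route that avoids cactuses entirely.

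\textbf{The gap.} Your claim that ``walking down a leaf-bound branch of the skeleton, each segment's $x_0 \to x_r$ binary path must be realised by a bridge, yielding a sequence $s = v_0, v_1, \dots$ with $(v_i,v_{i+1}) \in E$'' fails as soon as the branch passes through a budding at some $x_{-j}$: then the child's $x_0$ is the parent's $x_{-j}$, which (when it is $A$-labelled in $\A_G$) is a \emph{predecessor} of the parent's vertex, not a successor. So the vertex sequence along such a branch is an undirected walk, not a directed path. Your ``Main obstacle'' paragraph asserts that a backward budding ``forces the newly attached segment's $x_r$ to land on $t$'', but this is not true in general: the child's $x_r$ lands on any out-neighbour of $u$, not necessarily $t$. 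Also, your stated reason that the leaf's $x_r$ maps to $t$ (the suffix condition) is not the operative one---\emph{every} vertex admits an $\seqr{\boldsymbol{r}_{r+1}}$-suffix; what forces $t$ is simply that $x_r$ is $T$-labelled and the only $T$-labelled vertex is $t$. The argument is easily repaired: do not follow an arbitrary branch, but follow \emph{only} the $x_r$-buddings from the root. Since the only $A$-labelled nodes of $\A_G$ are vertices of $G$, and since $x_1,\dots,x_{r-1}$ (mapping to $T$-labelled bridge-internal nodes) can never be budded, each segment along this chain has $x_0$ at a vertex and $x_r$ at its out-neighbour; the chain terminates exactly when $x_r$ is $T$-labelled, i.e.\ maps to $t$.

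\textbf{Comparison with the paper.} The paper's proof is direct and avoids Theorem~\ref{t:1cqchar} altogether. It picks any \emph{adjacent} $T$-$F$ pair $x \prec y$ in $\q$ (no $F$- or $T$-nodes strictly between), and for each edge $(\unode,\vnode)$ of a \emph{dag} $G$ it places a full fresh copy $\q^e$ of $\q$, identifying $x$ with $\unode$ and $y$ with $\vnode$ and relabelling both by $A$; finally it adds $T(\snode)$ and $F(\tnode)$. The $(\Rightarrow)$ direction is a one-line pigeonhole along the path (some consecutive pair is in $T^\I,F^\I$). For $(\Leftarrow)$ one defines a single model---label reachable-from-$\snode$ vertices $T$, the rest $F$---and excludes homomorphisms by a short local case analysis on where $h(x)$ can land. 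No prefixes, bridges, or suffixes; no cactus induction; no worry about backward buddings. Your approach, once patched, works too, but it is considerably heavier machinery for a statement the paper dispatches in a paragraph.
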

\begin{proof}
The proof is by an FO-reduction of the \NL-complete reachability problem for directed acyclic graphs (dags).
We assume that there exist a $T$-node $x$ and an $F$-node $y$ in $\q$ with $x\prec y$ (the other case is symmetric)  
and without any $F$- or $T$-nodes between them. 
Given a dag $G = (V,E)$ with nodes $\snode,\tnode \in V$, we construct a twinless 
 ABox $\A_G$ as follows.
Replace each edge $e = (\unode,\vnode) \in E$ by a fresh copy $\q^e$ of $\q$
such that node $x$ in $\q^e$ is renamed to $\unode$ with $T(\unode)$ replaced by $A(\unode)$, 
and node $y$ is renamed to $\vnode$ with $F(\vnode)$ replaced by $A(\vnode)$.
Then $\A_G$ comprises all such $\q^e$, for $e \in E$, as well as $T(\snode)$ and $F(\tnode)$.
We show that $\snode \to_G \tnode$ iff  $\TT,\A_G \models \q$ iff $\dis_A^\bot,\A_G\models \q$ (cf.\ \eqref{twinless}). 

$(\Rightarrow)$ Suppose there is a path $\snode=\vnode_0,  \dots, \vnode_n = \tnode$ in $G$ with $e_i =(\vnode_i,\vnode_{i+1}) \in E$, for $i < n$.  Then, for any model $\I$ of $\TT$ and $\A_G$, there is some $i < n$ such that $\vnode_i\in T^\I $ and $\vnode_{i+1}\in F^\I$. Thus, the isomorphism mapping from $\q$ to its copy $\q^{e_i}$ is a $\q\to\I$ homomorphism, and so $\I \models \q$. 

$(\Leftarrow)$ Suppose $\snode \nrightarrow_G \tnode$. Define a model $\I$ of $\TT$ and $\A_G$ by labelling with $T$ the undecided $A$-nodes in $\A_G$ that are reachable from $\snode$ (via a directed path) and with $F$ the remaining ones. 
By excluding all possible locations where the $T$-node $x$ could be mapped, we see that there is no homomorphism $h\colon\q\to\I$. Indeed, suppose $h(x)$ is in a copy $\q^{e}$ for some edge $e = (\unode,\vnode) \in E$. Then $h(x)$ cannot precede $\unode$ or succeed $\vnode$ in $\q^{e}$, otherwise there is not enough room in $\q^{e}$ 
for the rest of $\q$ to be mapped. And $h(x)$ cannot be between $\unode$ and $\vnode$ either, as $\q$ is twinless and
there are no $T$-nodes between $x$ and $y$ in $\q$. If $h(x)=\unode$, then we exclude all options where the $F$-node $y$ could be mapped: $h(y)$ cannot succeed $\unode$ in $\q^{(\unode',\unode)}$ for any edge $(\unode',\unode)$ because of the lack of room in $\q^{(\unode',\unode)}$, and $h(y)=\vnode'$ cannot hold in $\q^{(\unode,\vnode')}$ for any edge 
$(\unode,\vnode')\in E$ because such a $\vnode'$ is labelled by $T$ in $\I$.
For similar reasons, $h(x)=\vnode$ cannot happen either.
\end{proof} 

A generalisation of this theorem to d-sirups with ditree-shaped 1-CQs possibly containing $FT$-twins has been proved in~\cite{PODS21} using a much more involved construction; see also Example~\ref{NL-examples} below.



By Corollary~\ref{1cq}, all (d)d-sirups with a 1-CQ are datalog-rewritable and can be answered in \PTime. Our next task is to establish an \NL/\PTime{} dichotomy for d-sirups with a twinless path 1-CQ. 
	
\begin{theorem}\label{NL-cond}
If $\q$ is a periodic twinless path $1$-CQ, then both $(\TT,\q)$ and $(\dis_A^\bot, \q)$ are linear-datalog-rewritable, and so can be answered in \NL. 
\end{theorem}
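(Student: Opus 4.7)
The plan is to apply Theorem~\ref{thm:semanticCR}: it suffices to establish that $\Kmin$ is boundedly branching, both for $\omq = (\dis_A, \q)$ and for $\omq = (\dis_A^\bot, \q)$. The arguments for the two cases are structurally identical (the disjointness axiom does not alter the minimal-cactus analysis), so we focus on $(\dis_A, \q)$; the recipe of Section~\ref{s:dataAC} then extends the resulting linear-datalog rewriting to $(\dis_A^\bot, \q)$ by appending the rule $\G \leftarrow F(x), T(x)$. Without loss of generality we may assume that $\q = \q_{0r}$ is right-periodic, since the left-periodic case is symmetric after reversing all edges; put $p = |\boldsymbol{r}_1|$ for the period and write $|\boldsymbol{r}_{r+1}| = kp + q$ with $0 \le q < p$ for the tail.

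The main combinatorial claim is that every $\C \in \Kmin$ has branching number at most $1$. I would establish this via the following sub-claim: in every $\C \in \Kmin$ and every segment $\mathfrak s$ of $\C$, at most one child of $\mathfrak s$ in the skeleton $\C^s$ carries a $T$-atom anywhere in its subtree. Call such a child \emph{essential} and the others \emph{trivial}. A trivial child is automatically a leaf of $\C^s$, because with no $T$-atom in its subtree no budding could ever have been performed from it. Granting the sub-claim, a short induction from leaves upwards via definition~\eqref{rankdef} of branching rank shows the branching number is at most $1$: if $\mathfrak s$ has an essential child of rank $r' \le 1$ plus possibly some rank-$0$ trivial children, then $\rank(\mathfrak s) = r'$ when $r' = 1$, and $\rank(\mathfrak s) \le 1$ when $r' = 0$; and if $\mathfrak s$ has only trivial children, then $\rank(\mathfrak s) \le 1$. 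Inductively, no segment ever exceeds rank $1$.

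The hard part will be proving the sub-claim. Given $\mathfrak s$ with two essential children $\mathfrak c_i$ and $\mathfrak c_j$ budded at $T$-nodes $x_i, x_j$ of $\mathfrak s$ with $i < j$, the goal is to identify some $T$-atom deep in $\mathfrak c_i$'s subtree whose removal preserves $\q$-entailment, contradicting minimality of $\C$. Periodicity provides the tool: the conditions $\seqr{\boldsymbol{r}_k} = \seqr{\boldsymbol{r}_1}$ for all $1 \le k \le r$ and $\seqr{\boldsymbol{r}_{r+1}}$ being a prefix of $\seqr{\boldsymbol{r}_1}^\ast$ guarantee that, inside $\C$, the edge-label sequence starting at $x_i$ and travelling along $\mathfrak s$ through $x_{i+1}, \dots, x_j$ and further into the tail $\boldsymbol{r}_{r+1}$ coincides, over a long initial stretch, with the edge-label sequence at the start of the segment $\mathfrak c_i$. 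This structural overlap lets one re-route any $\q$-homomorphism $h \colon \q \to \I$ (for $\I$ a model of $\C$) that was forced through the candidate $T$-atom in $\mathfrak c_i$'s subtree into an alternative homomorphism which instead descends $\mathfrak s$ past $x_j$ and enters $\mathfrak c_j$, using the $F/T$-assignments that $\I$ gives to the $A$-labelled nodes along $\mathfrak s$ between $x_i$ and $x_j$. A careful case analysis on $j-i$, on the parameters $k$ and $q$, and on the pattern of $F$- versus $T$-assignments by $\I$ to the $A$-nodes in this interval, will complete the proof. Once the sub-claim is in hand, $\Kmin$ is boundedly branching (with bound $1$), and Theorem~\ref{thm:semanticCR} delivers the desired linear-datalog rewriting.
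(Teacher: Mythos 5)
Your framing is right: invoke Theorem~\ref{thm:semanticCR} and show every minimal cactus has branching number at most~$1$, reducing to the case $\q=\q_{0r}$. But the heart of the argument is the sub-claim, and you never prove it. The sentence ``a careful case analysis on $j-i$, on the parameters $k$ and $q$, and on the pattern of $F$- versus $T$-assignments \ldots{} will complete the proof'' is precisely where all the work lies: one has to exhibit, for an arbitrary $\C\in\Kmin$, a concrete smaller cactus in $\mathfrak K^+_\omq$, and this requires knowing which buds and prunes remain legal (that is, which intermediate ABoxes still entail $\q$). This is not a routine case split; it is the entire technical content of the theorem.

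There are also two structural problems with the sub-claim itself. First, your justification that a trivial child is automatically a leaf---``with no $T$-atom in its subtree no budding could ever have been performed from it''---is unsound: budding \emph{replaces} $T(y)$ by $A(y)$ while introducing $r$ fresh $T$-atoms in the new segment, and all of these can subsequently be budded further or pruned away. So a subtree may have undergone many buds yet end up with no $T$-atoms at all; ``no $T$-atoms now'' does not imply ``never budded''. Second, the sub-claim is strictly stronger than what you need, and there is no reason to believe it holds. A segment $\mathfrak s_i$ in a minimal cactus can perfectly well have a ``main'' child that continues a deep branch \emph{and}, in addition, a leaf child that still carries $T$-labelled copies; branching rank stays at~$1$ in that situation, because the side child is a leaf of rank~$0$. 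Indeed the cactus $\C_3$ that the paper builds has exactly this shape: a single deep branch $\mathfrak s_0,\dots,\mathfrak s_n$ whose segments may additionally carry several leaf children, some of them with surviving $T$-atoms. You are therefore aiming at a statement that is both unproved and, on its face, likely false.

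The paper's argument takes a different route, and it is worth noting what it does instead. It first establishes the downward-closure property \eqref{noTbelow} (once a $T$-copy is pruned, nothing below it carries a $T$-label); then it picks a carefully chosen branch to a leaf (one whose penultimate segment has no remaining $A$-nodes past the budding node) and shows \eqref{allTkept} that all $T$-copies along that branch-path are $T$- or $A$-labelled. Right-periodicity of $\q$ is then used exactly where you expected it to be used---to re-route, for every model $\I$, a $\q\to\I$ homomorphism so that $x_0$ and the suffix $(x_0,\en)$ land inside the chosen path. This yields a cactus $\C_3\subseteq\C$, in $\mathfrak K^+_\omq$, whose skeleton is the chosen branch plus leaf children (budded $T$-copies in those side leaves are unlabelled); $\C_3$ has branching number at most~$1$ and minimality forces $\C=\C_3$. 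The periodicity-driven ``re-routing'' intuition in your sketch is the right ingredient, but it has to be deployed inside a concrete construction of a smaller cactus, not as support for an all-segments sub-claim about which children carry $T$-atoms.
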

\begin{proof}
We use the notation above, and only consider the case when $\omq = (\TT,\q)$ and $\q = \q_{0r}$ is a right-periodic twinless path $1$-CQ with a single $F$-node $x_0$ and $T$-nodes $x_1,\dots,x_r$.
We show that every cactus in $\Kmin$ has branching number at most $1$ and use Theorem~\ref{thm:semanticCR}. If $r =1$, then the cactuses in $\Kmin$ have branching number $0$. 

So suppose $r\geq 2$ and $\C \in \Kmin$. For nodes $u,v$ in $\C$, we write $u \prec_\C v$ to say that there is a directed path from $u$ to $v$ in the (acyclic) digraph $\C$.
We call a node in $\C$ a \emph{$T$-copy} if it is a copy of a $T$-node $x_i$ of $\q$ for some
$i=1,\dots,r$. There can be three kinds of $T$-copies: those that were budded while constructing $\C$ are labelled by $A$, those that were pruned have no label, and the rest are labelled by $T$.
Observe first that 
\begin{equation}\label{noTbelow}
\mbox{if some $T$-copy $u$ is unlabelled in $\C$, then there is no $T$-copy $v$ such that $u \prec_\C v$ and $v$ is labelled by $T$.}
\end{equation}
Indeed, consider any model $\I$ of $\TT$ and $\C$ in which all $A$-nodes $u'$ with $u\prec_\C u'$ are in $T^\I$. As  
$\dis_A,\C\models\q$, there is a homomorphism $h \colon \q \to \I$. As $x_0$ is an $F$-node in $\q$ and its copy $x_0^{{\mathfrak s}_0}$ in the root segment $\mathfrak s_0$ of $\C$ is the only $F$-node in $\C$, it follows from our assumption on $\I$ that $u\nprec_\C h(x_0)$. We show that $u \nprec_\C h(x_i)$, for any $i \le r$.
Indeed, this is clear if $h(x_0)\nprec_\C u$. So suppose $h(x_0)\prec_Cu \prec_\C h(x_r)$.
Then, either $h(x_0)=x_0^{{\mathfrak s}_0}$ or $h(x_0)$ is a budded $T$-copy $\prec_C$-preceding $u$.
By $\q$ being right-periodic, every $T$-copy on the path from $h(x_0)$ to $h(x_r)$ in $\C$ different from $h(x_0)$ must be labelled by $T$. However, this is not the case for $u$, which is a contradiction. 
As $u \nprec_\C h(x_i)$ for any $i \le r$, by using {\bf (bud)} and {\bf (prune)} we can construct a cactus $\C_1\in\mathfrak K^+_\omq$ that is the same as $\C$ apart from all $T$-labelled $T$-copies $u'$ with $u\prec_\C u'$ being unlabelled.
Then $\C_1\subseteq\C$, and so $\C \in \Kmin$ implies that $\C=\C_1$, proving \eqref{noTbelow}. 
 
Next, consider any branch $\mathfrak s_0, \dots,\mathfrak s_{n-1},\mathfrak s_n$ in the skeleton $\C^s$ of $\C$ such that there are no $A$-nodes
in the segment $\mathfrak s_{n-1}$ $\prec_\C$-succeeding the $A$-labelled $T$-copy $w$ that has been budded to obtain the leaf segment $\mathfrak s_n$. 
Let $\pi$ be the path in $\C$ from the root node of $\mathfrak s_0$ to the leaf node of $\mathfrak s_n$.
We claim that 
\begin{equation}\label{allTkept}
\mbox{all $T$-copies in $\pi$ are labelled by either $T$ or $A$.}
\end{equation}
Indeed, by \eqref{noTbelow}, it is enough to show that all $T$-copies in $\mathfrak s_n$ are labelled by $T$.
Suppose on the contrary that at least one of them is not. Let $\I$ be a model of $\dis_A$ and $\C$ where 
the $A$-node $w$ is labelled by $F$.
Then there is a homomorphism $h\colon\q\to \I$ such that $h(x_i)\ne w$ for any $i=0,\dots,r$.
Thus, by using {\bf (bud)} and {\bf (prune)} we can construct a cactus $\C_2\in\mathfrak K^+_\omq$ that is
the same as $\C$ apart from $w$ in $\mathfrak s_{n-1}$ not being budded but pruned (and so $w$ is unlabelled in $\C_2$ and $\mathfrak s_n$ is not
a segment in $\C_2^s$). 
Then $\C_2\subseteq\C$ but $\C_2\ne\C$, contrary to $\C\in\Kmin$. 

Now \eqref{allTkept} and $\q$ being right-periodic imply that, 
for any model $\I$ of $\dis_A$ and $\C$, there is a homomorphism $h\colon\q\to \I$ mapping
$x_0$ and $(x_0,\en)$ into $\pi$. Indeed, take $h(x_0)=z$ where $z$ is the $\prec_\C$-last $F$-labelled $A$-node in $\pi$ if there is such, and $x_0^{{\mathfrak s}_0}$ otherwise. By the definition of budding, the part of $\q$ $\prec$-preceding $x_0$ can be mapped to $\I$ as well, possibly covering some parts of $\C$ not in $\pi$ but in a child-segment of some of the $\mathfrak s_i$. Therefore, by using {\bf (bud)} and {\bf (prune)} we can construct a cactus $\C_3\in\mathfrak K^+_\omq$
whose skeleton consists of the branch $\mathfrak s_0, \dots,\mathfrak s_n$ and
all other children of the segments $\mathfrak s_i$ for $i=0,\dots,n-1$, the $T$-copies that were labelled by $A$ and budded further in $\C$ in some of these children are unlabelled in $\C_3$, and all other $T$-copies are the same in $\C_3$ and $\C$.
Then $\C_3\subseteq\C$ and the branching number of $\C_3$ is at most $1$. As $\C\in\Kmin$, $\C=\C_3$ follows. 
\end{proof}

One can generalise the proof of Theorem~\ref{NL-cond} to various path $1$-CQs \emph{with} $FT$-twins. Here are some examples. 

\begin{example}\label{NL-examples}\em 
We invite the reader to show that answering the d-sirups with the following 1-CQs is \NL-complete:\\[2pt]
\centerline{
\begin{tikzpicture}[>=latex, rounded corners, scale = 0.5,line width=0.8pt]
\node[point,scale = 0.7,label=above:{\small $F$}] (1) at (0,0) {};
\node[point,scale = 0.7,label=above:{\small $FT$}] (2) at (2,0) {};
\node[point,scale = 0.7,label=above:{\small $FT$}] (3) at (4,0) {};
\node[point,scale = 0.7,label=above:{\small $T$}] (4) at (6,0) {};
\node[point,scale = 0.7,label=above:$ $] (5) at (8,0) {};
\node[point,scale = 0.7,label=above:{\small $T$}] (6) at (10,0) {};
\draw[->,right] (1) to node[below] { }  (2);
\draw[->,right] (2) to node[below] { } (3);
\draw[->,right] (3) to node[below] { } (4);;
\draw[->,right] (4) to node[below] { } (5);
\draw[->,right] (5) to node[below] { } (6);
\end{tikzpicture}\\
\quad 
\begin{tikzpicture}[>=latex, rounded corners, scale = 0.5,line width=0.8pt]
\node[point,scale = 0.7,label=above:{\small $F$}] (1) at (0,0) {};
\node[point,scale = 0.7,label=above:{\small $FT$}] (2) at (2,0) {};
\node[point,scale = 0.7,label=above:{\small $FT$}] (3) at (4,0) {};
\node[point,scale = 0.7,label=above:$ $] (4) at (6,0) {};
\node[point,scale = 0.7,label=above:{\small $T$}] (5) at (8,0) {};
\node[point,scale = 0.7,label=above:{\small $T$}] (6) at (10,0) {};
\node[point,scale = 0.7,label=above:{\small $T$}] (7) at (12,0) {};
\draw[->,right] (1) to node[below] { }  (2);
\draw[->,right] (2) to node[below] { } (3);
\draw[->,right] (3) to node[below] { } (4);;
\draw[->,right] (4) to node[below] { } (5);
\draw[->,right] (5) to node[below] { } (6);
\draw[->,right] (6) to node[below] { } (7);
\end{tikzpicture}
}\\
\centerline{
\begin{tikzpicture}[>=latex, rounded corners, scale = 0.5,line width=0.8pt]
\node[point,scale = 0.7,label=above:{\small $T$}] (1) at (0,0) {};
\node[point,scale = 0.7,label=above:{\small $FT$}] (2) at (2,0) {};
\node[point,scale = 0.7,label=above:{\small $F$}] (3) at (4,0) {};
\node[point,scale = 0.7,label=above:{\small $T$}] (4) at (6,0) {};
\draw[->,right] (1) to node[below] { }  (2);
\draw[->,right] (2) to node[below] { } (3);
\draw[->,right] (3) to node[below] { } (4);
\end{tikzpicture} 
\hspace*{1.5cm}
\begin{tikzpicture}[>=latex, rounded corners, scale = 0.5,line width=0.8pt]
\node[point,scale = 0.7,label=above:{\small $T$}] (05) at (-10,0) {};
\node[point,scale = 0.7,label=above:{\small $FT$}] (04) at (-8,0) {};
\node[point,scale = 0.7] (03) at (-6,0) {};
\node[point,scale = 0.7,label=above:{\small $FT$}] (02) at (-4,0) {};
\node[point,scale = 0.7,label=above:{\small $FT$}] (01) at (-2,0) {};
\node[point,scale = 0.7,label=above:{\small $F$}] (1) at (0,0) {};
\node[point,scale = 0.7,label=above:{\small $FT$}] (2) at (2,0) {};
\node[point,scale = 0.7] (3) at (4,0) {};
\node[point,scale = 0.7,label=above:{\small $T$}] (4) at (6,0) {};
\node[point,scale = 0.7,label=above:{\small $T$}] (5) at (8,0) {};
\draw[->,right] (05) to node[below] { }  (04);
\draw[->,right] (04) to node[below] { } (03);
\draw[->,right] (03) to node[below] { } (02);
\draw[->,right] (02) to node[below] { } (01);
\draw[->,right] (01) to node[below] { }  (1);
\draw[->,right] (1) to node[below] { }  (2);
\draw[->,right] (2) to node[below] { } (3);
\draw[->,right] (3) to node[below] { } (4);;
\draw[->,right] (4) to node[below] { } (5);
\end{tikzpicture}}
%
\end{example}

We next show that answering any d-sirup with twinless path 1-CQs not covered by Theorem~\ref{NL-cond} is \PTime-hard. 
	
\begin{theorem}\label{P-cond}
If $\q$ is an aperiodic twinless path $1$-CQ, then answering both $(\TT, \q)$ and $(\TT^\bot, \q)$ is \PTime-hard.  
\end{theorem}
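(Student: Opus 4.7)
The plan is to reduce the \PTime-complete monotone circuit value problem (MCVP) to answering $(\dis_A,\q)$: given a monotone boolean circuit $\cir$ with AND/OR-gates, fixed $0/1$ inputs and a designated output gate, we construct in log-space a twinless ABox $\A_\cir$ and verify that $\dis_A,\A_\cir\models\q$ iff $\cir$ evaluates to \emph{true}. Each wire of $\cir$ is represented by an $A$-individual of $\A_\cir$, so that in every minimal model the covering axiom forces a binary decision $F$ or $T$ on each wire which we interpret as the wire's boolean value; inputs of $\cir$ are fixed by attaching explicit $T$- or $F$-labels. Since $\A_\cir$ is twinless, the \PTime-hardness transfers to $(\dis_A^\bot,\q)$ via~\eqref{twinless}.

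The core of the reduction consists of two families of gadgets, an OR-gadget and an AND-gadget, each with two designated input $A$-individuals and one output $A$-individual, built by glueing copies of $\q$ at shared $A$-individuals. Their correctness statement is that, in every minimal model of $\dis_A$ together with the gadget ABox, the output individual takes value $T$ iff the inputs take values consistent with the gate function. The OR-gadget arises essentially ``for free'' from the covering axiom: a short chain can be completed to a homomorphic image of $\q$ precisely when at least one input is labelled $T$. The AND-gadget, by contrast, must \emph{rule out} single-input witnesses of $\q$ in the gadget, and this is exactly where aperiodicity is used: periodicity would allow a homomorphic copy of $\q$ to ``slide'' along the gadget and be completed from a single $T$-input, collapsing AND behaviour to OR — this is essentially the same phenomenon that forced the bounded branching number in the proof of Theorem~\ref{NL-cond}.

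Accordingly, I would split the argument by the shape of aperiodicity of $\q=\q_{lr}$:
\textbf{(a)} $l\ge 1$ and $r\ge 1$, so that $\q$ has $T$-nodes on both sides of its unique $F$-node $x_0$;
\textbf{(b)} $l=0$, $r\ge 2$ and $\q$ not right-periodic (the case $l\ge 2$, $r=0$ is symmetric).
Case \textbf{(a)} is the easiest: the $T$-nodes on the two sides of $x_0$ give two independent ``hooks'' that must both be matched, directly realising an AND-gadget. In case \textbf{(b)}, I would further distinguish whether the first violation of right-periodicity occurs at some interior $\seqr{\boldsymbol{r}_i}$ with $1\le i\le r$ differing from $\seqr{\boldsymbol{r}_1}$, or at the tail $\seqr{\boldsymbol{r}_{r+1}}$ failing to be a prefix of $\seqr{\boldsymbol{r}_1}^{\ast}\lambda$. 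In each subcase I would describe explicit OR- and AND-gadgets built from glueings of $\q$ that exploit the concrete mismatch of binary-predicate sequences: the mismatch forces a candidate homomorphism $\q\to\I$ to fail alignment whenever only one of the inputs is $T$, and succeed when both are.

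Finally, I would wire the gadgets into $\A_\cir$ by identifying the output individual of one gate with an input individual of the next and, at the circuit output, attaching a small trigger gadget so that $\dis_A,\A_\cir\models\q$ holds iff the output individual must be labelled $T$ in every minimal model. Correctness is then an induction on gate depth using the gadget specifications. The main obstacle is the AND-gadget: producing a uniform construction within each aperiodicity subcase and verifying that no unintended homomorphism $\q\to\I$ can use long-distance overlaps between gadgets requires a careful case analysis of all possible placements of the $F$-node $x_0$ of $\q$ inside the assembled ABox, ruling out the undesired placements exactly by invoking the aperiodicity of $\q$.
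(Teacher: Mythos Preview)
Your high-level plan matches the paper's proof: an FO-reduction from monotone circuit evaluation via AND- and OR-gadgets assembled from copies of $\q$, with the same top-level case split into (a) $l,r\ge 1$ and (b) $l=0$ (or symmetrically $r=0$) with periodicity failing; these are exactly the paper's cases (iii) and (i)/(ii).

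Where you go astray is the gadget specification. A gadget cannot force its output individual to take the correct gate value in every minimal model---the covering axiom lets each $A$-node go either way---so the correctness condition you state (``output takes value $T$ iff the inputs take values consistent with the gate function'') is unachievable, and no separate ``trigger gadget'' at the output is used. The paper instead hard-labels the output gate's node $c$ by $F$ in $\A_{\cir,\alpha}$ and designs each gate gadget with the \emph{one-sided} property: if $c\in F^\I$ and the input nodes $a,b$ are labelled so that the gate should output $T$, then there is a local $\q\to\I$ homomorphism inside the gadget. The $(\Leftarrow)$ direction (circuit true $\Rightarrow$ every model satisfies $\q$) is then an induction on the invariant ``this subcircuit evaluates to $T$ but its root node is in $F^\I$'', recursing into a mislabelled input when the gadget at the current root does not already yield a homomorphism; the $(\Rightarrow)$ direction exhibits the canonical model (each gate-node labelled by the gate's actual value under $\alpha$) and excludes every possible location of $h(x_0)$ by case analysis. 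Your sub-case split in (b) also differs from the paper's: rather than distinguishing interior versus tail mismatch, the paper sets $n=\min\{i>1:\seqr{\boldsymbol{r}_i}\ne\seqr{\boldsymbol{r}_1}\}$ (or $n=r$ if all agree), takes $\boldsymbol{s}=\boldsymbol{r}_n$, and splits on $|\boldsymbol{s}|>|\boldsymbol{r}_1|$ versus $|\boldsymbol{s}|\le|\boldsymbol{r}_1|$, with a different AND-gadget in each sub-case; the failure of right-periodicity is invoked explicitly only in the $|\boldsymbol{s}|\le|\boldsymbol{r}_1|$ sub-case, to block a sliding homomorphism that would otherwise cross into the next gadget.
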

\begin{proof}
The theorem is proved by an FO-reduction of the monotone circuit evaluation problem, which is known to be \PTime-complete~\cite{Papadimitriou94}. 	We remind the reader that a \emph{monotone Boolean circuit} is a directed acyclic graph $\cir$ whose vertices are called \emph{gates}. Gates with in-degree~$0$ are \emph{input gates}.  Each non-input gate $g$ is either an AND-gate or an OR-gate, and has in-degree~$2$ (with the two edges coming in from gates we call the \emph{inputs of} $g$). 
One of the non-input gates is distinguished as the \emph{output gate}. 	Given an assignment $\alpha$ of $F$ and $T$ to the input gates of $\cir$, we compute the value of each gate in $\cir$ under $\alpha$ as usual in Boolean logic. The \emph{output $\cir(\alpha)$ of $\cir$ on} $\alpha$ is the truth-value of the output gate.
For every monotone Boolean circuit $\cir$ and every assignment $\alpha$, we construct a 
twinless ABox $\A_{\cir,\alpha}$ whose size is polynomial in the sizes of $\q$ and $\cir$, and then show that $\cir(\alpha) = T$ iff $\TT,\A_{\cir,\alpha} \models \q$ iff  $\dis_A^\bot,\A_{\cir,\alpha} \models \q$ (cf.\ \eqref{twinless}).

We prove the theorem for aperiodic $1$-CQs with a single $F$-node 
(the other case is symmetric). Suppose $\q=\q_{lr}$ for some $l,r$ with $l+r\geq 1$. Then there can be three reasons for $\q$ being aperiodic: 
$(i)$ $l=0$ and $\q$ is not right-periodic, 
$(ii)$ $r=0$ and $\q$ is not left-periodic,
or $(iii)$ $l, r\ge 1$.
In each of the three cases  $(i)$--$(iii)$, we give a different reduction. 

$(i)$ If $\q=\q_{0r}$ and $\q$ is not right-periodic, then $r\geq 2$.
We let
\[
n=\left\{
\begin{array}{ll}
r, & \mbox{if $\seqr{\boldsymbol{r}_1}=\seqr{\boldsymbol{r}_2}=\dots =\seqr{\boldsymbol{r}_r}$;}\\ 
\min\,\{i\mid 1<i\le r$ and $\seqr{\boldsymbol{r}_i}\ne\seqr{\boldsymbol{r}_1}\}, & \mbox{otherwise.}
\end{array}
\right.
\]
Then $n\geq 2$.
The ABox 	$\A_{\cir,\alpha}$ is built up from isomorphic copies of the following intervals: 
$\boldsymbol{l} = (\be,x_0)$, 
$\boldsymbol{r}_1=(x_0,x_{1})$, 
$\boldsymbol{r}=(x_1,x_{n-1})$, 
$\boldsymbol{s} =(x_{n-1},x_n)$, and 
$\boldsymbol{t} = (x_n,\en)$.
Note that $\boldsymbol{s}$ is nonempty and has no $T$-nodes. On the other hand, $\boldsymbol{l}$ can be empty when $\be=x_0$,
$\boldsymbol{r}$ can be empty when $n=2$,
and $\boldsymbol{t}$ can be empty when $x_n=\en$.\\
\centerline{ 
			\begin{tikzpicture}[decoration={brace,mirror,amplitude=7},line width=0.8pt,xscale=1.6]
			\node[point,scale=0.7,label=below:{$\be$}] (0) at (-1,0) {};
			\node[point,scale=0.7,label=above:{$F$},label=below:{$x_0$}] (1) at (0,0) {};
			\node[point,scale=0.7,label=above:{$T$},label=below:{$x_1$}] (2) at (1,0) {};
			\node[point,scale=0.7,label=above:{$T$}] (3) at (2,0) {};
			\node (3a) at (2.5,0) {};
			\node (3b) at (2.8,0) {$\dots$};
			\node (3c) at (3,0) {};
			\node[point,scale=0.7,label=above:{$T$},label=below:{$x_{n}$}] (4) at (3.5,0) {};
			\node[point,scale=0.7,label=above:{$T$},label=below:{$x_{n-1}$}] (5) at (4.5,0) {};
			\node[point,scale=0.7,label=above:{$T$},label=below:{$x_n$}] (6) at (5.5,0) {};
			\node[point,scale=0.7,label=below:{$\en$}] (7) at (6.5,0) {};
			\draw[->,right] (0) to node[above] {$\boldsymbol{l}$} (1);
			\draw[->,right] (1) to node[above] {$\boldsymbol{r}_1$} (2);
			\draw[->,right] (2) to node[above] {$\boldsymbol{r}_2$} (3);
			\draw[-,right] (3) to (3a);
			\draw[->,right] (3c) to (4);
			\draw[->,right] (4) to node[above] {$\boldsymbol{r}_{n-1}$} (5);
			\draw[->,right] (5) to node[above] {$\boldsymbol{s}=\boldsymbol{r}_{n}$} (6);
			\draw[->,right] (6) to node[above] {$\boldsymbol{t}$} (7);
			\node (u1) at (.9,.7) {};
			\node (ue) at (4.6,.7) {};
			\draw[->,thin] (u1) to node[above] {$\boldsymbol{r}$} (ue);
			\end{tikzpicture}
		}

We use the gadgets in Fig.~\ref{f:gategadgets1} to simulate the AND- and OR-gates.		
For AND-gates, we distinguish between two cases $|\boldsymbol{s}| > |\boldsymbol{r}_1|$ 
		and $|\boldsymbol{s}| \le |\boldsymbol{r}_1|$, while the gadget for OR-gates is the same in both cases. 
Throughout, in our pictures of ABoxes, lower case letters like $a,b,z,\dots$ are just pointers, not actual labels of nodes. In Fig.~\ref{f:gategadgets1}, if $\boldsymbol{r}=\emptyset$ then $z=a'$ and $z'=b$ are labelled only by $A$. 
		%
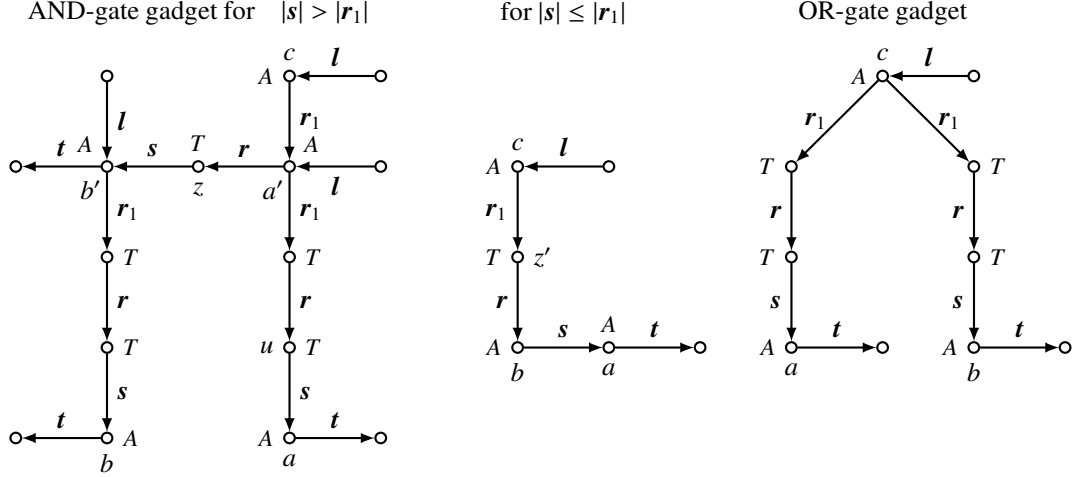
\begin{figure}[th]		
\centering
				\begin{tikzpicture}[decoration={brace,mirror,amplitude=7},line width=0.8pt,scale=1.2]
				\node[] (r0) at (-1,1.7) {AND-gate gadget for\quad $|\boldsymbol{s}| > |\boldsymbol{r}_1|$};
				\node[point,scale=0.7] (b1) at (1,1) {};
				\node[point,scale=0.7] (b2) at (1,0) {};
				\node[point,scale=0.7] (b3) at (-2,1) {};
				\node[point,scale=0.7,label=above:{$c$},label=left:{\small $A$}] (r1) at (0,1) {};
				\node[point,scale=0.7,label=above right:{\small $A$},label=below left:{$a'$\!\!}] (r2) at (0,0) {};
				\node[point,scale=0.7,label=right:{\small $T$}] (r3) at (0,-1) {};
				\node[point,scale=0.7,label=right:{\small $T$},label=left:{$u$}] (r4) at (0,-2) {};
				\node[point,scale=0.7,label=below:{$a$},label=left:{\small $A$}] (r5) at (0,-3) {};
				\node[point,scale=0.7,label=above:{\small $T$},label=below:{$z$}] (c) at (-1,0) {};
				\node[point,scale=0.7,label=above left:{\small $A$},label=below left:{$b'$\!\!}] (l2) at (-2,0) {};
				\node[point,scale=0.7,label=right:{\small $T$}] (l3) at (-2,-1) {};
				\node[point,scale=0.7,label=right:{\small $T$}] (l4) at (-2,-2) {};
				\node[point,scale=0.7,label=below:{$b$},label=right:{\small $A$}] (l5) at (-2,-3) {};
				\node[point,scale=0.7] (z1) at (-3,0) {};
				\node[point,scale=0.7] (z2) at (-3,-3) {};
				\node[point,scale=0.7] (z3) at (1,-3) {};
				\draw[->] (b1) to node[above] {$\boldsymbol{l}$} (r1);
				\draw[->] (b2) to node[below] {$\boldsymbol{l}$} (r2);
				\draw[->] (b3) to node[right] {$\boldsymbol{l}$} (l2);
				\draw[->,right] (r1) to node[right] {$\boldsymbol{r}_1$} (r2);
				\draw[->,right] (r2) to node[right] {$\boldsymbol{r}_1$} (r3);
				\draw[->,right] (r3) to node[right] {$\boldsymbol{r}$} (r4);
				\draw[->,right] (r4) to node[right] {$\boldsymbol{s}$} (r5);
				\draw[->,right] (r5) to node[above] {$\boldsymbol{t}$} (z3);
				\draw[->,right] (l2) to node[right] {$\boldsymbol{r}_1$} (l3);
				\draw[->,right] (l3) to node[right] {$\boldsymbol{r}$} (l4);
				\draw[->,right] (l4) to node[right] {$\boldsymbol{s}$} (l5);
				\draw[->,right] (l5) to node[above] {$\boldsymbol{t}$} (z2);
				\draw[->,right] (r2) to node[above] {$\boldsymbol{r}$} (c);
				\draw[->,right] (c) to node[above] {$\boldsymbol{s}$} (l2);
				\draw[->,right] (l2) to node[above] {$\boldsymbol{t}$} (z1);
				%
				\node[] (rr0) at (3,1.7) {for $|\boldsymbol{s}| \le |\boldsymbol{r}_1|$};
				\node[point,scale=0.7] (l) at (3.5,0) {};
				\node[point,scale=0.7,label=left:{\small $A$},label=above:{$c$}] (1) at (2.5,0) {};
				\node[point,scale=0.7,label=left:{\small $T$},label=right:{$z'$}] (2) at (2.5,-1) {};
				\node[point,scale=0.7,label=left:{\small $A$},label=below:{$b$}] (3) at (2.5,-2) {};
				\node[point,scale=0.7,label=above:{\small $A$}  ,label=below:{$a$}] (4) at (3.5,-2) {};
				\node[point,scale=0.7] (5) at (4.5,-2) {};
				\draw[->,right] (l) to node[above] {$\boldsymbol{l}$} (1);
				\draw[->,right] (1) to node[left] {$\boldsymbol{r}_1$} (2);
				\draw[->,right] (2) to node[left] {$\boldsymbol{r}$} (3);
				\draw[->,right] (3) to node[above] {$\boldsymbol{s}$} (4);
				\draw[->,right] (4) to node[above] {$\boldsymbol{t}$} (5);
				%
				\node[] (orr0) at (6.5,1.7) {OR-gate gadget};
				\node[point,scale=0.7] (ol) at (7.5,1) {};
				\node[point,scale=0.7,label=left:{\small $A$},label=above:{$c$}] (oc) at (6.5,1) {};
				\node[point,scale=0.7,label=left:{\small $T$}] (ol1) at (5.5,0) {};
				\node[point,scale=0.7,label=left:{\small $T$}] (ol2) at (5.5,-1) {};
				\node[point,scale=0.7,label=below:{$a$},label=left:{\small $A$}] (ol3) at (5.5,-2) {};
				\node[point,scale=0.7] (ol4) at (6.5,-2) {};
				\node[point,scale=0.7,label=right:{\small $T$}] (or1) at (7.5,0) {};
				\node[point,scale=0.7,label=right:{\small $T$}] (or2) at (7.5,-1) {};
				\node[point,scale=0.7,label=below:{$b$},label=left:{\small $A$}] (or3) at (7.5,-2) {};
				\node[point,scale=0.7] (or4) at (8.5,-2) {};
				\draw[->,right] (ol) to node[above] {$\boldsymbol{l}$} (oc);
				\draw[->,right] (oc)  to node[left] {$\boldsymbol{r}_1$} (ol1);
				\draw[->,right] (ol1) to node[left] {$\boldsymbol{r}$} (ol2);
				\draw[->,right] (ol2) to node[left] {$\boldsymbol{s}$} (ol3);
				\draw[->,right] (ol3) to node[above] {$\boldsymbol{t}$} (ol4);
				\draw[->,right] (oc)  to node[right] {$\boldsymbol{r}_1$} (or1);
				\draw[->,right] (or1) to node[left] {$\boldsymbol{r}$} (or2);
				\draw[->,right] (or2) to node[left] {$\boldsymbol{s}$} (or3);
				\draw[->,right] (or3) to node[above] {$\boldsymbol{t}$} (or4);
				\end{tikzpicture}
\caption{Gate gadgets in case $(i)$.}\label{f:gategadgets1}
\end{figure}

		Given a monotone circuit $\cir$ and an assignment $\alpha$, we construct $\A_{\cir,\alpha}$ as follows.
		With each non-input gate $g$ we associate a fresh copy of its gadget.
		When the inputs of $g$ are gates $g_a$ and $g_b$ then, for each $i=a,b$, if $g_i$ is a non-input gate,  then we merge node $c$ of the gadget for $g_i$ with node $i$ in the gadget for $g$;
		and if $g_i$ is an input gate, we replace the label $A$ of $i$ and $i'$ (if available) in the gadget for $g$ with $\alpha(g_i)$. Finally, we replace the label $A$ of node $c$ in the gadget for the output gate with $F$. 
		We claim that $\TT,\A_{\cir,\alpha} \models \q$ iff $\cir(\alpha) = T$. 
		
		$(\Leftarrow)$ is proved by induction on the number of non-input gates in $\cir$. The basis is obvious. For the induction step, suppose the output 
		gate $g$ in $\cir$ is an AND-gate with inputs $g_a$ and $g_b$, at least one of which is a non-input gate.  Let $\I$ be an arbitrary model of $\TT$ and $\A_{\cir,\alpha}$. If both $a$ and $b$ in
		the gadget for $g$ are in $T^\I$, then it is easy to check that we always have a $\q\to\I$ homomorphism,
		no matter what the labels of $a'$ and $b'$ (if available) are.
		It remains to consider the case when either $a$ or $b$ is in $F^\I$, and so the corresponding $g_i$ is not an input gate. Take the subcircuit $\cir^-$ of $\cir$ whose output gate is $g_i$. Then $\A_{\cir^-,\alpha}$
		is the sub-ABox of $\A_{\cir,\alpha}$ with node $c$ in the gadget for $g_i$ as its topmost node, and $A(c)$ replaced by $F(c)$. Now, if $\I^-$ is the restriction of $\I$ to $\A_{\cir^-,\alpha}$ (and so $c\in F^{\I^-}$), then by IH there is
		a $\q\to \I^-$ homomorphism, and so $\I \models \q$ as well. 
		%
		%
		The case when the output gate $g$ in $\cir$ is an OR-gate is similar.
		
$(\Rightarrow)$ Suppose $\cir(\alpha) = F$. To show $\TT,\A_{\cir,\alpha} \not\models \q$,  we define a model $\I$ of $\TT$ and $\A_{\cir,\alpha}$ inductively by labelling the $A$-nodes in the gadget for each non-input gate $g$ of $\cir$ by $F$ or $T$ as follows: 
node $c$ is labelled by the the truth-value of $g$ under $\alpha$, while node $i$ (and node $i'$ if applicable), for $i=a,b$, is labelled by the truth-value of $g_i$ under $\alpha$, where  $g_a$ and $g_b$ are the inputs of $g$. 		
%
Suppose, on the contrary, that there is a homomorphism $h \colon \q \to \I$. We exclude all options for the image $h(\q)$ of $\q$. To this end, we track possible locations for $h(x_0) \in F^\I$. 
Let the non-input gate $g$ be such that $h(x_0)$ is  in the gadget for $g$ and the inputs of $g$ are gates
$g_a$ and $g_b$. We may assume that $h(x_0)$ is different from nodes $a$ and $b$, because if $h(x_0)=i$ for $i\in\{a,b\}$
then $g_i$ must be a non-input gate (otherwise there is no room for $h(\q)$ in $\A_{\cir,\alpha}$), and so $h(x_0)=c$ in the gadget for $g_i$.

		%
		Suppose first that  $g$ is an AND-gate and $|\boldsymbol{s}| > |\boldsymbol{r}_1|$.
We have the following cases:
\begin{description}
\item[$a,a' \in T^\I,\, b,b',c \in F^\I${\rm :}] 
 If $h(x_0) = c$, then $h(x_1) = a'$ and, since $b' \in F^\I$, $h(\q)$ cannot continue `horizontally' towards $b'$.
But then, since $|\boldsymbol{s}| > |\boldsymbol{r}_1|$, the node $h(x_n)$ must be strictly between $u$ and $a$ 
		which is impossible because there are no $T$-nodes in $\boldsymbol{s}$. We cannot have $h(x_0) = b'$ because $b \in F^\I$ and there is no room for $h(\q)$ in $\boldsymbol{t}$.
\item[$a,a',c \in F^\I,\, b,b' \in T^\I${\rm :}] 
 If $h(x_0) = a'$ then, since $a \in F^\I$, $h(\q)$ cannot continue `vertically' towards $a$. Then $h(x_1)$ is the central $T$-node. But then, since $|\boldsymbol{s}| > |\boldsymbol{r}_1|$, the 
		node $h(x_{n-1})$ must be strictly between $b'$ and $z$,
		which is impossible because there are no $T$-nodes in $\boldsymbol{s}$. 
%
%
\item[$a,a',b,b',c \in F^\I${\rm :}] 
This case is covered by the previous ones.
\end{description}
Suppose next that $g$ is an AND-gate and $|\boldsymbol{s}| \le |\boldsymbol{r}_1|$.
Then $h(x_0) = c$ and $h(x_{n-1}) = b$, provided that $b \in T^\I$ (otherwise such $h$ is impossible), which means that $a \in F^\I$, and so $h(x_n)$ is located in some other gadget $g'$ whose node $c$ is merged with the current $b=h(x_{n-1})$. However, this is impossible because of the following.
		In every gadget, the `edges' leaving node $c$ are all labelled by $\boldsymbol{r}_1$.
As $x_{n-1}$ is `$\boldsymbol{s}$-connected' to $x_n$ and $h(x_{n-1})=c$ in the gadget for $g'$, 
if $|\boldsymbol{s}| < |\boldsymbol{r}_1|$ then $h(x_n)$ must be strictly between $c$
		and the end-node of an  $\boldsymbol{r}_1$-edge, but there are no $T$-nodes there.
So suppose $|\boldsymbol{s}| = |\boldsymbol{r}_1|$. Then $h(x_n)$ is the end-node of an  $\boldsymbol{r}_1$-edge in the gadget for $g'$, and so
 $\seqr{\boldsymbol{s}}=\seqr{\boldsymbol{r}_1}$. Now
it follows from the definition of $n$ and $\boldsymbol{s}$ that $n=r$ and
$\seqr{\boldsymbol{r}_1}=\dots=\seqr{\boldsymbol{r}_r}=\seqr{\boldsymbol{s}}$.
%
As $h(x_r)=h(x_n)$ is the end-node of an  $\boldsymbol{r}_1$-edge starting at $c$ in the gadget for $g'$, an inspection of the gate-gadgets shows that
$\boldsymbol{r}_{r+1}=(x_r,\en)=\boldsymbol{t}$ must be mapped to a non-empty sequence of $\boldsymbol{r}_1$-intervals followed by $\boldsymbol{t}$ (either in the gadget for $g'$, or in some subsequent gadgets).
So $\seqr{\boldsymbol{r}_{r+1}}$ must be a possibly empty sequence of $\seqr{\boldsymbol{r}_1}$s, possibly followed by a non-empty proper prefix of $\seqr{\boldsymbol{r}_1}$,
contrary to  $\q$ being not right-periodic. 

				
Finally, if $g$ is an OR-gate and $h(x_0) = c$ in the gadget for $g$, then both $a$ and $b$ of the gadget are in $F^I$, and so $h(x_n)\in F^\I$, which is a contradiction. 

The proof of $(ii)$ is a mirror image of the previous one.


$(iii)$ If $\q=\q_{lr}$ and $l, r\ge 1$, then $\A_{\cir,\alpha}$ is built up from isomorphic copies of the following intervals: 
$\boldsymbol{l} =  (\be,x_{-1})$, $\boldsymbol{r} = (x_{-1},x_0)$, $\boldsymbol{s} = (x_0,x_r)$, and $\boldsymbol{t} = (x_r,\en)$. Note that $\boldsymbol{r}$ is not empty and has no $T$-nodes, while $\boldsymbol{l}$ and $\boldsymbol{t}$ may be empty.\\
\centerline{ 
\begin{tikzpicture}[decoration={brace,mirror,amplitude=7},line width=0.8pt,xscale=1.6]
			\node[point,scale=0.7,label=below:{$\be$}] (0) at (-1,0) {};
			\node[point,scale=0.7,label=above:{\small $T$},label=below:{$x_{-1}$}] (1) at (0,0) {};
			\node[point,scale=0.7,label=above:{\small $F$},label=below:{$x_0$}] (2) at (1,0) {};
			\node[point,scale=0.7,label=above:{\small $T$},label=below:{$x_r$}] (4) at (2,0) {};
			\node[point,scale=0.7,label=below:{$\en$}] (5) at (3,0) {};
			\draw[->,right] (0) to node[above] {$\boldsymbol{l}$} (1);
			\draw[->,right] (1) to node[above] {$\boldsymbol{r}$} (2);
			\draw[->,right] (2) to node[above] {$\boldsymbol{s}$} (4);
			\draw[->,right] (4) to node[above] {$\boldsymbol{t}$} (5);
\end{tikzpicture}
}\\
We use the gadgets in Fig.~\ref{f:gategadgets2} to simulate the AND- and OR-gates. The number of $A$-nodes in the gadget for a non-output AND-gate exceeds $|\q|+2$.
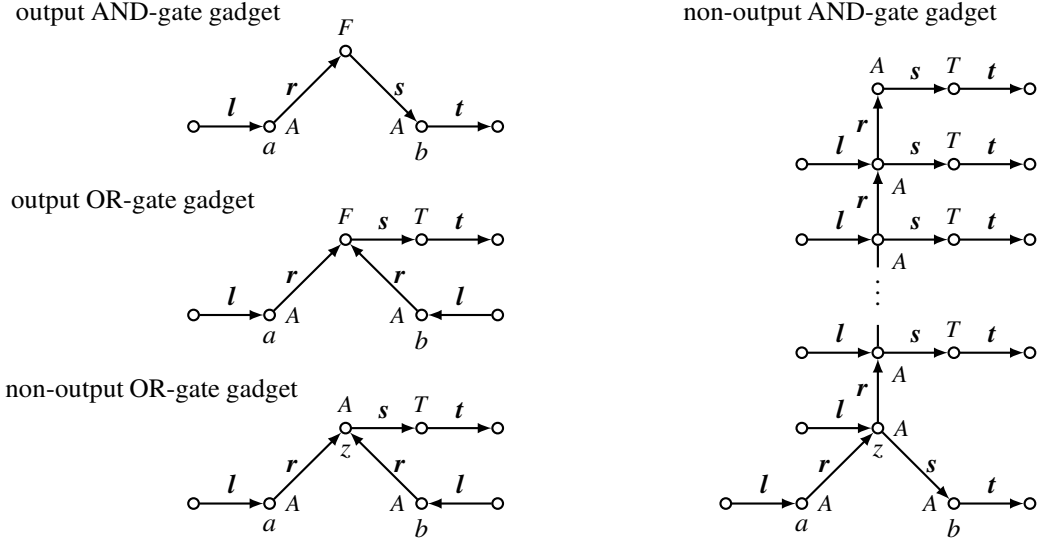
\begin{figure}[ht]
\centering
\begin{tikzpicture}[decoration={brace,mirror,amplitude=7},line width=0.8pt,scale=1]
			\node[] (oand) at (-7.6,7) {output AND-gate gadget};
			\node[point,scale=0.7] (1) at (-7,5.5) {};
			\node[point,scale=0.7,label=below:{$a$},label=right:{\small $A$}] (2) at (-6,5.5) {};
			\node[point,scale=0.7,label=below:{$b$},label=left:{\small $A$}] (3) at (-4,5.5) {};
			\node[point,scale=0.7] (4) at (-3,5.5) {};
			\node[point,scale=0.7,label=above:{\small $F$}] (5) at (-5,6.5) {};
			\draw[->] (1) to node[above] {$\boldsymbol{l}$} (2);
			\draw[->] (2) to node[left] {$\boldsymbol{r}$} (5);
			\draw[->] (5) to node[right] {$\boldsymbol{s}$} (3);
			\draw[->] (3) to node[above] {$\boldsymbol{t}$} (4);
			\node[] (oor) at (-7.8,4.5) {output OR-gate gadget};
			\node[point,scale=0.7] (1) at (-7,3) {};
			\node[point,scale=0.7,label=below:{$a$},label=right:{\small $A$}] (2) at (-6,3) {};
			\node[point,scale=0.7,label=below:{$b$},label=left:{\small $A$}] (3) at (-4,3) {};
			\node[point,scale=0.7] (4) at (-3,3) {};
			\node[point,scale=0.7,label=above:{\small $F$}] (5) at (-5,4) {};
			\node[point,scale=0.7,label=above:{\small $T$}] (6) at (-4,4) {};
			\node[point,scale=0.7] (7) at (-3,4) {};
			\draw[->] (1) to node[above] {$\boldsymbol{l}$} (2);
			\draw[->] (2) to node[left] {$\boldsymbol{r}$} (5);
			\draw[->] (3) to node[right] {$\boldsymbol{r}$} (5);
			\draw[->] (4) to node[above] {$\boldsymbol{l}$} (3);
			\draw[->] (5) to node[above] {$\boldsymbol{s}$} (6);
			\draw[->] (6) to node[above] {$\boldsymbol{t}$} (7);
			\node[left] (noor) at (-5.5,2) {non-output OR-gate gadget};
			\node[point,scale=0.7] (1) at (-7,.5) {};
			\node[point,scale=0.7,label=below:{$a$},label=right:{\small $A$}] (2) at (-6,.5) {};
			\node[point,scale=0.7,label=below:{$b$},label=left:{\small $A$}] (3) at (-4,.5) {};
			\node[point,scale=0.7] (4) at (-3,.5) {};
			\node[point,scale=0.7,label=below:{$z$},label=above:{\small $A$}] (5) at (-5,1.5) {};
			\node[point,scale=0.7,label=above:{\small $T$}] (6) at (-4,1.5) {};
			\node[point,scale=0.7] (7) at (-3,1.5) {};
			\draw[->] (1) to node[above] {$\boldsymbol{l}$} (2);
			\draw[->] (2) to node[left] {$\boldsymbol{r}$} (5);
			\draw[->] (3) to node[right] {$\boldsymbol{r}$} (5);
			\draw[->] (4) to node[above] {$\boldsymbol{l}$} (3);
			\draw[->] (5) to node[above] {$\boldsymbol{s}$} (6);
			\draw[->] (6) to node[above] {$\boldsymbol{t}$} (7);
			\node[] (orr0) at (1.5,7) {non-output AND-gate gadget};
			\node[point,scale=0.7] (1) at (0,.5) {};
			\node[point,scale=0.7,label=below:{$a$},label=right:{\small $A$}] (2) at (1,.5) {};
			\node[point,scale=0.7,label=below:{$b$},label=left:{\small $A$}] (3) at (3,.5) {};
			\node[point,scale=0.7] (4) at (4,.5) {};
			\node[point,scale=0.7] (5) at (1,1.5) {};
			\node[point,scale=0.7,label=below:{$z$},label=right:{\small $A$}] (6) at (2,1.5) {};
			\node[point,scale=0.7] (7) at (1,2.5) {};
			\node[point,scale=0.7,label=below right:{\small $A$}] (8) at (2,2.5) {};
			\node[point,scale=0.7,label=above:{\small $T$}] (9) at (3,2.5) {};
			\node[point,scale=0.7] (10) at (4,2.5) {};
			\node (11) at (2,3) {};
			\node (12) at (2,3.5) {};
			\node[point,scale=0.7] (13) at (1,4) {};
			\node[point,scale=0.7,label=below right:{\small $A$}] (14) at (2,4) {};
			\node[point,scale=0.7,label=above:{\small $T$}] (15) at (3,4) {};
			\node[point,scale=0.7] (16) at (4,4) {};
			\node[point,scale=0.7] (17) at (1,5) {};
			\node[point,scale=0.7,label=below right:{\small $A$}] (18) at (2,5) {};
			\node[point,scale=0.7,label=above:{\small $T$}] (19) at (3,5) {};
			\node[point,scale=0.7] (20) at (4,5) {};
			\node[point,scale=0.7,label=above:{\small $A$}] (21) at (2,6) {};
			\node[point,scale=0.7,label=above:{\small $T$}] (22) at (3,6) {};
			\node[point,scale=0.7] (23) at (4,6) {};
			\draw[->] (1) to node[above] {$\boldsymbol{l}$} (2);
			\draw[->] (2) to node[left] {$\boldsymbol{r}$} (6);
			\draw[->] (6) to node[right] {$\boldsymbol{s}$} (3);
			\draw[->] (3) to node[above] {$\boldsymbol{t}$} (4);
			\draw[->] (5) to node[above] {$\boldsymbol{l}$} (6);
			\draw[->] (6) to node[left] {$\boldsymbol{r}$} (8);
			\draw[->] (7) to node[above] {$\boldsymbol{l}$} (8);
			\draw[->] (8) to node[above] {$\boldsymbol{s}$} (9);
			\draw[->] (9) to node[above] {$\boldsymbol{t}$} (10);
			\draw[-] (8) to (11);
			\draw[-] (12) to (14);
			\draw[->] (13) to node[above] {$\boldsymbol{l}$} (14);
			\draw[->] (14) to node[above] {$\boldsymbol{s}$} (15);
			\draw[->] (15) to node[above] {$\boldsymbol{t}$} (16);
			\draw[->] (14) to node[left] {$\boldsymbol{r}$} (18);
			\draw[->] (17) to node[above] {$\boldsymbol{l}$} (18);
			\draw[->] (18) to node[above] {$\boldsymbol{s}$} (19);
			\draw[->] (19) to node[above] {$\boldsymbol{t}$} (20);
			\draw[->] (18) to node[left] {$\boldsymbol{r}$} (21);
			\draw[->] (21) to node[above] {$\boldsymbol{s}$} (22);
			\draw[->] (22) to node[above] {$\boldsymbol{t}$} (23);
			\node at (2,3.4) {$\vdots$};
\end{tikzpicture}
\caption{Gate gadgets in case $(iii)$.}\label{f:gategadgets2}
\end{figure}

Given a monotone circuit $\cir$ and an assignment $\alpha$, we construct $\A_{\cir,\alpha}$ as follows.
		With each non-input gate $g$ we associate a fresh copy of its gadget.
		When the inputs of $g$ are gates $g_a$ and $g_b$ then, for each $i=a,b$, if $g_i$ is a non-input gate,  then we merge the topmost $A$-node of the gadget for $g_i$ with node $i$ in the gadget for $g$;
		and if $g_i$ is an input gate, we replace the label $A$ of $i$ in the gadget for $g$ with $\alpha(g_i)$. We claim that $\TT,\A_{\cir,\alpha} \models \q$ iff $\cir(\alpha) = T$.

		$(\Leftarrow)$ is proved by induction on the number of non-input gates in $\cir$. The basis (when $\cir$ has one non-input gate) is obvious. For the induction step, suppose the output 
		gate $g$ in $\cir$ is an OR-gate with inputs $g_a$ and $g_b$, at least one of which is a non-input gate.  Let $\I$ be an arbitrary model of $\TT$ and $\A_{\cir,\alpha}$. 
		If at least one of $a$ or $b$ in the gadget for $g$ is in $T^\I$, then clearly $\I\models \q$. 
		It remains to consider the case when $a$ and $b$ are both in $F^\I$. Let $i$ be such that $g_i$ is a non-input gate. There are two cases. $(a)$ 
		If node $z$ in the gadget for $g_i$ is in $F^\I$, 
		consider the subcircuit $\cir^-$ of $\cir$ whose output gate is $g_i$. Then $\A_{\cir^-,\alpha}$
		is the sub-ABox of $\A_{\cir,\alpha}$ with $z$ as its topmost node, and $A(z)$ replaced by $F(z)$.
		Now, if $\I^-$ is the restriction of $\I$ to $\A_{\cir^-,\alpha}$ (and so $z\in F^{\I^-}$) then, by IH, there is
		a $\q\to \I^-$ homomorphism, and so $\I \models \q$ as well.
		$(b)$ If $z\in T^\I$ then $g_i$ is an AND-gate and, as the topmost $A$-node in the gadget for $g_i$ is in $F^\I$, there is an $A$-node in the gadget for $g_i$
		that is in $T^\I$ while the next $A$-node above it is in $F^I$. So we have a $\q\to \I$ homomorphism.
The case when the output gate of $\cir$ is an AND-gate is similar. 
		
		
$(\Rightarrow)$ Suppose $\cir(\alpha) = F$. To show $\TT,\A_{\cir,\alpha} \not\models \q$, we define a model $\I$ of $\TT$ and $\A_{\cir,\alpha}$ by putting the $A$-nodes of the gadget for any gate $g$ in $\cir$ to $F^\I$ (or $T^\I$) if the truth-value of $g$ under $\alpha$ is $F$ (or, respectively, $T$). Suppose, on the contrary, that there is a homomorphism $h \colon \q \to \I$. We track the possible locations of $h(x_0) \in F^\I$:
\begin{itemize}
\item[--] If the output gate is an AND-gate, then $h(x_0)$ cannot be the $F$-node of its gadget because then
			$h(x_{-1})=a$ and $h(x_r)=b$, and so at least one of them would be in $F^I$, which is a contradiction.
			
\item[--] If the output gate is an OR-gate, then $h(x_0)$ cannot be the $F$-node of its gadget because then
			either $h(x_{-1})=a$  or $h(x_{-1})=b$, and so $h(x_{-1})$ would be in $F^I$, a contradiction. 
			
\item[--]
			So suppose $h(x_0)$ is an $A$-node in a gadget for a non-input and non-output gate $g$.
			If $g$ is an OR-gate, then either $h(x_{-1})=a$ or $h(x_{-1})=b$ in the gadget for $g$, and so $h(x_{-1})$ would be in $F^I$, a contradiction.
			So suppose $g$ is an AND-gate, and consider the gadget for $g$.
			Then $h(x_0)$ cannot be any $A$-node located above $z$, because otherwise $h(x_{-1})$ would be the previous $A$-node, and so in $F^\I$, a contradiction.
			Finally, if $h(x_0)=z$  then,
			as the vertical line comprised of the $\boldsymbol{r}$ is longer than $\q$ and contains no $T$-nodes, $h(x_1)\in T^\I$ must also be in the gadget for $g$, and it must be in one of the horizontal $\boldsymbol{s}$. But this is impossible because $\boldsymbol{r}$ is non-empty, and so the distance between $z=h(x_0)$ and $h(x_1)$ in the gadget would be greater than the distance between $x_0$ and $x_1$ in $\q$. 
\end{itemize}
Thus, we cannot have a homomorphism $h \colon \q \to \I$.
\end{proof}

The proof above bears some superficial similarities to the construction of Afrati and Papadimitriou~\cite{DBLP:journals/jacm/AfratiP93} in their classification of binary chain sirups. One could also draw some parallels with the proof of P-hardness for OMQs with an $\mathcal{EL}$ ontology given by Lutz and Sabellek~\cite{DBLP:conf/ijcai/LutzS17,DBLP:journals/corr/abs-1904-12533}, who used a reduction of path systems accessibility (PSA) rather than monotone circuit evaluation.

The most difficult part of our tetrachotomy is proving \coNP-hardness of dd-sirups with 
path 2-CQs. Despite the abundance of results on algorithmic aspects of graph homomorphisms~\cite{DBLP:journals/csr/HellN08}, we failed to find any known technique applicable to our case. 
%
%
In the remainder of the article, we develop a new method for establishing \coNP-hardness of disjunctive OMQs.
		

\section{Proving \coNP-hardness: the bike technique}\label{monster}

\begin{theorem}\label{t:coNPhard}
If $\q$ is a twinless path $2$-CQ, then answering both $(\TT,\q)$ and $(\dis^\bot_A,\q)$ is \coNP-hard. 
\end{theorem}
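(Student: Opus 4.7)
The plan is to reduce from an \NP-complete satisfiability problem (some flavour of 3SAT) to the complement of OMQ answering. Given an instance $\varphi$ with variables $\avec{x}$, I will build a \emph{twinless} ABox $\A_\varphi$ of polynomial size such that $\varphi$ is satisfiable iff $\dis_A^\bot, \A_\varphi \not\models \q$; by~\eqref{twinless} and the twinlessness of $\A_\varphi$, the same reduction then yields \coNP-hardness of $(\dis_A,\q)$ as well. Since $\q$ is a path $2$-CQ, I may fix two $F$-nodes $\ffirst \prec \fsec$ and two $T$-nodes $\tfirst \prec \tsec$ in $\q$; the whole construction depends crucially on how these four (and any further) labelled nodes are interleaved along $\q$ and on the binary-predicate words between them.

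The core ingredient is a \bike{} gadget, one per clause. Each bike is assembled from several copies of $\q$ soldered together at labelled positions into two \wheels, connected by a \giliszta. A \wheel{} consists of a \bodyW{} and three \legsW; each \legW{} is designed so that any $\q$-homomorphism entering the bike through it is pinned to a specific contact $A$-node, which is then identified with a variable node $a_x$ in $\A_\varphi$. The interconnecting chain is tuned so that it admits a free copy of the intervals $(\ffirst,\fsec)$ and $(\tfirst,\tsec)$, but \emph{not} a whole free $\q$; thus any $\q \to \A_\varphi$ homomorphism that touches a bike must traverse \emph{both} of its cogwheels simultaneously, thereby selecting a consistent triple of contact $A$-nodes corresponding to the three literals of the clause. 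Variables occurring in several clauses are realised by a shared $a_x$, so a model's $F/T$-labelling of the $A$-nodes determines a global truth assignment; bikes are then concatenated via inert twinless \glcs{}\!, chosen long enough to prevent any cross-bike or out-of-bike homomorphism.

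With the gadgets in place, the two directions of the reduction should proceed as follows. $(\Leftarrow)$ If $\alpha$ satisfies $\varphi$, label each $a_x$ by $T$ when $\alpha(x)$ is true and by $F$ otherwise; a case analysis on the possible landing positions of the $F$- and $T$-images of a putative homomorphism $\q \to \A_\varphi$, combined with the chain/spacer length estimates, rules out every placement, using in each bike the satisfied literal to block the corresponding cog. $(\Rightarrow)$ Conversely, if a labelling of the $a_x$ falsifies some clause $c$, the bike for $c$ admits a canonical $\q$-homomorphism by design, so $\I \models \q$ for every model $\I$, and hence $\dis_A^\bot, \A_\varphi \models \q$.

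The main obstacle, and the reason a novel technique is needed, is that the bike design is highly sensitive to the internal structure of $\q$: the relative $\prec$-order of $\ffirst,\fsec,\tfirst,\tsec$ (several interleaving patterns up to the obvious left-right symmetry), their $|\cdot|$-distances, the labelled or unlabelled tails before $\be$ and after $\en$, and the exact words $\seqr{\boldsymbol{r}_i}$ between consecutive labelled nodes all affect which legs to use, how the bodies and chain must be glued, and how long the \glcs{} between bikes must be. I expect the proof to split into a finite number of cases according to the interleaving pattern of $F$/$T$-labels in $\q$ and, in each case, to exhibit an explicit bike together with the verification above; handling this case split while keeping $\A_\varphi$ twinless and blocking all spurious homomorphisms is the crux of the bike technique.
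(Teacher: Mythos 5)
Your high-level plan matches the paper's: reduce from 3SAT, build $\A_\psi$ from glued copies of $\q$, use ``cogwheel'' and ``bike'' gadgets, and resolve a finite case split driven by the interleaving of $F$- and $T$-nodes in $\q$. However, the concrete gadget architecture diverges from the paper's in a way that I believe is not workable as stated.

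The paper builds \emph{one bike per variable}: a bike is two $n$-wheels joined by two extra copies of $\q$, and Lemma~\ref{l:bike} guarantees that, in any minimal model, those two wheels hold \emph{opposite} constant truth values, thereby encoding $p$ vs.\ $\neg p$. Each clause $c$ is then handled by a \emph{separate} copy $\qq{c}$ of $\q$ glued at three carefully chosen contacts lying in the three wheels corresponding to $c$'s literals, with the $c$- and $c'$-neighbourhoods kept pairwise disjoint (this is why $n \geq (n_\psi+2)(2|\q|+1)$). The crucial mechanism is that a wheel \emph{amplifies} a single labelling choice into a large cyclic structure whose every contact carries the same truth value (Lemma~\ref{l:wheel}); this amplification is precisely what lets the clause-gadget $\qq{c}$ ``read'' the truth value robustly while excluding parasite homomorphisms.

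Your proposal instead builds \emph{one bike per clause} and shares a single variable node $a_x$ across all clauses containing $x$. This loses the amplification mechanism: a lone $A$-node $a_x$ shared by the bikes of clauses $c_1, c_2$ creates a junction through which a homomorphic image of $\q$ can straddle both bikes at once, and you have given no argument for why such cross-bike images are excluded --- you say the ``\glcs'' are ``long enough to prevent any cross-bike ... homomorphism'', but the shared $a_x$ is the very point where the bikes meet, so no amount of padding elsewhere rules out an image passing through $a_x$. To make the shared-$a_x$ idea work, you would effectively need to reinvent the wheel: some cyclic structure that all the occurrences of $x$ attach to, whose minimal models are globally constant. That is what the paper's per-variable bike already provides. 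Beyond this structural gap, the case analysis that you correctly identify as ``the crux'' is stated only as an expectation, not carried out; the paper's Lemmas~\ref{l:bike} and~\ref{l:3cnfcomp} show this is where nearly all the work lies (tracking $h(\ffirst)$ through eight bike cases and six clause cases, with contact choices that subtly depend on $\delta(\tlbo,\tlast)$, $\delta(\ffirst,\fsec)$, etc.).
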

	
We prove Theorem~\ref{t:coNPhard} by a polynomial reduction of the complement of \NP-complete 3SAT~\cite{Papadimitriou94}. 
Recall that a \emph{3CNF} is a conjunction of 
\emph{clauses} of the form $\lit_1 \lor \lit_2 \lor \lit_3$, where each $\lit_i$ is a \emph{literal}
	(a propositional variable or a negation thereof).
The decision problem 3SAT asks whether a given 3CNF $\psi$ is satisfiable. 
For any 3CNF $\psi$, we construct
	a twinless  	
	 ABox $\Apsi$ whose size is polynomial in the sizes of $\q$ and $\psi$,
	and show that $\psi$ is satisfiable iff $\TT,\Apsi\not\models\q$  iff $\dis_A^\bot,\Apsi\not\models\q$
	(cf.\ \eqref{twinless}). 
The construction, called the \emph{bike technique}, builds $\Apsi$ from many copies of $\q$ via three major steps:
\begin{enumerate}
\item
First, we represent the truth-values of the \emph{literals} in $\psi$ by gadgets called \wheels. 

\item
Next, we connect \wheels{} to represent \emph{negation} properly by gadgets called \bikes.

\item
Finally, we connect \bikes{} to represent the interaction of the \emph{clauses} in $\psi$ and obtain $\Apsi$.
\end{enumerate}
These steps will be defined and investigated in detail in Sections~\ref{s:wheels}--\ref{s:clauses}. But before that we explain the underlying ideas and illustrate them by an example.
Each \wheel{} $\mathcal{W}$ in Step~1 has many $A$-nodes (the number depends on $|\q|$ and the number of clauses in $\psi$) where the different copies of $\q$ meet. Each $\mathcal{W}$ is such that, for every model $\I$ of $\dis_A$ and $\mathcal{W}$, we have 
$\I \not\models \q$ iff the $A$-nodes in $\mathcal{W}$ are all in $T^\I$ or are all in $F^\I$. If a variable $p$ occurs in $\psi$, then in Step~2 a \bike{} $\mathcal{B}$, representing the pair
$\{p,\neg p\}$ of literals, is assembled from two disjoint \wheels{} by connecting them via $A$-nodes using two further copies of $\q$. We have pairwise disjoint \bikes{} for all variables occurring in $\psi$. Each \bike{} $\mathcal{B}$ is constructed 
%
in such a way that, for every model $\I$ of $\dis_A$ and $\mathcal{B}$, we have 
$\I \not\models \q$ iff the truth-values in $\I$ represented by the two \wheels{} of $\mathcal{B}$ are opposites of each other. 
Finally, in Step~3, for each clause $c=\lit_1 \lor \lit_2 \lor \lit_3$ in $\psi$, we use a further copy $\q^c$ of $\q$ to connect,
via three $A$-nodes, three \wheels{} from the \bikes{} representing $\{\lit_1,\neg\lit_1\}$, $\{\lit_2,\neg\lit_2\}$ and 
$\{\lit_3,\neg\lit_3\}$ in such a way that for every model $\I$ of $\dis_A$ and the resulting ABox $\Apsi$, we have $\I \not\models \q$ iff the labels of the three `$c$-connection' $A$-nodes in $\I$ define an assignment satisfying $c$.
If in Step~1 we choose the number of $A$-nodes in the \wheels{} to be large enough, then in each \wheel{} we can use different `$c$-connection' $A$-nodes for different clauses, and they can also be different from those $A$-nodes that are used for constructing the \bikes{} from the \wheels.

\begin{texample}\label{ex:TTFF}\em
Consider the d-sirup $(\TT,\q)$ with the 2-CQ $\q$ shown in the picture below (with $R$ on edges omitted).\\ 
\centerline{
\begin{tikzpicture}[>=latex,line width=1pt,rounded corners]
\node[point,scale = 0.7,label=above:{\small $T$}] (1) at (0,0) {};
\node[point,scale = 0.7,label=above:{\small $T$}] (m) at (1.5,0) {};
\node[point,scale = 0.7,label=above:{\small $F$}] (2) at (3,0) {};
\node[point,scale = 0.7,label=above:{\small $F$}] (3) at (4.5,0) {};
\draw[->,right] (1) to node[below] {}  (m);
\draw[->,right] (m) to node[below] {} (2);
\draw[->,right] (2) to node[below] {} (3);
\end{tikzpicture}}\\
Let $\psi = c_1 \land c_2 \land c_3$, where 
$c_1=\neg p\lor q\lor \neg r$,
$c_2=p\lor q\lor\neg r$, and
$c_3=p\lor\neg q\lor r$.
Fig.~\ref{f:gadgetex} shows the steps of the construction of $\Apsi$. 
In Step~1, we construct six \wheels{}, each from four copies of $\q$, representing one of
$p,\neg p, q,\neg q, r,\neg r$. Then in Step~2 we construct three \bikes, representing the pairs $\{p,\neg p\}$,
$\{q,\neg q\}$ and $\{r,\neg r\}$. Finally, we connect the \bikes{} in Step~3 to obtain $\Apsi$.
Given an assignment $\mathfrak a\colon\{p,q,r\}\to\{T,F\}$, we define a model $\Ia$ of $\TT$ and $\Apsi$  as follows: 
for $v\in\{p,q,r\}$, if $\mathfrak a(v)=T$ then the $A$-nodes in the $v$-\wheel{} are in $T^{\Ia}$ and in the $\neg v$-\wheel{} are in $F^{\Ia}$; and 
 if $\mathfrak a(v)=F$ then the $A$-nodes in the $v$-\wheel{} are in $F^{\Ia}$ and in the $\neg v$-\wheel{} are in $T^{\Ia}$. 
It is tedious but not hard to
check that $\Ia\not\models \q$ iff $\mathfrak a$ satisfies $\psi$. 
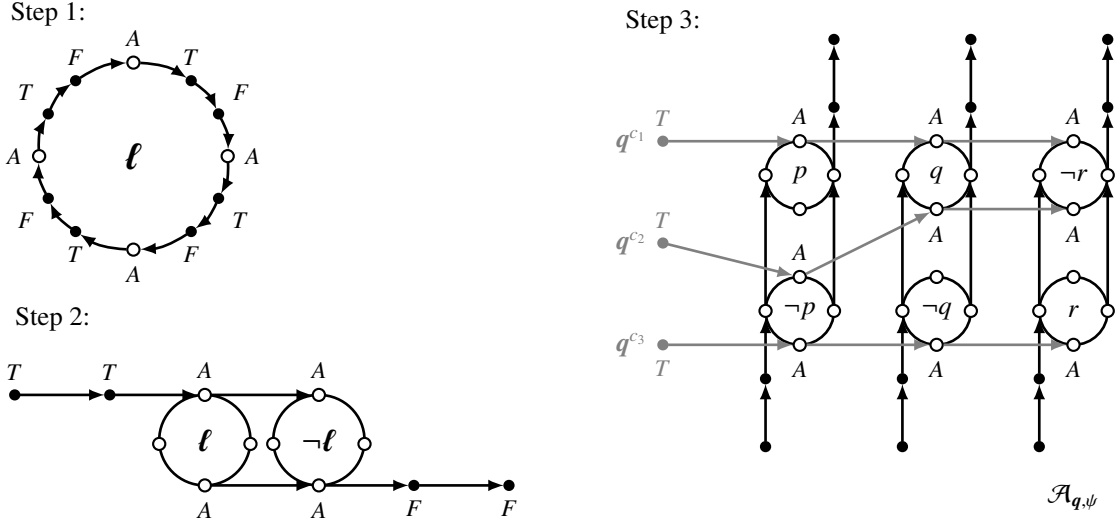
\begin{figure}[ht]
\hspace*{1cm}
\begin{tikzpicture}[>=latex,line width=1pt,rounded corners,scale=.4]
\node[point,scale = 0.8,label = above:{\small $A$}] (1) at (0,3.1) {};
\node[point,scale = 0.6,fill=black,label = above:{\small $T$}] (2) at (1.9,2.5) {};
\node[point,scale = 0.6,fill=black,label = above right:{\small $F$}] (3) at (2.8,1.4) {};
\node[point,scale = 0.8,label = right:{\small $A$}] (4) at (3.1,0) {};
\node[point,scale = 0.6,fill=black,label = below right:{\small $T$}] (5) at (2.8,-1.4) {};
\node[point,scale = 0.6,fill=black,label = below:{\small $F$}] (6) at (1.9,-2.5) {};
\node[point,scale = 0.8,label = below:{\small $A$}] (7) at (0,-3.1) {};
\node[point,scale = 0.6,fill=black,label = below:{\small $T$}] (8) at (-1.9,-2.5) {};
\node[point,scale = 0.6,fill=black,label = below left:{\small $F$}] (9) at (-2.8,-1.4) {};
\node[point,scale = 0.8,label = left:{\small $A$}] (10) at (-3.1,0) {};
\node[point,scale = 0.6,fill=black,label = above left:{\small $T$}] (11) at (-2.8,1.4) {};
\node[point,scale = 0.6,fill=black,label = above:{\small $F$}] (12) at (-1.9,2.5)  {};

\draw[->,right,bend left = 15] (1) to node[below] {} (2);
\draw[->,right,bend left = 15] (2) to node[below left] {} (3);
\draw[->,right,bend left = 15] (3) to node[below left] {} (4);
\draw[->,right,bend left = 15] (4) to node[above left] {} (5);
\draw[->,right,bend left = 15] (5) to node[above left] {} (6);
\draw[->,right,bend left = 15] (6) to node[above] {} (7);
\draw[->,right,bend left = 15] (7) to node[below] {} (8);
\draw[->,right,bend left = 15] (8) to node[below] {} (9);
\draw[->,right,bend left = 15] (9) to node[below] {} (10);
\draw[->,right,bend left = 15] (10) to node[below right] {} (11);
\draw[->,right,bend left = 15] (11) to node[below right] {} (12);
\draw[->,right,bend left = 15] (12) to node[below ] {} (1);

\node[] (0) at (0,0) {\Large $\lit$};
\node[] (s) at (-2.8,4.7) {Step~1:};
\end{tikzpicture}\\[2pt]
\hspace*{1cm}
\begin{tikzpicture}[>=latex,line width=1pt,rounded corners,scale=.5]
\draw (8,-1.2) circle [radius=1.2];
\draw (11,-1.2) circle [radius=1.2];

\node[point,scale = 0.6,fill=black,label = above:{\small $T$}] (2) at (3,0) {};
\node[point,scale = 0.6,fill=black,label = above:{\small $T$}] (3) at (5.5,0) {};
\node[point,scale = 0.8,fill=white,label = above:{\small $A$}] (4) at (8,0) {};
\node[point,scale = 0.8,fill=white,label = above:{\small $A$}] (5) at (11,0) {};

\node[point,scale = 0.8,fill=white,label = below:{\small $A$}] (22) at (8,-2.4) {};
\node[point,scale = 0.8,fill=white,label = below:{\small $A$}] (23) at (11,-2.4) {};
\node[point,scale = 0.6,fill=black,label = below:{\small $F$}] (24) at (13.5,-2.4) {};
\node[point,scale = 0.6,fill=black,label = below:{\small $F$}] (25) at (16,-2.4) {};

\node[point,scale = 0.8,fill=white] (m1) at (6.8,-1.3) {};
\node[point,scale = 0.8,fill=white] (m2) at (9.2,-1.3) {};
\node[point,scale = 0.8,fill=white] (m3) at (9.8,-1.3) {};
\node[point,scale = 0.8,fill=white] (m4) at (12.2,-1.3) {};

\draw[->] (2) to (3);
\draw[->] (3) to (4);
\draw[->] (4) to (5);

\draw[->] (22) to (23);
\draw[->] (23) to (24);
\draw[->] (24) to (25);

\node[] (01) at (8,-1.2) {\large $\lit$};
\node[] (02) at (11,-1.2) {\large $\neg \lit$};
\node[] (s) at (4,2) {Step~2:};
\end{tikzpicture}

\vspace*{-7cm}
\hspace*{9cm}
\begin{tikzpicture}[>=latex,line width=1pt,rounded corners,scale=.45]
\node[] (s) at (0,12.5) {Step~3:};
\node[] (a) at (12,-1.5) {$\Apsi$};
\draw (4,4) circle [radius=1];
\node[] (np) at (4,4) {$\neg p$};
\draw (4,8) circle [radius=1];
\node[] (p) at (4,8) {$p$};
\node[point,scale = 0.8,fill=white] (pd) at (4,7) {};
\node[point,scale = 0.6,fill=black] (l1) at (3,0) {};
\node[point,scale = 0.6,fill=black] (l2) at (3,2) {};
\node[point,scale = 0.8,fill=white] (l3) at (3,4) {};
\node[point,scale = 0.8,fill=white] (l4) at (3,8) {};
\node[point,scale = 0.8,fill=white] (l5) at (5,4) {};
\node[point,scale = 0.8,fill=white] (l6) at (5,8) {};
\node[point,scale = 0.6,fill=black] (l7) at (5,10) {};
\node[point,scale = 0.6,fill=black] (l8) at (5,12) {};
\draw[->] (l1) to (l2);
\draw[->] (l2) to (l3);
\draw[->] (l3) to (l4);
\draw[->] (l5) to (l6);
\draw[->] (l6) to (l7);
\draw[->] (l7) to (l8);

\draw (8,4) circle [radius=1];
\node[] (nq) at (8,4) {$\neg q$};
\node[point,scale = 0.8,fill=white] (nqu) at (8,5) {};
\draw (8,8) circle [radius=1];
\node[] (q) at (8,8) {$q$};
\node[point,scale = 0.6,fill=black] (m1) at (7,0) {};
\node[point,scale = 0.6,fill=black] (m2) at (7,2) {};
\node[point,scale = 0.8,fill=white] (m3) at (7,4) {};
\node[point,scale = 0.8,fill=white] (m4) at (7,8) {};
\node[point,scale = 0.8,fill=white] (m5) at (9,4) {};
\node[point,scale = 0.8,fill=white] (m6) at (9,8) {};
\node[point,scale = 0.6,fill=black] (m7) at (9,10) {};
\node[point,scale = 0.6,fill=black] (m8) at (9,12) {};
\draw[->] (m1) to (m2);
\draw[->] (m2) to (m3);
\draw[->] (m3) to (m4);
\draw[->] (m5) to (m6);
\draw[->] (m6) to (m7);
\draw[->] (m7) to (m8);

\draw (12,4) circle [radius=1];
\node[] (r) at (12,4) {$r$};
\node[point,scale = 0.8,fill=white] (ru) at (12,5) {};
\draw (12,8) circle [radius=1];
\node[] (nr) at (12,8) {$\neg r$};
\node[point,scale = 0.6,fill=black] (r1) at (11,0) {};
\node[point,scale = 0.6,fill=black] (r2) at (11,2) {};
\node[point,scale = 0.8,fill=white] (r3) at (11,4) {};
\node[point,scale = 0.8,fill=white] (r4) at (11,8) {};
\node[point,scale = 0.8,fill=white] (r5) at (13,4) {};
\node[point,scale = 0.8,fill=white] (r6) at (13,8) {};
\node[point,scale = 0.6,fill=black] (r7) at (13,10) {};
\node[point,scale = 0.6,fill=black] (r8) at (13,12) {};
\draw[->] (r1) to (r2);
\draw[->] (r2) to (r3);
\draw[->] (r3) to (r4);
\draw[->] (r5) to (r6);
\draw[->] (r6) to (r7);
\draw[->] (r7) to (r8);

\node[point,gray,scale = 0.6,fill=gray,label = above:{\small \textcolor{gray}{$T$}},label = left:{\textcolor{gray}{$\q^{c_1}$}}] (c11) at (0,9) {};
\node[point,scale = 0.8,fill=white,label = above:{\small $A$}] (c12) at (4,9) {};
\node[point,scale = 0.8,fill=white,label = above:{\small $A$}] (c13) at (8,9) {};
\node[point,scale = 0.8,fill=white,label = above:{\small $A$}] (c14) at (12,9) {};
\draw[->,gray] (c11) to (c12);
\draw[->,gray] (c12) to (c13);
\draw[->,gray] (c13) to (c14);

\node[point,gray,scale = 0.6,fill=gray,label = above:{\small \textcolor{gray}{$T$}},label = left:{\textcolor{gray}{$\q^{c_2}$}}] (c21) at (0,6) {};
\node[point,scale = 0.8,fill=white,label = above:{\small $A$}] (c22) at (4,5) {};
\node[point,scale = 0.8,fill=white,label = below:{\small $A$}] (c23) at (8,7) {};
\node[point,scale = 0.8,fill=white,label = below:{\small $A$}] (c24) at (12,7) {};
\draw[->,gray] (c21) to (c22);
\draw[->,gray] (c22) to (c23);
\draw[->,gray] (c23) to (c24);

\node[point,gray,scale = 0.6,fill=gray,label = below:{\small \textcolor{gray}{$T$}},label = left:{\textcolor{gray}{$\q^{c_3}$}}] (c31) at (0,3) {};
\node[point,scale = 0.8,fill=white,label = below:{\small $A$}] (c32) at (4,3) {};
\node[point,scale = 0.8,fill=white,label = below:{\small $A$}] (c33) at (8,3) {};
\node[point,scale = 0.8,fill=white,label = below:{\small $A$}] (c34) at (12,3) {};
\draw[->,gray] (c31) to (c32);
\draw[->,gray] (c32) to (c33);
\draw[->,gray] (c33) to (c34);
\end{tikzpicture}
%
\caption{An example of the bike-technique.}\label{f:gadgetex}
\end{figure}

%
%
%
\end{texample}

It is far from obvious what exactly are the particular properties of this construction that can be generalised to \emph{arbitrary} twinless path 2-CQs (just consider some permutations of the $F$- and $T$-nodes in $\q$ above). 
On the one hand, it is easy to identify what is needed for the `if $\dis_A,\Apsi \not\models \q$ then $\psi$ is satisfiable' direction to hold. However, the main obstacle in proving the converse implication is that, given a model $\I$ determined by an assignment satisfying $\psi$, we need to exclude \emph{all} $\q\to\I$ homomorphisms, not just those that map $\q$ onto one of its copies
in $\Apsi$.  At each of the three steps, there can be such `parasite' $\q \to \I$ homomorphisms, and there is no single, universal way of correctly assembling the $\Apsi$ for all $\q$ and $\psi$. In the remainder of the article, we show how to overcome this obstacle. 

We fix some twinless path $2$-CQ $\q$ and use the following notation. 
For any $k$, we let $\tkth$ ($\fkth$) denote the $k$th $T$-node ($F$-node) in $\q$.
In particular, $\tfirst$, $\tlbo$, and $\tlast$ denote, respectively, the first, the last but one, and the last $T$-node in $\q$.
Given any path CQ $\q'$, we write $\prec_{\q'}$ and $\preceq_{\q'}$  for the ordering of nodes in $\q'$,
and $\delta_{\q'}$ for the distance in $\q'$,
that is, $\delta_{\q'}(x,y)$ is the number of edges in the path from $x$ to $y$ whenever $x \preceq_{\q'} y$.  
As before, we omit the subscripts when $\q'=\q$, and set $|\q| = \delta(\be,\en)$,
for the first (root) node $\be$ and the last (leaf) node $\en$ in $\q$. 

Throughout, when proving statements of the form $\TT,\A\not\models\q$ for some ABox $\A$,
we use a generalisation of homomorphisms, which 
allows us to regard our CQs as if they contained a \emph{single} binary predicate only. 
Given a model $\I$ of an ABox $\A$, we call a map $h\colon\q\to\I$ a \emph{\shomo} if the following conditions hold:
\begin{itemize}
\item[--] $h(x)\in T^\I$, for every $T$-node $x$ in $\q$, and $h(x)\in F^\I$, for every $F$-node $x$ in $\q$; 

\item[--]
for any nodes $x,y$ in $\q$,
if $R(x,y)$ is in $\q$ for some $R$, then $S\bigl(h(x),h(y)\bigr)$ is in $\A$ for some $S$.	
\end{itemize}
%
%
The ABoxes $\A$ we build from copies of $\q$ will contain cycles, but these cycles will be large compared to $|\q|$.
Thus, for any \shomo{} $h$ mapping $\q$ to some model $\I$ of $\A$, $h(\q)$ can always be regarded as a path CQ,
and we have the following obvious `$h$-\emph{shift\/}' property:
\begin{equation}\label{hshift}
\mbox{$\delta(y,z)=\delta_{h(\q)}\bigl(h(y),h(z)\bigr)$, for all nodes $y$ and $z$ in $\q$.}
\end{equation}

	
\subsection{Representing the truth-values of literals by cogwheels}\label{s:wheels}

For $n \ge |\q|$, we take $n$ disjoint copies $\q^1,\dots,\q^n$ of $\q$. 
For any $j$, $1\le j \le n$, and any node $x$ in $\q$, let $x^j$ denote the copy of $x$ in $\q^j$.  
For each $j$, we pick a $T$-node $\ctt{j}$
and an $F$-node $\cff{j}$ in $\q^j$, calling the selected nodes \emph{contacts\/}. 
We replace the $T$- and $F$-labels of all the contacts with $A$,
and then glue $\cff{j}$ together with $\ctt{j+1}$ for every $j$, $1\le j\le n$, with $\pm$ being understood throughout modulo $n$. We call 
the resulting ABox $\mathcal{W}$ an $n$-\emph{\wheel{} \textup{(}for $\q$\textup{)}}; 
see Fig.~\ref{f:wheel}.
Given two contacts $\cc_1=\cff{i}=\ctt{i+1}$ and $\cc_2=\cff{j}=\ctt{j+1}$, we define the \emph{\cdist{} between $\cc_1$ and $\cc_2$ in $\mathcal {W}$} as $\min\bigl(|i-j|,n-|i-j|\bigr)$.

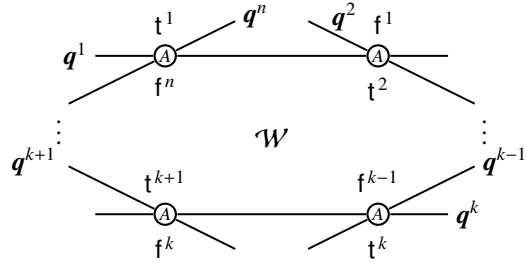
\begin{figure}[ht]
\centering
		\begin{tikzpicture}[>=latex,line width=.75pt, rounded corners,scale=.35]
		\node[label=left:{$\q^1$}\!\!\!\!\!] (s1) at (1,7) {};
		\node[point,scale=1.4,label=below:{$\cff{n}$},label=above:{$\ctt{1}$}] (t1) at (4,7) {\tiny $A$};
		\node[point,scale=1.4,label=below:{$\ctt{2}$},label=above:{$\ \ \cff{1}$}] (f1) at (12,7) {\tiny $A$};
		\node (e1) at (15,7) {};
		\node[label=right:{$\ \ \q^2$}] (s2) at (9,8.5) {};
		\node (e2) at (16,5) {};
		\node [label=right:\!\!\!\!\!{$\q^{k-1}$}] (s3) at (16,3) {};
		\node (e3) at (9,-.5) {};
		\node[point,scale=1.4,label=below:{$\ctt{k}$},label=above:{$\cff{k-1}$}] (tk) at (12,1) {\tiny $A$};
		\node[point,scale=1.4,label=below:{$\cff{k}$},label=above:{$\ctt{k+1}$}] (fk) at (4,1) {\tiny $A$};
		\node (sn) at (0,5) {};
		\node[label=right:\!\!\!\!\!{$\q^k$}] (sk) at (15,1) {};
		\node (ek) at (1,1) {};
		\node[label=right:\!\!\!\!\!{$\q^n$}] (en) at (7,8.5) {};
		\node (s4) at (7,-.5) {};
		\node[label=left:{$\q^{k+1}$}\!\!\!\!]  (e4) at (0,3) {};
		\node[label=below:$\mathcal{W}$] (w) at (8,5) {};
		\node at (0,3.7) {.};
		\node at (0,4.1) {.};
		\node at (0,4.5) {.};
		\node at (16,3.7) {.};
		\node at (16,4.1) {.};
		\node at (16,4.5) {.};
		\draw[-] (s1) to  (t1);
		\draw[-] (t1) to  (f1);
		\draw[-] (f1) to  (e1);
		\draw[-] (s2) to  (f1);
		\draw[-] (f1) to  (e2);
		\draw[-] (sn) to  (t1);
		\draw[-] (t1) to  (en);
		\draw[-] (s3) to  (tk);
		\draw[-] (tk) to  (e3);
		\draw[-] (sk) to  (tk);
		\draw[-] (tk) to  (fk);
		\draw[-] (fk) to  (ek);
		\draw[-] (s4) to  (fk);
		\draw[-] (fk) to  (e4);
		\end{tikzpicture}
\caption{An $n$-\wheel{} $\mathcal{W}$ for $\q$.}\label{f:wheel}
\end{figure}
	
As shown in Lemma~\ref{l:wheel} below, it is straightforward to see that, for any $n$-\wheel{} $\mathcal{W}$,
if $\I$ is a model of $\TT$  and $\mathcal{W}$ with $\I \not\models \q$, then 
either all contacts of $\mathcal{W}$ are in $T^\I$ or all contacts of $\mathcal{W}$ are in $F^\I$.
We want the converse implication to hold as well, in which case $\mathcal{W}$ would  `represent' a truth-value.
In order to achieve this,  we need to choose the contacts in such a way that all possible locations in $\mathcal{W}$ for the image $h(\q)$ of a potential homomorphism $h\colon\q\to\I$ are excluded. 
The following example
shows an improper choice of contacts. 

\begin{texample}\label{ex:cog}
\em
Consider the $2$-CQ shown in the picture below.\\
\centerline{
			\begin{tikzpicture}[>=latex,line width=1pt,rounded corners,scale=.8]
			\begin{scope}
			\node at (-.75,0) {$\q$};
			\node[point,scale = 0.7,label=above:{\small $T$},label=below:{$\tfirst$}] (1) at (0,0) {};
			\node[point,scale = 0.7,label=above:{\small $F$},label=below:{$\ffirst$}] (m) at (1.5,0) {};
			\node[point,scale = 0.7,label=above:{\small $T$},label=below:{$\tsec$}] (2) at (3,0) {};
			\node[point,scale = 0.7,label=above:{\small $T$},label=below:{$t_3$}] (3) at (4.5,0) {};
			\node[point,scale = 0.7,label=above:{\small $F$},label=below:{$\fsec$}] (4) at (6,0) {};
			\draw[->,right] (1) to node[below] {}  (m);
			\draw[->,right] (m) to node[below] {} (2);
			\draw[->,right] (2) to node[below] {} (3);
			\draw[->,right] (3) to node[below] {} (4);
			\end{scope}
			\end{tikzpicture}}\\[4pt]	
Take two copies $\q^1$ and $\q^2$ of $\q$ in $\mathcal{W}$.
If we choose the contacts $\ctt{1}=\tfirst^1$,  $\cff{1}=\ffirst^1$,  $\ctt{2}=t_3^2$, $\cff{2}=\fsec^1$,
and $\I$ is such that all contacts of $\mathcal{W}$ are in $F^\I$,
then we do have the following $h\colon\q\to\I$ homomorphism: 

\centerline{
			\begin{tikzpicture}[>=latex,line width=1pt,rounded corners,scale=.6]
			\node[point,gray,scale = 0.7,label=below:{\textcolor{gray}{\small $T$}}] (1) at (1,-2) {};
			\node[point,gray,scale = 0.7,label=below:{\textcolor{gray}{\small $F$}}] (2) at (3,-2) {};
			\node[point,gray,scale = 0.7,label=below:{\textcolor{gray}{\small $T$}}] (3) at (5,-2) {};
			\node[point,gray,scale = 0.7,label=below:{\textcolor{gray}{\small $T$}}] (4) at (7,-2) {};
			\node[point,gray,scale = 0.7,label=right: {\ \textcolor{gray}{$\q$}},label=below:{\textcolor{gray}{\small $F$}}] (5) at (9,-2) {};
			\draw[->,gray,right] (1) to node[below] {}  (2);
			\draw[->,gray,right] (2) to node[below] {} (3);
			\draw[->,gray,right] (3) to node[below] {} (4);
			\draw[->,gray,right] (4) to node[below] {} (5);
			\node[point,scale = 0.5,fill,label=above:{\small $T$},label=left:{$\q^2$}] (u1) at (-3,2) {};
			\node[point,scale = 0.5,fill,label=above:{\small $F$}] (u2) at (-1,1.5) {};
			\node[point,scale = 0.5,fill,label=above:{\small $T$}] (u3) at (1,1) {};
			\node[point,scale = 0.7,label=above:{\small $F^{\I}$},label=below:\ {$\ctt{2}=\cff{1}$}] (u4) at (3,.5) {};
			\node[point,scale = 0.7] (u5) at (5,0) {};
			\node[point,scale = 0.7] (v1) at (1,0) {};
			\node[point,scale = 0.5,fill,label=above:{\small $T$}] (v2) at (5,1) {};
			\node[point,scale = 0.5,fill,label=above:{\small $T$}] (v3) at (7,1.5) {};
			\node[point,scale = 0.5,fill,label=above:{\small $F$},label=right:{\ $\q^1$}] (v4) at (9,2) {};
			\draw[->,right] (u1) to node[below] {}  (u2);
			\draw[->,right] (u2) to node[below] {}  (u3);
			\draw[->,right] (u3) to node[below] {}  (u4);
			\draw[->,right] (u4) to node[below] {} (u5);
			\draw[->,right] (v1) to node[below] {}  (u4);
			\draw[->,right] (u4) to node[below] {}  (v2);
			\draw[->,right] (v2) to node[below] {}  (v3);
			\draw[->,right] (v3) to node[below] {} (v4);
			\node at (2.7,-1.2) {$h$};
			\draw[->,thin,dashed,bend left=35] (1) to  (u3);
			\draw[->,thin,dashed,bend right=35] (3) to  (v2);
			\draw[->,thin,dashed] (2) to  (3,-.4);
			\draw[->,thin,dashed] (4) to  (v3);
			\draw[->,thin,dashed] (5) to  (v4);
			\end{tikzpicture}}	

\end{texample}

To make the search space for contacts smaller and exclude cases like in Example~\ref{ex:cog}, we make the following
assumptions. To begin with, we assume that 
\begin{equation}\label{tprecf}
\tfirst\prec\ffirst
\end{equation}
(as the other case is symmetric). 
We also assume that the contacts of the $n$-\wheel{} $\mathcal{W}$ have the following properties:
\begin{align}
\label{tprecfcontact}
& \mbox{$\ctt{j}\prec_{\q^j} \cff{j}$, for every $j$ with $1\le j\le n$;}\\
\label{okcontact}
& \mbox{if $\ctt{j+1}=t^{j+1}$ and $\cff{j}=f^j$, then $t\prec f$, for all $j$ with $1\le j\le n$.}
\end{align}
(Note that \eqref{okcontact} does not hold in Example~\ref{ex:cog}, as $t_3\nprec\ffirst$.) 
For each $j$, the nodes preceding $\ctt{j}$ in $\q^j$ form its \emph{\initial{} \legW}, while the nodes succeeding $\cff{j}$ in $\q^j$ form its \emph{\final{} \legW}. 
	
The following general criterion still gives us quite some flexibility in designing \wheels:
	
	\begin{tlemma}\label{l:wheel}
		Suppose $\mathcal{W}$ is an $n$-\wheel{} for some $n\geq |\q|$ 
		satisfying \eqref{tprecfcontact} and \eqref{okcontact}.	
		For any model $\I$ of $\TT$ and $\mathcal{W}$, we have 
		$\I\not\models\q$ iff the contacts in $\I$ are either all in $T^\I$ or all in $F^\I$.
	\end{tlemma}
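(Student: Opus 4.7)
The plan is to prove both directions separately.

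For the ($\Rightarrow$) direction I will argue the contrapositive. Assume the contacts $c_1,\ldots,c_n$, arranged cyclically around $\mathcal{W}$, do not all carry the same $\I$-label. A non-constant $\{T,F\}$-labelling of a cycle must contain both a $T\!\to\!F$ and an $F\!\to\!T$ transition between consecutive entries, so there is some $j$ with $c_{j-1}\in T^\I$ and $c_j\in F^\I$. The two contacts of the copy $\q^j$ are exactly $\ctt{j}=c_{j-1}$ and $\cff{j}=c_j$, whose $\I$-labels now coincide with their original $\q$-labels; every remaining node of $\q^j$ kept its original label when $\mathcal{W}$ was assembled. Consequently, the natural inclusion $\q\hookrightarrow\q^j\subseteq\mathcal{W}$ is a \shomo{} $\q\to\I$, witnessing $\I\models\q$, contrary to the hypothesis.

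For the ($\Leftarrow$) direction I will handle the case when every contact is in $T^\I$; the $F^\I$-case is symmetric, swapping the roles of $T$ and $F$ and replacing $\ffirst$ by a suitable $T$-node of $\q$. Suppose for contradiction that a \shomo{} $h\colon\q\to\I$ exists. By \eqref{hshift}, the image $h(\q)$ is a directed walk of length exactly $|\q|$ in $\mathcal{W}$; since each dangling \legW{} of each $\q^j$ is a proper sub-path of $\q$, and hence of length $<|\q|$, $h(\q)$ must meet the main contact cycle. Because $h(\ffirst)\in F^\I$ and every contact is in $T^\I$, the node $h(\ffirst)$ must equal $f^j$ for some non-contact $F$-node $f$ of some copy $\q^j$. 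By \eqref{tprecf}, $h(\tfirst)$ then sits strictly $h(\q)$-earlier than $h(\ffirst)$.

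The heart of the argument is a case split on the number of contacts $h(\q)$ meets. When $h(\q)\subseteq\q^j$, the length equation $|\q|=|\q^j|$ forces $h$ to be the natural identity embedding of $\q$ onto $\q^j$; but then the contact $\cff{j}$ is the image of some $F$-node of $\q$ while carrying the $\I$-label $T$, a contradiction. When $h(\q)$ crosses at least one contact $c_k=\cff{k}=\ctt{k+1}$, the crossing shifts the implicit $\q$-position from $\cff{k}$ to $\ctt{k+1}$; the asymmetric hypothesis \eqref{okcontact} forces this shift to be ``$\prec$-backwards'' in $\q$, so tracing $h$ through the crossing using \eqref{tprecfcontact} and \eqref{hshift} pins the image of some $F$-node of $\q$ onto a contact, and hence onto a $T^\I$-node, again a contradiction. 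The assumption $n\geq|\q|$ ensures that any two contacts met by $h(\q)$ are cyclically separated by more than $|\q|$ edges, so the distance chase is unambiguous.

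The main obstacle will be exactly these contact-crossing ``parasite'' embeddings: $h(\q)$ may wander through several consecutive middles and \legsW{} before the shift accumulates enough to force a contradiction, and at each configuration the joint use of \eqref{tprecf}, \eqref{tprecfcontact} and \eqref{okcontact}, together with the directedness of the edges at each contact, must be calibrated carefully. The mirror $F^\I$-case adds no new technical difficulty but requires reusing the same bookkeeping on the right-hand side of $\q$, swapping first and last $T$- and $F$-nodes throughout.
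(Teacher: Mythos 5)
The $(\Rightarrow)$ direction is fine: your contrapositive version (a non-constant cyclic $\{T,F\}$-labelling of the contacts forces an adjacent $T/F$ pair, which in turn exhibits a copy of $\q$ whose two contacts regain their original labels) is equivalent in substance to the paper's direct inductive argument, just phrased differently.

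The $(\Leftarrow)$ direction, however, has a real gap and contains a false supporting claim. Your reasoning that ``$n\geq|\q|$ ensures that any two contacts met by $h(\q)$ are cyclically separated by more than $|\q|$ edges'' is wrong: consecutive contacts $\cff{j}=\ctt{j+1}$ and $\cff{j+1}=\ctt{j+2}$ are separated on the cycle by exactly $\delta(\ctt{j+1},\cff{j+1})$, which is always at most $|\q|$ and can be as small as $1$ (e.g., when $\tfirst$ and $\ffirst$ are adjacent in $\q$ and are the chosen contacts). The bound $n\geq|\q|$ only guarantees that $h(\q)$ does not wrap fully around the cycle, which is needed so that $h(\q)$ can be treated as a path CQ and \eqref{hshift} makes sense; it says nothing about contact spacing. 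Were your claim true, $h(\q)$ could meet at most one contact and a one-shot crossing analysis would suffice, which is precisely why this point matters: it silently eliminates the hard case.

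The hard case — $h(\q)$ wandering through several copies and meeting several contacts — is exactly what the paper's $\shift$/fixpoint argument is built to handle, and your proposal does not contain a substitute for it. You write that \eqref{okcontact} ``forces this shift to be `$\prec$-backwards' in $\q$, so tracing $h$ through the crossing $\ldots$ pins the image of some $F$-node of $\q$ onto a contact,'' but this is not an argument: a single crossing need not place any labelled node of $\q$ on a contact (the preimage of a contact can be an unlabelled node of $\q$), so there is no immediate contradiction, and the effect of several crossings compounds. The paper addresses this by defining a map $\shift\colon\ind(\worm)\to\ind(\worm)$ that records, for each node in the spanned copies $\q^1,\ldots,\q^k$, where $h$ sends the $\q$-position it represents; by finiteness $\shift$ has a periodic point, the paper shows a \emph{contact} can be taken as a periodic point (``shifting the cycle to the left,'' using \eqref{notstart} and \eqref{notend}), and then an induction on $\shift^j(\cc)$ shows this contact must lie in $T^\I$, contradicting the hypothesis that all contacts are in $F^\I$. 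None of these three steps appears in your sketch. Indeed your last paragraph concedes the point — you identify the multi-crossing accumulation as ``the main obstacle'' without providing the mechanism that resolves it. As written, the $(\Leftarrow)$ direction is not a proof.
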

	\begin{proof}
		$(\Rightarrow)$  
		Suppose the contact $\cff{i-1}=\ctt{i}$ is in $T^\I$. Since $\I \not\models \q$, the `clockwise next' contact $\cff{i}=\ctt{i+1}$ is also in $T^\I$. It follows by induction that all of the contacts in $\mathcal{W}$ are in $T^\I$. 
		If the contact $\cff{i-1}=\ctt{i}$ is in $F^\I$, then the `anti-clockwise next' contact $\cff{i-2}=\ctt{i-1}$ is also in $F^\I$, from which it follows by induction that all of the contacts are in $F^\I$. 
		
		$(\Leftarrow)$ 
First, suppose  that $\I$ is a model of $\TT$ and $\mathcal{W}$ such that all contacts in $\I$ are in $F^\I$.
The proof is via excluding all possible locations in $\mathcal{W}$ for the image $h(\q)$ of a potential \shomo{} $h\colon\q\to\I$.
As $n\geq |\q|$, we may consider $h(\q)$ as a path CQ, so $\preceq_{h(\q)}$ and $\delta_{h(\q)}$ are well-defined. 
Observe that if $t$ and $f$ are such that $\ctt{j}=t^j$ and  $\cff{j}=f^j$ for some $j$ then,
by the definition of the minimal model $\I$,
we clearly cannot have that $h(t)=\ctt{j}$ and $h(f)=\cff{j}$.
In particular, there is no $\q\to\I$ \shomo{} mapping $\q$ onto any of the $\q^j$, and so
$h(\q)$ must intersect with at least two copies of $\q$ in $\mathcal{W}$.
Further, by \eqref{okcontact}, there is not enough room for $h(\q)$ to start in an \initial{} \legW{} and then, after reaching a contact, to finish in a \final{} \legW{} (like in Example~\ref{ex:cog}).

So, without loss of generality, we may assume that there is some $k$ such that $2\le k < n$,
\begin{align}
\label{allm}
& \mbox{$h(\q)$ intersects each of the copies  $\q^1,\dots,\q^k$,}\\
\label{notstart}
& \be^1\prec_{\q^1} h(\be)\prec_{\q^1} \cff{1},\ \mbox{and}\\
\label{notend}
& h(\q)\cap \q^k\ne\{\ctt{k}\}.
\end{align}
\centerline{
		\begin{tikzpicture}[>=latex,line width=.75pt, rounded corners,scale=.35]
		\node[label=left:{$\q^1$\!\!\!\!\!}] (s1) at (0,1) {};
		\node[label=left:{$\q^2$}\!\!\!\!\!] (s2) at (7,1) {};
		\node[label=right:$\q^3$] (s3) at (10,0) {};
		\node[label=right:$\q^k$] (s4) at (16,.5) {};
		\node[label=left:$\q^1$] (e1) at (13,4) {};
		\node[label=right:\!\!\!\!\!$\q^2$] (e2) at (20,5) {};
		\node[label=right:\!\!\!\!\!$\q^{k-1}$] (e3) at (23,4) {};
		\node[label=right:\!\!\!\!\!$\q^k$] (e4) at (27,4.5) {};
		\node[label=below:{\textcolor{gray}{$h(\q)$}}] (h) at (1.3,1.6) {};
		\node[point,scale=0.7,label=above:$\ctt{1}$] (t1) at (2,2) {};
		\node[point,scale=0.7,label=above:$\cff{1}$,label=below:$\ctt{2}$] (t2) at (9,2) {};
		\node[point,scale=0.7,label=above:$\cff{2}$,label=below:$\ctt{3}$] (t3) at (14,2) {};
		\node[point,scale=0.7,label=above:$\cff{k-1}$,label=below:$\ctt{k}$] (f3) at (19,2) {};
		\node[point,scale=0.7,label=below:$\cff{k}$] (f4) at (22,2) {};
		\node (u1) at (15.5,2) {};
		\node (u2) at (17.5,2) {};
		\node (uh) at (16.5,1.8) {\textcolor{gray}{$\ldots$}};
		\draw[-] (s1) to  (t1);
		\draw[-] (t1) to  (t2);
		\draw[-] (t2) to  (t3);
		\draw[-] (t3) to  (u1);
		\draw[-] (u2) to  (f3);
		\draw[-] (s2) to  (t2);
		\draw[-] (f3) to  (f4);
		\draw[->] (t2) to  (e1);
		\draw[-] (s3) to  (t3);
		\draw[-] (s4) to  (f3);
		\draw[->] (t3) to  (e2);
		\draw[->] (f3) to  (e3);
		\draw[->] (f4) to  (e4);
		\draw[line width=.6mm,gray] (1,1.1) -- (2,1.7) -- (15.2,1.7);
		\draw[line width=.6mm,gray] (17.8,1.7) -- (21,1.7);
		\end{tikzpicture}
	}\\
Now, consider the sub-ABox $\worm$ of $\mathcal{W}$ consisting of the copies $\q^1,\dots,\q^k$.
For any $\ell$ with $1\le \ell\le k$, let $\iota^\ell \colon \q^\ell \to \q$ be the isomorphism mapping each $x^\ell$ to $x$.
		We define a function $\shift\colon\ind(\worm)\to\ind(\worm)$ by taking 
		$\shift(x)=h\bigl(\iota^\ell(x)\bigr)$ whenever $x$ is a node in $\q^\ell$, where we  consider each contact $\cc=\cff{\ell}=\ctt{\ell+1}$, for $1\le \ell<k$, as a node in $\q^{\ell+1}$, that is, $\shift(\cc)=\shift(\ctt{\ell+1})=h\bigl(\iota^{\ell+1}(\ctt{\ell+1})\bigr)$.
		Throughout, we use the following property of $\shift$, which is a straightforward consequence of the $h$-shift in \eqref{hshift} and the similar property of the isomorphism $\iota^\ell$: for every $\ell$ with $1\le \ell\le k$,
\begin{multline}\label{shift}
\mbox{if  $y,z$ are both in the same copy $\q^\ell$, $y,z\ne\cff{\ell}$ whenever $\ell<k$, and $y\preceq_{\q^\ell} z$, }\\
\mbox{then } \shift(y)\preceq_{h(\q)}\shift(z)\ \mbox{and}\ \delta_{\q^\ell}(y,z)\ =\
		\delta_{h(\q)}\bigl(\shift(y),\shift(z)\bigr).
\end{multline}	
	%
%
As $\worm$ is finite, there exists a `fixpoint' of $\shift$: a node $x$ in $\worm$ and a number $N > 0$ such that $\shift^N(x)=x$. 
		We `shift this fixpoint-cycle to the left.' More precisely,  we claim that 
		%
		\begin{equation}\label{cfixpoint}
		\mbox{there is a contact $\cc$ with $\shift^N(\cc)=\cc$.}
		\end{equation}
		Indeed,
		let $y_0=x, y_1=\shift(x), y_2=\shift^2(x),\dots, y_{N-1}=\shift^{N-1}(x)$. 
		Then $\shift^{N}(y_j)=y_j$ for every $j<N$,
		and so if one of the $y_j$ is a contact, we are done with \eqref{cfixpoint}.
		So suppose otherwise. 
		We cannot have that every $y_j$ is in $\q^1$, as otherwise, by \eqref{shift} and \eqref{notstart},
		for every $j\le N$,
		%
\begin{equation}
\delta_{\q^1}(\be^1,y_j)=\delta_{h(\q)}\bigl(\shift(\be^1),\shift(y_j)\bigr)=\delta_{h(\q)}\bigl(h(\be),y_{j+1})\bigr)=
\delta_{\q^1}\bigl(h(\be),y_{j+1})\bigr)<\delta_{\q^1}\bigl(\be^1,y_{j+1})\bigr)
\end{equation}
(here $+$ is modulo $N$).		
		Therefore, by \eqref{notend}, 
		\begin{equation}\label{thereisnoloopfp}
		\mbox{there exists $j<N$ such that $y_j$ is in $\q^{\ell_j}$ and $\ctt{\ell_j}\prec_{\q^{\ell_j}} y_j$, for some $\ell_j> 1$.}
		\end{equation}
		%
(When $\ell_j=1$, such a contact $\ctt{\ell_j}$ does not necessarily exist.) For $j<N$ with $\ell_j> 1$, we set $d_j=\delta_{\q^{\ell_j}}\bigl(\ctt{\ell_j},y_j\bigr)$.
		Let $K< N$ be such that 
		\[
		d_K=\min\{ d_j\mid j< N\ \mbox{and $\ell_j> 1$}\}
		\]
		(which is well-defined by \eqref{thereisnoloopfp}), and set $\cc=\cff{\ell_K-1}=\ctt{\ell_K}$.
		By \eqref{allm}, we have $y_j\prec_{\q^{\ell_j}}\cff{\ell_j}$ whenever $1<\ell_j<k$. Thus,
		by the definition of $K$ and \eqref{shift}, for every $j\le N$,
		$\shift^j(\cc)$ belongs to the same copy $\q^{\ell_{K+ j}}$ as $y_{K+j}$,
		$\shift^j(\cc)\preceq_{\q^{\ell_{K+j}}}y_{K+j}$,
		and 
		\[
		d_K=
		\delta_{\q^{\ell_{K+j}}}\bigl(\shift^j(\cc),y_{K+j}\bigr).
		\]
		It follows, in particular, that $\shift^N(\cc)$ belongs to the same copy $\q^{\ell_{K}}$ as $y_{K}$, and 
		$\delta_{\q^{\ell_{K}}}\bigl(\shift^N(\cc),y_K\bigr)=\delta_{\q^{\ell_{K}}}(\cc,y_{K})$.
		Therefore, $\shift^N(\cc)=\cc$, as required in \eqref{cfixpoint}.
		
		It remains to show that \eqref{cfixpoint}
		leads to a contradiction. Indeed, $\cc\in F^\I$ by our assumption, and so $\cc$ cannot be in $T^\I$ by the minimality of $\I$.  On the other hand, we show by induction on $j \ge 1$ that $\shift^j(\cc)\in T^\I$, and so $\cc=\shift^N(\cc)\in T^\I$.
		If $j=1$ then $\shift(\cc)=h\bigl(\iota^\ell(\ctt{\ell})\bigr)$ for some $\ell$, and so $\shift(\cc)\in T^\I$ as $\iota^\ell(\ctt{\ell})$ is a $T$-node in $\q$ and $h$ is a \shomo. If $j>1$ then $\shift^{j-1}(\cc)\in T^\I$ by IH.
		Thus, $\shift^{j-1}(\cc)$ is not a contact
		and 
		 $\iota^\ell\bigl(\shift^{j-1}(\cc)\bigr)$ must be a $T$-node in $\q$ for some $\ell$.
		Therefore, $\shift^j(\cc)=h\bigl(\iota^\ell\bigl(\shift^{j-1}(\cc)\bigr)\bigr)$ is in $T^\I$, as $h$ is a \shomo.
		
		The case of $\I$ with contacts in $T^\I$ is similar. 
		Now we define a function $\shiftR\colon \ind(\worm)\to\ind(\worm)$ by taking again
		$\shiftR(x)=h\bigl(\iota^\ell(x)\bigr)$ whenever $x$ is a node in $\q^\ell$, but now we consider each contact $\cc=\cff{\ell}=\ctt{\ell+1}$ as a node in $\q^{\ell}$, that is, $\shiftR(\cc)=\shiftR(\cff{\ell})=h\bigl(\iota^{\ell}(\cff{\ell})\bigr)$.
		Then, in the proof of \eqref{cfixpoint} for $\shiftR$, 
		we `shift the fixpoint-cycle to the right'\!.
	\end{proof}
	
\begin{tremark}\label{r:f1}\em	
It is to be noted that if we make more specialised assumptions on the choice of contacts, then Lemma~\ref{l:wheel}
can have a more straightforward proof.  
For example, suppose that 
the $n$-\wheel{} $\mathcal{W}$ satisfies \eqref{tprecfcontact} and 
$\cff{j}=\ffirst^j$, for all $j$ with $1\le j\le n$.
Given a model $\I$ such that all contacts of $\mathcal{W}$ have the same truth-value, we can show that no \shomo{} $h\colon\q\to\I$ exists
by excluding the possible locations of $h(\ffirst)$: 
\begin{itemize}
\item[--]
$h(\ffirst)$ cannot be a contact $\cff{j}$, otherwise $h(t)$ is also a contact, for the $T$-node $t$ with $\ctt{j}=t^j$;

\item[--]
$h(\ffirst)$ cannot be in the \final{} \legW{} of some $\q^j$, otherwise there is no room for $h(\q)$ in that \legW; and

\item[--]
there are no other options for $h(\ffirst)$, as there is no $F$-node preceding $\ffirst$ in $\q$.
\end{itemize}
Unfortunately, as illustrated in Example~\ref{e:bike2} below, we cannot always assume our
$n$-\wheels{} to be that simple.
\end{tremark}

	
\subsection{Representing negation by bikes}\label{s:bikes}	
	
For each variable in the 3CNF $\psi$, we take a fresh pair of  \wheels{} 
$\mathcal{W}_\pw$ and $\mathcal{W}_\mw$ and connect them using two more fresh copies of $\q$ in a special way.	
We want to achieve that, for any model $\I$ of $\TT$ and the resulting `two-wheel' ABox, we have $\I \not\models \q$ iff 
the two \wheels{} $\mathcal{W}_\pw$ and $\mathcal{W}_\mw$ `represent' opposite truth-values:
\begin{multline}\label{pmwsame}		
\mbox{either all contacts of $\mathcal{W}_\pw$ are in $F^\I$ and all contacts of $\mathcal{W}_\mw$ are in $T^\I$,}\\
\mbox{or all contacts of $\mathcal{W}_\pw$ are in $T^\I$ and all contacts of $\mathcal{W}_\mw$ are in $F^\I$.}
\end{multline}

To this end, suppose 
	$\mathcal{W}_\pw$ and $\mathcal{W}_\mw$ are two disjoint $n$-\wheels{}, for some $n>4|\q|+2$, 
	built up from the $\q$-copies $\qqq{\pw}{1},\dots,\qqq{\pw}{n}$ and $\qqq{\mw}{1},\dots,\qqq{\mw}{n}$,
	respectively. For $i=1,\dots,n$ and $\ast=\pw,\mw$, we denote the contacts in $\qqq{\ast}{i}$ by 
	$\cttt{\ast}{i}$ and $\cfff{\ast}{i}$;
	and for any node $x$ in $\q$, we denote by $\bnode{x}{\ast}^{\,i}$ the copy of $x$ in $\qqq{\ast}{i}$.
		%
	We pick two contacts $\cfff{\pw}{i_\pw}=\cttt{\pw}{i_\pw+1}$ and $\cfff{\pw}{j_\pw}=\cttt{\pw}{j_\pw+1}$
	in $\mathcal{W}_\pw$ that are `far' from each other in the sense that 
	the \cdist{} between them in $\mathcal{W}_\pw$ is 	$>2|\q|$. Similarly,
	we pick two contacts $\cfff{\mw}{i_\mw}=\cttt{\mw}{i_\mw+1}$ and $\cfff{\mw}{j_\mw}=\cttt{\mw}{j_\mw+1}$
	in $\mathcal{W}_\mw$ such that the \cdist{} between them in $\mathcal{W}_\mw$ is $>2|\q|$.

Next,	 let $\qq{\uq}$, $\qq{\dq}$ be two more fresh and disjoint copies of $\q$.
	For $Z=\uq,\dq$ and any node $x$ in $\q$, we denote by $\bnode{x}{Z}$ the copy of $x$ in $\qq{Z}$. 
	We connect $\mathcal{W}_\pw$ and $\mathcal{W}_\mw$ via $\qq{\uq}$ and $\qq{\dq}$ as follows.
	%
	First, we pick two $F$-nodes $\cff{\pw}$ and $\cff{\mw}$ with $\cff{\pw}\prec\cff{\mw}$ in $\q$, and replace their $F$-labels by $A$. Then we glue together node $\cfff{\uq}{\pw}$ of $\qq{\uq}$ with the contact $\cfff{\pw}{i_\pw}=\cttt{\pw}{i_\pw+1}$ of $\mathcal{W}_\pw$,
and also glue together $\cfff{\uq}{\mw}$ with the contact $\cfff{\mw}{i_\mw}=\cttt{\mw}{i_\mw+1}$ of $\mathcal{W}_\mw$.
Finally, we pick two $T$-nodes $\ctt{\pw}$ and $\ctt{\mw}$ with $\ctt{\pw}\prec\ctt{\mw}$ in $\q$, and replace their $T$-labels by $A$. Then we glue together node $\cttt{\dq}{\pw}$ of $\qq{\dq}$ with the contact $\cfff{\pw}{j_\pw}=\cttt{\pw}{j_\pw+1}$ of $\mathcal{W}_\pw$,
and also glue together $\cttt{\dq}{\mw}$ with the contact $\cfff{\mw}{j_\mw}=\cttt{\mw}{j_\mw+1}$ of $\mathcal{W}_\mw$.
	The resulting ABox $\mathcal{B}$ is called an $n$-\emph{\bike{} \textup{(}for $\q$\textup{)}},
	see Fig.~\ref{f:bike}.
	We call the contacts $\cfff{\uq}{\pw}=\cfff{\pw}{i_\pw}=\cttt{\pw}{i_\pw+1}$ and $\cfff{\uq}{\mw}=\cfff{\mw}{i_\mw}=\cttt{\mw}{i_\mw+1}$ \emph{\connections{\uq}} in $\mathcal{B}$; the \emph{\neighbourhood{\uq}} of $\mathcal{B}$ consists of those contacts 
whose \cdist{} from an \connection{\uq} is $\leq |\q|$. 
Similarly, the contacts $\cttt{\dq}{\pw}=\cfff{\pw}{j_\pw}=\cttt{\pw}{j_\pw+1}$ and $\cttt{\dq}{\mw}=\cfff{\mw}{j_\mw}=\cttt{\mw}{j_\mw+1}$ \emph{\connections{\dq}} in $\mathcal{B}$, 
and the \emph{\neighbourhood{\dq}} of $\mathcal{B}$ consists of those contacts 
whose \cdist{} from a \connection{\dq} is $\leq |\q|$. 

Using  Lemma~\ref{l:wheel} and the fact that the \connections{\uq} are $F$-nodes in $\qq{\uq}$ while the \connections{\dq} are $T$-nodes in $\qq{\dq}$, it is straightforward to see that, for any $n$-\bike{} $\mathcal{B}$,
if $\I$ is a model of $\TT$  and $\mathcal{B}$ with $\I \not\models \q$, then \eqref{pmwsame} holds.

\begin{figure}[ht]
\centering
		\begin{tikzpicture}[>=latex,line width=.75pt, rounded corners,xscale=.44,yscale=.5]
		\draw[->,line width=.6mm,gray] (10.5,9) -- (11.5,7.5) -- (12,7.3) -- (17.5,8) -- (23,7.3) -- (24.5,9.5);
		\node at (18.5,8.5) {\textcolor{gray}{$\qq{\uq}$}};
		\draw[->,line width=.6mm,gray] (10,-1) -- (12,.7) -- (17.5,0) -- (23,.7) -- (24.5,-1);
		\node at (18.5,-.5) {\textcolor{gray}{$\qq{\dq}$}};
		\node[label=left:{$\qqq{\pw}{i_\pw}$}\!\!\!\!\!] (s1) at (1,7) {};
		\node[point,scale=1.4,label=below:{$\cfff{\pw}{i_\pw-1}$},label=above:{$\cttt{\pw}{i_\pw}$}] (t1) at (4,7) {\tiny $A$};
		\node[point,scale=1.4,label=below:{$\cfff{\pw}{i_\pw}$},label=above:\ \ \ {$\cttt{\pw}{i_\pw+1}$}] (f1) at (12,7) {\tiny $A$};
		\node[label=above:$\cfff{\uq}{\pw}$] (ff1) at (12,7.8) {};
		\node (e1) at (15,7) {};
		\node[label=left:{$\qqq{\pw}{i_\pw+1}$}\!\!\!\!\!\!\!\!] (s2) at (9,8.5) {};
		\node (e2) at (16,5) {};
		\node (s3) at (16,3) {};
		\node[label=left:{$\qqq{\pw}{j_\pw}$}\!\!\!\!\!\!\!]  (e3) at (9,-.5) {};
		\node[point,scale=1.4,label=above:{$\cttt{\pw}{j_\pw+1}$},label=below:{$\ \ \ \cfff{\pw}{j_\pw}$}] (tk) at (12,1) {\tiny $A$};
		\node[label=below:$\cttt{\dq}{\pw}$] (tt2) at (12,.3) {};
		\node[point,scale=1.4,label=above:{$\cfff{\pw}{j_\pw+1}$},label=below:{$\cttt{\pw}{j_\pw+2}$}] (fk) at (4,1) {\tiny $A$};
		\node (sn) at (0,5) {};
		\node (sk) at (15,1) {};
		\node[label=left:{$\qqq{\pw}{j_\pw+1}$}\!\!\!\!\!]  (ek) at (1,1) {};
		\node (en) at (7,8.5) {};
		\node (s4) at (7,-.5) {};
		\node (e4) at (0,3) {};
		\node[label=below:$\mathcal{W}_\pw$] (w) at (8,4.6) {};
		\node at (0,3.5) {.};
		\node at (0,3.9) {.};
		\node at (0,4.3) {.};
		\node at (16,3.5) {.};
		\node at (16,3.9) {.};
		\node at (16,4.3) {.};
		\draw[-] (s1) to  (t1);
		\draw[-] (t1) to  (f1);
		\draw[->] (f1) to  (e1);
		\draw[-] (s2) to  (f1);
		\draw[->] (f1) to  (e2);
		\draw[-] (sn) to  (t1);
		\draw[->] (t1) to  (en);
		\draw[-] (s3) to  (tk);
		\draw[->] (tk) to  (e3);
		\draw[-] (sk) to  (tk);
		\draw[-] (tk) to  (fk);
		\draw[->] (fk) to  (ek);
		\draw[-] (s4) to  (fk);
		\draw[->] (fk) to  (e4);
		%
		%
		\node (rs1) at (20,7) {};
		\node[point,scale=1.4,label=below:{$\cttt{\mw}{i_\mw+1}$},label=above:{$\cfff{\mw}{i_\mw}\ \ \ $}] (rt1) at (23,7) {\tiny $A$};
		\node[label=above:$\cfff{\uq}{\mw}$] (ff2) at (22.8,7.8) {};
		\node[point,scale=1.4,label=below:{$\cfff{\mw}{i_\mw+1}$},label=above:{$\cttt{\mw}{i_\mw+2}$}] (rf1) at (31,7) {\tiny $A$};
		\node[label=right:\!\!\!\!\!{$\qqq{\mw}{i_\mw+1}$}] (re1) at (34,7) {};
		\node (rs2) at (28,8.5) {};
		\node (re2) at (35,5) {};
		\node (rs3) at (35,3) {};
		\node (re3) at (28,-.5) {};
		\node[point,scale=1.4,label=above:{$\cfff{\mw}{j_\mw-1}$},label=below:{$\cttt{\mw}{j_\mw}$}] (rtk) at (31,1) {\tiny $A$};
		\node[point,scale=1.4,label=above:{$\cttt{\mw}{j_\mw+1}$},label=below:{$\cfff{\mw}{j_\mw}\ \ \ $}] (rfk) at (23,1) {\tiny $A$};
		\node[label=below:$\cttt{\dq}{\mw}$] (tt1) at (22.4,.3) {};
		\node (rsn) at (19,5) {};
		\node[label=right:\!\!\!\!\!{$\qqq{\mw}{j_\mw}$}]  (rsk) at (34,1) {};
		\node (rek) at (20,1) {};
		\node[label=right:\!\!\!\!\!\!{$\qqq{\mw}{i_\mw}$}] (ren) at (26,8.5) {};
		\node[label=right:\!\!\!\!\!\!{$\qqq{\mw}{j_\mw+1}$}]  (rs4) at (26,-.5) {};
		\node (re4) at (19,3) {};
		\node[label=below:$\mathcal{W}_\mw$] (rw) at (27,4.6) {};
		\node at (19,3.5) {.};
		\node at (19,3.9) {.};
		\node at (19,4.3) {.};
		\node at (35,3.5) {.};
		\node at (35,3.9) {.};
		\node at (35,4.3) {.};
		\draw[-] (rs1) to  (rt1);
		\draw[-] (rt1) to  (rf1);
		\draw[->] (rf1) to  (re1);
		\draw[-] (rs2) to  (rf1);
		\draw[->] (rf1) to  (re2);
		\draw[-] (rsn) to  (rt1);
		\draw[->] (rt1) to  (ren);
		\draw[-] (rs3) to  (rtk);
		\draw[->] (rtk) to  (re3);
		\draw[-] (rsk) to  (rtk);
		\draw[-] (rtk) to  (rfk);
		\draw[->] (rfk) to  (rek);
		\draw[-] (rs4) to  (rfk);
		\draw[->] (rfk) to  (re4);
		\end{tikzpicture}
\caption{An $n$-\bike{} $\mathcal{B}$ for $\q$.}\label{f:bike}
\end{figure}
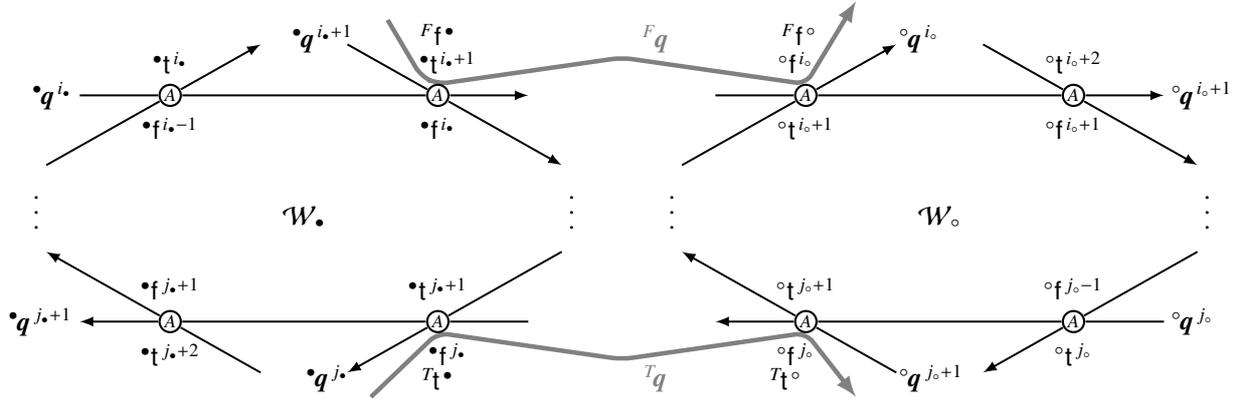		
	
However, for the converse implication to hold, we need to choose the contacts that are
%
$(a)$ the \connections{\uq} in $\qq{\uq}$,
%
$(b)$ 
the \connections{\dq} in $\qq{\dq}$, and
%
$(c)$ located in 
the $\uq$- and \neighbourhoods{\dq} in the two \wheels{} of $\mathcal{B}$
%
carefully, in such a way that all possible locations in $\mathcal{B}$ for the image $h(\q)$ of a potential homomorphism $h\colon\q\to\I$ are excluded. 
So suppose $\I$ is a model of $\TT$ and $\mathcal{B}$ satisfies \eqref{pmwsame}. 
We will again try to exclude all $h\colon\q\to\I$ \shomo s.
To begin with, as $n>4|\q|>|\q|$, we may consider the image $h(\q)$ of $\q$ in $\I$ as a path CQ.
If we choose all the contacts in $(a)$--$(c)$ above 
in such a way that  \eqref{tprecfcontact} and \eqref{okcontact} hold for both \wheels{} in $\mathcal{B}$
then, by Lemma~\ref{l:wheel}, we know that $h(\q)$ must intersect with at least one of $\qq{\uq}$ and $\qq{\dq}$.
Therefore, the intersection of $h(\q)$ with any of the two $n$-\wheels{} cannot go beyond their $\uq$- and \neighbourhoods{\dq}.
Further, it is straightforward to see that because of \eqref{pmwsame},
\begin{multline}\label{nofixhpm}
\mbox{there is no \shomo{} $h:\q\to\I$ such that $h(\cff{\pw})=\cfff{\uq}{\pw}$ and $h(\cff{\mw})=\cfff{\uq}{\mw}$, and}\\
\mbox{there is no \shomo{} $h:\q\to\I$ such that $h(\ctt{\pw})=\cttt{\dq}{\pw}$ and $h(\ctt{\mw})=\cttt{\dq}{\mw}$.}
\end{multline}
(In particular, there is no $\q\to\I$ \shomo{} mapping $\q$ onto $\qq{\uq}$ or onto $\qq{\dq}$.)
Because of this, $h(\q)$ must \emph{properly intersect} with at least one of the two $n$-\wheels{} 
$\mathcal{W}_\pw$ or $\mathcal{W}_\mw$ in the sense that the intersection of $h(\q)$ and the \wheel{} is not just a
$\dq$- or \connection{\uq}.
As the $\uq$-connections are of \cdist{} $>2|\q|$ from the
		$\dq$-connections, $h(\q)$ cannot intersect with both $\qq{\uq}$ and $\qq{\dq}$ at the same time. 
It is easy to check that, by \eqref{nofixhpm}, all options for such a $h(\q)$ are covered by the eight cases given in Fig.~\ref{f:bikecases}.

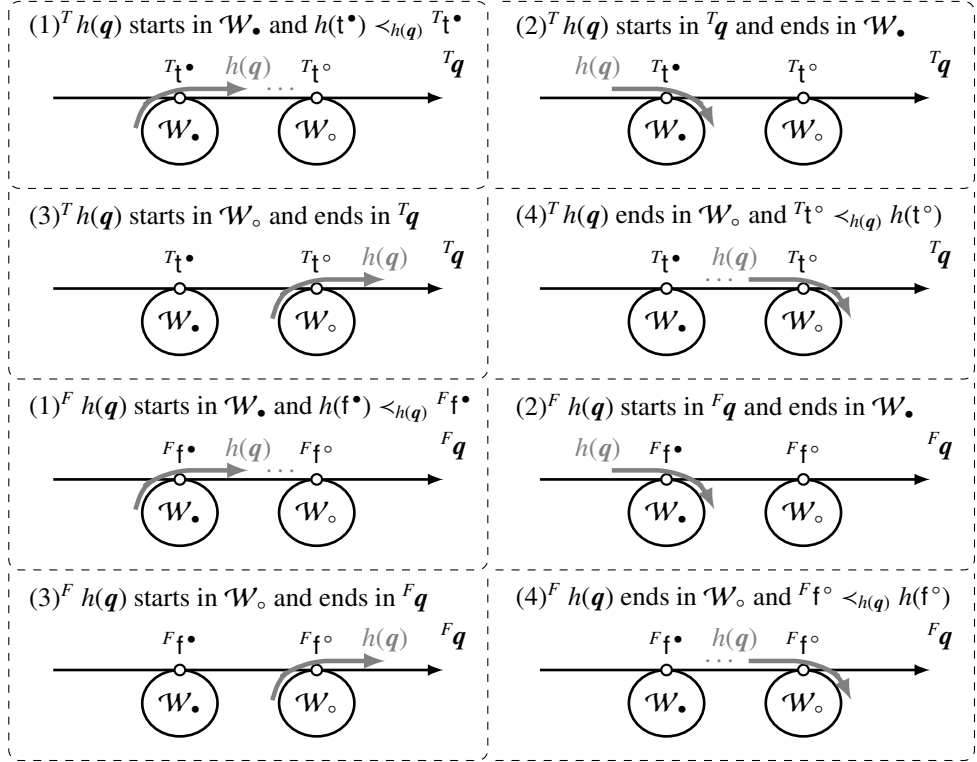
\begin{figure}[ht]
\centering
\begin{tikzpicture}[>=latex,line width=1pt,rounded corners,xscale=.9,yscale=.8]
\node[label=right:{(1)${}^\dq$ $h(\q)$ starts in $\mathcal{W}_\pw$ and $h(\ctt{\pw})\prec_{h(\q)}\cttt{\dq}{\pw}$}] (t) at (7.5,1.2) {};
\draw[-,thin,dashed] (7.5,1.6) -- (7.5,-1.5) -- (14.5,-1.5) -- (14.5,1.6) -- cycle;
	                 \draw (10,-0.55) circle [radius=.55];
	                 \draw (12,-0.55) circle [radius=.55];
			\node (1) at (8,0) {};
			\node[point,scale = 0.7,fill = white,label=above:{$\cttt{\dq}{\pw}$}] (m) at (10,0) {};
			\node[point,scale = 0.7,fill =  white,label=above:{$\cttt{\dq}{\mw}$}] (2) at (12,0) {};
			\node[label=above:{$\qq{\dq}$}] (4) at (14,0) {};
			\draw[-,right] (1) to node[below] {}  (m);
			\draw[-,right] (m) to node[below] {} (2);
			\draw[->,right] (2) to node[below] {} (4);
			\node[point, draw = white] (01) at (10,-0.55) {$\mathcal{W}_\pw$};
			\node[point, draw = white] (02) at (12,-0.55) {$\mathcal{W}_\mw$};
			\draw[->,line width=.6mm,gray] (9.35,-.5) -- (9.45,-.15) -- (9.6,0) -- (10,.15) -- (11,.15);
			\node at (11,.5) {\textcolor{gray}{$h(\q)$}};
			\node at (11.5,.15) {\textcolor{gray}{$\dots$}};
			\end{tikzpicture}	
\hspace*{-.2cm}
\begin{tikzpicture}[>=latex,line width=1pt,rounded corners,xscale=.9,yscale=.8]
\node[label=right:{(2)$^\dq$ $h(\q)$ starts in $\qq{\dq}$ and ends in $\mathcal{W}_\pw$}] (t) at (7.5,1.2) {};
\draw[-,thin,dashed]  (7.5,-1.5) -- (14.5,-1.5) -- (14.5,1.6) -- (7.5,1.6);
	                 \draw (10,-0.55) circle [radius=.55];
	                 \draw (12,-0.55) circle [radius=.55];
			\node (1) at (8,0) {};
			\node[point,scale = 0.7,fill = white,label=above:{$\cttt{\dq}{\pw}$}] (m) at (10,0) {};
			\node[point,scale = 0.7,fill =  white,label=above:{$\cttt{\dq}{\mw}$}] (2) at (12,0) {};
			\node[label=above:{$\qq{\dq}$}] (4) at (14,0) {};
			\draw[-,right] (1) to node[below] {}  (m);
			\draw[-,right] (m) to node[below] {} (2);
			\draw[->,right] (2) to node[below] {} (4);
			\node[point, draw = white] (01) at (10,-0.55) {$\mathcal{W}_\pw$};
			\node[point, draw = white] (02) at (12,-0.55) {$\mathcal{W}_\mw$};
			\draw[->,line width=.6mm,gray] (9.2,.15) -- (10,.15) -- (10.4,0) -- (10.55,-.15) -- (10.7,-.5);
			\node at (9,.5) {\textcolor{gray}{$h(\q)$}};
			\end{tikzpicture}\\	
\begin{tikzpicture}[>=latex,line width=1pt,rounded corners,xscale=.9,yscale=.8]
\node[label=right:{(3)$^\dq$ $h(\q)$ starts in $\mathcal{W}_\mw$ and ends in $\qq{\dq}$}] (t) at (7.5,1.2) {};
\draw[-,thin,dashed] (7.5,1.6) -- (7.5,-1.5) -- (14.5,-1.5) -- (14.5,1.6);
	                 \draw (10,-0.55) circle [radius=.55];
	                 \draw (12,-0.55) circle [radius=.55];
			\node (1) at (8,0) {};
			\node[point,scale = 0.7,fill = white,label=above:{$\cttt{\dq}{\pw}$}] (m) at (10,0) {};
			\node[point,scale = 0.7,fill =  white,label=above:{$\cttt{\dq}{\mw}$}] (2) at (12,0) {};
			\node[label=above:{$\qq{\dq}$}] (4) at (14,0) {};
			\draw[-,right] (1) to node[below] {}  (m);
			\draw[-,right] (m) to node[below] {} (2);
			\draw[->,right] (2) to node[below] {} (4);
			\node[point, draw = white] (01) at (10,-0.55) {$\mathcal{W}_\pw$};
			\node[point, draw = white] (02) at (12,-0.55) {$\mathcal{W}_\mw$};
			\draw[->,line width=.6mm,gray] (11.35,-.5) -- (11.45,-.15) -- (11.6,0) -- (12,.15) -- (13,.15);
			\node at (13,.5) {\textcolor{gray}{$h(\q)$}};
			\end{tikzpicture}	
\hspace*{-.2cm}
\begin{tikzpicture}[>=latex,line width=1pt,rounded corners,xscale=.9,yscale=.8]
\node[label=right:{(4)$^\dq$ $h(\q)$ ends in $\mathcal{W}_\mw$ and $\cttt{\dq}{\mw} \prec_{h(\q)}h(\ctt{\mw})$}] (t) at (7.5,1.2) {};
\draw[-,thin,dashed]  (7.5,-1.5) -- (14.5,-1.5) -- (14.5,1.6);
	                 \draw (10,-0.55) circle [radius=.55];
	                 \draw (12,-0.55) circle [radius=.55];
			\node (1) at (8,0) {};
			\node[point,scale = 0.7,fill = white,label=above:{$\cttt{\dq}{\pw}$}] (m) at (10,0) {};
			\node[point,scale = 0.7,fill =  white,label=above:{$\cttt{\dq}{\mw}$}] (2) at (12,0) {};
			\node[label=above:{$\qq{\dq}$}] (4) at (14,0) {};
			\draw[-,right] (1) to node[below] {}  (m);
			\draw[-,right] (m) to node[below] {} (2);
			\draw[->,right] (2) to node[below] {} (4);
			\node[point, draw = white] (01) at (10,-0.55) {$\mathcal{W}_\pw$};
			\node[point, draw = white] (02) at (12,-0.55) {$\mathcal{W}_\mw$};
			\draw[->,line width=.6mm,gray] (11.2,.15) -- (12,.15) -- (12.4,0) -- (12.55,-.15) -- (12.7,-.5);
			\node at (11,.5) {\textcolor{gray}{$h(\q)$}};
			\node at (10.8,.15) {\textcolor{gray}{$\dots$}};
			\end{tikzpicture}\\
\begin{tikzpicture}[>=latex,line width=1pt,rounded corners,xscale=.9,yscale=.8]
\node[label=right:{(1)$^\uq$ $h(\q)$ starts in $\mathcal{W}_\pw$ and $h(\cff{\pw})\prec_{h(\q)}\cfff{\uq}{\pw}$}] (t) at (7.5,1.2) {};
\draw[-,thin,dashed] (7.5,1.6) -- (7.5,-1.5) -- (14.5,-1.5) -- (14.5,1.6);
	                 \draw (10,-0.55) circle [radius=.55];
	                 \draw (12,-0.55) circle [radius=.55];
			\node (1) at (8,0) {};
			\node[point,scale = 0.7,fill = white,label=above:{$\cfff{\uq}{\pw}$}] (m) at (10,0) {};
			\node[point,scale = 0.7,fill =  white,label=above:{$\cfff{\uq}{\mw}$}] (2) at (12,0) {};
			\node[label=above:{$\qq{\uq}$}] (4) at (14,0) {};
			\draw[-,right] (1) to node[below] {}  (m);
			\draw[-,right] (m) to node[below] {} (2);
			\draw[->,right] (2) to node[below] {} (4);
			\node[point, draw = white] (01) at (10,-0.55) {$\mathcal{W}_\pw$};
			\node[point, draw = white] (02) at (12,-0.55) {$\mathcal{W}_\mw$};
			\draw[->,line width=.6mm,gray] (9.35,-.5) -- (9.45,-.15) -- (9.6,0) -- (10,.15) -- (11,.15);
			\node at (11,.5) {\textcolor{gray}{$h(\q)$}};
			\node at (11.5,.15) {\textcolor{gray}{$\dots$}};
			\end{tikzpicture}
\hspace*{-.2cm}
\begin{tikzpicture}[>=latex,line width=1pt,rounded corners,xscale=.9,yscale=.8]
\node[label=right:{(2)$^\uq$ $h(\q)$ starts in $\qq{\uq}$ and ends in $\mathcal{W}_\pw$}] (t) at (7.5,1.2) {};
\draw[-,thin,dashed]  (7.5,-1.5) -- (14.5,-1.5) -- (14.5,1.6);
	                 \draw (10,-0.55) circle [radius=.55];
	                 \draw (12,-0.55) circle [radius=.55];
			\node (1) at (8,0) {};
			\node[point,scale = 0.7,fill = white,label=above:{$\cfff{\uq}{\pw}$}] (m) at (10,0) {};
			\node[point,scale = 0.7,fill =  white,label=above:{$\cfff{\uq}{\mw}$}] (2) at (12,0) {};
			\node[label=above:{$\qq{\uq}$}] (4) at (14,0) {};
			\draw[-,right] (1) to node[below] {}  (m);
			\draw[-,right] (m) to node[below] {} (2);
			\draw[->,right] (2) to node[below] {} (4);
			\node[point, draw = white] (01) at (10,-0.55) {$\mathcal{W}_\pw$};
			\node[point, draw = white] (02) at (12,-0.55) {$\mathcal{W}_\mw$};
			\draw[->,line width=.6mm,gray] (9.2,.15) -- (10,.15) -- (10.4,0) -- (10.55,-.15) -- (10.7,-.5);
			\node at (9,.5) {\textcolor{gray}{$h(\q)$}};
			\end{tikzpicture}\\
\begin{tikzpicture}[>=latex,line width=1pt,rounded corners,xscale=.9,yscale=.8]
\node[label=right:{(3)$^\uq$ $h(\q)$ starts in $\mathcal{W}_\mw$ and ends in $\qq{\uq}$}] (t) at (7.5,1.2) {};
\draw[-,thin,dashed] (7.5,1.6) -- (7.5,-1.5) -- (14.5,-1.5) -- (14.5,1.6);
	                 \draw (10,-0.55) circle [radius=.55];
	                 \draw (12,-0.55) circle [radius=.55];
			\node (1) at (8,0) {};
			\node[point,scale = 0.7,fill = white,label=above:{$\cfff{\uq}{\pw}$}] (m) at (10,0) {};
			\node[point,scale = 0.7,fill =  white,label=above:{$\cfff{\uq}{\mw}$}] (2) at (12,0) {};
			\node[label=above:{$\qq{\uq}$}] (4) at (14,0) {};
			\draw[-,right] (1) to node[below] {}  (m);
			\draw[-,right] (m) to node[below] {} (2);
			\draw[->,right] (2) to node[below] {} (4);
			\node[point, draw = white] (01) at (10,-0.55) {$\mathcal{W}_\pw$};
			\node[point, draw = white] (02) at (12,-0.55) {$\mathcal{W}_\mw$};
			\draw[->,line width=.6mm,gray] (11.35,-.5) -- (11.45,-.15) -- (11.6,0) -- (12,.15) -- (13,.15);
			\node at (13,.5) {\textcolor{gray}{$h(\q)$}};
			\end{tikzpicture}
\hspace*{-.2cm}
\begin{tikzpicture}[>=latex,line width=1pt,rounded corners,xscale=.9,yscale=.8]
\node[label=right:{(4)$^\uq$ $h(\q)$ ends in $\mathcal{W}_\mw$ and $\cfff{\uq}{\mw} \prec_{h(\q)}h(\cff{\mw})$}] (t) at (7.5,1.2) {};
\draw[-,thin,dashed]  (7.5,-1.5) -- (14.5,-1.5) -- (14.5,1.6);
	                 \draw (10,-0.55) circle [radius=.55];
	                 \draw (12,-0.55) circle [radius=.55];
			\node (1) at (8,0) {};
			\node[point,scale = 0.7,fill = white,label=above:{$\cfff{\uq}{\pw}$}] (m) at (10,0) {};
			\node[point,scale = 0.7,fill =  white,label=above:{$\cfff{\uq}{\mw}$}] (2) at (12,0) {};
			\node[label=above:{$\qq{\uq}$}] (4) at (14,0) {};
			\draw[-,right] (1) to node[below] {}  (m);
			\draw[-,right] (m) to node[below] {} (2);
			\draw[->,right] (2) to node[below] {} (4);
			\node[point, draw = white] (01) at (10,-0.55) {$\mathcal{W}_\pw$};
			\node[point, draw = white] (02) at (12,-0.55) {$\mathcal{W}_\mw$};
			\draw[->,line width=.6mm,gray] (11.2,.15) -- (12,.15) -- (12.4,0) -- (12.55,-.15) -- (12.7,-.5);
			\node at (11,.5) {\textcolor{gray}{$h(\q)$}};
			\node at (10.8,.15) {\textcolor{gray}{$\dots$}};
			\end{tikzpicture}			
\caption{Possible locations for $h(\q)$ intersecting $\qq{\uq}$ or  $\qq{\dq}$.}\label{f:bikecases}
\end{figure}

We aim to show that, for every $2$-CQ, suitable contact choices always exist by actually providing an
\emph{algorithm} that, given  \emph{any} $2$-CQ $\q$, describes contact choices that are suitable for an 
$n$-\bike{} constructed from copies of $\q$.
For some $2$-CQs the suitable contact choices are straightforward (even uniquely determined by \eqref{tprecfcontact} and \eqref{okcontact}), for some others not so. In general, 
the different cases in Fig.~\ref{f:bikecases}
place different constraints on the suitable contact choices.
There might even be some further interaction among these constraints because
$h(\q)$ might intersect, say, the \neighbourhoods{\dq} of both \wheels{} in $\mathcal{B}$.
These interactions, together with constraints \eqref{tprecfcontact} and \eqref{okcontact}, make finding a general solution a tricky cat-and-mouse game.
We have tried several different ways of systematising the search for solutions,
and ended up with the following choices in our `heuristics' (with Remark~\ref{r:f1} motivating (\mh{2})):
%
%
\begin{itemize}
\item[(\mh{1})]
We try to choose all contacts in a way that results in as few cases as possible.

\item[(\mh{2})]
In excluding possible locations for $h(\q)$, we aim to track $h(\ffirst)$. So we aim to choose the contacts in such a way that
leaves as few options for $h(\ffirst)$ as possible.
\end{itemize}
In particular,
in light of (\mh{1}) and (\mh{2}), we decided to go for $\cff{\pw}=\ffirst$ and $\cff{\mw}=\fsec$ as
\connections{\uq}.
This leaves us with only two options for $h(\ffirst)$ in $\qq{\uq}$: its two contacts $\cff{\pw}$ or $\cff{\mw}$.
However, we still have to deal with case distinctions in the choices for $\cttt{\pw}{i_\pw+1}$ and $\cttt{\mw}{i_\mw+1}$ as illustrated by the following examples. 

\begin{texample}\label{e:bike}
\em
$(i)$ Consider the $2$-CQ\\[4pt]
\centerline{
			\begin{tikzpicture}[>=latex,line width=1pt,rounded corners,scale=.7]
			\begin{scope}
			\node at (-.75,0) {$\q$};
			\node[point,scale = 0.7,label=above:{\small $T$},label=below:{$\tfirst$}] (1) at (0,0) {};
			\node[point,scale = 0.7,label=above:{\small $T$},label=below:{$\tsec$}] (m) at (1.5,0) {};
			\node[point,scale = 0.7,label=above:{\small $F$},label=below:{$\ffirst$}] (2) at (3,0) {};
			\node[point,scale = 0.7] (3) at (4.5,0) {};
			\node[point,scale = 0.7,label=above:{\small $F$},label=below:{$\fsec$}] (4) at (6,0) {};
			\draw[->,right] (1) to node[below] {}  (m);
			\draw[->,right] (m) to node[below] {} (2);
			\draw[->,right] (2) to node[below] {} (3);
			\draw[->,right] (3) to node[below] {} (4);
			\end{scope}
			\end{tikzpicture}}\\[4pt]	
If we choose $\cttt{\pw}{i_\pw+1}=\nttt{\pw}{i_\pw+1}{1}$ and $\cfff{\pw}{i_\pw+1}=\nfff{\pw}{i_\pw+1}{1}$, and $\I$ is such that all contacts of $\mathcal{W}_\pw$ are in $F^\I$
and all contacts of $\mathcal{W}_\mw$ are in $T^\I$,
then we do have the following $h\colon\q\to\I$ homomorphism (see case $(2)^F$ in Fig.~\ref{f:bikecases}):

\smallskip
\centerline{
			\begin{tikzpicture}[>=latex,line width=1pt,rounded corners,scale=.8]
			\begin{scope}
			\node[point,gray,scale = 0.7,label=left: {\textcolor{gray}{$\q$}},label=below:{\textcolor{gray}{\small $T$}}] (1) at (0,-.5) {};
			\node[point,gray,scale = 0.7,label=below:{\textcolor{gray}{\small $T$}}] (2) at (1.5,-.5) {};
			\node[point,gray,scale = 0.7,label=below:{\textcolor{gray}{\small $F$}}] (3) at (3,-.5) {};
			\node[point,gray,scale = 0.7] (4) at (4.5,-.5) {};
			\node[point,gray,scale = 0.7,label=below:{\textcolor{gray}{\small $F$}}] (5) at (6,-.5) {};
			\draw[->,gray,right] (1) to node[below] {}  (2);
			\draw[->,gray,right] (2) to node[below] {} (3);
			\draw[->,gray,right] (3) to node[below] {} (4);
			\draw[->,gray,right] (4) to node[below] {} (5);
			\node[point,scale = 0.5,fill,label=above:{\small $T$}] (u1) at (0,1.5) {};
			\node[point,scale = 0.5,fill,label=above:{\small $T$}] (u2) at (1.5,1.5) {};
			\node[point,scale = 0.7,label=above:{\small $F^{\I}$},label=below:\ {$\bnode{\ffirst}{\uq}=\nfff{\pw}{i_\pw+1}{1}$}] (u3) at (3,1.5) {};
			\node[point,scale = 0.5,fill,label=below:{\small $T$}] (u4) at (4.5,1.5) {};
			\node[point,scale = 0.7,label=above:{\small $F^{\I}$},label=below:\ {$\nfff{\pw}{i_\pw+1}{1}$}] (u5) at (6,1.5) {};
			\node (u6) at (6.7,1.5) {};
			\node at (8,1.55) {$\dots\quad\mathcal{W}_\pw$};
			\node[point,scale = 0.5,fill] (s1) at (4,2) {};
			\node[point,scale = 0.7,label=above:{$\qq{\uq}$},label=right:{\small $T^{\I}$}] (s2) at (5,2.5) {};
			\node[point,scale = 0.5,fill] (s3) at (7,2) {};
			\node[point,scale = 0.5,fill,label=right:{\small $F$},label=above:{$\qqq{\pw}{i_\pw+1}$}] (s4) at (8,2.5) {};
			\draw[->,right] (u1) to node[below] {}  (u2);
			\draw[->,right] (u2) to node[below] {} (u3);
			\draw[->,right] (u3) to node[below] {} (u4);
			\draw[->,right] (u4) to node[below] {} (u5);
			\draw[-] (u5) to  (u6);
			\draw[->,right] (u3) to node[below] {} (s1);
			\draw[->,right] (s1) to node[below] {} (s2);
			\draw[->,right] (u5) to node[below] {} (s3);
			\draw[->,right] (s3) to node[below] {} (s4);
			\node at (-.3,.5) {$h$};
			\draw[->,thin,dashed] (1) to  (0,1.3);
			\draw[->,thin,dashed] (2) to  (1.5,1.3);
			\draw[->,thin,dashed] (3) to  (3,.7);
			\draw[->,thin,dashed] (4) to  (4.5,.8);
			\draw[->,thin,dashed] (5) to  (6,.5);
			\end{scope}
			\end{tikzpicture}
			}	

\noindent		
Note that choosing $\cfff{\pw}{i_\pw+1}=\nfff{\pw}{i_\pw+1}{2}$ would not help.

$(ii)$ Consider the $2$-CQ\\[4pt]
\centerline{
			\begin{tikzpicture}[>=latex,line width=1pt,rounded corners,scale=.7]
			\begin{scope}
			\node at (-.75,0) {$\q$};
			\node[point,scale = 0.7,label=above:{\small $T$},label=below:{$\tfirst$}] (1) at (0,0) {};
			\node[point,scale = 0.7,label=above:{\small $T$},label=below:{$\tsec$}] (m) at (1.5,0) {};
			\node[point,scale = 0.7,label=above:{\small $F$},label=below:{$\ffirst$}] (2) at (3,0) {};
			\node[point,scale = 0.7,label=above:{\small $T$}] (3) at (4.5,0) {};
			\node[point,scale = 0.7,label=above:{\small $F$},label=below:{$\fsec$}] (4) at (6,0) {};
			\draw[->,right] (1) to node[below] {}  (m);
			\draw[->,right] (m) to node[below] {} (2);
			\draw[->,right] (2) to node[below] {} (3);
			\draw[->,right] (3) to node[below] {} (4);
			\end{scope}
			\end{tikzpicture}}\\[4pt]	
If we choose $\cttt{\mw}{i_\mw+1}=\nttt{\mw}{i_\mw+1}{1}$ and $\cfff{\mw}{i_\mw+1}=\nfff{\mw}{i_\mw+1}{1}$, and $\I$ is such that 
all contacts of $\mathcal{W}_\pw$ are in $T^\I$
and all contacts of $\mathcal{W}_\mw$ are in $F^\I$,
then we do have the following $h\colon\q\to\I$ homomorphism (see case $(4)^F$ in Fig.~\ref{f:bikecases}):\\
\centerline{
			\begin{tikzpicture}[>=latex,line width=1pt,rounded corners,xscale=.8,yscale=.7]
			\begin{scope}
			\node[point,gray,scale = 0.7,label=left: {\textcolor{gray}{$\q$}},label=below:{\textcolor{gray}{\small $T$}}] (1) at (0,-.5) {};
			\node[point,gray,scale = 0.7,label=below:{\textcolor{gray}{\small $T$}}] (2) at (1.5,-.5) {};
			\node[point,gray,scale = 0.7,label=below:{\textcolor{gray}{\small $F$}}] (3) at (3,-.5) {};
			\node[point,gray,scale = 0.7,label=below:{\textcolor{gray}{\small $T$}}] (4) at (4.5,-.5) {};
			\node[point,gray,scale = 0.7,label=below:{\textcolor{gray}{\small $F$}}] (5) at (6,-.5) {};
			\draw[->,gray,right] (1) to node[below] {}  (2);
			\draw[->,gray,right] (2) to node[below] {} (3);
			\draw[->,gray,right] (3) to node[below] {} (4);
			\draw[->,gray,right] (4) to node[below] {} (5);
			\node[point,scale = 0.5,fill,label=above:{\small $T$},label=left:{$\qq{\uq}$}] (uuu1) at (-3,1.5) {};
			\node[point,scale = 0.5,fill,label=above:{\small $T$}] (uu1) at (-1.5,1.5) {};
			\node[point,scale = 0.7,label=above:{\small $T^{\I}$},label=below:{$\bnode{\ffirst}{\uq}$}] (u1) at (0,1.5) {};
			\node[point,scale = 0.5,fill,label=above:{\small $T$}] (u2) at (1.5,1.5) {};
			\node[point,scale = 0.7,label=above:{\small $F^{\I}$},label=below:\ {$\bnode{\fsec}{\uq}=\nttt{\mw}{i_\mw+1}{1}$}] (u3) at (3,1.5) {};
			\node[point,scale = 0.5,fill,label=above:{\small $T$}] (u4) at (4.5,1.5) {};
			\node[point,scale = 0.7,label=above:{\small $F^{\I}$},label=below:\ {$\nfff{\mw}{i_\mw+1}{1}$}] (u5) at (6,1.5) {};
			\node (u6) at (6.7,1.5) {};
			\node at (8,1.55) {$\dots\quad\mathcal{W}_\mw$};
			\node[point,scale = 0.5,fill,label=above:{$T$}] (s3) at (7,2) {};
			\node[point,scale = 0.5,fill,label=right:{$F$},label=above:{$\qqq{\mw}{i_\mw+1}$}] (s4) at (8,2.5) {};
			\draw[->,right] (uuu1) to node[below] {}  (uu1);
			\draw[->,right] (uu1) to node[below] {}  (u1);
			\draw[->,right] (u1) to node[below] {}  (u2);
			\draw[->,right] (u2) to node[below] {} (u3);
			\draw[->,right] (u3) to node[below] {} (u4);
			\draw[->,right] (u4) to node[below] {} (u5);
			\draw[-] (u5) to  (u6);
			\draw[->,right] (u5) to node[below] {} (s3);
			\draw[->,right] (s3) to node[below] {} (s4);
			\node at (.3,.1) {$h$};
			\draw[->,thin,dashed] (1) to  (0,.6);
			\draw[->,thin,dashed] (2) to  (1.5,1.3);
			\draw[->,thin,dashed] (3) to  (3,.7);
			\draw[->,thin,dashed] (4) to  (4.5,1.3);
			\draw[->,thin,dashed] (5) to  (6,.5);
			\end{scope}
			\end{tikzpicture}}\\
Note again that choosing $\cfff{\mw}{i_\mw+1}=\nfff{\mw}{i_\mw+1}{2}$ would not help.

$(iii)$ On the other hand, as shown in Lemma~\ref{l:bike} below, the contact choices of 
$\cttt{\pw}{i_\pw+1}=\nttt{\pw}{i_\pw+1}{1}$, $\cfff{\pw}{i_\pw+1}=\nfff{\pw}{i_\pw+1}{1}$, 
$\cttt{\mw}{i_\mw+1}=\nttt{\mw}{i_\mw+1}{1}$, and $\cfff{\mw}{i_\mw+1}=\nfff{\mw}{i_\mw+1}{1}$ 
are suitable for any of the following three $2$-CQs:\\
\centerline{
			\begin{tikzpicture}[>=latex,line width=1pt,rounded corners,scale=.7]
			\begin{scope}
			\node[point,scale = 0.7,label=above:{\small $T$},label=below:{$\tfirst$}] (1) at (0,0) {};
			\node[point,scale = 0.7,label=above:{\small $T$},label=below:{$\tsec$}] (m) at (1.5,0) {};
			\node[point,scale = 0.7,label=above:{\small $F$},label=below:{$\ffirst$}] (2) at (3,0) {};
			\node[point,scale = 0.7,label=above:{\small $F$},label=below:{$\fsec$}] (3) at (4.5,0) {};
			\draw[->,right] (1) to node[below] {}  (m);
			\draw[->,right] (m) to node[below] {} (2);
			\draw[->,right] (2) to node[below] {} (3);
			\end{scope}
			\end{tikzpicture}
			\hspace*{1.5cm}
			\begin{tikzpicture}[>=latex,line width=1pt,rounded corners,scale=.7]
			\begin{scope}
			\node[point,scale = 0.7,label=above:{\small $T$},label=below:{$\tfirst$}] (1) at (0,0) {};
			\node[point,scale = 0.7,label=above:{\small $F$},label=below:{$\ffirst$}] (m) at (1.5,0) {};
			\node[point,scale = 0.7,label=above:{\small $T$},label=below:{$\tsec$}] (2) at (3,0) {};
			\node[point,scale = 0.7,label=above:{\small $F$},label=below:{$\fsec$}] (3) at (4.5,0) {};
			\draw[->,right] (1) to node[below] {}  (m);
			\draw[->,right] (m) to node[below] {} (2);
			\draw[->,right] (2) to node[below] {} (3);
			\end{scope}
			\end{tikzpicture}
			\hspace*{1.5cm}
			\begin{tikzpicture}[>=latex,line width=1pt,rounded corners,scale=.7]
			\begin{scope}
			\node[point,scale = 0.7,label=above:{\small $T$},label=below:{$\tfirst$}] (1) at (0,0) {};
			\node[point,scale = 0.7,label=above:{\small $F$},label=below:{$\ffirst$}] (m) at (1.5,0) {};
			\node[point,scale = 0.7,label=above:{\small $F$},label=below:{$\fsec$}] (2) at (3,0) {};
			\node[point,scale = 0.7,label=above:{\small $T$},label=below:{$\tsec$}] (3) at (4.5,0) {};
			\draw[->,right] (1) to node[below] {}  (m);
			\draw[->,right] (m) to node[below] {} (2);
			\draw[->,right] (2) to node[below] {} (3);
			\end{scope}
			\end{tikzpicture}}
\end{texample}

There are other sources of inherent case distinctions. For example,
in light of  Remark~\ref{r:f1}, it would be tempting to try
(the copies of) $\ffirst$ as contacts throughout the \neighbourhoods{\uq} of $\mathcal{W}_\pw$ and $\mathcal{W}_\mw$. 
However, this is not always possible, as illustrated by  
the following examples.

\begin{texample}\label{e:bike2}
\em
$(i)$
Take any $2$-CQ $\q$ that contains only two $F$-nodes. Thus, we must choose $\cff{\pw}=\ffirst$. If we choose 
$\cfff{\pw}{i_\pw}=\nfff{\pw}{i_\pw}{1}$, then in case $(2)^F$ it is always possible
to start $h(\q)$ in $\qq{\uq}$, map $\ffirst$ to $\cfff{\uq}{\pw}=\bnode{\ffirst}{\pw}= \cfff{\pw}{i_\pw}=\nfff{\pw}{i_\pw}{1}$, 
and finish $h(\q)$ in the \final{} \legW{} of $\qqq{\pw}{i_\pw}$.

$(ii)$ Consider the $2$-CQ\\
\centerline{
\begin{tikzpicture}[>=latex,line width=1pt,rounded corners,scale=.7]
			\begin{scope}
			\node[point,scale = 0.7,label=above:{\small $T$},label=below:{$\tfirst$}] (1) at (0,0) {};
			\node[point,scale = 0.7,label=above:{\small $T$},label=below:{$\tsec$}] (m) at (1.5,0) {};
			\node[point,scale = 0.7,label=above:{\small $F$},label=below:{$\ffirst$}] (2) at (3,0) {};
			\node[point,scale = 0.7,label=above:{\small $F$},label=below:{$\fsec$}] (3) at (4.5,0) {};
			\draw[->,right] (1) to node[below] {}  (m);
			\draw[->,right] (m) to node[below] {} (2);
			\draw[->,right] (2) to node[below] {} (3);
			\end{scope}
			\end{tikzpicture}
}\\
Then we must choose $\cff{\pw}=\ffirst$ and $\cff{\mw}=\fsec$, 
and so $\cfff{\uq}{\pw}=\bnode{\ffirst}{\uq}$ and $\cfff{\uq}{\mw}=\bnode{\fsec}{\uq}$.
If we choose $\cfff{\mw}{i_\mw}=\nfff{\mw}{i_\mw}{1}$, and $\I$ is such that 
all contacts of $\mathcal{W}_\pw$ are in $T^\I$
and all contacts of $\mathcal{W}_\mw$ are in $F^\I$,
then we do have the following $h\colon\q\to\I$ homomorphism (see case $(4)^F$ in Fig.~\ref{f:bikecases}):\\	
\centerline{
			\begin{tikzpicture}[>=latex,line width=1pt,rounded corners,scale=.6]
			\begin{scope}
			\node[point,gray,scale = 0.7,label=left: {\textcolor{gray}{$\q$}},label=below:{\textcolor{gray}{\small $T$}}] (1) at (-1,-.5) {};
			\node[point,gray,scale = 0.7,label=below:{\textcolor{gray}{\small $T$}}] (2) at (1,-.5) {};
			\node[point,gray,scale = 0.7,label=below:{\textcolor{gray}{\small $F$}}] (3) at (3,-.5) {};
			\node[point,gray,scale = 0.7,label=below:{\textcolor{gray}{\small $F$}}] (4) at (5,-.5) {};
			\draw[->,gray,right] (1) to node[below] {}  (2);
			\draw[->,gray,right] (2) to node[below] {} (3);
			\draw[->,gray,right] (3) to node[below] {} (4);
			\node[point,scale = 0.5,fill,label=above:{\small $T$},label=left:{$\qq{\uq}$}] (uuu1) at (-3,1.5) {};
			\node[point,scale = 0.5,fill,label=above:{\small $T$}] (uu1) at (-1,1.5) {};
			\node[point,scale = 0.7,label=above:{\small $T^{\I}$},label=below:{$\bnode{\ffirst}{\uq}$}] (u1) at (1,1.5) {};
			\node[point,scale = 0.7,label=above:{\small $F^{\I}$},label=below:\ {$\bnode{\fsec}{\uq}=\nfff{\mw}{i_\mw}{1}$}] (u3) at (3,1.5) {};
			\node[point,scale = 0.5,fill,label=above:{\small $F$},label=below:{$\nfff{\mw}{i_\mw}{2}$},label=right:{\ $\qqq{\mw}{i_\mw}$}] (u5) at (5,1.5) {};
			\draw[->,right] (uuu1) to node[below] {}  (uu1);
			\draw[->,right] (uu1) to node[below] {}  (u1);
			\draw[->,right] (u1) to node[below] {}  (u3);
			\draw[->,right] (u3) to node[below] {} (u5);
			\node at (-1.3,.5) {$h$};
			\draw[->,thin,dashed] (1) to  (-1,1.3);
			\draw[->,thin,dashed] (2) to  (1,.3);
			\draw[->,thin,dashed] (3) to  (3,.5);
			\draw[->,thin,dashed] (4) to  (5,.3);
			\end{scope}
			\end{tikzpicture}}	
\end{texample}

There are also examples showing that, unlike the \connections{\uq}, the \connections{\dq} cannot be chosen 
uniformly for all possible $2$-CQs $\q$.
(The problems we face are not `symmetric counterparts' of those with the \connections{\uq} because of our overall assumption that $\tfirst\prec\ffirst$; see \eqref{tprecf}.)
We hope that the above examples convince the reader that,
even after fixing (\mh{1}) and (\mh{2}) (or any other heuristics), finding a \emph{general} solution that satisfies \eqref{tprecfcontact} and \eqref{okcontact} but excludes all cases 
in Fig.~\ref{f:bikecases} for \emph{any} $2$-CQ $\q$ is quite a challenge. 
The following lemma describes such a solution found along the lines of (\mh{1}) and (\mh{2}): 

\begin{tlemma}\label{l:bike}
Let $\mathcal{B}$ be an $n$-\bike{}, for some $n\geq 4|\q|+2$, built up from the $n$-\wheels{} $\mathcal{W}_\pw$ and $\mathcal{W}_\mw$, each satisfying \eqref{tprecfcontact} and \eqref{okcontact}. Suppose $\mathcal{B}$ is such 	
that the following hold for its \connections{\dq}\textup{:} 
\[
\ctt{\pw}=\tfirst,\qquad\ctt{\mw}=\left\{
 \begin{array}{ll}
   \tStar, & \mbox{if $\ffirst\prec\tlast$,\quad where $\tStar$ is the first $T$-node succeeding $\ffirst$,}\\[3pt]
   \tlast, & \mbox{if $\tlast\prec\ffirst$;}
 \end{array}
\right.
\]
the following hold for its \neighbourhood{\dq}\textup{:} 
\begin{align*}
& \cttt{\pw}{j_\pw}=\nttt{\pw}{j_\pw}{\Box},\quad\mbox{where $\tBox$ is the last $T$-node preceding $\ffirst$,}\qquad
\cfff{\pw}{j_\pw}=\left\{
		\begin{array}{ll}
		\nfff{\pw}{j_\pw}{1}, & \mbox{if $\ffirst\prec\tlast$,}\\[3pt]
		\nfff{\pw}{j_\pw}{2}, & \mbox{if $\tlast\prec\ffirst$,}
		\end{array}
		\right.\\
& \cttt{\mw}{j_\mw}=\nttt{\mw}{j_\mw}{1},\qquad			
\cfff{\mw}{j_\mw}=\left\{
		\begin{array}{ll}
		\nfff{\mw}{j_\mw}{2}, & \mbox{if $\tlast\prec\ffirst$ and $\delta(\tlbo,\tlast)=\delta(\tlast,\ffirst)$,}\\[3pt]
		\nfff{\mw}{j_\mw}{1}, & \mbox{otherwise},
		\end{array}
		\right.\\[3pt]
& \cttt{\ast}{k}=\nttt{\ast}{k}{1},\qquad
\cfff{\ast}{k}=\nfff{\ast}{k}{1},\quad\mbox{for $\ast=\pw,\mw$ and for any other $k$ with $j_\ast-|\q|\le k\le j_\ast+|\q|$\textup{;}}
\end{align*}
the following hold for its \connections{\uq}\textup{:} 
\[
\cff{\pw}=\ffirst,\qquad\cff{\mw}=\fsec\textup{;} 
\]
and the following hold for its \neighbourhood{\uq}, for $\ast=\pw,\mw$\textup{:}
\begin{align*}
& \cttt{\ast}{i_\ast}=\nttt{\ast}{i_\ast}{1},\qquad
\cfff{\ast}{i_\ast}=\nfff{\ast}{i_\ast}{2},\\
& \cttt{\ast}{i_\ast-k}=\nttt{\ast}{i_\ast-k}{1},\qquad
\cfff{\ast}{i_\ast-k}=\nfff{\ast}{i_\ast-k}{1},\quad\mbox{for $0<k\le |\q|$,}\\
& \cttt{\ast}{i_\ast+\ell}=\left\{
 \begin{array}{ll}
   \nttt{\ast}{i_\ast+\ell}{1}, & \mbox{if $\tlast\prec\ffirst$ and $\delta(\ffirst,\fsec)<\delta(\tfirst,\ffirst)$,}\\[3pt]
   \nttt{\ast}{i_\ast+\ell}{\Box}, & \mbox{otherwise,\quad where $\tBox$ is the last $T$-node preceding $\ffirst$,}
 \end{array}
\right.\\[3pt]
& \cfff{\ast}{i_\ast+\ell}=\left\{
 \begin{array}{ll}
   \nfff{\ast}{i_\ast+\ell}{2}, & \mbox{if $\tlast\prec\ffirst$ and $\delta(\ffirst,\fsec)\geq\delta(\tfirst,\ffirst)$},\\[3pt]
   \nfff{\ast}{i_\ast+\ell}{1}, & \mbox{otherwise,}
 \end{array}
\right.\\
& \mbox{for $1\le\ell\le |\q|$.}
\end{align*}
Then, for any model $\I$ of $\TT$ and $\mathcal{B}$, we have
\begin{align*}
\I \not\models \q\quad\mbox{iff}\quad
& \mbox{either all contacts of $\mathcal{W}_\pw$ are in $T^\I$ and all contacts of $\mathcal{W}_\mw$ are in $F^\I$}\\
& \mbox{or all contacts of $\mathcal{W}_\pw$ are in $F^\I$ and all contacts of $\mathcal{W}_\mw$ are in $T^\I$.}
\end{align*}
	\end{tlemma}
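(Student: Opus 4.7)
The plan is the following. The direction $(\Rightarrow)$ is routine, as already hinted at in the discussion preceding the lemma: assuming $\I\not\models\q$, Lemma~\ref{l:wheel} applied to each wheel in isolation forces all contacts of $\mathcal{W}_\pw$ to have one colour and all contacts of $\mathcal{W}_\mw$ to have one colour; then the observation~\eqref{nofixhpm}---which only uses that $\cff{\pw}\prec\cff{\mw}$ are $F$-nodes of $\q$ and $\ctt{\pw}\prec\ctt{\mw}$ are $T$-nodes of $\q$---prevents the two wheels from receiving the \emph{same} uniform colour.

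The substance of the lemma is the $(\Leftarrow)$ direction. I will argue by contradiction: fix a model $\I$ of $\TT$ and $\mathcal{B}$ satisfying \eqref{pmwsame} and suppose there is a \shomo{} $h\colon\q\to\I$; by symmetry, assume every contact of $\mathcal{W}_\pw$ lies in $T^\I$ and every contact of $\mathcal{W}_\mw$ lies in $F^\I$. Three facts localise $h(\q)$: $(a)$ Lemma~\ref{l:wheel} forbids $h(\q)$ to lie inside a single wheel; $(b)$ property \eqref{nofixhpm} forbids $h(\q)$ to cover $\qq{\uq}$ or $\qq{\dq}$; and $(c)$ the cog-distance between $\uq$- and $\dq$-connections being $>2|\q|$ (together with $n>4|\q|+2$) forbids $h(\q)$ from meeting both $\qq{\uq}$ and $\qq{\dq}$. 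So $h(\q)$ properly intersects exactly one wheel and exactly one of $\qq{\uq}$, $\qq{\dq}$, and one of the eight configurations in Fig.~\ref{f:bikecases} must hold; it suffices to refute each.

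The uniform device for refuting each case is to pin down $h(\ffirst)$ (in the $\uq$-cases) or $h(\ctt{\pw})$ (in the $\dq$-cases) and then propagate the map via the shift property~\eqref{hshift} until it forces some $T$-node or $F$-node of $\q$ onto a node of the wrong colour. In the $\uq$-cases, since $\cff{\pw}=\ffirst$ and $\cff{\mw}=\fsec$, the $F$-coloured candidates for $h(\ffirst)$ inside $\qq{\uq}$ or the $\uq$-neighbourhood are very restricted; the exact contact-superscripts $\cfff{\ast}{i_\ast}=\nfff{\ast}{i_\ast}{2}$ and $\cfff{\ast}{i_\ast-k}=\nfff{\ast}{i_\ast-k}{1}$, and the conditional choice of $\cttt{\ast}{i_\ast+\ell}$ and $\cfff{\ast}{i_\ast+\ell}$ based on whether $\tlast\prec\ffirst$ and on the comparison $\delta(\ffirst,\fsec)$ versus $\delta(\tfirst,\ffirst)$, are engineered so that in every one of cases $(1)^\uq$--$(4)^\uq$ the required image of $\tfirst$ or $\fsec$ falls on an $A$-node of the wrong colour. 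The $\dq$-cases are treated dually using $\ctt{\pw}=\tfirst$ and the choice of $\ctt{\mw}$ (conditional on $\ffirst\prec\tlast$), together with the superscripts specifying $\cttt{\pw}{j_\pw}$, $\cfff{\pw}{j_\pw}$, $\cttt{\mw}{j_\mw}$ and $\cfff{\mw}{j_\mw}$; here the case-split on $\delta(\tlbo,\tlast)$ versus $\delta(\tlast,\ffirst)$ absorbs the geometric obstruction that would otherwise allow $h(\q)$ to fit snugly between the $\dq$-connections and the neighbouring cogs.

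The main obstacle is simply the combinatorial bookkeeping of all eight cases together with their conditional branches; the most delicate are $(2)^\uq$, $(4)^\uq$, $(1)^\dq$ and $(4)^\dq$, where $h(\q)$ crosses a connection and then continues into the cog or final leg of a wheel copy, so the length $|\q|$ interacts with the internal distances $\delta(\tfirst,\ffirst)$, $\delta(\ffirst,\fsec)$, $\delta(\tlast,\en)$ and with the circumferential distance between consecutive contacts. Examples~\ref{e:bike} and~\ref{e:bike2} confirm that each conditional branch in the lemma's definition of contacts is indispensable: dropping any of the superscript distinctions reinstates precisely one of the unwanted parasite homomorphisms displayed in those examples.
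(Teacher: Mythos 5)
Your outline matches the paper's strategy at the level of plan: the $(\Rightarrow)$ direction is indeed immediate from the $(\Rightarrow)$ direction of Lemma~\ref{l:wheel}, and the $(\Leftarrow)$ direction indeed reduces, via Lemma~\ref{l:wheel}, \eqref{nofixhpm} and the distance hypotheses, to the eight configurations of Fig.~\ref{f:bikecases}, each of which must then be ruled out by tracking $h(\ffirst)$ and propagating via the shift property~\eqref{hshift}. But the proposal stops exactly where the proof begins. The sentence ``the exact contact-superscripts \dots are engineered so that in every one of cases $(1)^\uq$--$(4)^\uq$ the required image of $\tfirst$ or $\fsec$ falls on an $A$-node of the wrong colour'' is precisely the claim that constitutes Lemma~\ref{l:bike}, stated for an arbitrary twinless path $2$-CQ $\q$; asserting it is not proving it. The paper devotes the entire remainder of Section~\ref{s:bikes} to a case-by-case verification (four $\dq$-cases and four $\uq$-cases, many with further sub-cases governed by $\ffirst\prec\tlast$ vs.\ $\tlast\prec\ffirst$, and by comparisons of $\delta(\tlbo,\tlast)$ with $\delta(\tlast,\ffirst)$ and of $\delta(\ffirst,\fsec)$ with $\delta(\tfirst,\ffirst)$), and there is no shortcut: Examples~\ref{e:bike} and~\ref{e:bike2} show that omitting any branch reinstates a parasite homomorphism, so each branch must actually be checked. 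That checking is missing.

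There is also a concrete flaw in the setup: you propose to argue ``by symmetry'' that one may assume every contact of $\mathcal{W}_\pw$ is in $T^\I$ and every contact of $\mathcal{W}_\mw$ is in $F^\I$. The bike $\mathcal{B}$ is \emph{not} symmetric under swapping $\pw$ and $\mw$---look at the contact specifications: $\cttt{\pw}{j_\pw}=\nttt{\pw}{j_\pw}{\Box}$ whereas $\cttt{\mw}{j_\mw}=\nttt{\mw}{j_\mw}{1}$, and $\ctt{\pw}=\tfirst$ whereas $\ctt{\mw}\in\{\tStar,\tlast\}$. Consequently the two orientations of \eqref{pmwsame} are genuinely different cases. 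The paper avoids this pitfall: it never fixes the orientation a priori, but instead locates $h(\ffirst)$ first, from which the relevant orientation of \eqref{pmwsame} is \emph{derived} (e.g.\ in case $(2)^\uq$, once $h(\ffirst)=\cfff{\uq}{\pw}$ is forced, one concludes ``all contacts of $\mathcal{W}_\pw$ are in $F^{\I}$ by \eqref{pmwsame}''). Your proposed symmetry reduction would need a separate justification that it does not exist here.

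Finally, a smaller point: the paper tracks $h(\ffirst)$ uniformly across all eight cases, in line with heuristic~(\mh{2}); it does not switch to tracking $h(\ctt{\pw})$ in the $\dq$-cases as you suggest (though it may additionally track $h(\tfirst)$, $h(\tlast)$, or $h(\tlbo)$ once $h(\ffirst)$ has been localised). This is a minor stylistic deviation but it signals that the actual mechanics of the $\dq$-cases have not been worked through.
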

It is straightforward to check  that
$n$-\bikes{} $\mathcal{B}$ satisfying the conditions of the lemma always exist:
The $\uq$- and \neighbourhoods{\dq} of $\mathcal{B}$ can be kept disjoint by taking	 $>2|\q|$
\cdist{} between the $\uq$- and \connections{\dq} of each of the \wheels{} in $\mathcal{B}$ and, by choosing, say, 
$\cttt{\ast}{k}=\nttt{\ast}{k}{1}$ and $\cfff{\ast}{k}=\nfff{\ast}{k}{2}$ 
for all other $k$ and $\ast=\pw,\mw$,  conditions \eqref{tprecfcontact} and \eqref{okcontact} hold in both \wheels.
%
\begin{proof}
		The implication $(\Rightarrow)$ of Lemma~\ref{l:bike} clearly holds for any $n$-\bike{} $\mathcal{B}$ 
		by the $(\Rightarrow)$ direction of Lemma~\ref{l:wheel}.
		To show $(\Leftarrow)$, suppose $\mathcal{B}$ is as above, and $\I$ is a model of $\TT$ and $\mathcal{B}$ such that \eqref{pmwsame} holds.
The proof is via excluding all possible locations in $\mathcal{B}$ for the image $h(\q)$ of a
		potential \shomo{} $h\colon\q\to\I$.
As we discussed above, we have the eight cases in Fig.~\ref{f:bikecases}.
In line with (\mh{2}), in each of these eight cases, we track the location of $h(\ffirst)$, and exclude all options for it.	
Throughout, our arguments will use the $h$-shift property in \eqref{hshift} without explicit reference.
		
First, we deal with the cases when $h(\q)\cap\qq{\dq}\ne\emptyset$:	
\begin{itemize}
\item[(1)$^\dq$]
$h(\q)$ starts in $\mathcal{W}_\pw$ and $h(\ctt{\pw})\prec_{h(\q)}\cttt{\dq}{\pw}$.\\
\begin{tikzpicture}[>=latex,line width=1pt,rounded corners,xscale=.9,yscale=.8]
	                 \draw (10,-0.55) circle [radius=.55];
	                 \draw (12,-0.55) circle [radius=.55];
			\node (1) at (8,0) {};
			\node[point,scale = 0.7,fill = white,label=above:{$\cttt{\dq}{\pw}$}] (m) at (10,0) {};
			\node[point,scale = 0.7,fill =  white,label=above:{$\cttt{\dq}{\mw}$}] (2) at (12,0) {};
			\node[label=above:{$\qq{\dq}$}] (4) at (14,0) {};
			\draw[-,right] (1) to node[below] {}  (m);
			\draw[-,right] (m) to node[below] {} (2);
			\draw[->,right] (2) to node[below] {} (4);
			\node[point, draw = white] (01) at (10,-0.55) {$\mathcal{W}_\pw$};
			\node[point, draw = white] (02) at (12,-0.55) {$\mathcal{W}_\mw$};
			\draw[->,line width=.6mm,gray] (9.35,-.5) -- (9.45,-.15) -- (9.6,0) -- (10,.15) -- (11,.15);
			\node at (11,.5) {\textcolor{gray}{$h(\q)$}};
			\node at (11.5,.15) {\textcolor{gray}{$\dots$}};
			\end{tikzpicture}\\
Then $h(\q)$ definitely properly intersects the \neighbourhood{\dq} of $\mathcal{W}_\pw$, and it might also 
properly intersect the \neighbourhood{\dq} of $\mathcal{W}_\mw$. 
It follows from $h(\ctt{\pw})\prec_{h(\q)}\cttt{\dq}{\pw}$ that $h(\ffirst)$ is in  $\qq{\dq}$ then $h(\ffirst)\prec_{\qq{\dq}}\bnode{\ffirst}{\dq}$.
As $\cttt{\pw}{j_\pw+1}=\nttt{\pw}{j_\pw+1}{1}$ and $\cttt{\pw}{j_\pw-k}\prec_{\qqq{\pw}{j_\pw-k}}\nfff{\pw}{j_\pw-k}{1}$ for all $k<|\q|$, 
$h(\ffirst)$ cannot be in the \initial{} \legW{} of neither $\qqq{\pw}{j_\pw+1}$ nor $\qqq{\pw}{j_\pw-k}$ for any $k\le|\q|$, otherwise there is not enough room for $h(\q)$ in that \legW.
As $\cttt{\pw}{j_\pw-k}=\nttt{\pw}{j_\pw-k}{1}$ and $\cfff{\pw}{j_\pw-k}=\nfff{\pw}{j_\pw-k}{1}$ for all $k$ with $0<k\le |\q|$, $h(\ffirst)$ cannot be a contact of 
$\mathcal{W}_\pw$ different from $\cttt{\dq}{\pw}$, otherwise $h(\tfirst)$ is also a contact of $\mathcal{W}_\pw$, contradicting \eqref{pmwsame}. For the remaining options,
we consider the two cases $\ffirst\prec\tlast$ and $\tlast\prec\ffirst$:

If $\ffirst\prec\tlast$ then $\ctt{\pw}=\tfirst$ and $\ffirst\prec\ctt{\mw}$, and so $\bnode{\ffirst}{\dq}\prec_{\qq{\dq}}\cttt{\dq}{\mw}$.
As there is no $F$-node preceding $\bnode{\ffirst}{\dq}$ in $\qq{\dq}$, $h(\ffirst)$ is in $\mathcal{W}_\pw$. 
As $\cttt{\pw}{j_\pw}=\nttt{\pw}{j_\pw}{\Box}$ and $\cfff{\pw}{j_\pw}=\nfff{\pw}{j_\pw}{1}$, $h(\ffirst)$ cannot be the contact $\cttt{\dq}{\mw}=\nfff{\mw}{j_\mw}{1}$, 
otherwise $h(\tBox)$ is also a contact of $\mathcal{W}_\pw$, contradicting \eqref{pmwsame}.
As there is no $F$-node preceding $\ffirst$ in $\q$, there are no other options for $h(\ffirst)$ in $\mathcal{W}_\pw$.\\
\centerline{			
	\begin{tikzpicture}[>=latex,line width=.75pt, rounded corners,xscale=.3,yscale=.3]
		\node[label=left:{$\qqq{\pw}{j_\pw}$}\!\!\!\!\!] (s3) at (-1,2) {};
		\node[label=left:{$\qqq{\pw}{j_\pw+1}$}\!\!\!\!\!] (s1) at (7,6) {};
		\node[] (e1) at (26,6) {};
		\node[point,scale=0.7,label=below:{$\nttt{\pw}{j_\pw}{\Box}\!\!\!$}] (l3) at (2,2) {};
		\node[point,scale=0.7,label=below right:{$\!\!\!\!\!\!\nfff{\pw}{j_\pw}{1}=\nttt{\pw}{j_\pw+1}{1}$},label=above:{\ $\cttt{\dq}{\pw}=\bnode{\tfirst}{\dq}$}] (m) at (10,6) {};
		\node[point,scale=0.7,label=above:{$\cttt{\dq}{\mw}$}] (m2) at (22,6) {};
		\node at (22,1.4) {$\mathcal{W}_\mw$};
		\draw[-,very thin, bend left=35] (m2) to (25,-1);
		\draw[-,very thin, bend right=35] (m2) to (19,-1);
		\draw[->] (s3) to  (l3);
		\draw[->] (s1) to  (m);
		\draw[->] (m2) to  (e1);
		\draw[-] (m) to  (m2);
		\draw[-] (l3) to  (m);
		\node at (15,7) {$\qq{\dq}$};	
		\draw[-] (19.5,5.8) to (19.5,6.2);
		\node at (19.5,7) {$\bnode{\ffirst}{\dq}$};	
		\node (q1) at (3.5,-2) {};
		\node[point,gray,scale=0.7,label=below:{\textcolor{gray}{\ \ $\tfirst\preceq$}}] (q2) at (6.5,-2) {};
		\node[point,gray,scale=0.7,label=below:{\textcolor{gray}{$\tBox$}}] (q3) at (8.5,-2) {};
		\node[point,gray,scale=0.7,label=below:{\textcolor{gray}{$\ffirst$}}] (q4) at (17,-2) {};
		\node[label=right:{\textcolor{gray}{$\q$}}]  (q5) at (22.5,-2) {};
		\draw[-,gray] (q1) to  (q2);
		\draw[-,gray] (q2) to  (q3);
		\draw[-,gray] (q3) to  (q4);
		\draw[->,gray] (q4) to  (q5);	
		\draw[-,thin] (3.8,-4) to (16.8,-4);
		\draw[-,thin] (3.8,-4) to (3.8,-3.8);
		\draw[-,thin] (16.8,-4) to (16.8,-3.8);
		\node at (10,-4.5) {{\scriptsize no $F$}};
\end{tikzpicture}	
		}

If $\tlast\prec\ffirst$ then $\ctt{\pw}=\tfirst$ and $\ctt{\mw}=\tlast=\tBox$, and so $\cttt{\dq}{\mw}=\bnode{\tlast}{\dq}\prec_{\qq{\dq}}\bnode{\ffirst}{\dq}$. As there is no $F$-node preceding $\bnode{\ffirst}{\dq}$ in $\qq{\dq}$,
$h(\ffirst)$ is either in $\mathcal{W}_\pw$ or in $\mathcal{W}_\mw$.\\
\centerline{	
	\begin{tikzpicture}[>=latex,line width=.75pt, rounded corners,scale=.3]
		\node[label=left:{$\qqq{\pw}{j_\pw}$}\!\!\!\!\!] (s3) at (-6,2) {};
		\node[label=left:{$\qqq{\pw}{j_\pw+1}$}\!\!\!\!\!] (s1) at (7,6) {};
		\node[label=above:{$\qq{\dq}$}] (e1) at (27.5,6) {};
		\node[point,scale=0.7,label=below:{$\nttt{\pw}{j_\pw}{\Box}=\nttt{\pw}{j_\pw}{\last}$}] (l3) at (2,2) {};
		\node[point,scale=0.7,label=below:{\ \ $\nfff{\pw}{j_\pw}{2}$},label=above:{\ $\cttt{\dq}{\pw}=\bnode{\tfirst}{\dq}$}] (m) at (10,6) {};
		\node[point,scale=0.7,label=above:{\ \ \ $\cttt{\dq}{\mw}=\bnode{\tlast}{\dq}$}] (m2) at (15,6) {};
		\node at (15,1.4) {$\mathcal{W}_\mw$};
		\draw[-,very thin, bend left=35] (m2) to (18,-1);
		\draw[-,very thin, bend right=35] (m2) to (12,-1);
		\draw[->] (s3) to  (l3);
		\draw[->] (s1) to  (m);
		\draw[->] (m2) to  (e1);
		\draw[-] (m) to  (m2);
		\draw[-] (l3) to  (m);
		\draw[-] (21.5,5.8) to (21.5,6.2);
		\node at (21.5,7) {$\bnode{\ffirst}{\dq}$};	
		\draw[-] (23.5,5.8) to (23.5,6.2);
		\node at (23.5,7) {$\bnode{\fsec}{\dq}$};
		\draw[-] (8,4.8) to (8,5.2);
		\node at (8,3.9) {$\nfff{\pw}{j_\pw}{1}$};
		\node (q1) at (0,-2) {};
		\node[point,gray,scale=0.7,label=below:{\textcolor{gray}{\ \ $\tBox=\tlast$}}] (q3) at (8,-2) {};
		\node[point,gray,scale=0.7,label=below:{\textcolor{gray}{$\ffirst$}}] (q4) at (14.5,-2) {};
		\node[point,gray,scale=0.7,label=below:{\textcolor{gray}{$\fsec$}}] (qq4) at (16.5,-2) {};
		\node[label=right:{\textcolor{gray}{$\q$}}]  (q5) at (20.5,-2) {};
		\draw[-,gray] (q1) to  (q3);
		\draw[-,gray] (q3) to  (q4);
		\draw[-,gray] (q4) to  (qq4);
		\draw[->,gray] (qq4) to  (q5);	
		\draw[-,thin] (.3,-4) to (14.3,-4);
		\draw[-,thin] (.3,-4) to (.3,-3.8);
		\draw[-,thin] (14.3,-4) to (14.3,-3.8);
		\node at (7,-4.5) {{\scriptsize no $F$}};
\end{tikzpicture}}	
		
%
\begin{itemize}
\item
First, we exclude the remaining options in $\mathcal{W}_\pw$.
As $\cttt{\pw}{j_\pw}=\nttt{\pw}{j_\pw}{\Box}$ and $\cfff{\pw}{j_\pw}=\nfff{\pw}{j_\pw}{2}$, we cannot have $h(\ffirst)=\nfff{\pw}{j_\pw}{1}$, otherwise both $h(\tBox)$ and $h(\fsec)$ are contacts of $\mathcal{W}_\pw$, contradicting \eqref{pmwsame}.
And if $h(\ffirst)$ is the contact $\cttt{\dq}{\pw}=\nfff{\pw}{j_\pw}{2}$, then we track the location of $h(\tlast)$.
As $h(\q)$ starts in $\mathcal{W}_\pw$ and $h(\tlast)\prec_{h(\q)}h(\ffirst)$, $h(\tlast)$ is in $\mathcal{W}_\pw$.
As $\cttt{\pw}{j_\pw+1}=\nttt{\pw}{j_\pw+1}{1}$ and $\tfirst\prec\tlast$, $h(\tlast)$ cannot be in the \initial{} \legW{} of 
$\qqq{\pw}{j_\pw+1}$, otherwise there is not enough room for $h(\q)$ in that \legW. Thus, we have
%
\[
\delta_{h(\q)}\bigl(h(\tlast),\nfff{\pw}{j_\pw}{2}\bigr)=
\delta_{h(\q)}\bigl(h(\tlast),h(\ffirst)\bigr)=
\delta(\tlast,\ffirst)<\delta(\tlast,\fsec)=
\delta_{\qqq{\pw}{j_\pw}}\bigl(\nttt{\pw}{j_\pw}{\last},\nfff{\pw}{j_\pw}{2}\bigr),
\]
and so $h(\tlast)$ is a node between $\nttt{\pw}{j_\pw}{\last}=\nttt{\pw}{j_\pw}{\tBox}$ and $\nfff{\pw}{j_\pw}{2}$. But there is no such $T$-node in $\qqq{\pw}{j_\pw}$.
As the only $F$-node preceding $\fsec$ in $\q$ is $\ffirst$, there are no other options for $h(\ffirst)$ in $\mathcal{W}_\pw$.\\
\centerline{
 \begin{tikzpicture}[>=latex,line width=.75pt, rounded corners,scale=.3]
			\node (1) at (-4,2) {};
			\node[point,scale=0.7,label=above:{$\nttt{\pw}{j_\pw}{\last}$}] (2) at (4,2) {};
			\node[point,scale=0.7,label=above:\ \ {$\nfff{\pw}{j_\pw}{2}$}] (3) at (13,2) {};
			\node [label=above:{$\qqq{\pw}{j_\pw}$}] (4) at (17,2) {};
			\node (q1) at (-2.5,-1) {};
			\node[point,gray,scale=0.7,label=below:{\textcolor{gray}{$\tBox=\tlast$}}] (q3) at (5.5,-1) {};
			\node[point,gray,scale=0.7,label=below:{\textcolor{gray}{$\ffirst$}}] (q4) at (13,-1) {};
			\node[point,gray,scale=0.7,label=below:{\textcolor{gray}{$\fsec$}}] (qq4) at (15,-1) {};
			\node[label=right:{\textcolor{gray}{$\q$}}]  (q5) at (19,-1) {};
			\draw[-] (1) to  (2);
			\draw[-] (2) to  (3);
			\draw[->] (3) to  (4);
			\draw[-] (11,1.8) to (11,2.2);
			\node at (10.7,3.1) {$\nfff{\pw}{j_\pw}{1}$};
			\draw[-,gray] (q1) to  (q3);
			\draw[-,gray] (q3) to  (q4);
			\draw[-,gray] (q4) to  (qq4);
			\draw[->,gray] (qq4) to  (q5);
			\node at (5,.6) {$h$};
			\draw[->,thin,dashed] (5.5,-.4) to  (5.5,1.7);
			\draw[->,thin,dashed] (13,-.4) to  (13,1.7);
			\draw[-,thin] (5.5,-3) to (18.8,-3);
			\draw[-,thin] (5.5,-3) to (5.5,-2.8);
			\draw[-,thin] (18.8,-3) to (18.8,-2.8);
			\node at (12,-3.4) {{\scriptsize no $T$}};
			\end{tikzpicture}
			}

\item
If $h(\ffirst)$ is in $\mathcal{W}_\mw$ then $\cttt{\dq}{\mw}=\bnode{\tlast}{\dq}\preceq_{h(\q)}h(\ffirst)$.
We track the location of $h(\tlast)$.
As $h(\ctt{\pw})\prec_{h(\q)}\cttt{\dq}{\pw}$ by our assumption, we have $h(\tlast)=h(\ctt{\mw})\prec_{h(\q)}\cttt{\dq}{\mw}$,
and so either $h(\tlast)$ is in $\mathcal{W}_\pw$ and $h(\tlast)\prec_{h(\q)}\cttt{\dq}{\pw}$, or $h(\tlast)$ is in $\qq{\dq}$.
In the former case,
as $\cttt{\pw}{j_\pw+1}=\nttt{\pw}{j_\pw+1}{1}$ and $\tfirst\prec\tlast$, $h(\tlast)$ cannot be in the \initial{} \legW{} of 
$\qqq{\pw}{j_\pw+1}$, otherwise there is not enough room for $h(\q)$ in that \legW. Thus, we have
\begin{multline*}
\delta_{h(\q)}\bigl(h(\tlast),\nfff{\pw}{j_\pw}{2}\bigr)=
\delta_{h(\q)}\bigl(h(\tlast),\cttt{\dq}{\pw}\bigr)<
\delta_{h(\q)}\bigl(h(\tlast),h(\ffirst)\bigr)=\\
\delta(\tlast,\ffirst)<\delta(\tlast,\fsec)=
\delta_{\qqq{\pw}{j_\pw}}\bigl(\nttt{\pw}{j_\pw}{\last},\nfff{\pw}{j_\pw}{2}\bigr),
\end{multline*}
and so $h(\tlast)$ is a node between $\nttt{\pw}{j_\pw}{\last}=\nttt{\pw}{j_\pw}{\tBox}$ and $\nfff{\pw}{j_\pw}{2}$. But there is no such $T$-node in $\qqq{\pw}{j_\pw}$.\\
\centerline{
 \begin{tikzpicture}[>=latex,line width=.75pt, rounded corners,scale=.3]
			\node [label=above:{$\ \ \qqq{\pw}{j_\pw}$}] (1) at (-4,2) {};
			\node[point,scale=0.7,label=above:{$\nttt{\pw}{j_\pw}{\last}$}] (2) at (2,2) {};
			\node[point,scale=0.7,label=below:{$\nfff{\pw}{j_\pw}{2}$},label=above:\ \ {$\cttt{\dq}{\pw}$}] (3) at (13,2) {};
			\node at (15.3,3) {$\qq{\dq}$};
			\node [point,scale=0.7,label=above:\ \ {$\cttt{\dq}{\mw}$}] (3b) at (17,2) {};
			\node [label=right:{$\ldots\ \mathcal{W}_\mw$}] (4) at (20,2) {};
			\node (q1) at (-.5,-1) {};
			\node[point,gray,scale=0.7,label=below:{\textcolor{gray}{$\tBox=\tlast$}}] (q3) at (5.5,-1) {};
			\node[point,gray,scale=0.7,label=below:{\textcolor{gray}{$\ffirst$}}] (q4) at (17.5,-1) {};
			\node[point,gray,scale=0.7,label=below:{\textcolor{gray}{$\fsec$}}] (qq4) at (19.5,-1) {};
			\node[label=right:{\textcolor{gray}{$\q$}}]  (q5) at (23.5,-1) {};
			\draw[-] (1) to  (2);
			\draw[-] (2) to  (3);
			\draw[-] (3) to  (3b);
			\draw[->] (3b) to  (4);
			\draw[-] (11,1.8) to (11,2.2);
			\node at (10.7,3.1) {$\nfff{\pw}{j_\pw}{1}$};
			\draw[-,gray] (q1) to  (q3);
			\draw[-,gray] (q3) to  (q4);
			\draw[-,gray] (q4) to  (qq4);
			\draw[->,gray] (qq4) to  (q5);
			\node at (5,.6) {$h$};
			\draw[->,thin,dashed] (q3) to  (5.5,1.7);
			\draw[->,thin,dashed] (q4) to  (17.5,1.7);
			\draw[-,thin] (5.7,-3) to (19.3,-3);
			\draw[-,thin] (5.7,-3) to (5.7,-2.8);
			\draw[-,thin] (19.3,-3) to (19.3,-2.8);
			\node at (12,-3.4) {{\scriptsize no $T$}};
			\end{tikzpicture}
			}			

So suppose that $h(\ctt{\mw})=h(\tlast)$ is in $\qq{\dq}$. Now
we track the location of $h(\ctt{\pw})=h(\tfirst)$ in $\mathcal{W}_\pw$.
As $\cttt{\pw}{j_\pw+1}=\nttt{\pw}{j_\pw+1}{1}$, $h(\tfirst)$ cannot be in the \initial{} \legW{} of 
$\qqq{\pw}{j_\pw+1}$, otherwise there is not enough room for $h(\q)$ in that \legW.
Thus, we have
\begin{multline*}
\delta_{h(\q)}\bigl(h(\tfirst),\nfff{\pw}{j_\pw}{2}\bigr)=
\delta_{h(\q)}\bigl(h(\ctt{\pw}),\cttt{\dq}{\pw}\bigr)=
\delta_{\qq{\dq}}\bigl(h(\ctt{\mw}),\cttt{\dq}{\mw}\bigr)=
\delta_{\qq{\dq}}\bigl(h(\tlast),\bnode{\tlast}{\dq}\bigr)\leq\\
\delta_{h(\q)}\bigl(h(\tlast),h(\ffirst)\bigr)=
\delta(\tlast,\ffirst)\prec\delta(\tlast,\fsec)=
\delta_{\qqq{\pw}{j_\pw}}\bigl(\nttt{\pw}{j_\pw}{\last},\nfff{\pw}{j_\pw}{2}\bigr),
\end{multline*}
and so $h(\tfirst)$ is is a node between $\nttt{\pw}{j_\pw}{\last}$ and $\nfff{\pw}{j_\pw}{2}$.
But there is no such $T$-node in $\qqq{\pw}{j_\pw}$.\\
\centerline{
 \begin{tikzpicture}[>=latex,line width=.75pt, rounded corners,scale=.35]
			\node (1) at (6,4) {};
			\node[point,scale=0.7,label=above:\ \ \ {$\bnode{\tfirst}{\dq}$},label=below:\ \ {$\nfff{\pw}{j_\pw}{2}$}] (3) at (13,4) {};
			\node[point,scale=0.7,label=above:\ \ {$\bnode{\tlast}{\dq}$}] (4) at (19,4) {};
			\node (6) at (24,4) {};
			\draw[-] (1) to  (3);
			\draw[-] (3) to  (4);
			\draw[-] (4) to  (6);
			\draw[-] (10,3.8) to (10,4.2);
			\node at (9.8,5) {$\nttt{\pw}{j_\pw}{\last}$};
			\draw[-] (12,3.8) to (12,4.2);
			\node at (11.8,5) {$\nfff{\pw}{j_\pw}{1}$};
			\node at (15.5,5.4) {$\qq{\dq}$};
			\node at (5,4) {$\dots$};
			\node at (3,4) {$\mathcal{W}_\pw$};
			\node at (25,4) {$\dots$};
			\node at (27,4) {$\mathcal{W}_\mw$};
			\node (q1) at (9,.5) {};
			\node[point,gray,scale=0.7,label=below:{\textcolor{gray}{$\tfirst$}}] (q2) at (11,.5) {};
			\node[point,gray,scale=0.7,label=below:{\textcolor{gray}{$\tlast$}}] (q3) at (17,.5) {};
			\node[point,gray,scale=0.7,label=below:{\textcolor{gray}{$\ffirst$}}] (q4) at (19,.5) {};
			\node[point,gray,scale=0.7,label=below:{\textcolor{gray}{$\fsec$}}] (qq4) at (21,.5) {};
			\node[label=right:{\textcolor{gray}{$\q$}}]  (q5) at (25,.5) {};
			\draw[-,gray] (q1) to  (q2);
			\draw[-,gray] (q2) to  (q3);
			\draw[-,gray] (q3) to  (q4);
			\draw[-,gray] (q4) to  (qq4);
			\draw[->,gray] (qq4) to  (q5);
			\draw[-,thin] (17,-1.2) to (24.7,-1.2);
			\draw[-,thin] (17,-1.2) to (17,-1);
			\draw[-,thin] (24.7,-1.2) to (24.7,-1);
			\node at (21,-1.6) {{\scriptsize no $T$}};		
			\draw[-,thin] (9.3,-1.8) to (19,-1.8);	
			\draw[-,thin] (9.3,-1.8) to (9.3,-1.6);	
			\draw[-,thin] (19,-1.8) to (19,-1.6);	
			\node at (14,-2.2) {{\scriptsize no $F$}};		
			\draw[->,thin,dashed] (q2) to  (11,3.7);
			\draw[->,thin,dashed] (q4) to  (19.5,3.7);
			\node at (10.4,2) {$h$};			
			\end{tikzpicture}}
\end{itemize}
%


\item[(2)$^\dq$]
$h(\q)$ starts in $\qq{\dq}$ and ends in $\mathcal{W}_\pw$.\\
	\begin{tikzpicture}[>=latex,line width=1pt,rounded corners,xscale=.9,yscale=.8]
	                 \draw (10,-0.55) circle [radius=.55];
	                 \draw (12,-0.55) circle [radius=.55];
			\node (1) at (8,0) {};
			\node[point,scale = 0.7,fill = white,label=above:{$\cttt{\dq}{\pw}$}] (m) at (10,0) {};
			\node[point,scale = 0.7,fill =  white,label=above:{$\cttt{\dq}{\mw}$}] (2) at (12,0) {};
			\node[label=above:{$\qq{\dq}$}] (4) at (14,0) {};
			\draw[-,right] (1) to node[below] {}  (m);
			\draw[-,right] (m) to node[below] {} (2);
			\draw[->,right] (2) to node[below] {} (4);
			\node[point, draw = white] (01) at (10,-0.55) {$\mathcal{W}_\pw$};
			\node[point, draw = white] (02) at (12,-0.55) {$\mathcal{W}_\mw$};
			\draw[->,line width=.6mm,gray] (9.2,.15) -- (10,.15) -- (10.4,0) -- (10.55,-.15) -- (10.7,-.5);
			\node at (9,.5) {\textcolor{gray}{$h(\q)$}};
			\end{tikzpicture}\\
Then $h(\q)$ properly intersects the \neighbourhood{\dq} of $\mathcal{W}_\pw$ only.			
As $\ctt{\pw}=\tfirst$ and $\tfirst\prec\ffirst$ by \eqref{tprecf}, $h(\ffirst)$ is in $\mathcal{W}_\pw$ and $\cttt{\dq}{\pw}=\bnode{\tfirst}{\dq}\preceq_{h(\q)}h(\ffirst)$,
otherwise there is not enough room for $h(\q)$ in $\qq{\dq}$. Now we track the location of $h(\tfirst)$.
Again, $h(\tfirst)$ is in $\mathcal{W}_\pw$ and $\cttt{\dq}{\pw}=\bnode{\tfirst}{\dq}\preceq_{h(\q)}h(\tfirst)$ and 
otherwise there is not enough room for $h(\q)$ in $\qq{\dq}$.
As the part of $\q$ preceding $\tfirst$ is empty (containing no $F$- or $T$-nodes), if $h\colon\q\to\I$ is a \shomo{}, then we can modify it to obtain a \shomo{} from $\q$ to the restriction of $\I$ to $\mathcal{W}_\pw$, which contradicts  Lemma~\ref{l:wheel} by \eqref{pmwsame}.

\item[(3)$^\dq$]
$h(\q)$ starts in $\mathcal{W}_\mw$ and ends in $\qq{\dq}$.\\
	\begin{tikzpicture}[>=latex,line width=1pt,rounded corners,xscale=.9,yscale=.8]
	                 \draw (10,-0.55) circle [radius=.55];
	                 \draw (12,-0.55) circle [radius=.55];
			\node (1) at (8,0) {};
			\node[point,scale = 0.7,fill = white,label=above:{$\cttt{\dq}{\pw}$}] (m) at (10,0) {};
			\node[point,scale = 0.7,fill =  white,label=above:{$\cttt{\dq}{\mw}$}] (2) at (12,0) {};
			\node[label=above:{$\qq{\dq}$}] (4) at (14,0) {};
			\draw[-,right] (1) to node[below] {}  (m);
			\draw[-,right] (m) to node[below] {} (2);
			\draw[->,right] (2) to node[below] {} (4);
			\node[point, draw = white] (01) at (10,-0.55) {$\mathcal{W}_\pw$};
			\node[point, draw = white] (02) at (12,-0.55) {$\mathcal{W}_\mw$};
			\draw[->,line width=.6mm,gray] (11.35,-.5) -- (11.45,-.15) -- (11.6,0) -- (12,.15) -- (13,.15);
			\node at (13,.5) {\textcolor{gray}{$h(\q)$}};
			\end{tikzpicture}\\
Then $h(\q)$ properly intersects the \neighbourhood{\dq} of $\mathcal{W}_\mw$ only.
As $\cttt{\mw}{j_\mw+1}\prec_{\qqq{\mw}{j_\mw+1}}\nfff{\mw}{j_\mw+1}{1}$,
$h(\ffirst)$ cannot be in the \initial{} \legW{} of 
$\qqq{\mw}{j_\mw+1}$, otherwise there is not enough room for $h(\q)$ in that \legW.
As $\cttt{\mw}{j_\mw-k}=\nttt{\mw}{j_\mw-k}{1}$ and $\cfff{\mw}{j_\mw-k}=\nfff{\mw}{j_\mw-k}{1}$ for all $k$ with $0<k\le |\q|$, $h(\ffirst)$ cannot be a contact of $\mathcal{W}_\mw$ different from $\cttt{\dq}{\mw}$, otherwise $h(\tfirst)$ is also a contact, contradicting \eqref{pmwsame}.
To exclude the remaining options,
we consider the two cases $\ffirst\prec\tlast$ and $\tlast\prec\ffirst$:

If $\ffirst\prec\tlast$ then $\bnode{\ffirst}{\dq}\prec_{\qq{\dq}}\cttt{\dq}{\mw}$, and so $h(\ffirst)$ cannot be in 
$\qq{\dq}$, otherwise  there is not enough room for $h(\q)$ in $\qq{\dq}$.
As $\cttt{\mw}{j_\mw}=\nttt{\mw}{j_\mw}{1}$ and $\cfff{\mw}{j_\mw}=\nfff{\mw}{j_\mw}{1}$, $h(\ffirst)$ cannot be the contact $\cttt{\dq}{\mw}=\nfff{\mw}{j_\mw}{1}$, otherwise $h(\tfirst)$ is also a contact of $\mathcal{W}_\mw$, contradicting \eqref{pmwsame}. As there is no $F$-node preceding $\ffirst$ in $\q$, there are no other options for $h(\ffirst)$ in $\mathcal{W}_\mw$.

If $\tlast\prec\ffirst$ then $\ctt{\mw}=\tlast$, and so $\cttt{\dq}{\mw}=\bnode{\tlast}{\dq}\prec_{\qq{\dq}}\bnode{\ffirst}{\dq}$.
As there is no $F$-node preceding $\bnode{\ffirst}{\dq}$ in $\qq{\dq}$, 
either $h(\ffirst)\preceq_{h(\q)}\cttt{\dq}{\mw}$ and $h(\ffirst)$ is in $\mathcal{W}_\mw$, 
or $h(\ffirst)=\bnode{\ffirst}{\dq}$, otherwise there is not enough room for $h(\q)$ in $\qq{\dq}$.
We will exclude both:
%
\begin{itemize}
\item
Suppose first that $h(\ffirst)$ is in $\mathcal{W}_\mw$.
As $\cttt{\mw}{j_\mw}=\nttt{\mw}{j_\mw}{1}$ and $\cfff{\mw}{j_\mw}=\cttt{\dq}{\mw}$ is either $\nfff{\mw}{j_\mw}{1}$
or $\nfff{\mw}{j_\mw}{2}$, we cannot have $h(\ffirst)=\nfff{\mw}{j_\mw}{1}$:
If $\cfff{\mw}{j_\mw}=\nfff{\mw}{j_\mw}{1}$ then because otherwise $h(\tfirst)$ is also a contact of $\mathcal{W}_\mw$,
and if $\cfff{\mw}{j_\mw}=\nfff{\mw}{j_\mw}{2}$ then because otherwise both $h(\tfirst)$ and $h(\fsec)$ are contacts 
of $\mathcal{W}_\mw$, contradicting \eqref{wsame} in both cases.
As the only $F$-node preceding $\fsec$ in $\q$ is $\ffirst$, the only remaining option
for $h(\ffirst)$ being in $\mathcal{W}_\mw$  is when $h(\ffirst)=\cfff{\mw}{j_\mw}=\nfff{\mw}{j_\mw}{2}$.
Now we track the location of $h(\tlast)$.\\
\centerline{
 \begin{tikzpicture}[>=latex,line width=.75pt, rounded corners,xscale=.3,yscale=.3]
			\node (1) at (2,4) {};
			\node[point,scale=0.7,label=above:{\ \ $\cttt{\dq}{\mw}=\bnode{\tlast}{\dq}$},label=below:{$\nfff{\mw}{j_\mw}{2}$}] (2) at (11,4) {};
			\draw[-] (1) to  (2);
			\draw[-] (4,3.8) to (4,4.2);
			\node at (3.8,5.3) {$\nttt{\mw}{j_\mw}{\last}$};
			\draw[-] (7,3.8) to (7,4.2);
			\node at (6.9,5.3) {$\nfff{\mw}{j_\mw}{1}$};
			\node (q1) at (6,-1) {};
			\node[point,gray,scale=0.7,label=below:{\textcolor{gray}{$\tlast$}}] (q2) at (8,-1) {};
			\node[point,gray,scale=0.7,label=below:{\textcolor{gray}{$\ffirst$}}] (q3) at (11,-1) {};
			\node[point,gray,scale=0.7,label=below:{\textcolor{gray}{$\fsec$}}] (q4) at (15,-1) {};
			\node[label=right:{\textcolor{gray}{$\q$}}]  (q5) at (17,-1) {};
			\draw[-,gray] (q1) to  (q2);
			\draw[-,gray] (q2) to  (q3);
			\draw[-,gray] (q3) to  (q4);
			\draw[->,gray] (q4) to  (q5);
			\draw[-,thin] (8.2,-3) to (14.8,-3);
			\draw[-,thin] (8.2,-3) to (8.2,-2.8);
			\draw[-,thin] (14.8,-3) to (14.8,-2.8);
			\node at (11,-3.5) {{\scriptsize no $T$}};		
			\draw[->,thin,dashed] (q2) to  (8,3.5);
			\draw[->,thin,dashed] (q3) to  (11,1.4);
			\node at (7.4,2) {$h$};			
			\end{tikzpicture}
			}\\
As $\cttt{\mw}{j_\mw+1}=\nttt{\mw}{j_\mw+1}{1}$, $h(\tlast)$ cannot
be in the \initial{} \legW{} of $\qqq{\mw}{j_\mw+1}$, otherwise there is not enough room for $h(\q)$ in that \legW.
Thus, we have
\[
\delta_{h(\q)}\bigl(h(\tlast),\nfff{\mw}{j_\mw}{2}\bigr)=
\delta_{h(\q)}\bigl(h(\tlast),h(\ffirst)\bigr)=
\delta(\tlast,\ffirst)<\delta(\tlast,\fsec)=
\delta_{\qqq{\mw}{j_\mw}}\bigl(\nttt{\mw}{j_\mw}{\last},\nfff{\mw}{j_\mw}{2}\bigr).
\]
As $\cttt{\mw}{j_\mw}=\nttt{\mw}{j_\mw}{1}\prec_{\qqq{\mw}{j_\mw}}\nttt{\mw}{j_\mw}{\last}$, 
it follows that $h(\tlast)$ is between $\nttt{\mw}{j_\mw}{\last}$ and $\nfff{\mw}{j_\mw}{2}$.
But there is no such $T$-node in $\qqq{\mw}{j_\mw}$, and so $h(\ffirst)$ cannot be in $\mathcal{W}_\mw$.

\item
If $h(\ffirst)=\bnode{\ffirst}{c}$ then $h(\tlast)=\bnode{\tlast}{\dq}=\cttt{\dq}{\mw}$.
We track the location of $h(\tlbo)$.
As $\cttt{\mw}{j_\mw+1}=\nttt{\mw}{j_\mw+1}{1}$, $h(\tlbo)$ cannot
be in the \initial{} \legW{} of $\qqq{\mw}{j_\mw+1}$, otherwise there is not enough room for $h(\q)$ in that \legW.
As $\nttt{\mw}{j_\mw}{\last}\prec_{\qqq{\mw}{j_\mw}}\nfff{\mw}{j_\mw}{1}\preceq_{\qqq{\mw}{j_\mw}}
\cfff{\mw}{j_\mw}=h(\tlast)$, we have 
$\nttt{\mw}{j_\mw}{1}\preceq_{\qqq{\mw}{j_\mw}}\nttt{\mw}{j_\mw}{\lbo}\prec_{\qqq{\mw}{j_\mw}}h(\tlbo)\prec_{\qqq{\mw}{j_\mw}}\cfff{\mw}{j_\mw}$, and so 
$h(\tlbo)=\nttt{\mw}{j_\mw}{\last}$ must hold (as $\tlast$ is the only $T$-node succeeding $\tlbo$ in $\q$).
Therefore,
\begin{multline*}
\delta(\tlbo,\tlast)=
\delta_{h(\q)}\bigl(h(\tlbo),h(\tlast)\bigr)=\\
=\left\{
\begin{array}{ll}
\delta_{\qqq{\mw}{j_\mw}}\bigl(\nttt{\mw}{j_\mw}{\last},\nfff{\mw}{j_\mw}{2}\bigr)=\delta(\tlast,\fsec), & \mbox{if $\delta(\tlbo,\tlast)=\delta(\tlast,\ffirst)$,}\\[3pt]
\delta_{\qqq{\mw}{j_\mw}}\bigl(\nttt{\mw}{j_\mw}{\last},\nfff{\mw}{j_\mw}{1}\bigr)=\delta(\tlast,\ffirst), & \mbox{if $\delta(\tlbo,\tlast)\ne\delta(\tlast,\ffirst)$,}
\end{array}
\right.
\end{multline*}
with both cases being impossible.\\
\centerline{
 \begin{tikzpicture}[>=latex,line width=.75pt, rounded corners,scale=.3]
			\node (1) at (4,4) {};
			\node[point,scale=0.7,label=above:{$\bnode{\tlast}{\dq}$},label=below:{$\cfff{\mw}{j_\mw}$}] (2) at (14,4) {};
			\node[label=above:{$\qq{\dq}$}] (3) at (23,4) {};
			\draw[-] (1) to  (2);
			\draw[->] (2) to  (3);
			\draw[-] (8,3.8) to (8,4.2);
			\node at (7.8,5.2) {$\nttt{\mw}{j_\mw}{\last}$};
			\draw[-] (18,3.8) to (18,4.2);
			\node at (17.8,5.2) {$\bnode{\ffirst}{\dq}$};
			\draw[-] (20,3.8) to (20,4.2);
			\node at (19.8,5.2) {$\bnode{\fsec}{\dq}$};
			\node (q1) at (6,0) {};
			\node[point,gray,scale=0.7,label=below:{\textcolor{gray}{$\tlbo$}}] (q2) at (8,0) {};
			\node[point,gray,scale=0.7,label=below:{\textcolor{gray}{$\tlast$}}] (q3) at (14,0) {};
			\node[point,gray,scale=0.7,label=below:{\textcolor{gray}{$\ffirst$}}] (q4) at (18,0) {};
			\node[point,gray,scale=0.7,label=below:{\textcolor{gray}{$\fsec$}}] (qq4) at (20,0) {};
			\node[label=right:{\textcolor{gray}{$\q$}}]  (q5) at (23,0) {};
			\draw[-,gray] (q1) to  (q2);
			\draw[-,gray] (q2) to  (q3);
			\draw[-,gray] (q3) to  (q4);
			\draw[-,gray] (q4) to  (qq4);
			\draw[->,gray] (qq4) to  (q5);
			\draw[-,thin] (8.2,-2) to (13.8,-2);
			\draw[-,thin] (8.2,-2) to (8.2,-1.8);
			\draw[-,thin] (13.8,-2) to (13.8,-1.8);
			\node at (11,-2.4) {{\scriptsize no $T$}};	
			\draw[-,thin] (14.2,-2) to (19.8,-2);	
			\draw[-,thin] (14.2,-2) to (14.2,-1.8);
			\draw[-,thin] (19.8,-2) to (19.8,-1.8);
			\node at (17,-2.4) {{\scriptsize no $T$}};	
			\draw[->,thin,dashed] (q2) to  (7,3.5);
			\draw[->,thin,dashed] (q2) to  (9,3.5);
			\draw[->,thin,dashed] (q3) to  (14,2.3);
			\draw[->,thin,dashed] (q4) to  (18,3.5);
			
			\node at (6.6,1.7) {$h$};			
			\end{tikzpicture}
			}

 \end{itemize}
%

	
\item[(4)$^\dq$]
$h(\q)$ ends in $\mathcal{W}_\mw$ and $\cttt{\dq}{\mw} \prec_{h(\q)}h(\ctt{\mw})$.\\
	\begin{tikzpicture}[>=latex,line width=1pt,rounded corners,xscale=.9,yscale=.8]
	                 \draw (10,-0.55) circle [radius=.55];
	                 \draw (12,-0.55) circle [radius=.55];
			\node (1) at (8,0) {};
			\node[point,scale = 0.7,fill = white,label=above:{$\cttt{\dq}{\pw}$}] (m) at (10,0) {};
			\node[point,scale = 0.7,fill =  white,label=above:{$\cttt{\dq}{\mw}$}] (2) at (12,0) {};
			\node[label=above:{$\qq{\dq}$}] (4) at (14,0) {};
			\draw[-,right] (1) to node[below] {}  (m);
			\draw[-,right] (m) to node[below] {} (2);
			\draw[->,right] (2) to node[below] {} (4);
			\node[point, draw = white] (01) at (10,-0.55) {$\mathcal{W}_\pw$};
			\node[point, draw = white] (02) at (12,-0.55) {$\mathcal{W}_\mw$};
			\draw[->,line width=.6mm,gray] (11.2,.15) -- (12,.15) -- (12.4,0) -- (12.55,-.15) -- (12.7,-.5);
			\node at (11,.5) {\textcolor{gray}{$h(\q)$}};
			\node at (10.8,.15) {\textcolor{gray}{$\dots$}};
			\end{tikzpicture}\\		
Then $h(\q)$ definitely properly intersects the \neighbourhood{\dq} of $\mathcal{W}_\mw$, and it might also 
properly intersect the \neighbourhood{\dq} of $\mathcal{W}_\pw$. 
As $\nfff{\mw}{j_\mw+\ell}{1}\preceq_{\qqq{\mw}{j_\mw+\ell}}\cfff{\mw}{j_\mw+\ell}$ for all $\ell\le|\q|$, 
$h(\ffirst)$ cannot be in the \final{} \legW{} of $\qqq{\mw}{j_\mw+\ell}$ for any $\ell\le|\q|$, otherwise there is not enough room for $h(\q)$ in that \legW. 
As $\cttt{\mw}{j_\mw+\ell}=\nttt{\mw}{j_\mw+\ell}{1}$ and $\cfff{\mw}{j_\mw+\ell}=\nfff{\mw}{j_\mw+\ell}{1}$ for all $\ell$ with $1\le\ell\le|\q|$, 
$h(\ffirst)$ cannot be a contact of $\mathcal{W}_\mw$ different from $\cttt{\dq}{\mw}$, otherwise $h(\tfirst)$ is also a contact, contradicting \eqref{pmwsame}. 
As there is no $F$-node preceding $\ffirst$ in $\q$, it follows that $h(\ffirst)$ must be in $\qq{\dq}$.	
To exclude the remaining options, we consider the two cases $\ffirst\prec\tlast$ and $\tlast\prec\ffirst$:

If $\ffirst\prec\tlast$ then $\ctt{\mw}=\tStar$, where $\tStar$ is the first $T$-node succeeding $\ffirst$.
As by our assumption $h(\q)$ ends in $\mathcal{W}_\mw$ and $\cttt{\dq}{\mw} \prec_{h(\q)}h(\ctt{\mw})$, 
it follows that
$\bnode{\ffirst}{\dq}\prec_{\qq{\dq}}h(\ffirst)\preceq_{\qq{\dq}}\cttt{\dq}{\mw}=\bnode{\tStar}{\dq}$.
Now we track the location of $h(\tBox)$ for the last $T$-node $\tBox$ preceding $\ffirst$. 
As $\tfirst\preceq\tBox\prec\ffirst$, it follows that $h(\tBox)$ is
between $\bnode{\tBox}{\dq}$ and $h(\ffirst)$, and so between $\bnode{\tBox}{\dq}$ and $\bnode{\tStar}{\dq}$.
But there is no such $T$-node in $\qq{\dq}$.\\
\centerline{
 \begin{tikzpicture}[>=latex,line width=.75pt, rounded corners,scale=.35]
			\node (1) at (0,4) {};
			\node[point,scale=0.7,label=above:{$\cttt{\dq}{\pw}=\bnode{\tfirst}{\dq}\qquad$}] (2) at (2,4) {};
			\node[point,scale=0.7,label=above:{\ \ $\bnode{\tStar}{\dq}=\cttt{\dq}{\mw}$}] (3) at (12,4) {};
			\node[label=above:{$\qq{\dq}$}] (4) at (16,4) {};
			\draw[-] (1) to  (2);
			\draw[-] (2) to  (3);
			\draw[->] (3) to  (4);
			\draw[-] (4,3.8) to (4,4.2);
			\node at (3.8,5.2) {$\bnode{\tBox}{\dq}$};
			\draw[-] (9,3.8) to (9,4.2);
			\node at (8.8,5.2) {$\bnode{\ffirst}{\dq}$};
			\node (q1) at (0,0) {};
			\node[point,gray,scale=0.7,label=below:{\textcolor{gray}{$\tBox$}}] (q2) at (4,0) {};
			\node[point,gray,scale=0.7,label=below:{\textcolor{gray}{$\ffirst$}}] (q3) at (9,0) {};
			\node[point,gray,scale=0.7,label=below:{\textcolor{gray}{$\tStar$}}] (q4) at (12,0) {};
			\node[label=right:{\textcolor{gray}{$\q$}}]  (q5) at (16,0) {};
			\draw[-,gray] (q1) to  (q2);
			\draw[-,gray] (q2) to  (q3);
			\draw[-,gray] (q3) to  (q4);
			\draw[->,gray] (q4) to  (q5);
			\draw[-,thin] (4.2,-2) to (11.8,-2);
			\draw[-,thin] (4.2,-2) to (4.2,-1.8);
			\draw[-,thin] (11.8,-2) to (11.8,-1.8);
			\node at (8,-2.4) {{\scriptsize no $T$}};		
			\draw[->,thin,dashed] (q2) to  (6,3.5);
			\draw[->,thin,dashed] (q3) to  (11,3.5);
			\node at (4.3,2) {$h$};			
			\end{tikzpicture}
			}

If $\tlast\prec\ffirst$ then $\ctt{\mw}=\tlast$. As $h(\q)$ ends in $\mathcal{W}_\mw$ and 
$\cttt{\dq}{\mw} \prec_{h(\q)}h(\ctt{\mw})$, it follows that $\cttt{\dq}{\mw} \prec_{h(\q)}h(\tlast)\prec_{h(\q)}h(\ffirst)$.
Thus, $h(\ffirst)$  cannot be in $\qq{\dq}$, leaving us no options.
%
\end{itemize}


Next, we deal with the cases when $h(\q)\cap\qq{\uq}\ne\emptyset$:
\begin{itemize}
\item[(1)$^\uq$]
$h(\q)$ starts in $\mathcal{W}_\pw$ and $h(\cff{\pw})\prec_{h(\q)}\cfff{\uq}{\pw}$.\\
\begin{tikzpicture}[>=latex,line width=1pt,rounded corners,xscale=.9,yscale=.8]
	                 \draw (10,-0.55) circle [radius=.55];
	                 \draw (12,-0.55) circle [radius=.55];
			\node (1) at (8,0) {};
			\node[point,scale = 0.7,fill = white,label=above:{$\cfff{\uq}{\pw}$}] (m) at (10,0) {};
			\node[point,scale = 0.7,fill =  white,label=above:{$\cfff{\uq}{\mw}$}] (2) at (12,0) {};
			\node[label=above:{$\qq{\uq}$}] (4) at (14,0) {};
			\draw[-,right] (1) to node[below] {}  (m);
			\draw[-,right] (m) to node[below] {} (2);
			\draw[->,right] (2) to node[below] {} (4);
			\node[point, draw = white] (01) at (10,-0.55) {$\mathcal{W}_\pw$};
			\node[point, draw = white] (02) at (12,-0.55) {$\mathcal{W}_\mw$};
			\draw[->,line width=.6mm,gray] (9.35,-.5) -- (9.45,-.15) -- (9.6,0) -- (10,.15) -- (11,.15);
			\node at (11,.5) {\textcolor{gray}{$h(\q)$}};
			\node at (11.5,.15) {\textcolor{gray}{$\dots$}};
			\end{tikzpicture}\\
%
%
Then $h(\q)$ definitely properly intersects the \neighbourhood{\uq} of $\mathcal{W}_\pw$, and it might also 
properly intersect the \neighbourhood{\uq} of $\mathcal{W}_\mw$. 
As $\cff{\pw}=\ffirst$, we have $h(\ffirst)\prec_{h(\q)}\cfff{\uq}{\pw}$ and $h(\ffirst)$ is in $\mathcal{W}_\pw$.
As $\cttt{\pw}{i_\pw+1}\prec_{\qqq{\pw}{i_\pw+1}}\nfff{\pw}{i_\pw+1}{1}$ 
for either choice of $\cttt{\pw}{i_\pw+1}$, $h(\ffirst)$ cannot be in the \initial{} \legW{} of $\qqq{\pw}{i_\pw+1}$, otherwise there is not enough room for $h(\q)$ in that \legW.
As $\cttt{\pw}{j_\pw-k}=\nttt{\pw}{j_\pw-k}{1}$ and $\cfff{\pw}{j_\pw-k}=\nfff{\pw}{j_\pw-k}{1}$ for all $k$ with $0<k\le |\q|$, $h(\ffirst)$ cannot be a contact of 
$\mathcal{W}_\pw$ different from $\cfff{\uq}{\pw}$, otherwise $h(\tfirst)$ is also a contact of $\mathcal{W}_\pw$, contradicting \eqref{pmwsame}.
And as $\cttt{\pw}{i_\pw}=\nttt{\pw}{i_\pw}{1}$ and $\cfff{\pw}{i_\pw}=\nfff{\pw}{i_\pw}{2}$, we cannot have
$h(\ffirst)=\nfff{\pw}{i_\pw}{1}$, otherwise both $h(\tfirst)$ and $h(\fsec)$ are contacts of $\mathcal{W}_\pw$, again contradicting  \eqref{pmwsame}.
As the only $F$-node preceding $\fsec$ in $\q$ is $\ffirst$, there are no more options for $h(\ffirst)$.\\
			\centerline{
				\begin{tikzpicture}[>=latex,line width=.75pt, rounded corners,scale=.35]
				\node[label=right:{$\qqq{\pw}{i_\pw+1}$}] (8) at (14,4.5) {};
				\node[label=right:{$\qqq{\pw}{i_\pw}$}] (88) at (8,4.5) {};
				\node[label=right:{$\qqq{\pw}{i_\pw-1}$}] (888) at (4,4.5) {};
				\node[label=right:{$\qqq{\pw}{i_\pw-2}$}] (8888) at (0,4.5) {};
				\node (2) at (1,2) {};
				\node[point,scale=0.7,label=below:{$\nttt{\pw}{i_\pw-2}{1}$}] (3) at (3,2) {};
				\node[point,scale=0.7,label=above:{$\ \ \nfff{\pw}{i_\pw-2}{1}$},label=below:{$\nttt{\pw}{i_\pw-1}{1}$}] (4) at (7,2) {};
				\node[point,scale=0.7,label=above:{$\ \ \nfff{\pw}{i_\pw-1}{1}$},label=below:{$\nttt{\pw}{i_\pw}{1}$}] (5) at (11,2) {};
				\node[point,scale=0.7,label=above:{$\ \ \bnode{\ffirst}{\uq}$},label=below:{$\nfff{\pw}{i_\pw}{2}$}] (6) at (17,2) {};
				\node (7) at (24,2) {};
				\draw[-] (15,1.7) to  (15,2.3);
				\draw[->] (8) to  (6);
				\draw[->] (88) to  (5);
				\draw[->] (888) to  (4);
				\draw[->] (8888) to  (3);
				\draw[-] (2) to  (3);
				\draw[-] (3) to  (4);
				\draw[-] (4) to  (5);
				\draw[-] (5) to  (6);
				\draw[->] (6) to  (7);
				\node at (.2,2) {.};
				\node at (-.2,2) {.};
				\node at (-.6,2) {.};
				\node at (-2,2) {$\mathcal{W}_\pw$};
				\node at (15,.8) {$\nfff{\pw}{i_\pw}{1}$};
				\node at (22,2.8) {$\qq{\uq}$};
			\node[label=left:{\textcolor{gray}{$\q$}}] (q1) at (3,-2) {};
			\node[point,gray,scale=0.7,label=below:{\textcolor{gray}{$\tfirst$}}] (q2) at (7,-2) {};
			\node[point,gray,scale=0.7,label=below:{\textcolor{gray}{$\ffirst$}}] (q3) at (11,-2) {};
			\node[point,gray,scale=0.7,label=below:{\textcolor{gray}{$\fsec$}}] (q4) at (13,-2) {};
			\node (q5) at (18,-2) {};
			\draw[-,gray] (q1) to  (q2);
			\draw[-,gray] (q2) to  (q3);
			\draw[-,gray] (q3) to  (q4);
			\draw[->,gray] (q4) to  (q5);
			\draw[-,thin] (3.3,-3.8) to (10.7,-3.8);
			\draw[-,thin] (3.3,-3.8) to (3.3,-3.6);
			\draw[-,thin] (10.7,-3.8) to (10.7,-3.6);
			\node at (7,-4.2) {{\scriptsize no $F$}};
			\draw[-,thin] (11.3,-3.8) to (13,-3.8);
			\draw[-,thin] (11.3,-3.8) to (11.3,-3.6);
			\draw[-,thin] (13,-3.8) to (13,-3.6);
			\node at (12.2,-4.2) {{\scriptsize no $F$}};
			\node at (10.3,-1) {$h$};
			\draw[->,thin,dashed] (q3) to  (11,-.1);
				\end{tikzpicture}
			}

			\item[(2)$^\uq$]
			$h(\q)$ starts in $\qq{\uq}$ and ends in $\mathcal{W}_\pw$.\\
	\begin{tikzpicture}[>=latex,line width=1pt,rounded corners,xscale=.9,yscale=.8]
	                 \draw (10,-0.55) circle [radius=.55];
	                 \draw (12,-0.55) circle [radius=.55];
			\node (1) at (8,0) {};
			\node[point,scale = 0.7,fill = white,label=above:{$\cfff{\uq}{\pw}$}] (m) at (10,0) {};
			\node[point,scale = 0.7,fill =  white,label=above:{$\cfff{\uq}{\mw}$}] (2) at (12,0) {};
			\node[label=above:{$\qq{\uq}$}] (4) at (14,0) {};
			\draw[-,right] (1) to node[below] {}  (m);
			\draw[-,right] (m) to node[below] {} (2);
			\draw[->,right] (2) to node[below] {} (4);
			\node[point, draw = white] (01) at (10,-0.55) {$\mathcal{W}_\pw$};
			\node[point, draw = white] (02) at (12,-0.55) {$\mathcal{W}_\mw$};
			\draw[->,line width=.6mm,gray] (9.2,.15) -- (10,.15) -- (10.4,0) -- (10.55,-.15) -- (10.7,-.5);
			\node at (9,.5) {\textcolor{gray}{$h(\q)$}};
			\end{tikzpicture}\\
Then $h(\q)$ properly intersects the \neighbourhood{\uq} of $\mathcal{W}_\pw$ only.	
As $\cff{\pw}=\ffirst$, 	
we cannot have $h(\ffirst)\prec_{h(\q)}\cfff{\uq}{\pw}=\bnode{\ffirst}{\uq}$,
otherwise there is not enough room for $h(\q)$ in $\qq{\uq}$.
Thus, $\cfff{\uq}{\pw}\preceq_{h(\q)}h(\ffirst)$ and $h(\ffirst)$ is in $\mathcal{W}_\pw$.
As $\nfff{\pw}{i_\pw+\ell}{1}\preceq_{\qqq{\pw}{i_\pw+\ell}}\cfff{\pw}{i_\pw+\ell}$ for all $\ell\le |\q|$, $h(\ffirst)$ cannot be in the \final{} \legW{} of $\qqq{\pw}{i_\pw}$ for any $\ell\le |\q|$,
otherwise	there is not enough room for $h(\q)$ in that \legW.
To exclude the remaining options, we consider the two cases $\ffirst\prec\tlast$ and $\tlast\prec\ffirst$:

If $\ffirst\prec\tlast$ then $\cttt{\pw}{i_\pw+\ell}=\nttt{\pw}{i_\pw+\ell}{\Box}$ and $\cfff{\pw}{i_\pw+\ell}=\nfff{\pw}{i_\pw+\ell}{1}$ for all $\ell$ with $1\le\ell\le |\q|$, where $\tBox$ is the last $T$-node preceding $\ffirst$.
Thus, $h(\ffirst)$ cannot be a contact of $\mathcal{W}_\pw$ different from $\cfff{\uq}{\pw}$,
otherwise $h(\tBox)$ is also a contact, contradicting \eqref{pmwsame}.	
As there is no $F$-node preceding $\ffirst$ in $\q$, 
the only remaining option for $h(\ffirst)$ is $h(\ffirst)=\cfff{\uq}{\pw}$.
Then all contacts of $\mathcal{W}_\pw$ are in $F^{\I}$ by \eqref{pmwsame}.
We track the location of $h(\tStar)$ for the first $T$-node $\tStar$ succeeding $\ffirst$ in $\q$.\\
\centerline{			
		\begin{tikzpicture}[>=latex,line width=.75pt, rounded corners,scale=.4]
			\node[label=above right:{$\qq{\uq}$}] (1) at (0,2) {};
			\node[point,scale=0.7,label=above:{$\nfff{\pw}{i_\pw}{2}$},label=below:{\ \ $\cfff{\uq}{\pw}=\nttt{\pw}{i_\pw+1}{\Box}$}] (2) at (4,2) {};
			\node[point,scale=0.7,label=above:{$\quad\ \ \nfff{\pw}{i_\pw+1}{1}$},label=below:{$\nttt{\pw}{i_\pw+2}{\Box}$}] (3) at (8,2) {};
			\node (u1) at (10,2) {};
			\node (u2) at (12,2) {};
			\node[point,scale=0.7,label=below:{$\nfff{\pw}{i_\pw+\ell}{1}$}] (4) at (14,2) {};
			\node (5) at (15.5,2) {};
			\node[label=right:{\!\!\!\!$\qqq{\pw}{i_\pw+\ell}$}] (6) at (24,6) {};
			\node[label=right:{\!\!\!\!$\qqq{\pw}{i_\pw+1}$}] (66) at (18,6) {};
			\node[label=left:{$\qqq{\pw}{i_\pw}$}] (666) at (9.3,4.5) {};
			\node (q1) at (-2,-1) {};
			\node[point,gray,scale=0.7,label=below:{\textcolor{gray}{$\tBox$}}] (q2) at (2,-1) {};
			\node[point,gray,scale=0.7,label=below:{\textcolor{gray}{$\ffirst$}}] (q22) at (4,-1) {};
			\node[point,gray,scale=0.7,label=below:{\textcolor{gray}{$\tStar$}}] (q3) at (16,-1) {};
			\node[label=right:{\textcolor{gray}{$\q$}}] (q4) at (17.5,-1) {};
			\draw[-] (1) to  (2);
			\draw[-] (2) to  (3);
			\draw[-] (3) to  (u1);
			\draw[-] (u2) to  (4);
			\draw[-] (4) to  (5);
			\draw[->] (4) to  (6);
			\draw[->] (3) to  (66);
			\draw[->] (2) to  (666);
			\draw[-,gray] (q1) to  (q2);
			\draw[-,gray] (q2) to  (q22);
			\draw[-,gray] (q22) to  (q3);
			\draw[->,gray] (q3) to  (q4);
			\node at (11,2) {$\dots$};
			\node at (18,2) {$\dots$};
			\draw[-] (22.5,5.2) to (22.5,5.6);
			\node at (22.5,4.5) {$\ \ \nttt{\pw}{i_\pw+\ell}{\triangle}$};
			\node at (3.5,-.1) {$h$};
			\draw[->,thin,dashed] (q22) to  (4,.6);
			\draw[->,thin,dashed] (q3) to  (16,2.5);
			\draw[-,thin] (2,-2.5) to (15.7,-2.5);
			\draw[-,thin] (2,-2.5) to (2,-2.3);
			\draw[-,thin] (15.7,-2.5) to (15.7,-2.3);
			\node at (9,-2.9) {{\scriptsize no $T$}};
			\end{tikzpicture}}
	
As $\cfff{\pw}{i_\pw}=\nfff{\pw}{i_\pw}{2}$, $h(\tStar)$ cannot be in the \final{} \legW{} of $\qqq{\pw}{i_\pw}$,
otherwise $h(\fsec)$ is also in that \legW{} and there is not enough room for $h(\q)$ in that \legW{}
(unlike in Example~\ref{e:bike2}~$(i)$).
Further, $h(\tStar)$ cannot be a contact of $\mathcal{W}_\pw$, as all contacts of $\mathcal{W}_\pw$ are in
$F^{\I}$. As there is no $T$-node between $\tBox$ and $\ffirst$ in $\q$,
$h(\tStar)$ must be in the \final{} \legW{} of $\qqq{\pw}{i_\pw+\ell}$ for some $\ell$ with $1\le \ell\le |\q|$.
Then
			\[
			\delta_{\qqq{\pw}{i_\pw+\ell}}\bigl(\nfff{\pw}{i_\pw+\ell}{1},h(\tStar)\bigr)<
			\delta_{h(\q)}\bigl(\cfff{\uq}{\pw},h(\tStar)\bigr)=
			\delta_{h(\q)}\bigl(h(\ffirst),h(\tStar)\bigr)=
			\delta(\ffirst,\tStar)=
			\delta_{\qqq{\pw}{i_\pw+\ell}}\bigl(\nfff{\pw}{i_\pw+\ell}{1},\nttt{\pw}{i_\pw+\ell}{\triangle}\bigr),
			\]
			and so $h(\tStar)$ is between $\nfff{\pw}{i_\pw+\ell}{1}$ and $\nttt{\pw}{i_\pw+\ell}{\triangle}$. 
			But there is no such $T$-node in $\qqq{\pw}{i_\pw+\ell}$.

If $\tlast\prec\ffirst$ then there are two cases, depending on the relationship between 
$\delta(\ffirst,\fsec)$ and $\delta(\tfirst,\ffirst)$:
\begin{itemize}
\item
If $\delta(\ffirst,\fsec)<\delta(\tfirst,\ffirst)$ then $\cttt{\pw}{i_\pw+\ell}=\nttt{\pw}{i_\pw+\ell}{1}$ and $\cfff{\pw}{i_\pw+\ell}=\nfff{\pw}{i_\pw+\ell}{1}$, for all $\ell$ with $1\le\ell\le |\q|$.
Thus, $h(\ffirst)$ cannot be a contact of $\mathcal{W}_\pw$ different from $\cfff{\uq}{\pw}$, otherwise $h(\tfirst)$ is also a contact, contradicting \eqref{pmwsame}. 
As there is no $F$-node preceding $\ffirst$ in $\q$, 
the only remaining option for $h(\ffirst)$ is $h(\ffirst)=\cfff{\uq}{\pw}$.
Next, we track the location of $h(\fsec)$.
As $\cfff{\pw}{i_\pw}=\nfff{\pw}{i_\pw}{2}$,  $h(\fsec)$ cannot be in the \final{} \legW{} of $\qqq{\pw}{i_\pw}$,
otherwise there is not enough room for $h(\q)$ in that \legW{}
(unlike in Example~\ref{e:bike2}~$(i)$).
Thus,
\begin{multline*}
\delta_{h(\q)}\bigl(\nttt{\pw}{i_\pw+1}{1},h(\fsec)\bigr)=
\delta_{h(\q)}\bigl(\cfff{\uq}{\pw},h(\fsec)\bigr)=
\delta_{h(\q)}\bigl(h(\ffirst),h(\fsec)\bigr)=\\
\delta(\ffirst,\fsec)<\delta(\tfirst,\ffirst)=
\delta_{\qqq{\pw}{i_pw+1}}\bigl(\nttt{\pw}{i_\pw+1}{1},\nfff{\pw}{i_\pw+1}{1}\bigr),
\end{multline*}
and so $h(\fsec)$ is between $\nttt{\pw}{i_\pw+1}{1}$ and $\nfff{\pw}{i_\pw+1}{1}$. But there is no such $F$-node in 
$\qqq{\pw}{i_\pw+1}$.\\
				\centerline{
					\begin{tikzpicture}[>=latex,line width=.75pt, rounded corners,scale=.3]
					\node[point,scale=0.7,label=below:{$\cfff{\pw}{i_\pw}=\nttt{\pw}{i_\pw+1}{1}$},label=above:{$\nfff{\pw}{i_\pw}{2}$}] (2) at (5,2) {};
					\node[point,scale=0.7,label=below:{$\nfff{\pw}{i_\pw+1}{1}$}] (3) at (13,2) {};
					\node (4) at (16,2) {};
					\node[label=right:\!\!\!\!\!\!{$\qqq{\pw}{i_\pw+1}$}] (5) at (17,3.5) {};
					\node[label=right:\!\!\!\!\!\!{$\qqq{\pw}{i_\pw}$}] (55) at (9,3.5) {};
					\draw[-] (2) to  (3);
					\draw[-] (3) to  (4);
					\draw[->] (3) to  (5);
					\draw[->] (2) to  (55);
					\node at (17,2) {.};
					\node at (17.4,2) {.};
					\node at (17.8,2) {.};
					\node at (20,2) {$\mathcal{W}_\pw$};
			\node[label=left:{\textcolor{gray}{$\q$}}] (q1) at (-5.5,-2) {};
			\node[point,gray,scale=0.7,label=below:{\textcolor{gray}{$\tfirst$}}] (q2) at (-3,-2) {};
			\node[point,gray,scale=0.7,label=below:{\textcolor{gray}{$\ffirst$}}] (q3) at (5,-2) {};
			\node[point,gray,scale=0.7,label=below:{\textcolor{gray}{$\fsec$}}] (q4) at (10,-2) {};
			\node (q5) at (14.5,-2) {};
			\draw[-,gray] (q1) to  (q2);
			\draw[-,gray] (q2) to  (q3);
			\draw[-,gray] (q3) to  (q4);
			\draw[->,gray] (q4) to  (q5);
			\draw[-,thin] (-5.2,-4) to (4.7,-4);
			\draw[-,thin] (-5.2,-4) to (-5.2,-3.8);
			\draw[-,thin] (4.7,-4) to (4.7,-3.8);
			\node at (.5,-4.4) {{\scriptsize no $F$}};
			\node at (4.2,-1) {$h$};
			\draw[->,thin,dashed] (q3) to  (5,0);
			\draw[->,thin,dashed] (q4) to  (10,1.5);
\end{tikzpicture}
				}

\item
If $\delta(\ffirst,\fsec)\geq\delta(\tfirst,\ffirst)$ then $\cttt{\pw}{i_\pw+\ell}=\nttt{\pw}{i_\pw+\ell}{\Box}$ and $\cfff{\pw}{i_\pw+\ell}=\nfff{\pw}{i_\pw+\ell}{2}$, for all $\ell$ with $1\le\ell\le |\q|$,
where $\tBox$ is the last $T$-node preceding $\ffirst$.
Thus, $h(\ffirst)=\nfff{\pw}{i_\pw+\ell}{1}$ cannot hold for any $\ell$ with $1\le\ell\le |\q|$, otherwise both $h(\tBox)$ and
$h(\fsec)$ are contacts of $\mathcal{W}_\pw$, contradicting \eqref{pmwsame}. 
As the only $F$-node preceding $\fsec$ in $\q$ is $\ffirst$, 
the only remaining option for $h(\ffirst)$ is to be a contact of $\mathcal{W}_\pw$, that is, 
$h(\ffirst)=\nttt{\pw}{i_\pw+\ell}{\Box}$ for some $\ell$ with $1\le\ell\le |\q|$.
Again, we track the location of $h(\fsec)$.
As $\cfff{\pw}{i_\pw}=\nfff{\pw}{i_\pw}{2}$,  $h(\fsec)$ cannot be in the \final{} \legW{} of $\qqq{\pw}{i_\pw}$,
otherwise there is not enough room for $h(\q)$ in that \legW.
Thus,
\[
\delta_{h(\q)}\bigl(\nttt{\pw}{i_\pw+\ell}{\Box},h(\fsec)\bigr)=
\delta_{h(\q)}\bigl(h(\ffirst),h(\fsec)\bigr)=
\delta(\ffirst,\fsec)\geq\delta(\tfirst,\ffirst)>\delta(\tBox,\ffirst)=
\delta_{\qqq{\pw}{i_\pw+\ell}}\bigl(\nttt{\pw}{i_\pw+\ell}{\Box},\nfff{\pw}{i_\pw+\ell}{1}\bigr).
\]
On the other hand,
\[
\delta_{h(\q)}\bigl(\nttt{\pw}{i_\pw+\ell}{\Box},h(\fsec)\bigr)=
\delta_{h(\q)}\bigl(h(\ffirst),h(\fsec)\bigr)=
\delta(\ffirst,\fsec)<\delta(\tBox,\fsec)=
\delta_{\qqq{\pw}{i_\pw+\ell}}\bigl(\nttt{\pw}{i_\pw+\ell}{\Box},\nfff{\pw}{i_\pw+\ell}{2}\bigr),
\]
and so $h(\fsec)$ is between $\nfff{\pw}{i_\pw+\ell}{1}$ and $\nfff{\pw}{i_\pw+\ell}{2}$. But there is no such $F$-node in 
$\qqq{\pw}{i_\pw+\ell}$.\\
				\centerline{
					\begin{tikzpicture}[>=latex,line width=.75pt, rounded corners,scale=.3]
					\node[point,scale=0.7,label=below:\!\!\!\!{$\nttt{\pw}{i_\pw+\ell}{\Box}$},label=above:{$\nfff{\pw}{i_\pw+\ell-1}{2}$}] (2) at (5,2) {};
					\node[point,scale=0.7,label=below:{$\nfff{\pw}{i_\pw+\ell}{2}$}] (3) at (13,2) {};
					\node (4) at (16,2) {};
					\node[label=right:\!\!\!\!\!\!{$\qqq{\pw}{i_\pw+\ell}$}] (5) at (17,3.5) {};
					\node[label=right:\!\!\!\!\!\!{$\qqq{\pw}{i_\pw+\ell-1}$}] (55) at (9,3.5) {};
					\draw[-] (2) to  (3);
					\draw[-] (3) to  (4);
					\draw[->] (3) to  (5);
					\draw[->] (2) to  (55);
					\node at (8,.8) {$\nfff{\pw}{i_\pw+\ell}{1}$};
					\draw[-] (7.6,1.7) to  (7.6,2.3);
					\node at (17,2) {.};
					\node at (17.4,2) {.};
					\node at (17.8,2) {.};
					\node at (20,2) {$\mathcal{W}_\pw$};
			\node[label=left:{\textcolor{gray}{$\q$}}] (q1) at (-3.5,-2) {};
			\node[point,gray,scale=0.7,label=below:{\textcolor{gray}{$\tBox$}}] (q2) at (2,-2) {};
			\node[point,gray,scale=0.7,label=below:{\textcolor{gray}{$\ffirst$}}] (q3) at (5,-2) {};
			\node[point,gray,scale=0.7,label=below:{\textcolor{gray}{$\fsec$}}] (q4) at (10,-2) {};
			\node (q5) at (14.5,-2) {};
			\draw[-,gray] (q1) to  (q2);
			\draw[-,gray] (q2) to  (q3);
			\draw[-,gray] (q3) to  (q4);
			\draw[->,gray] (q4) to  (q5);
			\draw[-,thin] (5.2,-4) to (10,-4);
			\draw[-,thin] (5.2,-4) to (5.2,-3.8);
			\draw[-,thin] (10,-4) to (10,-3.8);
			\node at (7.2,-4.4) {{\scriptsize no $F$}};
			\draw[-,thin] (-3.2,-4) to (4.8,-4);
			\draw[-,thin] (-3.2,-4) to (-3.2,-3.8);
			\draw[-,thin] (4.8,-4) to (4.8,-3.8);
			\node at (.5,-4.4) {{\scriptsize no $F$}};
			\node at (4.2,-1) {$h$};
			\draw[->,thin,dashed] (q3) to  (5,0);
			\draw[->,thin,dashed] (q4) to  (10,1.5);
\end{tikzpicture}
				}
\end{itemize}
%


\item[(3)$^\uq$]
$h(\q)$ starts in $\mathcal{W}_\mw$ and ends in $\qq{\uq}$.\\
	\begin{tikzpicture}[>=latex,line width=1pt,rounded corners,xscale=.9,yscale=.8]
	                 \draw (10,-0.55) circle [radius=.55];
	                 \draw (12,-0.55) circle [radius=.55];
			\node (1) at (8,0) {};
			\node[point,scale = 0.7,fill = white,label=above:{$\cfff{\uq}{\pw}$}] (m) at (10,0) {};
			\node[point,scale = 0.7,fill =  white,label=above:{$\cfff{\uq}{\mw}$}] (2) at (12,0) {};
			\node[label=above:{$\qq{\uq}$}] (4) at (14,0) {};
			\draw[-,right] (1) to node[below] {}  (m);
			\draw[-,right] (m) to node[below] {} (2);
			\draw[->,right] (2) to node[below] {} (4);
			\node[point, draw = white] (01) at (10,-0.55) {$\mathcal{W}_\pw$};
			\node[point, draw = white] (02) at (12,-0.55) {$\mathcal{W}_\mw$};
			\draw[->,line width=.6mm,gray] (11.35,-.5) -- (11.45,-.15) -- (11.6,0) -- (12,.15) -- (13,.15);
			\node at (13,.5) {\textcolor{gray}{$h(\q)$}};
			\end{tikzpicture}\\
Then $h(\q)$ properly intersects the \neighbourhood{\uq} of $\mathcal{W}_\mw$ only.
We have $h(\cff{\mw})\preceq_{h(\q)}\cfff{\uq}{\mw}$, as otherwise there is no room for $h(\q)$ in $\qq{\uq}$.
As $\cff{\mw}=\fsec$, we have $h(\ffirst)\prec_{h(\q)}h(\fsec)\preceq_{h(\q)}\cfff{\uq}{\mw}$ and $h(\ffirst)$ is in $\mathcal{W}_\mw$.
We can exclude all possible locations for $h(\ffirst)$ by the same argument as in case $(1)^\uq$, 
with the \neighbourhood{\uq} of $\mathcal{W}_\mw$ in place of the  \neighbourhood{\uq} of $\mathcal{W}_\pw$.

			
\item[(4)$^\uq$]
$h(\q)$ ends in $\mathcal{W}_\mw$ and $\cfff{\uq}{\mw} \prec_{h(\q)}h(\cff{\mw})$.\\
	\begin{tikzpicture}[>=latex,line width=1pt,rounded corners,xscale=.9,yscale=.8]
	                 \draw (10,-0.55) circle [radius=.55];
	                 \draw (12,-0.55) circle [radius=.55];
			\node (1) at (8,0) {};
			\node[point,scale = 0.7,fill = white,label=above:{$\cfff{\uq}{\pw}$}] (m) at (10,0) {};
			\node[point,scale = 0.7,fill =  white,label=above:{$\cfff{\uq}{\mw}$}] (2) at (12,0) {};
			\node[label=above:{$\qq{\uq}$}] (4) at (14,0) {};
			\draw[-,right] (1) to node[below] {}  (m);
			\draw[-,right] (m) to node[below] {} (2);
			\draw[->,right] (2) to node[below] {} (4);
			\node[point, draw = white] (01) at (10,-0.55) {$\mathcal{W}_\pw$};
			\node[point, draw = white] (02) at (12,-0.55) {$\mathcal{W}_\mw$};
			\draw[->,line width=.6mm,gray] (11.2,.15) -- (12,.15) -- (12.4,0) -- (12.55,-.15) -- (12.7,-.5);
			\node at (11,.5) {\textcolor{gray}{$h(\q)$}};
			\node at (10.8,.15) {\textcolor{gray}{$\dots$}};
			\end{tikzpicture}\\
Then $h(\q)$ definitely properly intersects the \neighbourhood{\uq} of $\mathcal{W}_\mw$, and it might also 
properly intersect the \neighbourhood{\uq} of $\mathcal{W}_\pw$. 
As $\cff{\pw}=\ffirst$ and $\cff{\mw}=\fsec$, 
there is no $F$-node between $\cfff{\uq}{\pw}$ and $\cfff{\uq}{\mw}$ in $\qq{\uq}$, and so  
$\cfff{\uq}{\mw} \preceq_{h(\q)}h(\ffirst)$ and  $h(\ffirst)$ is in $\mathcal{W}_\mw$.
We can exclude all possible locations for $h(\ffirst)$ by the same argument as in case $(2)^\uq$,
with the \neighbourhood{\uq} of $\mathcal{W}_\mw$ in place of the  \neighbourhood{\uq} of $\mathcal{W}_\pw$.
\end{itemize}			
We excluded all possible locations in $\mathcal{B}$ for the image $h(\q)$ of a potential \shomo{} $h\colon\q\to\I$, which 	completes the proof of Lemma~\ref{l:bike}.
\end{proof}


\subsection{Representing clauses with shared literals}\label{s:clauses}

Suppose $\psi$ is a 3CNF
	with $n_\psi$ clauses of the form $\lit_1 \lor \lit_2 \lor \lit_3$, where each $\lit_i$ is a literal. We build an ABox $\Apsi$ as follows.
	 We let $n\geq (n_\psi+2)(2|\q|+1)$ and,
	for each propositional variable $p$ in $\psi$, we take a fresh $n$-\bike{} $\mathcal{B}^p$ 
	having $n$-\wheels{} $\mathcal{W}_\pw^p$, $\mathcal{W}_\mw^p$ and satisfying the conditions in Lemma~\ref{l:bike}.
	We pick three nodes $\lc{1}$, $\lc{2}$ and $\lc{3}$ in $\q$ such that each $\lc{z}$ is a $T$-node or an $F$-node,
	and $\lc{1}\prec\lc{2}\prec\lc{3}$. We call these three nodes the \emph{\spect} of $\q$.
	Then, for every clause $c = (\lit_1^c \lor \lit_2^c \lor \lit_3^c)$ in $\psi$, we proceed as follows.
	We take a fresh copy $\qq{c}$ of $\q$, consider the copies $\lcc{1}$, $\lcc{2}$ and $\lcc{3}$ of the \spect{} in $\qq{c}$, and replace their $F$- or $T$-labels with $A$.
	Then, for $z=1,2,3$, we glue $\lcc{z}$ to a 
	contact 
	%
	\begin{itemize}
		\item[(p1)]
		in $\mathcal{W}_\pw^p$ iff  either  $\lit_z^c = p$ and $\lc{z}$ is an $F$-node in $\q$,
		or $\lit_z^c = \neg p$ and $\lc{z}$ is a $T$-node in $\q$; 
		
		\item[(p2)]
		in $\mathcal{W}_\mw^p$ iff  either  $\lit_z^c = p$ and $\lc{z}$ is an $T$-node in $\q$,
		or $\lit_z^c = \neg p$ and $\lc{z}$ is a $F$-node in $\q$.
	\end{itemize}
	For example, if $\q$ looks like on the left-hand side of the picture below and $c = (p \lor \neg q \lor r)$, then we obtain the graph shown on the right-hand side of the picture with the $n$-\wheels{} depicted as circles:\\ 
	\centerline{
			\begin{tikzpicture}[>=latex,line width=1pt,rounded corners,scale=.9]
			\node at (-.5,-1) {\ };
			\node[label=left:{$\q$\!\!\!}] (1) at (0,0) {};
			\node[point,scale = 0.7,label=above:{\small $T$}, label=below:$\lc{1}$] (m) at (1.5,0) {};
			\node[point,scale = 0.7,label=above:{\small $F$}, label=below:$\lc{2}$] (2) at (3,0) {};
			\node[point,scale = 0.7,label=above:{\small $F$}, label=below:$\lc{3}$] (3) at (4.5,0) {};
			\node (4) at (6,0) {};
			\draw[->,right] (1) to node[below] {}  (m);
			\draw[->,right] (m) to node[below] {} (2);
			\draw[->,right] (2) to node[below] {} (3);
			\draw[->,right] (3) to node[below] {} (4);
			\end{tikzpicture}
			\hspace*{1cm}
			\begin{tikzpicture}[>=latex,line width=1pt,rounded corners,scale=.9]
			%
			\node at (7.5,0) {\ };
			\draw (9.5,-0.55) circle [radius=.55];
			\draw (11,-0.55) circle [radius=.55];
			\draw (12.5,-0.55) circle [radius=.55];
			\node (1) at (8,0) {};
			\node[point,scale = 0.7,label=above:{\small $A$}, fill =  white] (m) at (9.5,0) {};
			\node[point,scale = 0.7,label=above:{\small $A$}, fill =  white] (2) at (11,0) {};
			\node[point,scale = 0.7,label=above:{\small $A$}, fill =  white] (3) at (12.5,0) {};
			\node[label=right:{\!\!\!$\qq{c}$}] (4) at (14,0) {};
			\draw[->,right] (1) to node[below] {}  (m);
			\draw[->,right] (m) to node[below] {} (2);
			\draw[->,right] (2) to node[below] {} (3);
			\draw[->,right] (3) to node[below] {} (4);
			\node[point, draw = white] (01) at (9.5,-0.55) {$\mathcal{W}^p_\mw$};
			\node[point, draw = white] (02) at (11,-0.55) {$\mathcal{W}^q_\mw$};
			\node[point, draw = white] (03) at (12.5,-0.55) {$\mathcal{W}^r_\pw$};
			\end{tikzpicture}
	}\\
	We call $\lcc{1}$, $\lcc{2}$ and $\lcc{3}$
	$c$-\emph{connections\/}, while the \emph{\neighbourhood{c}} consists of those contacts in each of the three $n$-\wheels{} whose \cdist{} from its $c$-connection is $\leq |\q|$. For different clauses $c,c'$, we pick the $c$- and $c'$-connections  `sharing' the same $n$-\wheel{} $\mathcal{W}$ in such a way that the $c$- and \neighbourhoods{c'} are disjoint 
	from each other and from the $\uq$- and \neighbourhoods{\dq} in $\mathcal{W}$. 
	(We can do this as $n\geq (n_\psi+2)(2|\q|+1)$.)
	%
	We treat the resulting labelled graph as an ABox, call it a $(\psi,n)$-\emph{gadget \textup{(}for $\q$\textup{)}}, and denote it by $\Apsi$. Clearly, the
	size of $\Apsi$ is polynomial in the sizes of $\q$ and $\psi$.

	
	The following lemma is a consequence of the definition of $\Apsi$, and the `easy' 
	$(\Rightarrow)$ direction of Lemma~\ref{l:bike}.
	
	\begin{tlemma}\label{l:3cnfsound}
		If $\TT,\Apsi \not\models \q$, then $\psi$ is satisfiable. 
	\end{tlemma}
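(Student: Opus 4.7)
The plan is to use the easy ($\Rightarrow$) direction of Lemma~\ref{l:bike} to extract a satisfying assignment for $\psi$ from any model witnessing $\TT,\Apsi\not\models\q$. Suppose $\I$ is such a model, i.e., $\I\models\TT,\Apsi$ but $\I\not\models\q$. For each variable $p$ of $\psi$, the ABox $\Apsi$ contains the sub-ABox $\mathcal{B}^p$, so $\I$ is automatically a model of $\TT$ and $\mathcal{B}^p$ that still fails $\q$; Lemma~\ref{l:bike} then yields the dichotomy that either all contacts of $\mathcal{W}_\pw^p$ lie in $T^\I$ and all contacts of $\mathcal{W}_\mw^p$ lie in $F^\I$, or vice versa. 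I define the assignment $\mathfrak{a}$ by setting $\mathfrak{a}(p)=T$ in the former case and $\mathfrak{a}(p)=F$ in the latter.

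I will then show that $\mathfrak{a}$ satisfies every clause $c=\lit_1^c\lor\lit_2^c\lor\lit_3^c$ of $\psi$ by exploiting the designated copy $\qq{c}$ of $\q$ used in the construction of $\Apsi$. The only alteration in $\qq{c}$ compared with $\q$ is the relabelling of the $c$-connections $\lcc{1},\lcc{2},\lcc{3}$ from $T$ or $F$ to $A$. Hence the natural isomorphism $\iota\colon\q\to\qq{c}$, when viewed as a map $\q\to\I$, preserves every binary atom and every unary label on non-contact nodes, so it is label-consistent everywhere except possibly at $\lcc{1},\lcc{2},\lcc{3}$. Since $\I\not\models\q$, the map $\iota$ cannot be a \shomo{} into $\I$, and so some $z\in\{1,2,3\}$ must witness a label mismatch: the $T$/$F$-label of $\lc{z}$ in $\q$ is the opposite of the truth-value assigned to $\lcc{z}$ by $\I$.

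A four-case inspection of the gluing rules (p1)--(p2) then shows that this mismatch forces the literal $\lit_z^c$ to be true under $\mathfrak{a}$, completing the argument. For instance, if $\lit_z^c=p$ and $\lc{z}$ is an $F$-node of $\q$, then rule (p1) places $\lcc{z}$ on a contact of $\mathcal{W}_\pw^p$, and the mismatch gives $\lcc{z}\in T^\I$; the Lemma~\ref{l:bike} dichotomy then forces all contacts of $\mathcal{W}_\pw^p$ into $T^\I$, hence $\mathfrak{a}(p)=T$ and $\lit_z^c=p$ is true under $\mathfrak{a}$. The three remaining subcases---obtained by swapping $p$ with $\neg p$ and/or swapping the $T$- and $F$-label of $\lc{z}$---are perfectly symmetric, each producing a literal of $c$ satisfied by $\mathfrak{a}$. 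No substantive obstacle arises: all the hard work lives inside Lemma~\ref{l:bike}, and what remains is the routine bookkeeping of matching the wheel-label dichotomy with the definition of $\Apsi$.
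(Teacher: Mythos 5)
Your proof is correct, and it rests on the same two pillars as the paper's: the bike dichotomy from Lemma~\ref{l:bike} (giving each wheel a well-defined truth-value) and the observation that since $\I\not\models\q$, each clause copy $\qq{c}$ must witness a label mismatch at one of the $c$-connections. The organization is, however, genuinely different and in my view slightly cleaner. The paper defines the assignment $\mathfrak a$ \emph{clause by clause}, setting $\mathfrak a(\lit_z^c)=T$ for each mismatch witness, which leaves a gap: it must then be shown that no two clauses force $p$ and $\neg p$ both true. The paper closes this gap by a case analysis that traces both clause witnesses back to the wheels and uses the bike dichotomy to derive a contradiction. You instead define $\mathfrak a$ \emph{directly from the wheel labels}: $\mathfrak a(p)=T$ iff the contacts of $\mathcal W^p_\pw$ lie in $T^\I$, which is automatically consistent since $\I$ is fixed and the dichotomy makes the labeling of each bike unambiguous. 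This eliminates the well-definedness obligation entirely, and the residual work --- showing $\mathfrak a$ satisfies each clause --- reduces to the same four-case inspection of the gluing rules (p1)--(p2), now appearing where the paper's well-definedness check was. In short, the two proofs are duals: you pay the case analysis in the satisfaction step, the paper pays it in the well-definedness step, but the net cost and the underlying mechanism are identical. One small cosmetic remark: invoking the term \shomo{} for $\iota$ is slightly off the mark, since $\iota$ maps $\q$ onto an isomorphic copy and so would preserve binary predicates exactly; the clean statement is simply that $\iota$ cannot be a homomorphism $\q\to\I$, which forces a unary-label mismatch at one of the $A$-relabelled nodes $\lcc{z}$ (the only nodes whose labels differ between $\q$ and $\qq{c}$). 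This does not affect the validity of the argument.
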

	
	\begin{proof}
		Suppose $\I$ is a model of $\TT$ and $\Apsi$ such that $\I \not\models \q$. 
		As for each variable $p$ in $\psi$, the $n$-bike $\mathcal{B}^p$ satisfies the conditions in Lemma~\ref{l:bike}, 
		either all contacts of the $n$-\wheel{} $\mathcal{W}^p_\pw$ are in $F^\I$ and all contacts
		of  $\mathcal{W}^p_\mw$ are in $T^\I$, or 
		all contacts of $\mathcal{W}^p_\pw$ are in $T^\I$ and all contacts
		of  $\mathcal{W}^p_\mw$ are in $F^\I$.
		As $\I \not\models \q$,
		for every clause $c = (\lit_1^c \lor \lit_2^c \lor \lit_3^c)$ in $\psi$, 
		there is $z=1,2,3$
		such that either $\lc{z}$ is a $T$-node in $\q$ but $\lcc{z} \in F^\I$,  or $\lc{z}$ is an $F$-node in $\q$ but $\lcc{z} \in T^\I$. Define an assignment $\mathfrak a$ by setting $\mathfrak a(\lit_z^c) = T$ for each clause $c$ in $\psi$ (and arbitrary otherwise). We claim that $\mathfrak a$ is well-defined in the sense that we never set both $\mathfrak a(p) = T$ and $\mathfrak a(\neg p) = T$. Indeed, suppose otherwise. Suppose also that the former is because of $\lit_{z_1}^{c_1}$ in a clause $c_1$ and the latter because of $\lit_{z_2}^{c_2}$ in a clause $c_2$.
		
		\emph{Case} 1: $\lc{z_1}$ is a $T$-node in $\q$ but $\lcco{z_1} \in F^\I$. As $\mathfrak a(p) = T$ implies that $\lit_{z_1}^{c_1} = p$, by (p2) of the construction $\lcco{z_1}$ is a contact in the $n$-\wheel{} $\mathcal{W}^p_\mw$. 
		So all contacts in  $\mathcal{W}^p_\mw$ are in $F^\I$. 
		On the other hand, $\mathfrak a(\neg p) = T$ implies that $\lit_{z_2}^{c_2} = \neg p$. 
		If $\lc{z_2}$ is a $T$-node in $\q$ but $\lcct{z_2} \in F^\I$, then  
		$\lcct{z_2}$ is a contact in $\mathcal{W}^p_\pw$ by (p1), and so 
		all contacts in  $\mathcal{W}^p_\pw$ are also in $F^\I$, a contradiction.
		And if $\lc{z_2}$ is an $F$-node in $\q$ but $\lcct{z_2} \in T^\I$, then 
		$\lcct{z_2}$ is a contact in $\mathcal{W}^p_\mw$ by (p2), 
		and so all contacts in  $\mathcal{W}^p_\mw$ are in $T^\I$, a contradiction again.
		
		\emph{Case} 2: $\lc{z_1}$ is an $F$-node in $\q$ but $\lcco{z_1} \in T^\I$. This case is similar and left to the reader.

		Thus, the assignment $\mathfrak a$ is well-defined and makes true at least one literal in every clause in $\psi$.
	\end{proof}

	It remains to find some conditions on $\Apsi$ that would guarantee that the converse of Lemma~\ref{l:3cnfsound} also holds.
	So suppose that $\psi$ is satisfiable under an assignment $\mathfrak a$. 
	We define a model $\Ia$ of $\TT$ and $\Apsi$  as follows: 
\begin{align}	
\nonumber
& \mbox{For every variable $p$ in $\psi$, we put}\\
\label{Iadef}
& \hspace*{1cm} \mbox{all contacts of $\mathcal{W}^p_\pw$ to $T^{\Ia}$ and all contacts of $\mathcal{W}^p_\mw$  to $F^{\Ia}$, whenever if $\mathfrak a(p) = T$; and}\\
\nonumber
& \hspace*{1cm} \mbox{all contacts of $\mathcal{W}^p_\pw$ to $F^{\Ia}$ and all contacts of $\mathcal{W}^p_\mw$  to $T^{\Ia}$, whenever if $\mathfrak a(p) = F$.}
\end{align}
%
%
We aim to find some conditions on $\Apsi$ that would imply $\Ia\not\models\q$. 
Just like in the case of other ABoxes built up from copies of $\q$ before, we are looking for conditions that exclude all possible locations in $\Apsi$ for the image $h(\q)$ of a
potential \shomo{} $h\colon\q\to\Ia$. 
The definition of $\Apsi$ allows flexibility 
\begin{itemize}
\item[--]
in the choice of the \spect{} $\lc{1},\lc{2},\lc{3}$ in $\q$, and
\item[--]
also in the choices of the contacts in the \neighbourhoods{c}, for each clause $c$.
\end{itemize}
If we choose all these contacts in such a way that \eqref{tprecfcontact}, \eqref{okcontact} and the conditions of Lemma~\ref{l:bike} hold then, by \eqref{Iadef} and Lemma~\ref{l:bike}, we know that $h(\q)$ must intersect with at least one $\qq{c}$ for some clause $c$.
Therefore, the intersection of $h(\q)$ with any of its $n$-\wheels{} cannot go beyond its \neighbourhoods{c}.
Further, we claim that,
		\begin{equation}\label{nofixh}
		\mbox{for any clause $c$ in $\psi$, there is no \shomo{} $h:\q\to\Ia$ such that $h(\lc{z})=\lcc{z}$ for all $z=1,2,3$.}
		\end{equation}
		(In particular, there is no $\q\to\Ia$ \shomo{} mapping $\q$ onto $\qq{c}$.)
		Indeed, 
		suppose on the contrary that there is such a \shomo{} $h$ for some $c$. Suppose $\mathfrak a(\lit^c_z) = T$ for some $a$. If $\lit^c_z = p$, then either $\lc{z}$ is an $F$-node in $\q$ but $\lcc{z}\in T^{\Ia}$ as it is in $\mathcal{W}^p_\pw$, or $\lc{z}$ is a $T$-node in $\q$ but $\lcc{z} \in F^{\Ia}$ as it is in $\mathcal{W}^p_\mw$, both are impossible when $h(\lc{z})=\lcc{z}$. The case of $\lit^c_z = \neg p$ is dually symmetric. It follows that $\mathfrak a(\lit^c_z) \ne T$ for any $z = 1,2,3$, contrary to $\mathfrak a$ satisfying $\psi$.
		
		By \eqref{nofixh}, 
		$h(\q)$ must \emph{properly intersect} with at least one of the three $n$-\wheels{} 
		$\mathcal{W}_1^c$, $\mathcal{W}_2^c$ and $\mathcal{W}_3^c$ glued to $\qq{c}$
		in the sense that $h(\q)\cap\mathcal{W}_z^c\not\subseteq\{\lcc{z}\}$ for some $z=1,2,3$.
		By \eqref{Iadef} and Lemma~\ref{l:wheel}, we may assume that $h(\q)\not\subseteq\mathcal{W}_z^c$ for any $z=1,2,3$.
		Also by Lemma~\ref{l:wheel}, we may assume that if $h(\q)$ properly intersects with 
		$\mathcal{W}_z^c$, then every node in $h(\q)\cap\mathcal{W}_z^c$ is in the \neighbourhood{c} of
		$\mathcal{W}_z^c$. As for $c\ne c'$ the $c$- and \neighbourhoods{c'} are disjoint,
		there is a unique $c$ 
		with $h(\q)$ properly  intersecting with one or two of 
		the $n$-\wheels{} $\mathcal{W}_1^c$, $\mathcal{W}_2^c$ and $\mathcal{W}_3^c$ glued to $\qq{c}$
		(it cannot properly intersect with all three).
		It is easy to check that, by \eqref{nofixh}, all options for such a $h(\q)$ are covered by the six cases (1)${}^c$--(6)${}^c$ 
		in Fig.~\ref{f:cases}.

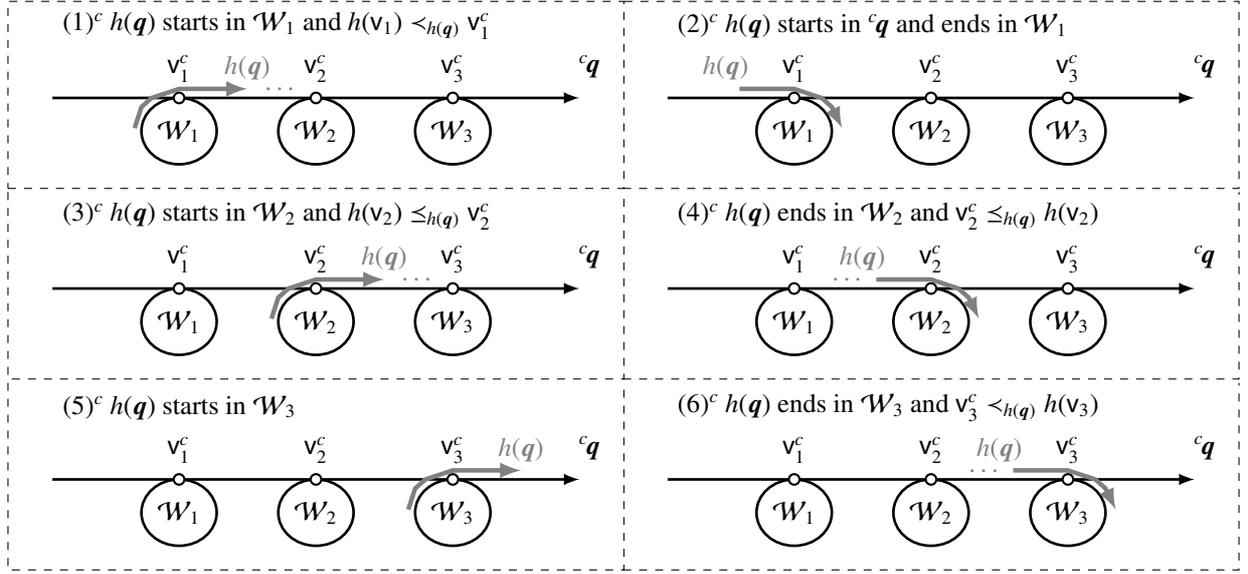
\begin{figure}[h]
\centering
\begin{tikzpicture}[>=latex,line width=1pt,xscale=.9,yscale=.8]
			%
			\node[label=right:{(1)${}^c$ $h(\q)$ starts in $\mathcal{W}_1$ and $h(\lc{1})\prec_{h(\q)}\lcc{1}$}] (t) at (8,1.2) {};
			\draw[-,thin,dashed] (7.5,1.6) -- (7.5,-1.5) -- (16.5,-1.5) -- (16.5,1.6) -- cycle;
	                 \draw (10,-0.55) circle [radius=.55];
	                 \draw (12,-0.55) circle [radius=.55];
			\draw (14,-0.55) circle [radius=.55];
			\node (1) at (8,0) {};
			\node[point,scale = 0.7,fill = white,label=above:{$\lcc{1}$}] (m) at (10,0) {};
			\node[point,scale = 0.7,fill =  white,label=above:{$\lcc{2}$}] (2) at (12,0) {};
			\node[point,scale = 0.7,fill =  white,label=above:{$\lcc{3}$}] (3) at (14,0) {};
			\node[label=above:{$\qq{c}$}] (4) at (16,0) {};
			\draw[-,right] (1) to node[below] {}  (m);
			\draw[-,right] (m) to node[below] {} (2);
			\draw[-,right] (2) to node[below] {} (3);
			\draw[->,right] (3) to node[below] {} (4);
			\node[point, draw = white] (01) at (10,-0.55) {$\mathcal{W}_1$};
			\node[point, draw = white] (02) at (12,-0.55) {$\mathcal{W}_2$};
			\node[point, draw = white] (03) at (14,-0.55) {$\mathcal{W}_3$};
			\draw[->,line width=.6mm,gray] (9.35,-.5) -- (9.45,-.15) -- (9.6,0) -- (10,.15) -- (11,.15);
			\node at (11,.5) {\textcolor{gray}{$h(\q)$}};
			\node at (11.5,.15) {\textcolor{gray}{$\dots$}};
			%
			\end{tikzpicture}	
\hspace*{-.2cm}
\begin{tikzpicture}[>=latex,line width=1pt,xscale=.9,yscale=.8]
			%
			\node[label=right:{(2)${}^c$ $h(\q)$ starts in $\qq{c}$ and ends in $\mathcal{W}_1$}] (t) at (8,1.2) {};
			\draw[-,thin,dashed]  (7.5,-1.5) -- (16.5,-1.5) -- (16.5,1.6) -- (7.5,1.6);
			\draw (10,-0.55) circle [radius=.55];
			\draw (12,-0.55) circle [radius=.55];
			\draw (14,-0.55) circle [radius=.55];
			\node (1) at (8,0) {};
			\node[point,scale = 0.7,fill =  white,label=above:{$\lcc{1}$}] (m) at (10,0) {};
			\node[point,scale = 0.7,fill =  white,label=above:{$\lcc{2}$}] (2) at (12,0) {};
			\node[point,scale = 0.7,fill =  white,label=above:{$\lcc{3}$}] (3) at (14,0) {};
			\node[label=above:{$\qq{c}$}] (4) at (16,0) {};
			\draw[-,right] (1) to node[below] {}  (m);
			\draw[-,right] (m) to node[below] {} (2);
			\draw[-,right] (2) to node[below] {} (3);
			\draw[->,right] (3) to node[below] {} (4);
			\node[point, draw = white] (01) at (10,-0.55) {$\mathcal{W}_1$};
			\node[point, draw = white] (02) at (12,-0.55) {$\mathcal{W}_2$};
			\node[point, draw = white] (03) at (14,-0.55) {$\mathcal{W}_3$};
			\draw[->,line width=.6mm,gray] (9.2,.15) -- (10,.15) -- (10.4,0) -- (10.55,-.15) -- (10.7,-.5);
			\node at (9,.5) {\textcolor{gray}{$h(\q)$}};
			\end{tikzpicture}\\	
\begin{tikzpicture}[>=latex,line width=1pt,xscale=.9,yscale=.8]
			%
			\node[label=right:{(3)${}^c$ $h(\q)$ starts in $\mathcal{W}_2$ and $h(\lc{2})\preceq_{h(\q)}\lcc{2}$}] (t) at (8,1.2) {};
			\draw[-,thin,dashed] (7.5,1.6) -- (7.5,-1.5) -- (16.5,-1.5) -- (16.5,1.6);
			\draw (10,-0.55) circle [radius=.55];
			\draw (12,-0.55) circle [radius=.55];
			\draw (14,-0.55) circle [radius=.55];
			\node (1) at (8,0) {};
			\node[point,scale = 0.7,fill =  white,label=above:{$\lcc{1}$}] (m) at (10,0) {};
			\node[point,scale = 0.7,fill =  white,label=above:{$\lcc{2}$}] (2) at (12,0) {};
			\node[point,scale = 0.7,fill =  white,label=above:{$\lcc{3}$}] (3) at (14,0) {};
			\node[label=above:{$\qq{c}$}] (4) at (16,0) {};
			\draw[-,right] (1) to node[below] {}  (m);
			\draw[-,right] (m) to node[below] {} (2);
			\draw[-,right] (2) to node[below] {} (3);
			\draw[->,right] (3) to node[below] {} (4);
			\node[point, draw = white] (01) at (10,-0.55) {$\mathcal{W}_1$};
			\node[point, draw = white] (02) at (12,-0.55) {$\mathcal{W}_2$};
			\node[point, draw = white] (03) at (14,-0.55) {$\mathcal{W}_3$};
			\draw[->,line width=.6mm,gray] (11.35,-.5) -- (11.45,-.15) -- (11.6,0) -- (12,.15) -- (13,.15);
			\node at (13,.5) {\textcolor{gray}{$h(\q)$}};
			\node at (13.5,.15) {\textcolor{gray}{$\dots$}};
			\end{tikzpicture}
\hspace*{-.2cm}
\begin{tikzpicture}[>=latex,line width=1pt,xscale=.9,yscale=.8]
			%
			\node[label=right:{(4)${}^c$ $h(\q)$ ends in $\mathcal{W}_2$ and $\lcc{2}\preceq_{h(\q)} h(\lc{2})$}] (t) at (8,1.2) {};
			\draw[-,thin,dashed]  (7.5,-1.5) -- (16.5,-1.5) -- (16.5,1.6);
			\draw (10,-0.55) circle [radius=.55];
			\draw (12,-0.55) circle [radius=.55];
			\draw (14,-0.55) circle [radius=.55];
			\node (1) at (8,0) {};
			\node[point,scale = 0.7,fill =  white,label=above:{$\lcc{1}$}] (m) at (10,0) {};
			\node[point,scale = 0.7,fill =  white,label=above:{$\lcc{2}$}] (2) at (12,0) {};
			\node[point,scale = 0.7,fill =  white,label=above:{$\lcc{3}$}] (3) at (14,0) {};
			\node[label=above:{$\qq{c}$}] (4) at (16,0) {};
			\draw[-,right] (1) to node[below] {}  (m);
			\draw[-,right] (m) to node[below] {} (2);
			\draw[-,right] (2) to node[below] {} (3);
			\draw[->,right] (3) to node[below] {} (4);
			\node[point, draw = white] (01) at (10,-0.55) {$\mathcal{W}_1$};
			\node[point, draw = white] (02) at (12,-0.55) {$\mathcal{W}_2$};
			\node[point, draw = white] (03) at (14,-0.55) {$\mathcal{W}_3$};
			\draw[->,line width=.6mm,gray] (11.2,.15) -- (12,.15) -- (12.4,0) -- (12.55,-.15) -- (12.7,-.5);
			\node at (11,.5) {\textcolor{gray}{$h(\q)$}};
			\node at (10.8,.15) {\textcolor{gray}{$\dots$}};
			\end{tikzpicture}\\
\begin{tikzpicture}[>=latex,line width=1pt,xscale=.9,yscale=.8]
			%
			\node[label=right:{(5)${}^c$ $h(\q)$ starts in $\mathcal{W}_3$}] (t) at (8,1.2) {};
			\draw[-,thin,dashed] (7.5,1.6) -- (7.5,-1.5) -- (16.5,-1.5) -- (16.5,1.6);
			\draw (10,-0.55) circle [radius=.55];
			\draw (12,-0.55) circle [radius=.55];
			\draw (14,-0.55) circle [radius=.55];
			\node (1) at (8,0) {};
			\node[point,scale = 0.7,fill =  white,label=above:{$\lcc{1}$}] (m) at (10,0) {};
			\node[point,scale = 0.7,fill =  white,label=above:{$\lcc{2}$}] (2) at (12,0) {};
			\node[point,scale = 0.7,fill =  white,label=above:{$\lcc{3}$}] (3) at (14,0) {};
			\node[label=above:{$\qq{c}$}] (4) at (16,0) {};
			\draw[-,right] (1) to node[below] {}  (m);
			\draw[-,right] (m) to node[below] {} (2);
			\draw[-,right] (2) to node[below] {} (3);
			\draw[->,right] (3) to node[below] {} (4);
			\node[point, draw = white] (01) at (10,-0.55) {$\mathcal{W}_1$};
			\node[point, draw = white] (02) at (12,-0.55) {$\mathcal{W}_2$};
			\node[point, draw = white] (03) at (14,-0.55) {$\mathcal{W}_3$};
			\draw[->,line width=.6mm,gray] (13.35,-.5) -- (13.45,-.15) -- (13.6,0) -- (14,.15) -- (15,.15);
			\node at (15,.5) {\textcolor{gray}{$h(\q)$}};
			\end{tikzpicture}	
\hspace*{-.2cm}
\begin{tikzpicture}[>=latex,line width=1pt,xscale=.9,yscale=.8]
			%
			\node[label=right:{(6)${}^c$ $h(\q)$ ends in $\mathcal{W}_3$ and $\lcc{3}\prec_{h(\q)} h(\lc{3})$}] (t) at (8,1.2) {};
			\draw[-,thin,dashed]  (7.5,-1.5) -- (16.5,-1.5) -- (16.5,1.6);
			\draw (10,-0.55) circle [radius=.55];
			\draw (12,-0.55) circle [radius=.55];
			\draw (14,-0.55) circle [radius=.55];
			\node (1) at (8,0) {};
			\node[point,scale = 0.7,fill =  white,label=above:{$\lcc{1}$}] (m) at (10,0) {};
			\node[point,scale = 0.7,fill =  white,label=above:{$\lcc{2}$}] (2) at (12,0) {};
			\node[point,scale = 0.7,fill =  white,label=above:{$\lcc{3}$}] (3) at (14,0) {};
			\node[label=above:{$\qq{c}$}] (4) at (16,0) {};
			\draw[-,right] (1) to node[below] {}  (m);
			\draw[-,right] (m) to node[below] {} (2);
			\draw[-,right] (2) to node[below] {} (3);
			\draw[->,right] (3) to node[below] {} (4);
			\node[point, draw = white] (01) at (10,-0.55) {$\mathcal{W}_1$};
			\node[point, draw = white] (02) at (12,-0.55) {$\mathcal{W}_2$};
			\node[point, draw = white] (03) at (14,-0.55) {$\mathcal{W}_3$};
			\draw[->,line width=.6mm,gray] (13.2,.15) -- (14,.15) -- (14.4,0) -- (14.55,-.15) -- (14.7,-.5);
			\node at (13,.5) {\textcolor{gray}{$h(\q)$}};
			\node at (12.8,.15) {\textcolor{gray}{$\dots$}};
			\end{tikzpicture}
\caption{Possible locations for $h(\q)$ intersecting $\qq{c}$.}\label{f:cases}
\end{figure}

We aim to show that for every $2$-CQ suitable contact choices always exist by actually providing an
\emph{algorithm} that, given  \emph{any} $2$-CQ $\q$, describes contact choices that, for large enough $n$, are suitable for any 3CNF $\psi$ and any $(\psi,n)$-gadget constructed from copies of $\q$.
Just like in case of \bikes,
the different potential locations of a homomorphic image place different constraints on our choices.
By following our heuristics choices  (\mh{1}) and (\mh{2}) above, we will be able to use the same techniques
as for \bikes{} in the proof of Lemma~\ref{l:bike}. 
In light of (\mh{1}), our algorithm chooses $\lc{1}=\tfirst$.
This is because our assumption throughout is that $\tfirst\prec\ffirst$ (cf.\ \eqref{tprecf}), and so $\tfirst$ is the only node that is followed
by at least two other $F$- or $T$-nodes ($\tlast$ and $\ffirst$) in every $2$-CQ $\q$, even if $\q$ contains only two $F$-nodes and two $T$-nodes.
Similarly, $\lc{3}$ is chosen to be  $\fsec$, as in general $\fsec$ is the only node that is preceded
by at least two other $F$- or $T$-nodes ($\tfirst$ and $\ffirst$).
And then $\lc{2}$ is chosen from the two `middle' nodes that are always present, either $\tlast$ or $\ffirst$.
The choice of the $\prec$-smaller of $\tlast$ and $\ffirst$ as $\lc{2}$ is motivated by (\mh{2}).

However, now the 3CNF $\psi$ introduces some more `variables' into our constraint system.
In order to reduce the search space, we made some further choices in our heuristics:
%
\begin{itemize}
\item[(\mh{3})]
Given any $2$-CQ $\q$ and any assignment $\mathfrak a$ satisfying some 3CNF $\psi$, we give an algorithm describing choices suitable for achieving $\Ia\not\models\q$ for any $\q$ such that the choices do not depend on $\psi$ and 
$\mathfrak a$, only on $\q$.

\item[(\mh{4})]
The algorithm chooses the contacts in the \neighbourhoods{c} of $\Apsi$ uniformly, not depending on the particular clause $c$,
but only on $\q$.

\end{itemize}

Yet another difficulty is that \eqref{nofixh} is weaker than \eqref{nofixhpm}: 
It does not exclude cases when $h$ `fixes' two (but not all three)
\connections{c}. Say, in case (3)${}^c$ it can happen that $h(\q)$ intersects $\mathcal{W}_2$ and $\mathcal{W}_3$, 
at least one of them properly, it does not intersect $\mathcal{W}_1$, and both $h(\lc{2})=\lcc{2}$ and $h(\lc{3})=\lcc{3}$ hold.
The following example shows how the need for excluding such a situation might `force' particular contact choices not only for the \connection{c} of the `middle' \wheel, but also \emph{throughout} `half' of its \neighbourhood{c}:

\begin{texample}\label{e:gadget}
\em
Consider again the $2$-CQ from Example~\ref{ex:cog}.\\
\centerline{
			\begin{tikzpicture}[>=latex,line width=1pt,rounded corners,scale=.7]
			\begin{scope}
			\node at (-.75,0) {$\q$};
			\node[point,scale = 0.7,label=above:{\small $T$},label=below:{$\tfirst$}] (1) at (0,0) {};
			\node[point,scale = 0.7,label=above:{\small $F$},label=below:{$\ffirst$}] (m) at (1.5,0) {};
			\node[point,scale = 0.7,label=above:{\small $T$},label=below:{$\tsec$}] (2) at (3,0) {};
			\node[point,scale = 0.7,label=above:{\small $T$},label=below:{$t_3$}] (3) at (4.5,0) {};
			\node[point,scale = 0.7,label=above:{\small $F$},label=below:{$\fsec$}] (4) at (6,0) {};
			\draw[->,right] (1) to node[below] {}  (m);
			\draw[->,right] (m) to node[below] {} (2);
			\draw[->,right] (2) to node[below] {} (3);
			\draw[->,right] (3) to node[below] {} (4);
			\end{scope}
			\end{tikzpicture}}\\
According to the above, as $\ffirst\prec\tlast=t_3$, we choose $\lc{2}=\ffirst$ 
(and $\lc{1}=\tfirst$, $\lc{3}=\fsec$).
Suppose that, for some clause $c$, the contact in the `middle' \wheel{} $\mathcal{W}_2$ glued together with
$\lcc{2}=\bnode{\ffirst}{c}$ is $\cfff{2}{x_2}$, in some copy $\qqq{2}{x_2}$ of $\q$.
Then the argument in Example~\ref{e:bike2}~$(i)$ shows that we cannot choose 
$\cfff{2}{x_2}=\nfff{2}{x_2}{1}$, and so we must have $\cfff{2}{x_2}=\nfff{2}{x_2}{2}$.
Now we have three choices for $\cttt{2}{x_2}$. However, if we choose 
either $\cttt{2}{x_2}=\nttt{2}{x_2}{1}$ or $\cttt{2}{x_2}=\nttt{2}{x_2}{2}$,
and $\Ia$ is such that all contacts of $\mathcal{W}_1$, $\mathcal{W}_2$ and $\mathcal{W}_3$ are in $F^{\Ia}$, 
then we do have the following $h\colon\q\to\Ia$ homomorphism
(see case $(3)^c$ in Fig.~\ref{f:cases}):\\
\centerline{		
	\begin{tikzpicture}[>=latex,line width=.75pt, rounded corners,xscale=.55,yscale=.4]
		\node[point,scale=0.5,fill,label=above:{\small $T$},label=left:{$\qq{c}$}] (s1) at (8,6.5) {};
		\node[label=left:{$\qqq{2}{x_2-1}$}\!\!\!\!\!] (s3) at (-1,2.8) {};
		\node[point,scale=0.7,label=below:{\ \ \ \ $\cttt{2}{x_2-1}$}] (l3) at (2,2) {};
		\node[] (l4) at (1,1.5) {};
		\node[point,scale=0.7,label=below right:{\!\!\!\!\!\!$\nfff{2}{x_2}{2}$},label=above:{\small $F^{\Ia}\ \ \bnode{\ffirst}{c}$}] (m) at (10,6) {};
		\node[point,scale=0.5,fill,label=above:{\small $T$}] (mm1) at (8,5) {};
		\node[point,scale=0.7,label=above:{$\cttt{2}{x_2}$},label=below:{\ \ $\cfff{2}{x_2-1}$}] (mm2) at (6,4) {};
		\node[point,scale=0.5,fill,label=above:{\small $F$}] (mm3) at (4,4.5) {};
		\node[point,scale=0.5,fill,label=left:{$\qqq{2}{x_2}$},label=above:{\small $T$}] (mm4) at (2,5) {};
		\node[point,scale=0.5,fill,label=above:{\small $T$}] (m1) at (12,6) {};
		\node[point,scale=0.5,fill,label=above:{\small $T$}] (m2) at (14,6) {};
		\node[point,scale=0.7,label=above:{\small $F^{\Ia}\ \ \bnode{\fsec}{c}$}] (m3) at (16,6) {};
		\node at (18,3) {$\mathcal{W}_3$};
		\node at (.6,1.3) {.};
		\node at (.8,1.4) {.};
		\node at (1,1.5) {.};
		\node at (0,.7) {$\mathcal{W}_2$};
		\draw[-,very thin, bend left=35] (m3) to (17,2);
		\draw[-,very thin, bend right=35] (m3) to (15,2);
		\draw[->] (s1) to  (m);
		\draw[->] (s3) to  (l3);
		\draw[->] (m) to  (m1);
		\draw[->] (m1) to  (m2);
		\draw[->] (m2) to  (m3);
		\draw[->] (mm1) to  (m);
		\draw[->] (mm2) to  (mm1);
		\draw[->] (mm3) to  (mm2);
		\draw[->] (mm4) to  (mm3);
		\draw[->] (l3) to  (mm2);
		\draw[-] (l4) to  (l3);
		\node[point,gray,scale=0.7,label=left:{\textcolor{gray}{$\q$}\ },label=below:{\textcolor{gray}{\small $T$}}] (q1) at (8,1) {};
		\node[point,gray,scale=0.7,label=below:{\textcolor{gray}{\small $F$}}] (q2) at (10,1) {};
		\node[point,gray,scale=0.7,label=below:{\textcolor{gray}{\small $T$}}] (q3) at (12,1) {};
		\node[point,gray,scale=0.7,label=below:{\textcolor{gray}{\small $T$}}] (q4) at (14,1) {};
		\node[point,gray,scale=0.7,label=below:{\textcolor{gray}{\small $F$}}] (q5) at (16,1) {};
		\draw[-,gray] (q1) to  (q2);
		\draw[-,gray] (q2) to  (q3);
		\draw[-,gray] (q3) to  (q4);
		\draw[->,gray] (q4) to  (q5);
		\draw[->,thin,dashed] (q1) to  (mm1);	
		\draw[->,thin,dashed] (q2) to  (10,4.5);	
		\draw[->,thin,dashed] (q3) to  (m1);	
		\draw[->,thin,dashed] (q4) to  (m2);	
		\draw[->,thin,dashed] (q5) to  (m3);	
		\end{tikzpicture}	}\\
Therefore, $\cttt{2}{x_2}=\nttt{2}{x_2}{3}$ must hold. Let us continue with some other contact choices in 
the \neighbourhood{c} of $\mathcal{W}_2$.
In light of Remark~\ref{r:f1}, we might want to stick to the `default' contact choice for $\qqq{2}{x_2-1}$, and choose
$\cfff{2}{x_2-1}=\nfff{2}{x_2-1}{1}$. Then, as $\cttt{2}{x_2-1}\prec_{\qqq{2}{x_2-1}} \cfff{2}{x_2-1}$ by \eqref{tprecfcontact},
we must choose $\cttt{2}{x_2-1}=\nttt{2}{x_2-1}{1}$.
However, in this case \eqref{okcontact} fails, and there \emph{is} a $h\colon\q\to\Ia$ homomorphism, as shown in Example~\ref{ex:cog}.
In fact, by repeating the above argument, we obtain that we must choose 
$\cfff{2}{x_2-k}=\nfff{2}{x_2-k}{2}$ and $\cttt{2}{x_2-k}=\nttt{2}{x_2-k}{3}$ , for \emph{every} 
$k\le |\q|$.
\end{texample}

In Lemma~\ref {l:3cnfcomp}
below, we describe a general algorithmic solution to the constraint system along the lines of (\mh{1})--(\mh{4}), and show that
for this solution the converse of Lemma~\ref{l:3cnfsound} holds.
In order to formulate our solution, we need to fix some notation for \neighbourhoods{c}.
With a slight abuse of notation in light of (\mh{4}), for any given clause $c$ in $\psi$,  we denote by 
$\mathcal{W}_1,\mathcal{W}_2,\mathcal{W}_3$ the three $n$-\wheels{} the node $\lcc{z}$ of $\qq{c}$ is glued to.
For each $z=1,2,3$, $\mathcal{W}_z$ is built up from the $\q$-copies ${}^{z\!}\q^1,\dots,{}^{z\!}\q^n$, and
the $c$-connection of $\mathcal{W}_z$ is obtained by glueing together
node $\lcc{z}$ of $\qq{c}$ with the contact  ${}^{z}\cf^{x_z}={}^{z}\ct^{x_z+1}$ of $\mathcal{W}_z$ 
(throughout, as before, $\pm$ is modulo $n$).\\
	\centerline{
	\begin{tikzpicture}[>=latex,line width=1pt,rounded corners]
			\begin{scope}
			%
			\node at (7.5,-1.5) {\ };
			\draw (9.5,-0.55) circle [radius=.55];
			\draw (11,-0.55) circle [radius=.55];
			\draw (12.5,-0.55) circle [radius=.55];
			\node[label=left:{$\qq{c}$\!\!\!}] (1) at (8,0) {};
			\node[point,scale = 0.7,label=above:{$\lcc{1}$}, fill =  white] (m) at (9.5,0) {};
			\node[point,scale = 0.7,label=above:{$\lcc{2}$}, fill =  white] (2) at (11,0) {};
			\node[point,scale = 0.7,label=above:{$\lcc{3}$}, fill =  white] (3) at (12.5,0) {};
			\node (4) at (14,0) {};
			\draw[->,right] (1) to node[below] {}  (m);
			\draw[->,right] (m) to node[below] {} (2);
			\draw[->,right] (2) to node[below] {} (3);
			\draw[->,right] (3) to node[below] {} (4);
			\node[point, draw = white] (01) at (9.5,-0.55) {$\mathcal{W}_1$};
			\node[point, draw = white] (02) at (11,-0.55) {$\mathcal{W}_2$};
			\node[point, draw = white] (03) at (12.5,-0.55) {$\mathcal{W}_3$};
			\end{scope}
			\end{tikzpicture}
			\hspace*{.5cm}
		\begin{tikzpicture}[>=latex,line width=.75pt, rounded corners,scale=.3]
		\node[label=left:{$\qqq{z}{x_z-1}$}\!\!\!\!\!] (s3) at (-1,2) {};
		\node[label=left:{$\qqq{z}{x_z}$}\!\!\!\!\!] (s2) at (3,4) {};
		\node[label=left:{$\qqq{z}{x_z+1}$}\!\!\!\!\!] (s1) at (7,6) {};
		\node[label=right:\!\!\!\!\!{$\qqq{z}{x_z}$}] (e1) at (13,6) {};
		\node[label=right:\!\!\!\!\!{$\qqq{z}{x_z+1}$}] (e2) at (17,4) {};
		\node[label=right:\!\!\!\!\!{$\qqq{z}{x_z+2}$}] (e3) at (21,2) {};
		\node[point,scale=0.7] (l3) at (2,2) {};
		\node[point,scale=0.7] (l2) at (6,4) {};
		\node[point,scale=0.7] (r2) at (14,4) {};
		\node[point,scale=0.7] (r3) at (18,2) {};
		\node[point,scale=0.7,label=below:{$\cfff{z}{x_z}$},label=above:{$\lcc{z}$}] (m) at (10,6) {};
		\node at (.8,1) {.};
		\node at (.4,.8) {.};
		\node at (0,.6) {.};
		\node at (19,1) {.};
		\node at (19.4,.8) {.};
		\node at (19.8,.6) {.};
		\node at (10,3.8) {$\cttt{z}{x_z+1}$};
		\node at (10,1) {$\mathcal{W}_z$};
		\draw[->] (s3) to  (l3);
		\draw[->] (s2) to  (l2);
		\draw[->] (s1) to  (m);
		\draw[->] (m) to  (e1);
		\draw[->] (r2) to  (e2);
		\draw[->] (r3) to  (e3);
		\draw[-] (l3) to  (l2);
		\draw[-] (l2) to  (m);
		\draw[-] (m) to  (r2);
		\draw[-] (r2) to  (r3);
		\end{tikzpicture}
	}\\
For any node $x$ in $\q$, we denote by $\bnode{x}{c}$ the copy of $x$ in $\qq{c}$; and
for $i=1,\dots,n$ and $z=1,2,3$, we denote by $\bnode{x^i}{z}$ the copy of $x$ in $\qqq{z}{i}$.
Recall that for any $k$, we let $\tkth$ ($\fkth$) denote the $k$th $T$-node ($F$-node) in $\q$.
	In particular, $\tlbo$ denotes the last but one $T$-node in $\q$, and $\tlast$ the last $T$-node. We again assume that $\tfirst\prec\ffirst$ (cf.\ \eqref{tprecf}), and let $\tBox$ denote the last $T$-node preceding $\ffirst$.

\begin{tlemma}\label{l:3cnfcomp}
Given a 3CNF $\psi$, let $\Apsi$ be a $(\psi,n)$-gadget, for some $n\geq (n_\psi+2)(2|\q|+1)$, built up from $n$-\bikes,
each satisfying the conditions of Lemma~\ref{l:bike}. Suppose $\Apsi$ is such 
that the following hold for its \spect\textup{:}
\[
\lc{1}=\tfirst,\qquad
\lc{2}=\left\{
		\begin{array}{ll}
		\ffirst, & \mbox{if $\ffirst\prec\tlast$,}\\[3pt]
		\tlast, & \mbox{if $\tlast\prec\ffirst$,}
		\end{array}
		\right.
\qquad  
\lc{3}=\fsec\textup{;}
\]
and for every clause $c$ in $\psi$, the following hold for the \neighbourhood{c} in $\mathcal{W}_1$\textup{:}	
\begin{align*}
& \cttt{1}{x_1}=\nttt{1}{x_1}{\Box},\qquad
\cfff{1}{x_1}=\left\{
		\begin{array}{ll}
		\nfff{1}{x_1}{1}, & \mbox{if $\ffirst\prec\tlast$,}\\[3pt]
		\nfff{1}{x_1}{2}, & \mbox{if $\tlast\prec\ffirst$,}
		\end{array}
		\right.\\
& \cttt{1}{k}=\nttt{1}{k}{1},\qquad
\cfff{1}{k}=\nfff{1}{k}{1},\quad\mbox{for any other $k$ with $x_1-|\q|\le k\le x_1+|\q|$\textup{;}}
\end{align*}
the following hold for the \neighbourhood{c} in $\mathcal{W}_2$, for $k\le|\q|$ and $1\le\ell\le |\q|$\textup{:}	
\begin{align*}
& \cttt{2}{x_2-k}=\left\{
		\begin{array}{ll}
		\nttt{2}{x_2-k}{\Diamond}, & \mbox{if $\ffirst\prec\tlast$ and there is a $T$-node $\tDiamond$ with $\tDiamond\prec\fsec$ and $\delta(\tDiamond,\fsec)=\delta(\tfirst,\ffirst)$,}\\[3pt]
		\nttt{2}{x_2-k}{1}, & \mbox{otherwise,} 
		\end{array}
\right.\\
& \cfff{2}{x_2-k}=\left\{
		\begin{array}{ll}
		\nfff{2}{x_2-k}{2}, & \mbox{if $\ffirst\prec\tlast$,}\\[3pt]
		\nfff{2}{x_2}{2}, & \mbox{if $k=0$, $\tlast\prec\ffirst$ and $\delta(\tlbo,\tlast)=\delta(\tlast,\ffirst)$,}\\[3pt]
		\nfff{2}{x_2-k}{1}, & \mbox{otherwise},
		\end{array}
		\right.\\
& \cttt{2}{x_2+\ell}=\left\{
                 \begin{array}{ll}
                 \nttt{2}{x_2+\ell}{\Box}, & \mbox{if $\ffirst\prec\tlast$,}\\[3pt]
                 \nttt{2}{x_2+\ell}{1} & \mbox{if $\tlast\prec\ffirst$,}
                 \end{array}
                 \right.\\
& \cfff{2}{x_2+\ell}=\nfff{2}{x_2+\ell}{1}\textup{;}		
\end{align*}
the following hold for the \neighbourhood{c} in $\mathcal{W}_3$\textup{:}	
\begin{align*}
& \cttt{3}{x_3}=\nttt{3}{x_3}{1},\qquad
\cfff{3}{x_3}=\nfff{3}{x_3}{2},\\
& \cttt{3}{x_3-k}=\nttt{3}{x_3-k}{1},\qquad
\cfff{3}{x_3-k}=\nfff{3}{x_3-k}{1},\quad\mbox{for $0<k\le |\q|$,}\\
& \cttt{3}{x_3+\ell}=\left\{
 \begin{array}{ll}
   \nttt{3}{x_3+\ell}{1}, & \mbox{if $\tlast\prec\ffirst$ and $\delta(\ffirst,\fsec)<\delta(\tfirst,\ffirst)$,}\\[3pt]
   \nttt{3}{x_3+\ell}{\Box}, & \mbox{otherwise,}
 \end{array}
\right.\\
& \cfff{3}{x_3+\ell}=\left\{
 \begin{array}{ll}
   \nfff{3}{x_3+\ell}{2}, & \mbox{if $\tlast\prec\ffirst$ and $\delta(\ffirst,\fsec)\geq\delta(\tfirst,\ffirst)$},\\[3pt]
   \nfff{3}{x_3+\ell}{1}, & \mbox{otherwise,}
 \end{array}
\right.\quad\mbox{for $1\le\ell\le |\q|$.}
\end{align*}
%
%
%
Then $\Ia\not\models \q$, for any assignment $\mathfrak a$ satisfying $\psi$.
	\end{tlemma}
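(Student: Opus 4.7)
The plan is to prove $\Ia\not\models\q$ by excluding all possible locations in $\Apsi$ for the image $h(\q)$ of a potential subhomomorphism $h\colon\q\to\Ia$. First, since each $\mathcal{B}^p$ in $\Apsi$ satisfies the conditions of Lemma~\ref{l:bike} and $\Ia$ is defined to give opposite contact values in $\mathcal{W}^p_\pw$ and $\mathcal{W}^p_\mw$, Lemma~\ref{l:bike} together with Lemma~\ref{l:wheel} forces $h(\q)$ to intersect some $\qq{c}$ (rather than staying inside a single bike or a single wheel). Combined with \eqref{nofixh}, this means $h(\q)$ must properly intersect one or two of the three $n$-\wheels{} $\mathcal{W}_1,\mathcal{W}_2,\mathcal{W}_3$ glued to $\qq{c}$ (it cannot properly intersect all three), and the intersections must lie in the corresponding \neighbourhoods{c}. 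This reduces the proof to the six scenarios $(1)^c$--$(6)^c$ of Fig.~\ref{f:cases}. In each, following heuristic (\mh{2}), I will track $h(\ffirst)$ (or occasionally $h(\tlast)$) using the shift property \eqref{hshift}, and use the contact choices prescribed by the lemma to exhaust all options.

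Cases $(1)^c$, $(2)^c$, $(5)^c$ and $(6)^c$ are essentially reruns of the corresponding arguments from the proof of Lemma~\ref{l:bike}. Since $\lc{1}=\tfirst$ and $\cttt{1}{x_1}=\nttt{1}{x_1}{\Box}$, $\cfff{1}{x_1}\in\{\nfff{1}{x_1}{1},\nfff{1}{x_1}{2}\}$ mirror exactly the \connections{\dq} and $\dq$-neighbourhoods{} of Lemma~\ref{l:bike} (cases~$(1)^\dq$ and $(2)^\dq$), the same dissection, split by $\ffirst\prec\tlast$ vs.\ $\tlast\prec\ffirst$, applies verbatim to rule out $(1)^c$ and $(2)^c$. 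Analogously, $\lc{3}=\fsec$ and the prescribed contacts in $\mathcal{W}_3$ are chosen so that the $c$-neighbourhood of $\mathcal{W}_3$ looks locally like the $\uq$-neighbourhood of the $\mathcal{W}_\mw$ in a bike with \connection{\uq} $\cff{\mw}=\fsec$; therefore cases $(5)^c$ and $(6)^c$ succumb to the arguments used for cases $(3)^\uq$ and $(4)^\uq$.

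The main obstacle is cases $(3)^c$ and $(4)^c$, which concern the middle wheel $\mathcal{W}_2$. Unlike Lemma~\ref{l:bike}, property \eqref{nofixh} does not preclude $h$ from simultaneously fixing two of the three $c$-connections (cf.~Example~\ref{e:gadget}), so $h$ may map $\lc{1}$, $\lc{3}$ to their copies in $\qq{c}$ while $h(\q)$ `bulges' into $\mathcal{W}_2$. This forces the unusually specific choices $\cttt{2}{x_2-k}\in\{\nttt{2}{x_2-k}{\Diamond},\nttt{2}{x_2-k}{1}\}$ and $\cfff{2}{x_2-k}\in\{\nfff{2}{x_2-k}{1},\nfff{2}{x_2-k}{2}\}$ throughout the $c$-neighbourhood, varying according to $\ffirst\prec\tlast$ vs.\ $\tlast\prec\ffirst$, to the existence of a $T$-node $\tDiamond\prec\fsec$ with $\delta(\tDiamond,\fsec)=\delta(\tfirst,\ffirst)$, and to the equality $\delta(\tlbo,\tlast)=\delta(\tlast,\ffirst)$. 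In case $(3)^c$ with $\ffirst\prec\tlast$ (so $\lc{2}=\ffirst$), I would track $h(\ffirst)$: the choice $\cfff{2}{x_2-k}=\nfff{2}{x_2-k}{2}$ rules out $h(\ffirst)$ lying in the final leg of $\qqq{2}{x_2-k-1}$, while $\cttt{2}{x_2-k}=\nttt{2}{x_2-k}{\Diamond}$ (or $\nttt{2}{x_2-k}{1}$ when no suitable $\tDiamond$ exists) prevents $h(\ffirst)$ coinciding with an $F$-labelled contact — any such coincidence would force $h(\tDiamond)$ or $h(\tfirst)$ to land on a same-colour contact violating \eqref{Iadef}, or would place $h(\fsec)$ inside $\qqq{2}{x_2-k}$ between two adjacent $F$-copies, contradicted by distance comparisons via \eqref{hshift}. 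Case $(4)^c$ when $\tlast\prec\ffirst$ (so $\lc{2}=\tlast$) is the mirror challenge: here one tracks $h(\ffirst)$ and $h(\tlbo)$, exploiting $\cfff{2}{x_2}=\nfff{2}{x_2}{2}$ whenever $\delta(\tlbo,\tlast)=\delta(\tlast,\ffirst)$ precisely to exclude the subhomomorphism that sends $h(\tlast)$ to the contact and $h(\tlbo)$ to the second-last $T$-node of $\qqq{2}{x_2}$.

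The bookkeeping in each subcase of $(3)^c$ and $(4)^c$ amounts to repeating, under a new set of constraints, the kind of distance arguments already deployed in the proof of Lemma~\ref{l:bike}: establish which leg / contact / interior position is compatible with $h(\q)$'s path, then read off an impossible requirement from \eqref{hshift} on a tracked node. Although the number of subcases is substantial, each subcase is closed by one such contradiction. Combined with the straightforward reductions for $(1)^c$, $(2)^c$, $(5)^c$, $(6)^c$, this exhausts all options for $h(\q)$, giving $\Ia\not\models\q$ as required, which together with Lemma~\ref{l:3cnfsound} completes the reduction of $\overline{\text{3SAT}}$ to answering $\omq$ and thus proves Theorem~\ref{t:coNPhard}.
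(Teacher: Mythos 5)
Your proposal follows the same overall architecture as the paper's proof: use \eqref{Iadef}, \eqref{nofixh}, and Lemmas~\ref{l:wheel} and \ref{l:bike} to confine $h(\q)$ to intersect some $\qq{c}$ in one of the six configurations of Fig.~\ref{f:cases}, then reduce cases $(1)^c,(2)^c,(5)^c,(6)^c$ to the corresponding cases of Lemma~\ref{l:bike} (your phrasing via $(3)^\uq,(4)^\uq$ is compatible with the paper's via $(1)^\uq,(2)^\uq$, since those bike cases themselves chain to the latter), with new work only in the middle wheel $\mathcal{W}_2$.

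However, your sketch of the genuinely new argument in $\mathcal{W}_2$ is imprecise in a way that matters, because this is the only part not inherited from Lemma~\ref{l:bike}. In case $(3)^c$ with $\ffirst\prec\tlast$, the paper does not track $h(\fsec)$: after forcing $h(\ffirst)=\nfff{2}{x_2-k}{2}$, it tracks $h(\tfirst)$ and uses \eqref{hshift} to pin $h(\tfirst)$ onto the copy of the node $y$ with $y\prec\fsec$ and $\delta(y,\fsec)=\delta(\tfirst,\ffirst)$; then the choice between $\nttt{2}{x_2-k}{\Diamond}$ and $\nttt{2}{x_2-k}{1}$ branches on whether this $y$ is a $T$-node. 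Your phrase about $h(\fsec)$ landing ``between two adjacent $F$-copies'' does not describe this. Secondly, you attribute the $\tlbo$-tracking and the $\delta(\tlbo,\tlast)=\delta(\tlast,\ffirst)$ split to case $(4)^c$ with $\tlast\prec\ffirst$. In the paper, $(4)^c$ with $\tlast\prec\ffirst$ reduces to $(4)^\dq$, which is trivial; the $\tlbo$ argument and the $\delta$-condition governing $\cfff{2}{x_2}$ actually belong to $(3)^c$ with $\tlast\prec\ffirst$ (inherited from $(3)^\dq$ in Lemma~\ref{l:bike}). So your plan is structurally sound, but the details at the two spots where the proof departs from Lemma~\ref{l:bike} would need to be reworked before they would close.
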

	%

It is straightforward to check that 
$(\psi,n)$-gadgets $\Apsi$ satisfying the conditions of the lemma always exist:
As $\psi$ has $n_\psi$-many clauses and $n\geq (n_\psi+2)(2|\q|+1)$,
for different clauses $c,c'$, the $c$- and \neighbourhoods{c'} of the same $n$-\wheel{} $\mathcal{W}$ can be kept disjoint from each other and from the $\uq$- and \neighbourhoods{\dq} of $\mathcal{W}$.
Thus, choices for the present lemma do not interfere with the choices for Lemma~\ref{l:bike}. Also, by choosing (the corresponding copies of) $\tfirst$ and $\fsec$ as contacts outside the $\uq$-, $\dq$- and \neighbourhoods{c},
conditions \eqref{tprecfcontact}, \eqref{okcontact} hold for all \wheels{} in $\Apsi$.

\begin{proof}
Suppose $\mathfrak a$ is an assignment  satisfying $\psi$, and take the model of $\TT$ and $\Apsi$ defined in \eqref{Iadef}.
In light of (\mh{4}),
we do not use any specifics about the clause $c$, and so we do not have explicit information about the particular labelings of 
the $c$-connections $\lcc{1}$, $\lcc{2}$ and $\lcc{3}$ in $\Ia$. However, \eqref{Iadef} still implies that each of the attached \wheels{} $\mathcal{W}_1$, $\mathcal{W}_2$ and $\mathcal{W}_3$ `represents' a truth-value:
\begin{equation}\label{wsame}
\mbox{for each $z=1,2,3$, the contacts of $\mathcal{W}_z$ are either all in $T^{\Ia}$ or all in $F^{\Ia}$.}
\end{equation}
%
%
Now the proof of Lemma~\ref{l:3cnfcomp} 
		is via excluding all possible locations in $\Apsi$ for the image $h(\q)$ of a
		potential \shomo{} $h\colon\q\to\Ia$.
		As explained above, by \eqref{nofixh}, Lemmas~\ref{l:wheel} and \ref{l:bike}, only the cases (1)${}^c$--(6)${}^c$ in Fig.~\ref{f:cases} remain for the location of $h(\q)$, and we need to show that none of them is possible.
In light of 	(\mh{2}), we always track the location of $h(\ffirst)$ and, whenever possible, try to reduce the cases to cases in the proof of Lemma~\ref{l:bike} for \bikes:

\begin{itemize}
\item[(1)${}^c$]
$h(\q)$ starts in $\mathcal{W}_1$ and $h(\lc{1})\prec_{h(\q)}\lcc{1}$.\\
			\begin{tikzpicture}[>=latex,line width=1pt,rounded corners,xscale=.9,yscale=.8]
			%
	                 \draw (10,-0.55) circle [radius=.55];
	                 \draw (12,-0.55) circle [radius=.55];
			\draw (14,-0.55) circle [radius=.55];
			\node (1) at (8,0) {};
			\node[point,scale = 0.7,fill = white,label=above:{$\lcc{1}$}] (m) at (10,0) {};
			\node[point,scale = 0.7,fill =  white,label=above:{$\lcc{2}$}] (2) at (12,0) {};
			\node[point,scale = 0.7,fill =  white,label=above:{$\lcc{3}$}] (3) at (14,0) {};
			\node[label=above:{$\qq{c}$}] (4) at (16,0) {};
			\draw[-,right] (1) to node[below] {}  (m);
			\draw[-,right] (m) to node[below] {} (2);
			\draw[-,right] (2) to node[below] {} (3);
			\draw[->,right] (3) to node[below] {} (4);
			\node[point, draw = white] (01) at (10,-0.55) {$\mathcal{W}_1$};
			\node[point, draw = white] (02) at (12,-0.55) {$\mathcal{W}_2$};
			\node[point, draw = white] (03) at (14,-0.55) {$\mathcal{W}_3$};
			\draw[->,line width=.6mm,gray] (9.35,-.5) -- (9.45,-.15) -- (9.6,0) -- (10,.15) -- (11,.15);
			\node at (11,.5) {\textcolor{gray}{$h(\q)$}};
			\node at (11.5,.15) {\textcolor{gray}{$\dots$}};
			\end{tikzpicture}\\
%
%
%
%
Then $h(\q)$ definitely properly intersects the \neighbourhood{c} of $\mathcal{W}_1$, and it might also 
properly intersect the \neighbourhoods{c} of $\mathcal{W}_2$ or $\mathcal{W}_3$.
It follows from $h(\lc{1})\prec_{h(\q)}\lcc{1}$ that if $h(\ffirst)$ is in $\qq{c}$ then $h(\ffirst)\prec_{\qq{c}}\bnode{\ffirst}{c}$.
We can exclude all possible locations for $h(\ffirst)$ by the same argument as in case $(1)^\dq$ in the proof of
Lemma~\ref{l:bike},
with the \neighbourhood{c} of $\mathcal{W}_1$ in place of the  \neighbourhood{\dq} of $\mathcal{W}_\pw$.


\item[(2)${}^c$]
			$h(\q)$ starts in $\qq{c}$ and ends in $\mathcal{W}_1$.\\
			\begin{tikzpicture}[>=latex,line width=1pt,rounded corners,xscale=.9,yscale=.8]
			\draw (10,-0.55) circle [radius=.55];
			\draw (12,-0.55) circle [radius=.55];
			\draw (14,-0.55) circle [radius=.55];
			\node (1) at (8,0) {};
			\node[point,scale = 0.7,fill =  white,label=above:{$\lcc{1}$}] (m) at (10,0) {};
			\node[point,scale = 0.7,fill =  white,label=above:{$\lcc{2}$}] (2) at (12,0) {};
			\node[point,scale = 0.7,fill =  white,label=above:{$\lcc{3}$}] (3) at (14,0) {};
			\node[label=above:{$\qq{c}$}] (4) at (16,0) {};
			\draw[-,right] (1) to node[below] {}  (m);
			\draw[-,right] (m) to node[below] {} (2);
			\draw[-,right] (2) to node[below] {} (3);
			\draw[->,right] (3) to node[below] {} (4);
			\node[point, draw = white] (01) at (10,-0.55) {$\mathcal{W}_1$};
			\node[point, draw = white] (02) at (12,-0.55) {$\mathcal{W}_2$};
			\node[point, draw = white] (03) at (14,-0.55) {$\mathcal{W}_3$};
			\draw[->,line width=.6mm,gray] (9.2,.15) -- (10,.15) -- (10.4,0) -- (10.55,-.15) -- (10.7,-.5);
			\node at (9,.5) {\textcolor{gray}{$h(\q)$}};
			\end{tikzpicture}\\
Then $h(\q)$ properly intersects the \neighbourhood{c} of $\mathcal{W}_1$ only.
We can exclude all possible locations for $h(\ffirst)$ by the same argument as in case $(2)^\dq$ in the proof of
Lemma~\ref{l:bike},
with the \neighbourhood{c} of $\mathcal{W}_1$ in place of the  \neighbourhood{\dq} of $\mathcal{W}_\pw$.

\item[(3)${}^c$]
			$h(\q)$ starts in $\mathcal{W}_2$ and $h(\lc{2})\preceq_{h(\q)}\lcc{2}$.\\
			\begin{tikzpicture}[>=latex,line width=1pt,xscale=.9,yscale=.8]
			%
			\draw (10,-0.55) circle [radius=.55];
			\draw (12,-0.55) circle [radius=.55];
			\draw (14,-0.55) circle [radius=.55];
			\node (1) at (8,0) {};
			\node[point,scale = 0.7,fill =  white,label=above:{$\lcc{1}$}] (m) at (10,0) {};
			\node[point,scale = 0.7,fill =  white,label=above:{$\lcc{2}$}] (2) at (12,0) {};
			\node[point,scale = 0.7,fill =  white,label=above:{$\lcc{3}$}] (3) at (14,0) {};
			\node[label=above:{$\qq{c}$}] (4) at (16,0) {};
			\draw[-,right] (1) to node[below] {}  (m);
			\draw[-,right] (m) to node[below] {} (2);
			\draw[-,right] (2) to node[below] {} (3);
			\draw[->,right] (3) to node[below] {} (4);
			\node[point, draw = white] (01) at (10,-0.55) {$\mathcal{W}_1$};
			\node[point, draw = white] (02) at (12,-0.55) {$\mathcal{W}_2$};
			\node[point, draw = white] (03) at (14,-0.55) {$\mathcal{W}_3$};
			\draw[->,line width=.6mm,gray] (11.35,-.5) -- (11.45,-.15) -- (11.6,0) -- (12,.15) -- (13,.15);
			\node at (13,.5) {\textcolor{gray}{$h(\q)$}};
			\node at (13.5,.15) {\textcolor{gray}{$\dots$}};
			\end{tikzpicture}\\
%
Then $h(\q)$ definitely properly intersects the \neighbourhood{c} of $\mathcal{W}_2$, and it may also 
properly intersect the \neighbourhood{c} of $\mathcal{W}_3$. 
We consider the two cases $\ffirst\prec\tlast$ and $\tlast\prec\ffirst$:

If $\ffirst\prec\tlast$ then $\lc{2}=\ffirst$, and so $h(\ffirst)\preceq_{h(\q)}\bnode{\ffirst}{c}=\lcc{2}$.
As $\cfff{2}{x_2-k}=\nfff{2}{x_2-k}{2}$ for all $k\le |\q|$, $h(\ffirst)=\nfff{2}{x_2-k}{1}$ cannot hold for any such $k$,
otherwise both $h(t)$ and $h(\fsec)$ are contacts of $\mathcal{W}_2$ for the $T$-node $t$ with $\cttt{2}{x_2-k}=\nttt{2}{x_2-k}{\ }$,
contradicting \eqref{wsame}. Since the only $F$-node preceding $\fsec$ in $\q$ is $\ffirst$, the only remaining option
for $h(\ffirst)$  is when $h(\ffirst)=\nfff{2}{x_2-k}{2}$ is a contact of $\mathcal{W}_2$ for some $k\le |\q|$. 
Now we track the location of $h(\tfirst)$. We have
\begin{equation}\label{yloc}
\delta_{h(\q)}\bigl(h(\tfirst),\nfff{2}{x_2-k}{2}\bigr)=
\delta_{h(\q)}\bigl(h(\tfirst),h(\ffirst)\bigr)=
\delta(\tfirst,\ffirst)=\delta(y,\fsec)=
\delta_{\qqq{2}{x_2-k}}\bigl(\bnode{y^{x_2-k}}{2},\nfff{2}{x_2-k}{2}\bigr),
\end{equation}
where $y$ is the node in $\q$ with $y\prec\fsec$ and $\delta(y,\fsec)=\delta(\tfirst,\ffirst)$. 
Consider two cases, depending on whether $y$ is a $T$-node or not:
\begin{itemize}
\item
If $y$ is a $T$-node $\tDiamond$, then $\cttt{2}{x_2-k}=\nttt{2}{x_2-k}{\Diamond}$, and so 
$h(\tfirst)=\cttt{2}{x_2-k}$ by \eqref{yloc}. Thus, $h(\tfirst)$ is a contact,
contradicting \eqref{wsame} and the fact that $h(\ffirst)$ is also a contact of $\mathcal{W}_2$.

\item
If $y$ is not a $T$-node $\tDiamond$ then $\cttt{2}{x_2-k}=\nttt{2}{x_2-k}{1}$. 
While $y\preceq\ffirst$ and $\ffirst\prec y$ are both possible, we surely have $\tfirst\prec y$,
as $\delta(y,\fsec)=\delta(\tfirst,\ffirst)<\delta(\tfirst,\fsec)$.
Then $h(\tfirst)=\bnode{y^{x_2-k}}{2}$ follows by \eqref{yloc}. But $\bnode{y^{x_2-k}}{2}$ 
is not a $T$-node.
\end{itemize}

If $\tlast\prec\ffirst$ then $\lc{2}=\tlast$.
 If follows from $h(\lc{2})\preceq_{h(\q)}\lcc{2}$ that if $h(\ffirst)$ is in $\qq{c}$ then $h(\ffirst)\preceq_{\qq{c}}\bnode{\ffirst}{c}$.
As there is no $F$-node preceding $\bnode{\ffirst}{c}$ in $\qq{c}$,
either $h(\ffirst)$ is in $\mathcal{W}_2$,
or $h(\ffirst)=\bnode{\ffirst}{c}$.
We can exclude all possible locations for $h(\ffirst)$ by the same argument as in case $(3)^\dq$ in the proof of
Lemma~\ref{l:bike},
with the \neighbourhood{c} of $\mathcal{W}_2$ in place of the  \neighbourhood{\dq} of $\mathcal{W}_\mw$.

\item[(4)${}^c$]
			$h(\q)$ ends in $\mathcal{W}_2$ and $\lcc{2}\preceq_{h(\q)} h(\lc{2})$.\\
%
			\begin{tikzpicture}[>=latex,line width=1pt,xscale=.9,yscale=.8]
			%
			\draw (10,-0.55) circle [radius=.55];
			\draw (12,-0.55) circle [radius=.55];
			\draw (14,-0.55) circle [radius=.55];
			\node (1) at (8,0) {};
			\node[point,scale = 0.7,fill =  white,label=above:{$\lcc{1}$}] (m) at (10,0) {};
			\node[point,scale = 0.7,fill =  white,label=above:{$\lcc{2}$}] (2) at (12,0) {};
			\node[point,scale = 0.7,fill =  white,label=above:{$\lcc{3}$}] (3) at (14,0) {};
			\node[label=above:{$\qq{c}$}] (4) at (16,0) {};
			\draw[-,right] (1) to node[below] {}  (m);
			\draw[-,right] (m) to node[below] {} (2);
			\draw[-,right] (2) to node[below] {} (3);
			\draw[->,right] (3) to node[below] {} (4);
			\node[point, draw = white] (01) at (10,-0.55) {$\mathcal{W}_1$};
			\node[point, draw = white] (02) at (12,-0.55) {$\mathcal{W}_2$};
			\node[point, draw = white] (03) at (14,-0.55) {$\mathcal{W}_3$};
			\draw[->,line width=.6mm,gray] (11.2,.15) -- (12,.15) -- (12.4,0) -- (12.55,-.15) -- (12.7,-.5);
			\node at (11,.5) {\textcolor{gray}{$h(\q)$}};
			\node at (10.8,.15) {\textcolor{gray}{$\dots$}};
			\end{tikzpicture}\\
%
Then $h(\q)$ definitely properly intersects the \neighbourhood{c} of $\mathcal{W}_2$, and it might also 
properly intersect the \neighbourhood{c} of $\mathcal{W}_1$. 
We consider the two cases $\ffirst\prec\tlast$ and $\tlast\prec\ffirst$:

If $\ffirst\prec\tlast$ then $\lc{2}=\ffirst$, and so $\lcc{2}=\bnode{\ffirst}{c}\preceq_{h(\q)}h(\ffirst)$ and $h(\ffirst)$ is in $\mathcal{W}_2$. 
 We can exclude all possible locations for $h(\ffirst)$ by the same argument as in case $(2)^\uq$ in the proof of
Lemma~\ref{l:bike},
with the \neighbourhood{c} of $\mathcal{W}_2$ in place of the  \neighbourhood{\uq} of $\mathcal{W}_\pw$.
 
If $\tlast\prec\ffirst$ then $\lc{2}=\tlast$. 
As by our assumption $h(\q)$ ends in $\mathcal{W}_2$ and 
$\lcc{2}\preceq_{h(\q)} h(\lc{2})$, if follows that $\lcc{2}\preceq_{h(\q)}h(\tlast)\prec_{h(\q)}h(\ffirst)$.
We can exclude all possible locations for $h(\ffirst)$ by the same argument as in case $(4)^\dq$ in the proof of
Lemma~\ref{l:bike},
with the \neighbourhood{c} of $\mathcal{W}_2$ in place of the  \neighbourhood{\dq} of $\mathcal{W}_\mw$.

\item[(5)${}^c$]
			$h(\q)$ starts in $\mathcal{W}_3$.\\
%
			\begin{tikzpicture}[>=latex,line width=1pt,xscale=.9,yscale=.8]
			%
			\draw (10,-0.55) circle [radius=.55];
			\draw (12,-0.55) circle [radius=.55];
			\draw (14,-0.55) circle [radius=.55];
			\node (1) at (8,0) {};
			\node[point,scale = 0.7,fill =  white,label=above:{$\lcc{1}$}] (m) at (10,0) {};
			\node[point,scale = 0.7,fill =  white,label=above:{$\lcc{2}$}] (2) at (12,0) {};
			\node[point,scale = 0.7,fill =  white,label=above:{$\lcc{3}$}] (3) at (14,0) {};
			\node[label=above:{$\qq{c}$}] (4) at (16,0) {};
			\draw[-,right] (1) to node[below] {}  (m);
			\draw[-,right] (m) to node[below] {} (2);
			\draw[-,right] (2) to node[below] {} (3);
			\draw[->,right] (3) to node[below] {} (4);
			\node[point, draw = white] (01) at (10,-0.55) {$\mathcal{W}_1$};
			\node[point, draw = white] (02) at (12,-0.55) {$\mathcal{W}_2$};
			\node[point, draw = white] (03) at (14,-0.55) {$\mathcal{W}_3$};
			\draw[->,line width=.6mm,gray] (13.35,-.5) -- (13.45,-.15) -- (13.6,0) -- (14,.15) -- (15,.15);
			\node at (15,.5) {\textcolor{gray}{$h(\q)$}};
			\end{tikzpicture}\\
%
Then $h(\q)$ properly intersects the \neighbourhood{c} of $\mathcal{W}_3$ only.
We have $h(\lc{3})\preceq_{h(\q)}\lcc{3}$, as otherwise there is no room for $h(\q)$ in $\qq{c}$.
			As $\lc{3}=\fsec$, we have $h(\ffirst)\prec_{h(\q)}h(\fsec)\preceq_{h(\q)}\lcc{3}$ and $h(\ffirst)$ is in $\mathcal{W}_3$. 
We can exclude all possible locations for $h(\ffirst)$ by the same argument as in case $(1)^\uq$ in the proof of
Lemma~\ref{l:bike},
with the \neighbourhood{c} of $\mathcal{W}_3$ in place of the  \neighbourhood{\uq} of $\mathcal{W}_\pw$.
			
			
\item[(6)${}^c$]
$h(\q)$ ends in $\mathcal{W}_3$ and $\lcc{3}\prec_{h(\q)} h(\lc{3})$.\\
%
			\begin{tikzpicture}[>=latex,line width=1pt,xscale=.9,yscale=.8]
			%
			\draw (10,-0.55) circle [radius=.55];
			\draw (12,-0.55) circle [radius=.55];
			\draw (14,-0.55) circle [radius=.55];
			\node (1) at (8,0) {};
			\node[point,scale = 0.7,fill =  white,label=above:{$\lcc{1}$}] (m) at (10,0) {};
			\node[point,scale = 0.7,fill =  white,label=above:{$\lcc{2}$}] (2) at (12,0) {};
			\node[point,scale = 0.7,fill =  white,label=above:{$\lcc{3}$}] (3) at (14,0) {};
			\node[label=above:{$\qq{c}$}] (4) at (16,0) {};
			\draw[-,right] (1) to node[below] {}  (m);
			\draw[-,right] (m) to node[below] {} (2);
			\draw[-,right] (2) to node[below] {} (3);
			\draw[->,right] (3) to node[below] {} (4);
			\node[point, draw = white] (01) at (10,-0.55) {$\mathcal{W}_1$};
			\node[point, draw = white] (02) at (12,-0.55) {$\mathcal{W}_2$};
			\node[point, draw = white] (03) at (14,-0.55) {$\mathcal{W}_3$};
			\draw[->,line width=.6mm,gray] (13.2,.15) -- (14,.15) -- (14.4,0) -- (14.55,-.15) -- (14.7,-.5);
			\node at (13,.5) {\textcolor{gray}{$h(\q)$}};
			\node at (12.8,.15) {\textcolor{gray}{$\dots$}};
			\end{tikzpicture}\\
%
%
			Then $h(\q)$ definitely properly intersects the \neighbourhood{c} of $\mathcal{W}_3$, and it may also 
properly intersect the \mbox{\neighbourhood{c}} of $\mathcal{W}_1$ or $\mathcal{W}_2$.
			%
As $\lc{3}=\fsec$, we have $\bnode{\fsec}{c}=\lcc{3}\prec_{h(\q)} h(\fsec)$.
Therefore, if $h(\ffirst)$ is in $\qq{c}$ then $\bnode{\ffirst}{c}\prec_{\,\qq{c}} h(\ffirst)$.
As $\lcc{2}\preceq_{\,\qq{c}}\bnode{\ffirst}{c}$ and there is no $F$-node between $\bnode{\ffirst}{c}$ and
$\bnode{\fsec}{c}$ in $\qq{c}$, it follows that $\lcc{3}\preceq_{h(\q)} h(\ffirst)$ and $h(\ffirst)$ is in
$\mathcal{W}_3$.
We can exclude all possible locations for $h(\ffirst)$ by the same argument as in case $(2)^\uq$ in the proof of
Lemma~\ref{l:bike},
with the \neighbourhood{c} of $\mathcal{W}_3$ in place of the  \neighbourhood{\uq} of $\mathcal{W}_\pw$.
\end{itemize}
We excluded all possible locations in $\Apsi$ for the image $h(\q)$ of a potential \shomo{} $h\colon\q\to\Ia$, which completes the proof of Lemma~\ref{l:3cnfcomp}.
\end{proof}
	
To complete the proof of Theorem~\ref{t:coNPhard}, given a 3CNF $\psi$ with $n_\psi$ clauses, we set $n= (n_\psi+2)(2|\q|+1)$ and take some $(\psi,n)$-gadget $\Apsi$ satisfying the conditions of Lemma~\ref{l:3cnfcomp}. By Lemmas~\ref{l:3cnfsound} and \ref{l:3cnfcomp}, we then obtain: $\TT,\Apsi \not\models \q$ iff $\psi$ is satisfiable. 
	

\section{Conclusion}\label{conclude}

This article contributes to the non-uniform approach to ontology-based data access, which---broadly conceived---also includes optimisation of datalog and disjunctive datalog programs. There are three distinctive directions of research in this area (for detailed references, see Section~\ref{related}): 
\begin{itemize}
\item[\textbf{(I)}] Finding general automata-theoretic, model-theoretic or algebraic characterisations of OMQs with a given data complexity or rewritability type and investigating the computational complexity of checking those characterisations. As it turned out, for many standard DL ontology languages and monadic (disjunctive) datalog programs, the complexity of deciding FO- and datalog-rewritability ranges between \ExpTime{} and 3\ExpTime.

\item[\textbf{(II)}] Designing practical (possibly incomplete) rewriting and approximation algorithms. For example, the algorithm from~\cite{DBLP:journals/ai/KaminskiNG16} either successfully rewrites a given disjunctive datalog program into an equivalent plain datalog program or fails to decide whether the input is datalog rewritable or not.

\item[\textbf{(III)}] Obtaining explicit classifications of `natural' restricted families of OMQs such as, for instance, binary chain datalog sirups~\cite{DBLP:journals/jacm/AfratiP93}. Apart from supplementing \textbf{(II)}, results in this direction help  pinpoint key sources of the high complexity in \textbf{(I)} and thereby identify interesting and better behaved classes of OMQs, as well as develop fine methods of establishing data complexity bounds for OMQ answering.
\end{itemize}
This article contributes to directions \textbf{(I)} and \textbf{(III)}. We introduce two classes of rudimentary OMQs, called d- and dd-sirups, and show that they capture many difficulties of both general OMQs with a disjunctive DL ontology and general monadic (plain and disjunctive) datalog queries. Indeed, the syntactically very simple and seemingly inexpressive d-sirups reveal rather complex and unexpected behaviour: $(i)$ answering them is $\Pi^p_2$-complete for combined complexity and requires finding exponential-size resolution proofs in general; $(ii)$ deciding their FO-rewritability turns out to be 2\ExpTime-hard~\cite{PODS21}---as hard as deciding FO-rewritability of arbitrary monadic datalog queries---with $(iii)$ nonrecursive datalog, positive existential, and UCQ rewritings being of at least single-, double- and triple-exponential size in the worst case, respectively. Thus, understanding the behaviour of d-sirups is challenging yet fundamental for developing OBDA with expressive ontologies (note that d-sirups also constitute a new interesting class of CSPs).

The proofs of the `negative' results mentioned above point to two `culprits': possibly intersecting classes $F$ and $T$ in the covering axiom $F(x) \lor T(x) \leftarrow A(x)$, and multiple binary relations between the same pair of variables in a query.
We demonstrate that elimination of these culprits can lead to non-trivial OMQ classes that admit complete explicit classifications, though may need the development of new methods and quite tricky, laborious proofs. Our main achievement here is an explicit $\ACz$\,/\,\NL\,/\,\PTime\,/\,\coNP-tetrachotomy of path-shaped dd-sirups (with disjoint $F$ and $T$), which required new techniques for establishing membership in \NL{} and for proving \PTime- and especially \coNP-hardness. (Incidentally, the bike technique for proving \coNP-hardness shows that the algorithm from~\cite{DBLP:journals/ai/KaminskiNG16} mentioned in \textbf{(II)} is complete for path-shaped dd-sirups.) We believe that these  techniques can also be used for wider classes of OMQs, which is witnessed by the $\ACz$\,/\,L\,/\,\NL-hardness trichotomy of ditree-shaped dd-sirups in~\cite{PODS21}.

\subsection{Next steps}

Interesting and challenging problems arising from our research are abundant; here are some of them.
\begin{enumerate}

\item Find complete explicit classifications of the following families of OMQs: $(i)$ d-sirups with path CQs (that may contain $FT$-twins), $(ii)$ undirected path-shaped, $(iii)$ ditree- and $(iv)$ undirected tree-shaped dd- and d-sirups. Also, consider (d)d-sirups $(\dis_\top,\q)$ and $(\dis_\top^\bot,\q)$ with total covering $\forall x \, (F(x) \lor T(x))$.

\item Settle the tight complexity of deciding FO- and other types of rewritability for arbitrary $(i)$ d-sirups and $(ii)$ dd-sirups. We conjecture that $(i)$ is harder than $(ii)$ in general.

\item 
Identify the complexity of deciding FO- and other types of rewritability to ontologies in $(i)$ Schema.org and $(ii)$ $\DLk$ and $\DLb$~\cite{ACKZ09}. Ontologies in $(i)$ allow multiple disjunctions (and so covering by any number of classes); those in $(ii)$ allow restricted existential quantification on the right-hand side of implications. 

\item Analyse the size of FO-rewritings for OMQs with disjunctive axioms (starting with d- and dd-sirups). Could FO-rewritings be substantially more succinct than NDL- and PE-rewritings (cf.~\cite[Theorem 6.1]{DBLP:journals/jacm/Rossman08})? (Note that the  succinctness problem for OMQ rewritings is closely related to circuit complexity~\cite{DBLP:journals/ai/GottlobKKPSZ14,DBLP:journals/jacm/BienvenuKKPZ18}.)

\item Consider the {\bf (data complexity)} and {\bf (rewritability)} problems for d- and dd-sirups with multiple answer variables (which could lead to simpler classifications as indicated by~\cite{DBLP:conf/ijcai/HernichLOW15}). 

\item Investigate interconnections between (d)d-sirups and CSPs (starting from those in~\cite{DBLP:journals/tods/BienvenuCLW14,DBLP:journals/lmcs/FeierKL19}) with the aim of transferring results from one formalism to the other.

\item Using the techniques developed in this article for establishing lower data complexity bounds, identify classes of OMQs for which rewriting algorithms such as the ones in~\cite{DBLP:conf/dlog/KaminskiG13,DBLP:journals/ai/KaminskiNG16} are complete.
\end{enumerate}

	
\section*{\bf Acknowledgements} 
The work of O.~Gerasimova was funded by RFBR, project number 20-31-90123. 
The work of V.~Podolskii was supported by the HSE University Basic Research Program. The work of M.~Zakharyaschev was supported by the EPSRC U.K.\ grant EP/S032282. We are grateful to Frank Wolter for his remarks that helped us improve the article. Thanks are also due to the anonymous referees for their careful reading, valuable comments and constructive suggestions.


\bibliographystyle{elsarticle-num-names}

\end{document}